\documentclass{article}

\usepackage{microtype}
\usepackage{graphicx}
\usepackage{subcaption}
\usepackage{booktabs} % for professional tables
\usepackage{xcolor}
\usepackage{hyperref}

\usepackage{enumitem}

 \usepackage[accepted]{icml2026}

\usepackage{amsmath}
\usepackage{amssymb}
\usepackage{mathtools}
\usepackage{amsthm}
\usepackage{tablefootnote}

\usepackage{tikz}
\usetikzlibrary{positioning,calc}
\usepackage[capitalize,noabbrev]{cleveref}

\theoremstyle{plain}
\newtheorem{theorem}{Theorem}[section]
\newtheorem{proposition}[theorem]{Proposition}
\newtheorem{lemma}[theorem]{Lemma}

\theoremstyle{definition}
\newtheorem{definition}[theorem]{Definition}

\theoremstyle{remark}

\usepackage[textsize=tiny]{todonotes}

\icmltitlerunning{Hard labels sampled from sparse targets misleads rotation invariant algorithms}

\begin{document}

\twocolumn[
  \icmltitle{Hard labels sampled from  sparse targets mislead rotation invariant algorithms}

  % It is OKAY to include author information, even for blind submissions: the
  % style file will automatically remove it for you unless you've provided
  % the [accepted] option to the icml2026 package.

  % List of affiliations: The first argument should be a (short) identifier you
  % will use later to specify author affiliations Academic affiliations
  % should list Department, University, City, Region, Country Industry
  % affiliations should list Company, City, Region, Country

  % You can specify symbols, otherwise they are numbered in order. Ideally, you
  % should not use this facility. Affiliations will be numbered in order of
  % appearance and this is the preferred way.
  \icmlsetsymbol{equal}{*}

  \begin{icmlauthorlist}
    \icmlauthor{Avrajit Ghosh}{yyy}
    \icmlauthor{Bin Yu}{yyy}
    \icmlauthor{Manfred K. Warmuth}{comp,equal}
    \icmlauthor{Peter Bartlett}{sch,equal}

    %\icmlauthor{}{sch}
    %\icmlauthor{}{sch}
  \end{icmlauthorlist}

  \icmlaffiliation{yyy}{University of California, Berkeley.}
  \icmlaffiliation{comp}{Google Research}
  \icmlaffiliation{sch}{Google Deepmind}

  \icmlcorrespondingauthor{Avrajit Ghosh}{ghoshavr@berkeley.edu}

  % You may provide any keywords that you find helpful for describing your
  % paper; these are used to populate the "keywords" metadata in the PDF but
  % will not be shown in the document
  \icmlkeywords{Machine Learning, ICML}

  \vskip 0.3in
  ]

% this must go after the closing bracket ] following \twocolumn[ ...

% This command actually creates the footnote in the first column listing the
% affiliations and the copyright notice. The command takes one argument, which
% is text to display at the start of the footnote. The 
%\icmlEqualContribution
% command is standard text for equal contribution. Remove it (just {}) if you
% do not need this facility.

% Use ONE of the following lines. DO NOT remove the command.
% If you have no special notice, KEEP empty braces:
%\printAffiliationsAndNotice{}  % no special notice (required even if empty)
% Or, if applicable, use the standard equal contribution text:
%

\printAffiliationsAndNotice{\icmlEqualContribution}
%Manfred's macros
\newcommand{\Red}[1]{\color{red}{#1}\color{black}{}}
\newcommand{\manote}[1]{\Red{\marginpar{#1}}}
\newcommand{\mnote}[1]{//\Red{#1}}
\begin{abstract}
 One of the most common machine learning setups is logistic regression. In many classification models, including neural networks, the final prediction is obtained by applying a logistic link function to a linear score. In binary logistic regression, the feedback can be either soft labels, corresponding to the true conditional probability of the data (as in distillation), or sampled hard labels (taking values $\pm 1$). We point out a fundamental problem that arises even in a particularly favorable setting, where the goal is to learn a noise-free soft target of the form $\sigma(\mathbf{x}^{\top}\mathbf{w}^{\star})$. In the over-constrained case (i.e. the number of samples $n$ exceeds the input dimension $d$) with
examples  $(\mathbf{x}_i,\sigma(\mathbf{x}_i^{\top}\mathbf{w}^{\star}))$, it is sufficient to recover $\mathbf{w}^{\star}$ and hence achieve the Bayes  risk. However, we prove that when the examples are labeled by hard labels $y_i$ sampled from the same conditional distribution $\sigma(\mathbf{x}_i^{\top}\mathbf{w}^{\star})$ and $\mathbf{w}^{\star}$ is $s$-sparse, then rotation-invariant algorithms are provably suboptimal: they incur an excess risk $\Omega\!\left(\frac{d-1}{n}\right)$, while there are simple non-rotation invariant algorithms with excess risk $O(\frac{s\log d}{n})$. The simplest rotation invariant algorithm is gradient descent on the logistic loss (with early stopping). A simple non-rotation-invariant algorithm for sparse targets that achieves the above upper bounds uses gradient descent on the weights $u_i,v_i$, where now the linear weight $w_i$ is reparameterized as $u_iv_i$.

\end{abstract}
%{-1em}
\section{Introduction}

A fundamental objective in machine learning is to learn a task from finite samples efficiently.
One such task is to estimate a vector $\mathbf{w}^{\star} \in \mathbb{R}^{d}$ from a finite collection of paired data $\{(\mathbf{x}_{i},y_{i})\}_{i=1}^n$, where training instances $\mathbf{x}_{i} \in \mathbb{R}^{d}$ and the labels are generated as a function of the linear score ($\mathbf{x}_i^\top \mathbf{w}^{\star}$). Much prior work has focused on the underconstrained case $(n \leq  d)$, particularly when the label depends on a single feature of $\mathbf{x}_i$ or on a sparse linear combination of features. In this setting, rotation-invariant algorithms are known to perform poorly compared to non-rotation-invariant methods that are biased toward sparse solutions \cite{warmuth2005leaving,ng2004feature,arora-conv,warmuth2021case}. In some cases, it has even been shown that embedding the instances via a fixed feature map $\Phi(\mathbf{x}_i)$ does not alleviate this limitation for rotation-invariant algorithms \cite{warmuth2005leaving}.
\begin{figure}[!t]
\centering

\begin{minipage}[c]{0.42\columnwidth}
\centering
\begin{tikzpicture}[
    neuron/.style={circle,draw,inner sep=1.2pt, fill=white},
    sig/.style={draw,rounded corners=2pt,
                minimum width=18pt,
                minimum height=14pt,
                fill=white,
                align=center},
    conn/.style={-,thin},
    scale=0.82,
    rotate=90
]
\node[neuron] (x4) at (1,2) {};
\node[neuron] (x1) at (1,1) {};
\node[neuron] (x2) at (1,0) {}; \node[below=1pt] at (x2) {$x_i$};
\node[neuron] (x3) at (1,-1) {};
\node[neuron] (x5) at (1,-2) {};
\node[sig] (sigL) at (2.5,0) {$\sigma$};

\draw[conn] (x4) -- (sigL);
\draw[conn] (x1) -- (sigL);
\draw[conn] (x2) -- (sigL) node[midway, left=-2pt] {$w_i$};
\draw[conn] (x3) -- (sigL);
\draw[conn] (x5) -- (sigL);
\end{tikzpicture}
\end{minipage}
\hspace{0.08\columnwidth}
\begin{minipage}[c]{0.42\columnwidth}
\centering
\begin{tikzpicture}[
    neuron/.style={circle, draw, inner sep=1.2pt, fill=white},
    sig/.style={draw,
                rounded corners=2pt,
                fill=white,
                minimum width=18pt,
                minimum height=14pt,
                align=center},
    conn/.style={-, thin},
    scale=0.82
]
\node[sig] (top) at (0,1.5) {$\sigma$};

\foreach \x [count=\i] in {-2,-1,0,1,2} {
    \node[neuron] (u\i) at (\x,0.7) {};
    \node[neuron] (v\i) at (\x,0) {};

    \ifnum\i=3
        \draw[conn] (v\i) -- (u\i) node[midway,left=-2pt] {$v_i$};
        \draw[conn] (u\i) -- (top) node[midway,left=-2pt] {$u_i$};
        \node[below=1pt] at (v\i) {$x_i$};
    \else
        \draw[conn] (v\i) -- (u\i);
        \draw[conn] (u\i) -- (top);
    \fi
}
\end{tikzpicture}
\end{minipage}

\vspace{-4pt}
\caption{A sigmoided linear neuron (left) and its ``spindlified'' reparameterization (right). Gradient Descent (GD) on the left network is rotation invariant, whereas GD on the right network is not.}
\vspace{-8pt}
\label{fig:1-spindly}
\end{figure}
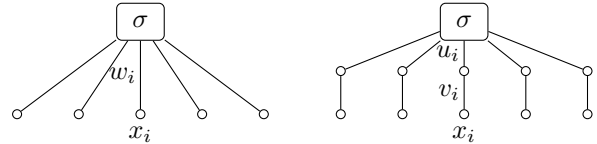
In this paper, we focus on the overconstrained case $(n \geq d)$. Surprisingly, even in this setting there remains a performance gap between rotation-invariant and non-rotation-invariant algorithms for sparse linear regression with additive Gaussian noise. Recent work \cite{pmlr-v272-warmuth25a} has shown that when noise is added to a sparse linear model, then rotation invariant algorithms are prevented from recovering sparse targets efficiently. Canonical examples of rotation-invariant 
algorithms are neural networks with a fully connected input
layer, initialized from a rotation-invariant distribution
and trained using gradient descent. In contrast, when each
linear weight $w_i$ is expressed as a product of two
parameters $w_{i}=u_{i}v_{i}$ (Figure-1), then GD on the resulting
network is not rotation invariant any more. 

We focus on a more important setting: sparse logistic regression. Logistic modeling is the most common final prediction stage in neural networks for classification, and understanding how models trained with gradient descent learn sparse targets in this setting is therefore of central interest. We consider the simplest instance of this problem, namely the binary generalized linear model, where the conditional distribution of the label is given by a logistic sigmoid link. Specifically, labels are generated according to
\begin{align}
\label{conditional-label}
\mathbb{P}(y_i = 1 \mid \mathbf{x}_i)
= \sigma\!\left(\mathbf{x}_i^\top \mathbf{w}^\star\right).
\end{align}
We assume that the inputs are drawn i.i.d.\ from an isotropic Gaussian distribution,
$\mathbf{x} \sim \mathcal{N}(\mathbf{0}, \mathbf{I})$, the target vector
$\mathbf{w}^\star$ has finite norm and is $s$-sparse, so that the probabilities  $\sigma(\mathbf{x}^\top\mathbf{w})$ are bounded away from 0 and 1.

In the overconstrained case $(n > d)$, learning may proceed in two ways: (a) The learner uses the true conditional probabilities (\textit{soft-labels})
$\sigma(\mathbf{x}_i^\top \mathbf{w}^\star)$ as targets. In this case, rotation-invariant algorithms are sufficient to recover the sparse target vector $\mathbf{w}^\star$. (b) The learner observes only hard labels taking values $\pm 1$, sampled from the same distribution~\eqref{conditional-label}. We show that this distinction is important. When training relies only on hard labels, rotation-invariant algorithms are provably suboptimal: no such algorithm can achieve the Bayes risk, and any
rotation-invariant procedure incurs an excess risk of order
$\Omega\!\left(\frac{d-1}{n}\right)$. By contrast, 
a simple non-rotation-invariant algorithm 
(gradient descent on the \textit{spindly network} of Figure~\ref{fig:1-spindly}, left) achieves a much smaller excess risk of order $O\!\left(\frac{s \log d}{n}\right)$.

This gap is non-trivial because the underlying label-generating model is noise-free, and the target conditional probability
$\sigma(\mathbf{x}^\top \mathbf{w}^\star)$ contains sufficient information to recover $\mathbf{w}^\star$ in the overconstrained case (Proposition-\ref{prop:soft_label_minimizer}). Nevertheless, when the learner accesses this target only through sampled hard labels, the sampling process introduces enough randomness about $\mathbf{w}^\star$, and the performance gap arises without any additional noise. 

Our techniques for obtaining a lower bound for rotation invariant algorithms and an upper bound for non-rotation invariant algorithms are novel for logistic loss and these results do not follow directly from the analogous results for square loss \cite{pmlr-v272-warmuth25a}. 
For example, to establish the excess risk lower-bound, we must resort to geometric properties of the logistic loss to analyze posterior concentration on a sphere. For the excess risk upper bound, we develop a new \textit{state-dependent} Riccati-type ODE describing the dynamics of a non-rotation invariant algorithm under logistic loss and use an ODE enveloping argument to separate signal and noise coordinates.

\textbf{Notation:} For positive functions $f(x)$ and $g(x)$, we write $f(x)=O(g(x))$ if there exists a constant $C>0$ such that $f(x)\le C\, g(x)$ for all sufficiently large $x$, and $f(x)=\Omega(g(x))$ if there exists a constant $c>0$ such that $f(x)\ge c\, g(x)$ for all sufficiently large $x$. The notation $f(x)=\Theta(g(x))$ means both $f(x)=O(g(x))$ and $f(x)=\Omega(g(x))$. We use $f(x)\lesssim g(x)$ to denote $f(x)\le C\, g(x)$ for an absolute constant $C>0$, and we write $f(x)\asymp g(x)$ when both $f(x)\lesssim g(x)$ and $g(x)\lesssim f(x)$ hold, meaning the two functions are comparable up to constant factors. For vectors $\mathbf{u},\mathbf{v}\in\mathbb{R}^d$, we denote by $\mathbf{u}\odot \mathbf{v}$ their Hadamard (elementwise) product. 
%{-1em}
\section{Preliminaries}

\textit{Well-specified Data model:} Assume data $\mathcal{D}:=\{(\mathbf{x}_{i},y_{i})\}_{i=1}^n \in \mathbb{R}^d \times \{-1, +1\}$ are drawn i.i.d.\ from a Gaussian distribution $\mathbf{x}_i \sim \mathcal{N}(\mathbf{0}, \mathbf{I})$ 
and labels are generated according to the true conditional probability \eqref{conditional-label}:
\begin{align}
    \label{conditional-label-restate}
\mathbb P(y_i = 1 \mid \mathbf x_i)
= \sigma(\mathbf x_i^\top \mathbf w^\star) :=\frac{1}{1+e^{-\mathbf x_i^\top \mathbf w^\star}}.
\end{align}
The oracle parameter $\mathbf w^\star \in \mathbb R^d$ is $s$-sparse and has unit Euclidean norm, that is,
$\|\mathbf w^\star\|_0 = s < d$ and $\|\mathbf w^\star\|_2 = 1$ (without loss of generality). If the input vector lies entirely in the inactive subspace of $\mathbf w^\star$, the conditional label distribution is $\mathbb P(y=1\mid \mathbf x)=1/2$, so the labels contain no information; effective learning therefore requires ignoring these directions and focusing on recovering the signal on the sparse active support.

\textit{Population risk:} For parameter $\mathbf{w} \in \mathbb{R}^d$, the population \textit{logistic} risk is defined by the negative log-likelihood of \eqref{conditional-label-restate}
\begin{align}
\mathcal{L}(\mathbf{w})
\;:=\;
\mathbb{E}
\big[\ell(y\langle \mathbf{x},\mathbf{w}\rangle)\big],
\quad 
\ell(t):=\log(1+e^{-t}),
\end{align}
where the expectation is over the joint distribution of $(\mathbf{x},y)$. The population risk implicitly has access to the true conditional probability \eqref{conditional-label-restate} since it is an expectation over $\mathbb P(y| \mathbf{x})$. 
Under the well-specified sparse data model in \eqref{conditional-label-restate}, $\mathcal{L}(\mathbf{w})$ is strictly convex and admits a unique and finite global minimizer at $\mathbf{w}^\star$. The Bayes risk is thus $\mathcal{L}(\mathbf{w}^{\star})$. 

\textit{Empirical risk:} The empirical risk is the average over the $n$ observed samples in $\mathcal{D}$:
\begin{align}
\label{empirical-risk}
\widehat{\mathcal{L}}(\mathbf{w})
\;:=\;
\frac{1}{n}\sum_{i=1}^n
\ell\!\left(y_i\langle \mathbf{x}_i,\mathbf{w}\rangle\right).
\end{align}
We also define a \textit{soft-label} empirical risk, when we have access to the
true conditional distribution of the label given each input.
\begin{equation}
\widehat{\mathcal L}_{\mathrm{soft}}(\boldsymbol w)
:=
\frac{1}{n}\sum_{i=1}^n
\mathbb E_{y \mid \boldsymbol X=\boldsymbol x_i}
\big[
\ell\!\left(y\,\langle \boldsymbol x_i,\boldsymbol w\rangle\right)
\big].
\label{eq:soft-risk-def}
\end{equation}
% For the logistic loss, this can be written equivalently as
% \begin{equation}
% \widehat{\mathcal L}_{\mathrm{soft}}(\boldsymbol w)
% =
% \frac{1}{n}\sum_{i=1}^n
% \Delta_H\!\left(\sigma(\langle \boldsymbol x_{i}, \boldsymbol w^\star\rangle),\,\sigma(\langle \boldsymbol x_i,\boldsymbol w\rangle)\right),
% \label{eq:soft-risk-deltaH}
% \end{equation}
% up to an additive constant independent of $\boldsymbol w$,
% where $\Delta_H(\cdot,\cdot)$ denotes the Bernoulli cross-entropy.
Under this data model assumption and loss definition, we state the following proposition:

\begin{proposition}
\label{prop:soft_label_minimizer}
If the design matrix $\mathbf{X}:=(\mathbf{x}_{1},\mathbf{x}_{2},..,\mathbf{x}_{n})^T$ has full column rank and $n>d$, then the empirical soft-label risk $\widehat{\mathcal L}_{\mathrm{soft}}(\boldsymbol w)$ \eqref{eq:soft-risk-def}
has the same unique global minimizer $\mathbf w^\star$ as the population risk. 
\end{proposition}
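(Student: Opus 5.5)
The plan is to write the soft-label risk explicitly as a sum of one-dimensional functions of the scores $z_i := \langle \mathbf x_i, \mathbf w\rangle$. Then I show that each summand is strictly convex and minimized exactly at $z_i = \langle \mathbf x_i,\mathbf w^\star\rangle$, and finally use full column rank to pass from the scores back to $\mathbf w$. Let $p_i := \sigma(\mathbf x_i^\top \mathbf w^\star)$. Since $y=+1$ with probability $p_i$ and $y=-1$ with probability $1-p_i$,
\begin{equation*}
\widehat{\mathcal L}_{\mathrm{soft}}(\mathbf w) = \frac1n\sum_{i=1}^n \phi_i(\langle \mathbf x_i,\mathbf w\rangle),\qquad \phi_i(z) := p_i\,\ell(z) + (1-p_i)\,\ell(-z).
\end{equation*}
Using $\ell'(z) = -\sigma(-z) = \sigma(z)-1$, one gets $\phi_i'(z) = p_i(\sigma(z)-1) + (1-p_i)\sigma(z) = \sigma(z) - p_i$. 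Moreover $\phi_i''(z) = \sigma(z)(1-\sigma(z)) > 0$. Hence $\phi_i$ is strictly convex with unique minimizer $z = \sigma^{-1}(p_i) = \mathbf x_i^\top \mathbf w^\star$, which is well defined since $p_i\in(0,1)$. Equivalently, $\phi_i(z) - \phi_i(\mathbf x_i^\top\mathbf w^\star)$ is the Bernoulli KL divergence between $p_i$ and $\sigma(z)$.

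From this, $\widehat{\mathcal L}_{\mathrm{soft}}(\mathbf w) \ge \frac1n\sum_i \phi_i(\mathbf x_i^\top \mathbf w^\star) = \widehat{\mathcal L}_{\mathrm{soft}}(\mathbf w^\star)$ for every $\mathbf w$, so $\mathbf w^\star$ is a global minimizer. For uniqueness, suppose $\mathbf w$ attains the same value. Each term satisfies $\phi_i(\mathbf x_i^\top\mathbf w)\ge\phi_i(\mathbf x_i^\top\mathbf w^\star)$, so equality of the sums forces equality termwise. Strict convexity of each $\phi_i$ then gives $\mathbf x_i^\top \mathbf w = \mathbf x_i^\top\mathbf w^\star$ for all $i$, i.e.\ $\mathbf X(\mathbf w - \mathbf w^\star) = \mathbf 0$. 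Full column rank of $\mathbf X$ then yields $\mathbf w = \mathbf w^\star$.

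Alternatively, the Hessian $\frac1n \mathbf X^\top \mathrm{diag}(\sigma_i(1-\sigma_i))\mathbf X$ is positive definite under full column rank, which gives strict convexity of $\widehat{\mathcal L}_{\mathrm{soft}}$ directly. Combined with the vanishing gradient $\frac1n\mathbf X^\top(\sigma(\mathbf X\mathbf w^\star) - \mathbf p) = \mathbf 0$ at $\mathbf w^\star$, this gives the same conclusion. Finally, the statement also asserts that $\mathbf w^\star$ is the population minimizer. The preliminaries already state this, and it follows by the same pointwise argument applied conditionally on $\mathbf x$, with strict convexity coming from $\mathbb E[\mathbf x\mathbf x^\top] = \mathbf I$. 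There is no real obstacle; the only point needing care is the identity $\phi_i' = \sigma - p_i$, together with the fact that $p_i\in(0,1)$, which holds because $\mathbf x_i^\top\mathbf w^\star$ is finite.
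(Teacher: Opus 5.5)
Your proof is correct, and your main argument is organized differently from the paper's. You split $\widehat{\mathcal L}_{\mathrm{soft}}$ into one-dimensional cross-entropies $\phi_i(\langle \mathbf x_i,\mathbf w\rangle)$. The key observation is that all of them are minimized simultaneously at $\mathbf w^\star$, by strict properness of the log loss: $\phi_i(z)-\phi_i(\mathbf x_i^\top\mathbf w^\star)$ is a Bernoulli KL divergence. So global optimality of $\mathbf w^\star$ holds with no rank assumption, and full column rank enters only at the end, as the identifiability step $\mathbf X(\mathbf w-\mathbf w^\star)=\mathbf 0\Rightarrow\mathbf w=\mathbf w^\star$.

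The paper instead argues to first and second order, which is exactly your ``alternative'' paragraph:
\begin{itemize}
\item it computes $\nabla\widehat{\mathcal L}_{\mathrm{soft}}(\mathbf w)=\frac1n\sum_i\bigl(\sigma(\langle\mathbf x_i,\mathbf w\rangle)-\sigma(\langle\mathbf x_i,\mathbf w^\star\rangle)\bigr)\mathbf x_i$ and notes it vanishes at $\mathbf w^\star$;
\item it uses full column rank to make the Hessian $\frac1n\sum_i\sigma'(\langle\mathbf x_i,\mathbf w\rangle)\mathbf x_i\mathbf x_i^\top$ positive definite;
\item strict convexity then makes this stationary point the unique minimizer.
\end{itemize}

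Your termwise route shows more transparently why soft labels suffice: each summand is individually optimized at $\mathbf w^\star$, and without the rank condition the minimizer set is exactly $\mathbf w^\star+\ker\mathbf X$. The paper's route parallels its treatment of the hard-label case, where strict convexity is retained but stationarity at $\mathbf w^\star$ fails. Your justification that $\mathbf w^\star$ is the unique population minimizer is also more explicit than the paper's, which simply invokes well-specification.
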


We defer the proof to \ref{prop:soft_label_minimizer-restate} in the appendix.
At a high level, the argument is straightforward.
The empirical soft-label risk has a stationary point at $\mathbf w^\star$, and full column rank of the design matrix implies strict convexity further implying $\mathbf w^\star$ is the unique global minimizer. Proposition~\ref{prop:soft_label_minimizer-restate} shows that, in the over-constrained case, access to soft labels makes learning trivial: minimizing the empirical risk uniquely recovers the oracle $\mathbf w^\star$.

In contrast, when only sampled hard labels are observed, the empirical risk \eqref{empirical-risk} behaves differently. Even in the overconstrained case, the randomness due to the label sampling alters the landscape and the unique global minimizer is not at the oracle $\mathbf{w}^{\star}$. Proposition~\ref{prop:overdet_nonsep} formalizes this distinction:

\begin{proposition}
\label{prop:overdet_nonsep}
Under the well-specified sparse logistic model \eqref{conditional-label-restate}, with $n>d$, let
$\mathbf{X}:=(\mathbf{x}_{1},\mathbf{x}_{2},\ldots,\mathbf{x}_{n})^\top$ denote the design matrix.
\begin{enumerate}
[leftmargin=*,itemsep=0pt,topsep=1pt]
\item There exist constants $c_1,c_2>0$, depending only on $\|\mathbf w^\star\|$, such that
\[
\mathbb{P}\!\left(\{(\mathbf{x}_i,y_i)\}_{i=1}^n \text{ is linearly separable}\right)
\le c_1 e^{-c_2 n}.
\]

\item On the event that the data is not linearly separable,
$\widehat{\mathcal L}(\mathbf w)$ is coercive, i.e.
\[
\|\mathbf w\|\to\infty \quad \Longrightarrow \quad \widehat{\mathcal L}(\mathbf w)\to\infty,
\]
and if $\mathbf X$ has full column rank, then $\widehat{\mathcal L}$ is strictly convex and
admits a unique global minimizer, different than $\mathbf{w}^{*}$.
\end{enumerate}
\end{proposition}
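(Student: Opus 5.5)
For part 1, I will reduce separability to a uniform statement over directions. The data are linearly separable (through the origin) iff there is a unit $\mathbf u$ with $y_i\langle \mathbf x_i,\mathbf u\rangle\ge 0$ for all $i$, with some strict inequality. For a fixed unit $\mathbf u$, I first show that the population quantity
\[
p(\mathbf u):=\mathbb P\big(y\langle \mathbf x,\mathbf u\rangle<0\big)
\]
is bounded below by a constant $p_0>0$ depending only on $\|\mathbf w^\star\|$. Since $\|\mathbf w^\star\|=1$, we have $\sigma(\mathbf x^\top\mathbf w^\star)\in[\sigma(-t),\sigma(t)]$ on the event $|\mathbf x^\top\mathbf w^\star|\le t$, which has constant probability. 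On that event, each label is wrong relative to $\operatorname{sign}\langle\mathbf x,\mathbf u\rangle$ with probability at least $\sigma(-t)$. Hence $p(\mathbf u)\ge \sigma(-t)\,\mathbb P(|\mathbf x^\top \mathbf w^\star|\le t)=:p_0$, uniformly in $\mathbf u$.

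Next I will make this uniform over the sphere via a margin-robust version. Replace the indicator by the event $y\langle\mathbf x,\mathbf u\rangle<-\gamma$ for a small $\gamma$; its probability is still at least $p_0/2$ because $\langle\mathbf x,\mathbf u\rangle$ is $\mathcal N(0,1)$ and anti-concentrated. I then take an $\epsilon$-net $N$ of $S^{d-1}$ with $|N|\le(3/\epsilon)^d$. For $\mathbf u$ in the net, Chernoff gives at least $np_0/4$ indices with $y_i\langle\mathbf x_i,\mathbf u\rangle<-\gamma$, except with probability $e^{-cn}$. For general $\mathbf u$ with nearest net point $\mathbf u'$, the perturbation satisfies $|\langle\mathbf x_i,\mathbf u-\mathbf u'\rangle|\le\epsilon\|\mathbf x_i\|$. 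Using a Markov/Chernoff bound on how many $\|\mathbf x_i\|$ exceed $C\sqrt d$, and choosing $\epsilon\asymp\gamma/\sqrt d$, at least one sign violation survives. The union bound then costs $(C\sqrt d/\gamma)^d e^{-cn}$.

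This step is the main obstacle: the bound is only $c_1e^{-c_2n}$ when $n\gtrsim d\log d$, not merely $n>d$. I would try to remove the logarithm by a direct Cover/Wendel-type argument. Conditioned on the labels, the vectors $y_i\mathbf x_i$ are not symmetric, so an exact count fails. Instead, I would bound $\inf_{\mathbf u}\frac1n\sum_i(y_i\langle\mathbf x_i,\mathbf u\rangle)_-$ using Gaussian concentration of Lipschitz suprema, which gives deviations of order $\sqrt{d/n}$. This yields the stated form with constants absorbing the regime $n\ge Cd$. For $d<n<Cd$ the claim is trivial after enlarging $c_1$ (since $c_1e^{-c_2n}\ge1$), which I will state explicitly.

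For part 2, coercivity follows from non-separability by compactness. Let $g(\mathbf u):=\max_i\big(-y_i\langle\mathbf x_i,\mathbf u\rangle\big)$. Non-separability gives $g>0$ on the sphere; here I handle the case where all $y_i\langle\mathbf x_i,\mathbf u\rangle=0$ via the full-rank assumption. Since $g$ is continuous, $g\ge m>0$ on the sphere, so
\[
\widehat{\mathcal L}(\mathbf w)\ge \tfrac1n\,\ell(-m\|\mathbf w\|)\ge \tfrac1n\,m\|\mathbf w\|\to\infty .
\]
For strict convexity, the Hessian is $\frac1n\sum_i\sigma'(y_i\mathbf x_i^\top\mathbf w)\,\mathbf x_i\mathbf x_i^\top\succ0$, since $\sigma'>0$ and $\mathbf X$ has full column rank. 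Coercivity together with continuity gives existence of a minimizer, and strict convexity gives uniqueness.

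Finally, to show the minimizer differs from $\mathbf w^\star$, I argue that $\nabla\widehat{\mathcal L}(\mathbf w^\star)=-\frac1n\sum_i y_i\,\sigma(-y_i\mathbf x_i^\top\mathbf w^\star)\,\mathbf x_i$. Conditioned on $\mathbf X$, this is a nondegenerate discrete random vector. It vanishes only if a specific $\pm1$ label pattern solves a linear system, and for Gaussian $\mathbf x_i$ this happens with probability zero. So the claim holds almost surely; I will state it in that form.
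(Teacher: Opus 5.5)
Your proof is correct, but in Part~1 you make the bound uniform over directions by a different mechanism than the paper does.

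\emph{Part 1.} Both arguments start from the same slab $\{|\mathbf x^\top\mathbf w^\star|\le t\}$, on which the label probabilities lie in $[\sigma(-t),\sigma(t)]$.

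The paper conditions on $\mathbf X$ and keeps the $m\asymp n$ inputs that fall in the slab. It then takes a union bound over the labelings of those inputs that homogeneous halfspaces can realize. That set of labelings depends on $\mathbf X$ alone, and the conditionally independent labels hit any one of them with probability at most $(1-\sigma(-t))^m$. So the asymmetry of $y_i\mathbf x_i$, which made you abandon a Cover-type argument, does not matter. No Wendel-type exact probability is needed, only a count of the dichotomies of fixed points.

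You instead impose a margin $\gamma$, use an $\epsilon$-net with $\epsilon\asymp\gamma/\sqrt d$, and control how many $\|\mathbf x_i\|$ exceed $C\sqrt d$. This gives a quantitative margin statement but costs a factor $\log d$ in the sample size.

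If you replace the paper's $C_d m^{d+1}$ by Cover's bound $2\sum_{k<d}\binom{m-1}{k}\le 2(em/d)^d$, the counting route gives $e^{-cn}$ for $n\ge Cd$ with constants depending only on $\|\mathbf w^\star\|$. That is the clean way to remove your logarithm. Your Lipschitz-suprema alternative cannot be applied verbatim, because $y_i$ is a discontinuous function of $\mathbf x_i$. You would have to condition on $(\mathbf x_i^\top\mathbf w^\star,y_i)$ and concentrate only over the independent Gaussian component orthogonal to $\mathbf w^\star$.

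\emph{The constants.} Say explicitly that enlarging $c_1$ makes it depend on $d$. In fact your net bound already has the form $c_1e^{-c_2n}$ for every $n$, with $c_2$ depending only on $\|\mathbf w^\star\|$ and $c_1$ of order $(C\sqrt d/\gamma)^d$.

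This is exactly what the paper's proof delivers; it states that its constants depend on $d$ and $\|\mathbf w^\star\|$. Nothing better holds for all $n>d$. For $n=d+1$ Gaussian inputs, only $2$ of the $2^{d+1}$ labelings are non-separable, so $\mathbb P(\text{separable})\ge 1-2q^{d+1}\to 1$, where $q=\mathbb E\,\sigma(|G|)<1$ and $G\sim\mathcal N(0,1)$.

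\emph{Part 2.} Here your steps are tighter than the paper's.
\begin{itemize}
\item Your uniform bound $\widehat{\mathcal L}(\mathbf w)\ge \frac1n\bigl(\min_{\mathbb S^{d-1}} g\bigr)\|\mathbf w\|$ covers the case where every violation $y_i\mathbf x_i^\top\mathbf u\le 0$ is an equality. The paper's ray-by-ray argument leaves that case open.
\item Your almost-sure claim that $\nabla\widehat{\mathcal L}(\mathbf w^\star)\neq\mathbf 0$ is the right form. The paper's observation that $\mathrm{Var}(S_j\mid\mathbf X)>0$ only shows $\mathbb P(S_j=0\mid\mathbf X)<1$.
\end{itemize}

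To finish your argument, fix a label pattern and set $\tilde y_i=(1+y_i)/2$. The map $\mathbf X\mapsto\sum_i(\tilde y_i-\sigma(\mathbf x_i^\top\mathbf w^\star))\mathbf x_i$ is real-analytic and not identically zero, so its zero set is Lebesgue-null. A union over the $2^n$ patterns then gives probability zero.
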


We defer the full proof to Appendix~\ref{prop:overdet_nonsep-restate} and provide a brief proof sketch showing that the data is non-separable when $n>d$. Since $\|\mathbf{w}^\star\|$ is finite, the logistic model assigns nonzero
probability to both labels on a set of inputs that occurs with positive probability, so the labels are random rather than deterministic. When $n>d$, many sampled labels therefore disagree with any fixed linear separator, and because the number of linear separations in $\mathbb{R}^d$ grows only polynomially with $n$, a union bound implies that, with high probability, no separator fits all labels and the data are not linearly separable.

When the data are not separable, the empirical logistic loss diverges along every ray,
which ensures coercivity. If the design matrix has full column rank, the loss is
strictly convex and therefore admits a unique global minimizer. Moreover, with sampled hard labels, $\mathbf{w}^{\star}$ fails the first order optimality condition $\nabla \widehat{\mathcal L}(\mathbf w^{\star}) =\mathbf{0}$. Since strict convexity implies the minimizer is the unique stationary point, it follows that the unique global minimizer differs from $\mathbf{w}^{\star}$.

\textit{Gradient flow on single layer:} The empirical risk $\widehat{\mathcal{L}}(\mathbf{w})$ is minimized using gradient flow.
\begin{align}
\label{emp-gf-flow} 
\dot{\mathbf{w}}(t)
\;=\;
- \nabla \widehat{\mathcal{L}}(\mathbf{w}(t)),
\qquad
\mathbf{w}(0)=\mathbf{0}.
\end{align}
From Proposition~\ref{prop:overdet_nonsep}, since the loss is strictly convex and has a unique global minimizer, this flow is guaranteed to reach the unique minimum (by LaSalle invariance principle \cite{khalil2002nonlinear}). In Section-\ref{single-layer-rot} we show that this flow trajectory is rotation-invariant.

\textit{Gradient flow on the spindly network:}
 $\widehat{\mathcal{L}}(\mathbf{w})$ is minimized using a Hadamard-factored parameterization $\mathbf{w}(t)=\mathbf{u}(t)\odot\mathbf{v}(t)$, initialized as $(\mathbf{u}(0),\mathbf{v}(0))=(\alpha\mathbf{1},\alpha\mathbf{1})$. This parameterization is also referred to as a \textit{spindly network} or a two-layer diagonal linear network. The training dynamics are analyzed by studying the gradient flow trajectories of $\mathbf{u}(t)$ and $\mathbf{v}(t)$ separately:
\begin{align}
\dot{\mathbf{u}}(t)
&=
- \nabla_{\mathbf{u}}
\widehat{\mathcal{L}}(\mathbf{w}(t)),
\qquad
\dot{\mathbf{v}}(t)
=
- \nabla_{\mathbf{v}}
\widehat{\mathcal{L}}(\mathbf{w}(t)).
\end{align}
In contrast to single-layer gradient flow, this spindlified flow is \emph{not} rotation-invariant: the coordinate-wise parameterization induces learning dynamics that differentiate directions according to their alignment with the underlying signal. 
Given an estimator $\mathbf{w}$, its statistical performance is measured by the excess risk
\begin{align}
\label{excess-risk}
\mathcal{E}(\mathbf{w})
\;:=\;
\mathcal{L}(\mathbf{w})-\mathcal{L}(\mathbf{w}^{\star}),
\end{align}
which quantifies the gap between the risk of an estimator $\mathbf{w}$ and the minimum achievable risk $\mathcal{L}(\mathbf{w}^{\star})$, i.e., the Bayes risk.

% \begin{figure*}[t]
%   \centering

%   % --- legend across both columns ---
%   \includegraphics[width=0.45\textwidth]{figures/legend.png}

% %{-0.0em}

%   % --- two plots side by side ---
%   \begin{minipage}[t]{0.48\textwidth}
%     \centering
%     \includegraphics[width=\textwidth]{figures/error.png}
%     \caption*{(a) Empirical and Population $0-1$ risk.}
%   \end{minipage}
%   \hfill
%   \begin{minipage}[t]{0.48\textwidth}
%     \centering
%     \includegraphics[width=\textwidth]{figures/loss.png}
%     \caption*{(b) Empirical and Population Logistic risk.}
%   \end{minipage}

%   \caption{Training and validation performance for sparse logistic regression under single-layer and spindly parametrizations.}
%   \label{fig:sparse-logistic-three}
% \end{figure*}

%{-1em}

\section{Lower Bound for Rotation-Invariant Algorithms}
 Consider empirical risk minimization with logistic loss trained on two datasets 1) Dataset-A: $\{(\mathbf{x}_i, y_i)\}_{i=1}^n$ and 2) Dataset-B: $\{(\boldsymbol{U} \mathbf{x}_i, y_i)\}_{i=1}^n$ where $\boldsymbol{U}$ is a rotation matrix and the labels are identical in both datasets. Let $\mathbf{\hat w}$ denote the minimizer learned from Dataset A. Since the loss depends only on inner products and satisfies
\begin{align}
    \ell\!\left(y\,\langle \mathbf x, \mathbf w\rangle\right)
=
\ell\!\left(y\,\langle \boldsymbol{U}\mathbf x, \boldsymbol{U}\mathbf w\rangle\right),
\end{align}
the minimizer learned from Dataset B is $\boldsymbol{U} \hat {\mathbf{w}}$. As a consequence, for any test input $\mathbf{x}_{\mathrm{te}}$, the conditional probability $p(\mathbf{x}_{\mathrm{te}},\mathbf{\hat w})$ at $\mathbf{x}_{\mathrm{te}}$ induced by the model trained on Dataset~A  coincides with the conditional probability at the rotated input $\boldsymbol{U} \mathbf{x}_{\mathrm{te}}$ of the model trained on Dataset~B, that is,
\begin{align}
   p(\mathbf{x}_{\mathrm{te}},\mathbf{\hat w}) = p(\boldsymbol{U}\mathbf{x}_{\mathrm{te}},\boldsymbol{U}\mathbf{\hat w}).
\end{align}
This leaves the prediction unchanged when the training data $\mathbf{x}_{i}$ and the test data $\mathbf{x}_{\mathrm{te}}$ is rotated by $\boldsymbol{U}$. We formally define rotation invariance as follows:

\begin{definition}
An algorithm $\mathcal{A}$ is called \emph{rotation invariant} if, for an orthogonal matrix
$\boldsymbol{U}\in\mathbb{R}^{d\times d}$, training dataset
$\{(\mathbf{x}_i,y_i)\}_{i=1}^n$, and test input $\mathbf{x}_{\mathrm{te}}$, the predictor satisfies
\[
p\!\left(\mathbf{x}_{\mathrm{te}};\,\mathcal{A}\!\left(\{(\mathbf{x}_i,y_i)\}_{i=1}^n\right)\right)
=
p\!\left(\boldsymbol{U}\mathbf{x}_{\mathrm{te}};\,\mathcal{A}\!\left(\{(\boldsymbol{U}\mathbf{x}_i,y_i)\}_{i=1}^n\right)\right).
\]
That is, rotating both the training data and the test input by the same orthogonal
transformation leaves the induced prediction unchanged.
\end{definition}

\begin{figure}
    \centering
    \includegraphics[width=1\linewidth]{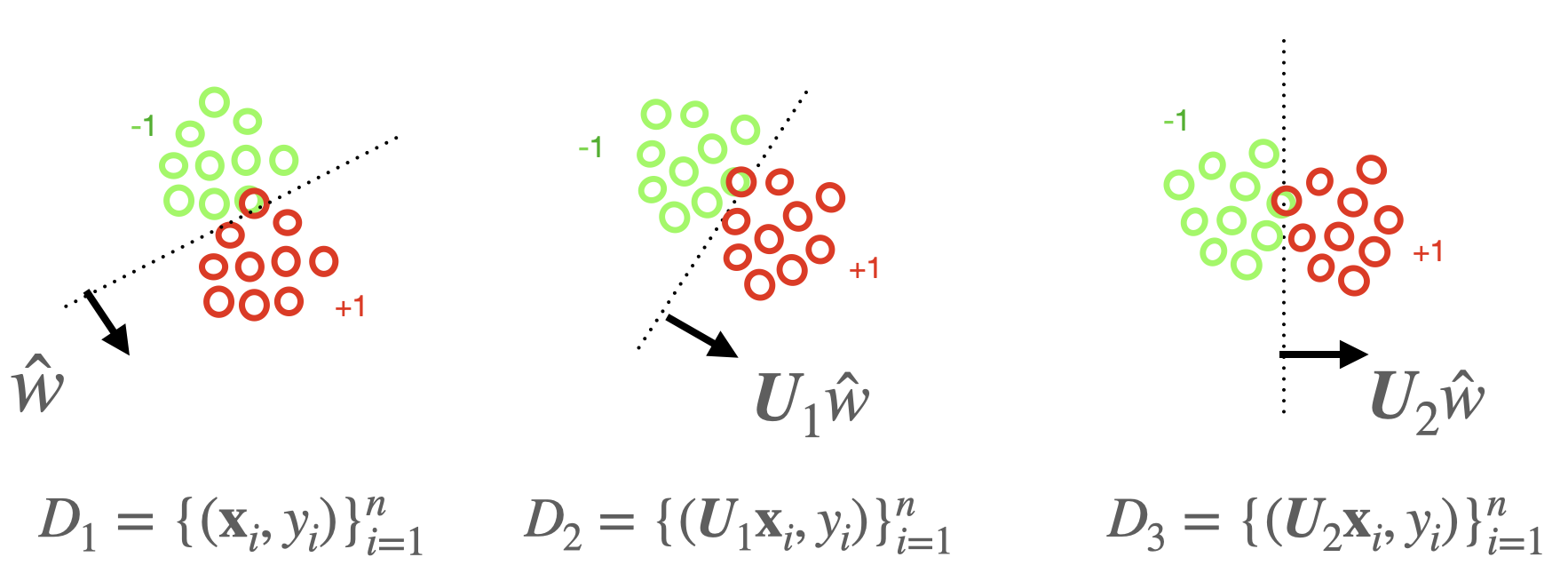}
    \caption{Gradient flow on single layer is rotation invariant. Rotating the data $\mathbf{x}_{i}$ (with labels unchanged), induces the same rotation on the estimator $\mathbf{\hat{w}}$.}
\end{figure}

\begin{proposition}
\label{single-layer-rot}
Gradient flow on single layer \eqref{emp-gf-flow} is rotation invariant. 
\end{proposition}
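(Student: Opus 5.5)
The plan is to show that the gradient flow trajectory on the rotated dataset is exactly the rotated trajectory on the original dataset, that is, $\mathbf w_B(t)=\boldsymbol U\mathbf w_A(t)$ for all $t\ge 0$. Rotation invariance of the induced predictor then follows at every time $t$, including the limit $t\to\infty$ and any early-stopped iterate.

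Let $\widehat{\mathcal L}_A$ and $\widehat{\mathcal L}_B$ be the empirical risks \eqref{empirical-risk} on Dataset A $\{(\mathbf x_i,y_i)\}$ and Dataset B $\{(\boldsymbol U\mathbf x_i,y_i)\}$. Since $\boldsymbol U^\top\boldsymbol U=\mathbf I$, we have $\langle \boldsymbol U\mathbf x_i,\mathbf w\rangle=\langle \mathbf x_i,\boldsymbol U^\top\mathbf w\rangle$, and therefore
\[
\widehat{\mathcal L}_B(\mathbf w)=\widehat{\mathcal L}_A(\boldsymbol U^\top\mathbf w).
\]
By the chain rule,
\[
\nabla\widehat{\mathcal L}_B(\mathbf w)=\boldsymbol U\,\nabla\widehat{\mathcal L}_A(\boldsymbol U^\top\mathbf w).
\]
One can also see this directly from $\nabla\widehat{\mathcal L}(\mathbf w)=-\frac1n\sum_i \sigma(-y_i\langle\mathbf x_i,\mathbf w\rangle)\,y_i\mathbf x_i$, where replacing $\mathbf x_i$ by $\boldsymbol U\mathbf x_i$ multiplies every term by $\boldsymbol U$.

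Next, define $\tilde{\mathbf w}(t):=\boldsymbol U\mathbf w_A(t)$. Then $\tilde{\mathbf w}(0)=\boldsymbol U\mathbf 0=\mathbf 0$, and
\[
\dot{\tilde{\mathbf w}}(t)=-\boldsymbol U\nabla\widehat{\mathcal L}_A(\mathbf w_A(t))=-\boldsymbol U\nabla\widehat{\mathcal L}_A(\boldsymbol U^\top\tilde{\mathbf w}(t))=-\nabla\widehat{\mathcal L}_B(\tilde{\mathbf w}(t)).
\]
So $\tilde{\mathbf w}$ solves the Dataset B flow \eqref{emp-gf-flow} with the same initial condition. The gradient of the logistic loss is smooth, and it is globally Lipschitz because $|\ell''|\le 1/4$, giving Lipschitz constant at most $\frac{1}{4n}\|\mathbf X\|_{op}^2$. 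By Picard--Lindel\"of the solution is therefore unique and global, so $\mathbf w_B(t)=\boldsymbol U\mathbf w_A(t)$ for all $t$. The only point needing care is this uniqueness step. The rest is the observation that the initialization $\mathbf 0$ is fixed by every rotation; a nonzero deterministic initialization would break the argument, and a random one would require a rotation-invariant distribution.

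Finally, the predictor is $p(\mathbf x_{\mathrm{te}};\mathbf w)=\sigma(\langle\mathbf x_{\mathrm{te}},\mathbf w\rangle)$, and
\[
\langle \boldsymbol U\mathbf x_{\mathrm{te}},\boldsymbol U\mathbf w_A(t)\rangle=\langle\mathbf x_{\mathrm{te}},\mathbf w_A(t)\rangle,
\]
which is exactly the identity required in the definition. For the limit point, Proposition~\ref{prop:overdet_nonsep} gives convergence to the unique minimizer $\hat{\mathbf w}_A$. Passing to the limit, or using uniqueness of minimizers together with $\widehat{\mathcal L}_B(\boldsymbol U\mathbf w)=\widehat{\mathcal L}_A(\mathbf w)$, gives $\hat{\mathbf w}_B=\boldsymbol U\hat{\mathbf w}_A$.
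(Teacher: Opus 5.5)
Your proof is correct and follows the same route as the paper. Both arguments derive $\nabla\widehat{\mathcal L}_B(\mathbf w)=\boldsymbol U\,\nabla\widehat{\mathcal L}_A(\boldsymbol U^\top\mathbf w)$, verify that $\boldsymbol U\mathbf w_A(t)$ solves the Dataset-B flow from $\mathbf 0$, and conclude $\mathbf w_B(t)=\boldsymbol U\mathbf w_A(t)$ for all $t$. Your uniqueness step (global Lipschitz gradient with constant at most $\frac{1}{4n}\|\mathbf X\|_{\mathrm{op}}^2$, then Picard--Lindel\"of) makes explicit what the paper's ``thus coincides'' leaves implicit, and your check of the predictor identity ties the result back to the definition.
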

 We defer the formal proof to Lemma~\ref{single-layer-rot-restate} in the appendix. Intuitively, training on a rotated dataset $\{(\boldsymbol{U}\mathbf x_i,y_i)\}_{i=1}^n$ rotates the empirical gradient by the same transformation, so the gradient flow trajectory is rotated pointwise as $\mathbf w(t)\mapsto \boldsymbol{U}\mathbf w(t)$ for all $t$, including any early-stopped solution.

Consequently, a rotation-invariant algorithm produces identical induced conditional probabilities under joint rotations of the training data and test input. As a result, such an algorithm cannot distinguish whether the data were generated
by $\mathbf{w}^\star$ or its rotated version
$\boldsymbol{U}\mathbf{w}^\star$, since both induce identical prediction
after corresponding rotation of the inputs. To formalize this limitation, we reduce the performance of any rotation-invariant algorithm to the Bayes optimal predictor of a rotationally symmetrized model, that gives equal importance to every direction. 

\subsection{Reduction to a Rotationally Symmetrized Observation Model}
Let $\tilde{\boldsymbol X} = [\boldsymbol X;\boldsymbol x_{\mathrm{te}}]$ denote the
augmented design matrix and
$\tilde{\boldsymbol y} = [\boldsymbol y; y_{\mathrm{te}}]$ the corresponding augmented
label vector for the test example $(\boldsymbol x_{\mathrm{te}}, y_{\mathrm{te}})$,
where $\boldsymbol X \in \mathbb{R}^{n \times d}$ denotes the training design matrix and
$\boldsymbol y \in \mathbb{R}^{n}$ the corresponding label vector.
We denote by $q(\tilde{\boldsymbol y} | \tilde{\boldsymbol X})$ the joint conditional distribution over $n$ training outcomes and a test outcome induced by the Gaussian logistic model~%(\ref{pop-grad-restate}).
(\ref{conditional-label-restate}). We define a rotated distribution for any rotation $\boldsymbol U$ as follows:
\begin{align}
    q_{\boldsymbol{U}}(\tilde{\boldsymbol y} | \tilde{\boldsymbol X}):=q(\tilde{\boldsymbol y}| \tilde{\boldsymbol X}\boldsymbol U^\top).
\end{align}
Then a symmetrized observation model is obtained by averaging this rotated model $q_{\boldsymbol{U}}(\tilde{\boldsymbol y}| \tilde{\boldsymbol X})$ over all rotations $\boldsymbol U\in\mathbb R^{d\times d}$ with respect to the Haar measure
$\rho_H$ on the orthogonal group:
\begin{align}
\label{symmetrized-model}
\bar q(\tilde{\boldsymbol y}\mid \tilde{\boldsymbol X})
:=
\int
q_{\boldsymbol{U}}(\tilde{\boldsymbol y} \mid \tilde{\boldsymbol X})\,
\mathrm d\rho_H(\boldsymbol U).
\end{align}
This symmetrization makes the direction $\mathbf{w}^\star$ unidentifiable. Conditioned on observing the finite training samples $(\tilde{\boldsymbol X}, \tilde{\boldsymbol y})$, the estimator now becomes $\boldsymbol{U}\mathbf{w}^\star$ with its posterior distribution given by $p(\boldsymbol U\mid \tilde{\boldsymbol X},\boldsymbol y)$. Since this rotation $\boldsymbol{U}$ is uncertain, prediction at a new test point $\boldsymbol{x}_{\mathrm{te}}$ must account for this posterior uncertainty over the rotation. The Bayes-optimal real-valued score therefore minimizes the logistic loss after averaging over all rotations consistent with the observed training data
\begin{align*}
& s^\star(\boldsymbol x_{\mathrm{te}}\mid \boldsymbol X,\boldsymbol y)
\;\in\; \notag \\
& \arg\min_{s\in\mathbb R}
\int
\mathbb E_{y_{\mathrm{te}}\sim q_{\mathbf{U}}(y_{\mathrm{te}} \mid \tilde{\boldsymbol X}, \boldsymbol y)}
\big[\ell(y_{\mathrm{te}}\, s)\big]\,
p(\boldsymbol U\mid \tilde{\boldsymbol X},\boldsymbol y)\,
\mathrm d\boldsymbol U .
\end{align*}
Here, $q_{\mathbf{U}}(y_{\mathrm{te}} \mid \tilde{\boldsymbol X}, \boldsymbol y)$ denotes the conditional distribution of the test label induced by rotated distribution $ q_{\mathbf{U}}(\tilde{\boldsymbol y} \mid \tilde{\boldsymbol X})$, given the observed training labels $\boldsymbol{y}$.
The Bayes risk under the symmetrized model \eqref{symmetrized-model} is the expected logistic loss of this Bayes-optimal predictor $ s^\star$ and is given by $\mathcal{L}_B(\bar q)$. 
\begin{align*}
\mathcal{L}_B(\bar q)
=
\mathbb E_{\tilde{\boldsymbol X}\sim \mathcal N(0,\mathbf I),\;
\tilde{\boldsymbol y}\sim \bar q(\cdot\mid \tilde{\boldsymbol X})}
\Big[
\ell\!\big(
y_{\mathrm{te}}\,
s^\star(\boldsymbol x_{\mathrm{te}}\mid \boldsymbol X,\boldsymbol y)
\big)
\Big].
\end{align*}

Theorem~\ref{bayes-optimal-risk} shows that the performance of any rotation-invariant learning algorithm on the original problem $q$, is lower bounded by $\mathcal{L}_B(\bar q)$, the Bayes risk under the symmetrized model. 

\begin{theorem}
\label{bayes-optimal-risk}
 Let $q(\tilde{\boldsymbol y}\mid \tilde{\boldsymbol X})$ be the joint conditional distribution over the training outcomes and one test outcome and
$\hat s(\cdot\mid \boldsymbol X,\boldsymbol y)$ be a rotation-invariant learning algorithm trained on $(\boldsymbol X,\boldsymbol y)$.
Define the expected loss as
\begin{align*}
\mathcal{L}_{\hat s}(q)
\;:=\;
\mathbb E_{\tilde{\boldsymbol X}\sim  \mathcal{N}(0,\mathbf{I}),\;
\tilde{\boldsymbol y}\sim q(\cdot\mid \tilde{\boldsymbol X})}
\Big[
\ell\!\big(
y_{\mathrm{te}}\,
\hat s(\boldsymbol x_{\mathrm{te}}\mid \boldsymbol X,\boldsymbol y)
\big)
\Big].
\end{align*}
Then this loss is lower bounded by the Bayes risk of the symmetrized observation model:
\begin{align}
\mathcal{L}_{\hat{s}}(q)
\;\ge\;
\mathcal{L}_B(\bar q).
\end{align}
\end{theorem}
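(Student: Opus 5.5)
The argument has two steps. First, rotation invariance of $\hat s$ forces its expected loss to be the same under every rotated model $q_{\boldsymbol U}$, and hence the same under the mixture $\bar q$. Second, no predictor can beat the Bayes-optimal one under $\bar q$.

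\emph{Step 1: $\mathcal{L}_{\hat s}(q_{\boldsymbol U})=\mathcal{L}_{\hat s}(q)$ for every orthogonal $\boldsymbol U$.} Write $\mathcal{L}_{\hat s}(q_{\boldsymbol U})$ as an integral over $\tilde{\boldsymbol X}\sim\mathcal N(0,\mathbf I)$ and $\tilde{\boldsymbol y}\sim q(\cdot\mid\tilde{\boldsymbol X}\boldsymbol U^\top)$, and substitute $\tilde{\boldsymbol Z}=\tilde{\boldsymbol X}\boldsymbol U^\top$. Each row becomes $\mathbf z_i=\boldsymbol U\mathbf x_i$. The Gaussian design is rotation invariant, so $\tilde{\boldsymbol Z}\sim\mathcal N(0,\mathbf I)$, and the labels are now drawn from $q(\cdot\mid\tilde{\boldsymbol Z})$. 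The loss term becomes $\ell\big(y_{\mathrm{te}}\,\hat s(\boldsymbol U^\top\mathbf z_{\mathrm{te}}\mid \boldsymbol Z\boldsymbol U,\boldsymbol y)\big)$. The definition of rotation invariance, applied with the orthogonal matrix $\boldsymbol U$ to the dataset $\{(\boldsymbol U^\top\mathbf z_i,y_i)\}$, gives $\hat s(\boldsymbol U^\top\mathbf z_{\mathrm{te}}\mid \boldsymbol Z\boldsymbol U,\boldsymbol y)=\hat s(\mathbf z_{\mathrm{te}}\mid\boldsymbol Z,\boldsymbol y)$. I treat the score $\hat s$ as the predictor $p$ in that definition; since $p$ is a monotone transform of the score, the identity transfers. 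The integrand is then exactly the one defining $\mathcal{L}_{\hat s}(q)$.

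\emph{Averaging over rotations.} Integrate Step 1 against the Haar measure $\rho_H$. Because $\tilde{\boldsymbol y}$ is discrete and the loss is nonnegative, Tonelli's theorem justifies exchanging the Haar integral with the expectation. This gives
\[
\mathcal{L}_{\hat s}(q)=\int \mathcal{L}_{\hat s}(q_{\boldsymbol U})\,\mathrm d\rho_H(\boldsymbol U)=\mathbb E_{\tilde{\boldsymbol X},\,\tilde{\boldsymbol y}\sim\bar q(\cdot\mid\tilde{\boldsymbol X})}\big[\ell(y_{\mathrm{te}}\,\hat s(\mathbf x_{\mathrm{te}}\mid\boldsymbol X,\boldsymbol y))\big],
\]
which is the expected loss of $\hat s$ under the symmetrized model $\bar q$ of \eqref{symmetrized-model}.

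\emph{Step 2: Bayes optimality under $\bar q$.} Condition on $(\tilde{\boldsymbol X},\boldsymbol y)$ and split $\bar q$ into its marginal over the training labels and its conditional over $y_{\mathrm{te}}$. The conditional of $y_{\mathrm{te}}$ under $\bar q$ is the posterior mixture $\int q_{\boldsymbol U}(y_{\mathrm{te}}\mid\tilde{\boldsymbol X},\boldsymbol y)\,p(\boldsymbol U\mid\tilde{\boldsymbol X},\boldsymbol y)\,\mathrm d\boldsymbol U$, where the posterior is proportional to $q_{\boldsymbol U}(\boldsymbol y\mid\boldsymbol X)\,\mathrm d\rho_H$. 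Once we condition, $\hat s(\mathbf x_{\mathrm{te}}\mid\boldsymbol X,\boldsymbol y)$ is a single real number. By definition, $s^\star$ minimizes the conditional expected loss over all real $s$ (as written in the display defining $s^\star$). So pointwise the conditional risk of $\hat s$ is at least that of $s^\star$. Taking the outer expectation yields $\mathcal{L}_{\hat s}(q)\ge\mathcal{L}_B(\bar q)$.

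\emph{Main obstacle.} The only delicate point is the bookkeeping in Step 1: getting the direction of the rotation right, so that $\tilde{\boldsymbol X}\boldsymbol U^\top$ matches rows $\boldsymbol U\mathbf x_i$, and checking that the invariance definition applies verbatim. A minor measurability issue remains for the Bayes step: either the minimizer $s^\star$ exists, because the conditional probability of $y_{\mathrm{te}}=1$ lies strictly between 0 and 1 so the conditional risk is coercive, or one works with the infimum directly.
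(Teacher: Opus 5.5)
Your proposal is correct and follows the paper's argument. In both, the expected loss of a rotation-invariant $\hat s$ is unchanged under each rotated model $q_{\boldsymbol U}$, averaging over the Haar measure gives $\mathcal{L}_{\hat s}(q)=\mathcal{L}_{\hat s}(\bar q)$, and Bayes optimality of $s^\star$ under $\bar q$ then gives the bound. You add the change of variables $\tilde{\boldsymbol Z}=\tilde{\boldsymbol X}\boldsymbol U^\top$, the Tonelli exchange, and the identification of the $\bar q$-conditional of $y_{\mathrm{te}}$ with the posterior mixture over $\boldsymbol U$, all of which the paper leaves to the cited reference.
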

%{-0.5em}
The proof logic is as follows. For a rotation-invariant algorithm $\hat s(\cdot \mid \boldsymbol X,\boldsymbol y)$, replacing $q$ by any rotated model $q_{\boldsymbol U}$ does not change its expected loss, since the Gaussian input distribution and the algorithm are rotation invariant. Therefore, its expected loss is identical under every rotated model $q_{\boldsymbol U}$. Because the symmetrized model $\bar q$ is obtained by averaging these rotated models, it follows that
\[
\mathcal L_{\hat s}(q) = \mathcal L_{\hat s}(\bar q).
\]
Under the symmetrized model, the Bayes predictor achieves the minimal possible risk, implying
\[
\mathcal L_{\hat s}(q) = \mathcal L_{\hat s}(\bar q)\ge \mathcal L_B(\bar q).
\]
 We refer to Theorem-1 in \cite{warmuth2021case} for the full proof. So, by Theorem~\ref{bayes-optimal-risk}, we can now lower bound the excess risk~\eqref{excess-risk} of any rotation invariant algorithm by just lower bounding the difference of symmetrized Bayes risk $\mathcal{L}_B(\bar q)$ and the population Bayes risk $\mathcal{L}(\mathbf{w}^{\star})$:
\begin{align}
\label{bayes-risk-gap}
    \mathcal{L}_{\hat{s}}(q) - \mathcal{L}(\mathbf{w}^{\star}) \geq \underbrace{\mathcal{L}_B(\bar q) - \mathcal{L}(\mathbf{w}^{\star})}_{\text{Bayes risk gap}}
\end{align}
The following lemma reduces the \textit{Bayes risk gap}~\eqref{bayes-risk-gap} to an
expectation with respect to a posterior distribution induced
by the empirical likelihood and a uniform spherical prior. 
\begin{lemma}
\label{lem:spherical-posterior}
Under the assumptions stated, the Bayes risk gap \eqref{bayes-risk-gap} satisfies
\begin{align*}
\mathcal{L}_B(\bar q) - \mathcal{L}(\boldsymbol w^\star)
=
\mathbb E_{\mathcal{D}}
\;
\mathbb E_{\boldsymbol w(\mathcal{D})\sim \Pi(\cdot\mid \mathcal{D})}
\Big[
\mathcal L(\boldsymbol w(\mathcal{D})) - \mathcal L(\boldsymbol w^\star)
\Big],
\end{align*}
where $\Pi(\boldsymbol w\mid \mathcal{D})$ is the posterior distribution induced by a uniform
prior on the unit sphere $\mathbb S^{d-1}$:
\begin{align*}
\Pi(\boldsymbol w\mid \mathcal{D})\;\propto\;\exp\!\big(-n\,\widehat{\mathcal L}(\boldsymbol w)\big)\,\mathbf 1_{\{\|\boldsymbol w\|_2=1\}}. 
\end{align*}
\end{lemma}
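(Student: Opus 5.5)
Our plan is to rewrite the symmetrized model $\bar q$ as a hierarchical Bayesian model and then compute its Bayes predictor explicitly. Since $q_{\boldsymbol U}(\tilde{\boldsymbol y}\mid\tilde{\boldsymbol X}) = q(\tilde{\boldsymbol y}\mid \tilde{\boldsymbol X}\boldsymbol U^\top)$ and $\langle \boldsymbol U^\top\mathbf x,\mathbf w^\star\rangle = \langle \mathbf x,\boldsymbol U\mathbf w^\star\rangle$, the model $q_{\boldsymbol U}$ is the logistic model with parameter $\mathbf w=\boldsymbol U\mathbf w^\star$. Under Haar measure, $\boldsymbol U\mathbf w^\star$ is uniform on $\mathbb S^{d-1}$ (using $\|\mathbf w^\star\|=1$). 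Therefore
\[
\bar q(\tilde{\boldsymbol y}\mid\tilde{\boldsymbol X})=\int_{\mathbb S^{d-1}}\prod_{i=1}^{n+1}\sigma(y_i\langle \mathbf x_i,\mathbf w\rangle)\,d\mathrm{Unif}(\mathbf w).
\]
This means $\bar q$ is the mixture obtained by drawing $\mathbf w\sim\mathrm{Unif}(\mathbb S^{d-1})$ and then drawing labels i.i.d.\ from the logistic model with parameter $\mathbf w$. Because $\ell(t)=-\log\sigma(t)$, the likelihood of the training labels is $\exp(-n\widehat{\mathcal L}(\mathbf w))$, and the posterior over $\mathbf w$ given $(\boldsymbol X,\boldsymbol y)$ is exactly $\Pi(\cdot\mid\mathcal D)$. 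The test input $\mathbf x_{\mathrm{te}}$ is independent and carries no label, so it does not affect the posterior.

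Next, we identify the Bayes score. Given $\mathcal D$ and $\mathbf x_{\mathrm{te}}$, the predictive probability of $y_{\mathrm{te}}=1$ is $\bar p(\mathbf x_{\mathrm{te}})=\mathbb E_{\Pi}[\sigma(\langle\mathbf x_{\mathrm{te}},\mathbf w\rangle)]$. Minimizing the expected logistic loss over $s$ gives $\sigma(s^\star)=\bar p$. The Bayes risk is then
\[
\mathcal L_B(\bar q)=\mathbb E\big[H(\bar p(\mathbf x_{\mathrm{te}}))\big],
\]
where $H$ is the binary entropy. We then write
\[
\mathcal L_B(\bar q)=\mathbb E_{\mathbf w\sim \mathrm{prior}}\,\mathbb E_{\mathcal D,\mathbf x_{\mathrm{te}},y_{\mathrm{te}}\mid \mathbf w}\big[\ell(y_{\mathrm{te}}s^\star)\big].
\]
For each fixed true $\mathbf w=\boldsymbol U\mathbf w^\star$, rotation symmetry of the Gaussian design means that this inner risk equals its value at $\mathbf w^\star$. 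This holds because $s^\star$ is itself a rotation-invariant predictor, as the prior is uniform. So the prior average collapses, and $\mathcal L_B(\bar q)$ equals the risk under $q$ with truth $\mathbf w^\star$. The same invariance shows that $\mathcal L(\mathbf w)=\mathcal L(\mathbf w^\star)$ for every unit $\mathbf w$, since the Bayes risk $\mathbb E_{\mathbf x}H(\sigma(\mathbf x^\top\mathbf w))$ depends only on $\|\mathbf w\|$.

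The key identity to prove is that the posterior-averaged excess risk equals the gap. Here we use the Bayesian (hierarchical) joint law of $(\mathbf w,\mathcal D)$, under which the two-stage sampling (draw $\mathbf w$ from the prior, then $\mathcal D$) is equivalent to drawing $\mathcal D$ from its marginal and then $\mathbf w$ from $\Pi(\cdot\mid\mathcal D)$. We also use $\mathcal L(\mathbf w(\mathcal D))$, meaning the population risk under the true model $\mathbf w^\star$. The plan is to compute
\[
\mathbb E_{\mathbf x}\mathbb E_\Pi\big[\mathrm{KL}(\sigma(\mathbf x^\top\mathbf w)\,\|\,\bar p)\big]
\]
and relate it to $\mathcal L(\mathbf w)-\mathcal L(\mathbf w^\star)$. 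Concretely, under the hierarchical joint law we have $\mathcal L_B(\bar q)-\mathbb E\,H(\sigma(\mathbf x^\top\mathbf w))=\mathbb E\,\mathrm{KL}(\sigma(\mathbf x^\top\mathbf w)\,\|\,\bar p)$ (the mutual-information decomposition). We then need to convert this KL term into the excess population logistic risk of a posterior draw evaluated against $\mathbf w^\star$, using rotation symmetry to fix the truth at $\mathbf w^\star$ and averaging over $\boldsymbol U$.

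We expect this last conversion to be the main obstacle. The literal identity "mixture loss minus truth loss equals posterior average of $\mathcal L(\mathbf w)-\mathcal L(\mathbf w^\star)$" is not an automatic consequence of Bayes' rule. Convexity of $\ell$ in the probability argument suggests at most an inequality, and the exact form depends on how $\mathcal D$ is distributed in the outer expectation, since the marginal of $\mathcal D$ under $\bar q$ agrees in law with $q$ only after rotation. We would first try to establish the identity by carefully tracking the rotations, so that $\mathbb E_{\mathcal D}$ is taken under $q$ and the posterior is re-centered. If exact equality fails, we would settle for the inequality $\ge$ via Jensen's inequality applied to $-\log$ of the predictive mixture, since $-\log \mathbb E_\Pi\sigma \le \mathbb E_\Pi[-\log \sigma]$. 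The inequality is in the direction relevant for upper bounding, so for the eventual lower bound we would need the reverse control, obtained from the KL term above.
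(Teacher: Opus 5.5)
The setup in your proposal is correct: the hierarchical form of $\bar q$, the posterior $\Pi(\cdot\mid\mathcal D)$, the Bayes score $\sigma(s^\star)=\bar p_{\mathcal D}(\mathbf x):=\mathbb E_{\Pi(\cdot\mid\mathcal D)}[\sigma(\mathbf x^\top\mathbf w)]$, and the rotation collapse. But you never establish the claimed equality, and that last step cannot be supplied. The identity is false, for exactly the Jensen reason you suspect.

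Take $\mathcal D\sim q$, which is the law under which the lemma is used afterwards, and let $p^\star(\mathbf x):=\sigma(\mathbf x^\top\mathbf w^\star)$. Your rotation argument and the definition of $\mathcal L$ give
\begin{align*}
\mathcal L_B(\bar q)-\mathcal L(\mathbf w^\star)
&=\mathbb E_{\mathcal D}\,\mathbb E_{\mathbf x}\,\mathrm{KL}\big(p^\star(\mathbf x)\,\|\,\bar p_{\mathcal D}(\mathbf x)\big),\\
\mathbb E_{\mathcal D}\,\mathbb E_{\Pi}\big[\mathcal L(\mathbf w)-\mathcal L(\mathbf w^\star)\big]
&=\mathbb E_{\mathcal D}\,\mathbb E_{\Pi}\,\mathbb E_{\mathbf x}\,\mathrm{KL}\big(p^\star(\mathbf x)\,\|\,\sigma(\mathbf x^\top\mathbf w)\big).
\end{align*}
The map $u\mapsto -p\log u-(1-p)\log(1-u)$ is strictly convex for $p\in(0,1)$. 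Also, $\Pi(\cdot\mid\mathcal D)\propto e^{-n\widehat{\mathcal L}}$ has a positive density on $\mathbb S^{d-1}$, so $\sigma(\mathbf x^\top\mathbf w)$ is non-degenerate under $\Pi$ for every $\mathbf x\neq\mathbf 0$. Hence the first quantity is strictly smaller than the second.

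The discrepancy is not lower order. Already for the square loss with isotropic design, the right side is exactly twice the left: posterior-averaged squared error versus expected posterior variance. If instead $\mathcal D$ is drawn from the $\bar q$-marginal, Bayes' rule turns the right side into $\mathbb E_{\mathbf w\sim\mathrm{Unif}(\mathbb S^{d-1})}[\mathcal L(\mathbf w)-\mathcal L(\mathbf w^\star)]$, which does not even decay with $n$.

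Nor does the paper's proof supply the missing step. Its second displayed equality replaces the Bayes predictor's loss $\ell(y_{\mathrm{te}}s^\star)$ by $\mathcal L(\mathbf w)$ at the rotated true parameter $\mathbf w=\mathbf U^\top\mathbf w^\star$. The Haar-to-uniform argument only justifies re-indexing $\mathbf U$ by $\mathbf w$, not this substitution. After it the integrand is data-independent, and the Bayes-rule step merely rewrites that constant.

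A smaller slip in your text: ``$\mathcal L(\mathbf w)=\mathcal L(\mathbf w^\star)$ for every unit $\mathbf w$'' is false for the paper's $\mathcal L$, which is the risk under the truth $\mathbf w^\star$; it would make the right side vanish. What is constant on the sphere is $\mathbb E_{\mathbf x}H(\sigma(\mathbf x^\top\mathbf w))$.

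What does hold is the one-sided bound $\mathcal L_B(\bar q)-\mathcal L(\mathbf w^\star)\le\mathbb E_{\mathcal D}\mathbb E_{\Pi}[\mathcal L(\mathbf w)-\mathcal L(\mathbf w^\star)]$. As you note, this points the wrong way for a lower bound on rotation-invariant algorithms.

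Your closing remark indicates the right repair. In your mutual-information identity, the true parameter is, given $\mathcal D$, itself a draw from $\Pi(\cdot\mid\mathcal D)$, and $\bar p_{\mathcal D}(\mathbf x)$ is its posterior mean. So the Bernoulli Pinsker inequality $\mathrm{KL}(p\,\|\,u)\ge 2(p-u)^2$ gives
\[
\mathcal L_B(\bar q)-\mathcal L(\mathbf w^\star)\;\ge\;2\,\mathbb E_{\mathcal D\sim q}\,\mathbb E_{\mathbf x}\,\mathrm{Var}_{\mathbf w\sim\Pi(\cdot\mid\mathcal D)}\big[\sigma(\mathbf x^\top\mathbf w)\big].
\]
Passing from the $\bar q$-marginal of $\mathcal D$ to $\mathcal D\sim q$ uses rotation invariance of the integrand. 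This inequality is what a corrected lemma should assert. The subsequent analysis of $\Pi$ should then lower-bound this posterior spread of $\sigma(\mathbf x^\top\mathbf w)$, rather than the posterior-averaged excess risk.
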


We provide the proof in Appendix~\ref{lem:spherical-posterior-restate}. Under the
symmetrized observation model $\bar q$, the direction of the target vector
$\boldsymbol w^{\star}$ is averaged uniformly, so evaluating risk under $\bar q$
is equivalent to averaging over all directions $\boldsymbol w$ in the original
conditional model $q$, resulting in an expectation with respect to the induced
posterior $\Pi(\boldsymbol w\mid \mathcal D)$ supported on the unit sphere.

\subsection{Excess risk on posterior over uniform sphere:}

Computing the Bayes predictor under a uniform prior on the sphere is difficult since the posterior mean does not admit a closed-form expression.
In the squared-loss setting, this difficulty can be bypassed by relating the spherical
prior to a Gaussian prior, which reduces the analysis to ridge regression (as shown by \citet{pmlr-v272-warmuth25a}). No analogous reduction is available for logistic loss, where the posterior induced
by a uniform spherical prior does not simplify to a tractable form. As a result, the spherical posterior must be analyzed directly through its geometric
and curvature properties, making the problem fundamentally more difficult than in
the squared-loss case. In the following theorem, we derive a lower bound of $\Omega(\frac{d-1}{n})$ on the excess risk on spherical posterior and draw a proof sketch.

% By definition of $\bar q$ as the Haar average of $q(\cdot\mid \tilde{\boldsymbol X}\boldsymbol U^\top)$ and Haar invariance, the random direction
% $\boldsymbol w:=\boldsymbol U^\top \boldsymbol w^\star$ is distributed as $\mathrm{Unif}(\mathbb S^{d-1})$, so averaging over $\boldsymbol U$ is equivalent to averaging over $\boldsymbol w$.
% Applying Bayes' rule under the symmetrized observation model rewrites this sphere average as an expectation under the induced posterior $\Pi(\cdot\mid D)$.
% Moreover, $\Pi$ admits the spherical Gibbs form
% \begin{align*}
% \Pi(\boldsymbol w\mid D)\;\propto\;\exp\!\big(-n\,\widehat{\mathcal L}_D(\boldsymbol w)\big)\,\mathbf 1_{\{\|\boldsymbol w\|_2=1\}},
% \end{align*}
% corresponding to a uniform prior on $\mathbb S^{d-1}$ and the empirical likelihood.
% We provide the full proof in Appendix~\ref{lem:spherical-posterior-restate}

\begin{theorem}
\label{excess-risk-low-main}
Let $\mathcal{D} = \{(\mathbf{x}_i,y_i)\}_{i=1}^n$
be drawn i.i.d.\ from data model~\eqref{conditional-label-restate} and $n \gtrsim d +\log(\frac{1}{\delta})$. Then, with probability at least $1-\delta$ over the draw of $\mathcal{D}$ , any rotation-invariant learning algorithm outputting $\mathbf{w}(\mathcal{D})$ has an excess risk lower bound over the spherical posterior,
\begin{align}
\mathbb E_{\boldsymbol w(\mathcal{D})\sim \Pi(\cdot\mid \mathcal{D})} \left[\mathcal{L}(\mathbf{w}(\mathcal{D})) - \mathcal{L}(\mathbf{w}^{\star})\right]
&\ge
\frac{c (d-1)}{n}
\end{align}
for an absolute constant $c$. 
\end{theorem}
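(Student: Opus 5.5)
The plan is to lower bound the population excess risk by a quadratic form, and then show that the spherical posterior $\Pi(\cdot\mid\mathcal D)$ keeps order $(d-1)/n$ of angular spread around $\mathbf w^\star$ in every tangent direction.

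\textbf{Step 1: local quadratic lower bound on the sphere.} For $\|\mathbf w\|=1$, write $\mathbf w=\cos\theta\,\mathbf w^\star+\sin\theta\,\mathbf v$ with $\mathbf v\perp\mathbf w^\star$ and $\|\mathbf v\|=1$. Since $\mathbf x\sim\mathcal N(0,\mathbf I)$, the risk $\mathcal L(\mathbf w)$ depends only on the joint law of $(\mathbf x^\top\mathbf w^\star,\mathbf x^\top\mathbf w)$, which is a bivariate Gaussian with correlation $\cos\theta$. Hence $\mathcal L$ on the sphere is a function $g(\theta)$ of the angle alone. Because $\mathbf w^\star$ is the unique minimizer and $\|\mathbf w^\star\|=1$ is fixed, $g$ is minimized at $\theta=0$. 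The Hessian of $\mathcal L$ at $\mathbf w^\star$ is $\mathbb E[\sigma'(\mathbf x^\top\mathbf w^\star)\mathbf x\mathbf x^\top]$. In directions orthogonal to $\mathbf w^\star$ it equals $\mathbb E[\sigma'(z)]=:\kappa>0$, an absolute constant. I will show $g(\theta)-g(0)\ge c_0(1-\cos\theta)\ge c_0'\,\|\mathbf w-\mathbf w^\star\|^2$ on the whole sphere. Near $\theta=0$ this comes from a second-order expansion. Away from $0$ it follows from continuity, strict minimality, and compactness; the constants are absolute because $\|\mathbf w^\star\|=1$.

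\textbf{Step 2: reduce to a posterior second moment.} After Step 1 it suffices to show
\[
\mathbb E_{\Pi}\|P^\perp(\mathbf w-\mathbf w^\star)\|^2\gtrsim \frac{d-1}{n},
\]
where $P^\perp$ is the projection orthogonal to $\mathbf w^\star$. I will parametrize the sphere near $\mathbf w^\star$ by tangent coordinates $\mathbf h\in\mathbb R^{d-1}$, using $\mathbf w=(\mathbf w^\star+\mathbf h)/\|\mathbf w^\star+\mathbf h\|$. The posterior density in $\mathbf h$ is then $\exp(-n\widehat{\mathcal L}(\mathbf w(\mathbf h)))$ times a bounded Jacobian. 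The key point is an \emph{upper} bound on curvature. When $n\gtrsim d+\log(1/\delta)$, standard Gaussian covariance concentration gives $\frac1n\mathbf X^\top\mathbf X\preceq 2\mathbf I$ with probability $1-\delta$. Since $\sigma'\le 1/4$, the empirical Hessian then satisfies $\nabla^2\widehat{\mathcal L}\preceq \tfrac12\mathbf I$ everywhere. So $\phi(\mathbf h):=n\widehat{\mathcal L}(\mathbf w(\mathbf h))$ has Hessian $\preceq Cn$ on a neighborhood of radius $r$; the normalization map contributes only bounded curvature terms, multiplied by gradients that are also $O(n)$.

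\textbf{Step 3: spread from bounded curvature.} A log-density with Hessian $\preceq Cn\mathbf I$ on a ball cannot concentrate more tightly than a Gaussian of covariance $(Cn)^{-1}\mathbf I$. To make this precise, I will use the Cram\'er--Rao/Brascamp--Lieb-type inequality $\mathrm{Cov}\succeq(\mathbb E\nabla^2\phi)^{-1}$, which holds for smooth densities through integration by parts. Taking traces gives $\mathbb E\|\mathbf h-\bar{\mathbf h}\|^2\ge (d-1)/(Cn)$. Combined with $\mathbb E\|\mathbf h\|^2\ge\mathbb E\|\mathbf h-\bar{\mathbf h}\|^2$ and Step 1, this gives the claim.

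\textbf{Main obstacle.} The hardest part is making the local chart argument global. Posterior mass far from $\mathbf w^\star$ can only increase the excess risk, by Step 1. The van Trees/integration-by-parts argument, however, needs boundary control. I would first try a smooth cutoff on a cap of angular radius $r$. If the posterior mass outside the cap is large, the bound holds trivially. Otherwise the boundary terms are small, and the curvature bound inside the cap yields the trace inequality.
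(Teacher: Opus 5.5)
Your proposal is correct. It shares the paper's core mechanism: the normalization chart onto $T$, a global ceiling $\nabla^2 V\preceq Cn\,\mathbf I_T$ from $\sigma'\le\frac14$ plus sample-covariance concentration, and a Cram\'er--Rao step that turns that ceiling into posterior variance $\gtrsim 1/n$ in each tangent direction. Where you differ is in how you localize.

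The paper gets the quadratic lower bound only locally, from Bach's self-concordance, on a ball of radius $r\le\min\{\frac1R,\frac{\log 3}{R_e}\}\lesssim 1/\sqrt d$. It therefore has to show separately that this ball carries most of the posterior mass. That is why it also proves local strong log-concavity, via events on $\max_i\|\mathbf x_i\|_2$ and on concentration of the tangent Hessian at $\mathbf w^\star$. It then applies Cram\'er--Rao directly to the hard-truncated $\Pi_r$.

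Your angular reduction makes $\mathcal L|_{\mathbb S^{d-1}}$ a $d$-free function of $\theta$, so $\mathcal L(\mathbf w)-\mathcal L(\mathbf w^\star)\gtrsim\|\mathbf w-\mathbf w^\star\|_2^2$ on the whole sphere. This buys several things: distant mass only helps, the cap can have constant radius, no lower curvature bound is needed, and one covariance event suffices, matching $n\gtrsim d+\log(1/\delta)$.

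Your cutoff-plus-dichotomy also supplies boundary control that the paper's truncated step omits and genuinely needs. Since $\mathbb E_{\Pi_r}\|\mathbf z\|_2^2\le r^2\lesssim 1/d$, no bound of order $(d-1)/n$ can hold for $\Pi_r$ unless $n\gtrsim d^2$. In your setting the boundary issue closes cleanly. Integrating $\mathrm{div}(\chi\mathbf h)$ and $\mathrm{div}(\chi\nabla V)$ against the posterior gives $(d-1)\mathbb E[\chi]+\mathbb E[\mathbf h\cdot\nabla\chi]\le\big(\mathbb E[\chi\|\mathbf h\|_2^2]\,\mathbb E[\chi\Delta V+\nabla\chi\cdot\nabla V]\big)^{1/2}$. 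Because $\|\nabla V\|_2\lesssim n$ on the cap, a small constant bound on the mass outside the inner cap makes the boundary terms negligible.

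Two minor corrections. First, the inequality you need is the Fisher-information (Cram\'er--Rao) bound, which requires no log-concavity; Brascamp--Lieb goes the other way. Second, the Jacobian factor contributes up to $d\,\mathbf I_T$ to $\nabla^2 V$, not $O(1)$; this is harmless only because $d\lesssim n$. You also rightly keep the chain-rule term $\sum_k\partial_k(n\widehat{\mathcal L})\,\nabla^2 w_k$, which the paper drops.
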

%{-0.5em}
Due to space constraints, we defer the full proof to Appendix~\ref{main-theorem-restate} and only outline the main steps here. The key idea is a change of variable that reduces the analysis on the unit sphere (in $\mathbf{w}$) to an equivalent problem in a local Euclidean coordinate system ($\mathbf{z}$).
In these coordinates, the proof separates into two components (a) establishing a lower bound on the population excess risk as a function of $\mathbf{z}$, and (b) showing that the induced posterior distribution $\Pi(\mathbf{z}|\mathcal{D})$ is anti-concentrated in $\mathbf{z}$. This allows us to control second moments of the posterior and apply standard covariance bounds such as the Cram\'er-Rao inequality.

\begin{enumerate}
[leftmargin=*,itemsep=0pt,topsep=1pt]
\item To characterize the posterior distribution $\Pi(\mathbf{w}|\mathcal{D})$ on the unit sphere $\mathbb{S}^{d-1}$, we use the chart variable map $\mathbf{w}(\mathbf{z})=\frac{\mathbf w^\star+\mathbf z}{\|\mathbf w^\star+\mathbf z\|_2}$ and work on the domain $T := \left\{ \mathbf{z}\in\mathbb{R}^d : \langle \mathbf{z}, \mathbf{w}^\star \rangle = 0 \right\}$. 
\item We characterize the population Hessian (at $\mathbf{w}^{\star}$) on $T$ (written as $\mathbf{H}^{\star}_{T}$) and prove it is isotropic. Then we prove a quadratic lower-bound on the population excess risk on a restricted domain, $\|\mathbf{z}\|_{2}\leq r$ utilizing a modified self-concordance hypothesis (Proposition-1 in \cite{bach2010self}):
\begin{align}
\mathcal L(\mathbf w(\mathbf z))-\mathcal L(\mathbf w^\star) \gtrsim \mathbf z^{T}\mathbf{H}_{T}(\mathbf{w}^{\star}) \mathbf z.
\end{align}
\item Next to account for the change of variable, we derive the density $\Pi(\mathbf{z}| \mathcal{D})$ from $\Pi(\mathbf{ w}|\mathcal{D})$ using its associated Jacobian. We prove that the posterior $\Pi(\mathbf{z}|\mathcal{D})$ is locally log-concave and is globally smooth in $\mathbf{z}\in T$. Defining $V(\mathbf z)=-\log\Pi(\mathbf{z}|\mathcal{D})$ as the negative log-likelihood, we prove for a positive $M>m>0$: 
 \begin{align}
 \label{hess-control}
\frac{1}{m}\mathbf I_T\,\succeq\nabla^2 V(\mathbf z) \;\succeq\; \frac{1}{M}\,\mathbf I_T ,
\end{align}
and we derive $m$ and $M$ explicitly for this problem. 
    \item We lower-bound the excess risk on $\mathbf{z}$ as follows:
    \begin{align*}
     &    \mathbb E_{\Pi(\mathbf z)}[ \mathcal{L}(\mathbf w(\mathbf z))-\mathcal L(\mathbf w^\star)]
\;\gtrsim\; \\
\;
& \underbrace{\Pi(\|\mathbf z\|_2\le r\mid \mathcal{D})}_{\text{A}}\;
\underbrace{\mathbb E_{\Pi(\cdot\,\mid \mathcal{D},\ \|\mathbf z\|_2\le r)}\!\big[\mathbf z^\top \mathbf{H}^{\star}_{T}\,\mathbf z\big]}_{\text{B}}
    \end{align*}
    and control the factors $\Pi(\|\mathbf z\|_2 \leq r| \mathcal{D})$ and $\mathbb E_{\Pi(\cdot| \mathcal{D},\ \|\mathbf z\|_2\le r)}\!\big[\mathbf z^\top \mathbf H^{\star}_{T}\,\mathbf z\big]$ respectively. In particular, we prove the following events occur with high probability:
\begin{align*}
& \text{(A):} \quad  \Pi(\|\mathbf z\|_2\le r| \mathcal{D})
\;\ge\;
1-
\exp\!(
-c_1 m
r^2), \\
&  \text{(B):} \quad   \mathbb E_{\Pi(\cdot\,\mid \mathcal{D},\ \|\mathbf z\|_2\le r)}\!\big[\mathbf z^\top \mathbf H_T^\star\,\mathbf z\big] \geq \frac{c_{2}}{n+c_3 d}\,\text{Tr}(\mathbf H_T^\star).
\end{align*}
The lower bound for (A) follows from local log-concavity of
$\Pi(\boldsymbol z\mid \mathcal D)$.
For (B), we use the curvature bound
$\nabla^2 V(\boldsymbol z)\succeq \frac{1}{M}\mathbf I_T$,
which gives an upper bound on the covariance of
$\Pi(\mathbf{z}\mid  \mathcal{D})$ via the Cram\'er-Rao inequality.
This covariance bound, in turn, implies a lower bound on the corresponding second-order moment.
\item Using the lower bounds for these two factors, combining all the events (using a union bound), and using $\text{Tr}(\mathbf H_T^\star) = d-1$ ( $\mathbf H^{\star}_{T}$ is isotropic), we show that the lower bound in Theorem~\ref{excess-risk-low-main} holds for $n\gtrsim d +\log(\frac{1}{\delta})$ with probability $1-\delta$ over the draw of dataset $\mathcal{D}$. 
\end{enumerate}
Theorem~\ref{excess-risk-low-main} gives a lower-bound over a single instance draw of $\mathcal{D}$ with high probability. 
Let $\mathcal E$ denote that event (over the draw of $\mathcal D$) on which the lower bound in Theorem~3.4 holds, so that
$\mathbb P_{\mathcal D}(\mathcal E)\ge 1-\delta$. 
Then applying Lemma~\ref{lem:spherical-posterior}, we get for some constant $c>0$:
\begin{align*}
& \mathbb E_{\mathcal D}\mathbb E_{\boldsymbol w(D)\sim\Pi(\cdot\mid \mathcal{D})}
\big[\mathcal L(\boldsymbol w(\mathcal{D}))-\mathcal L(\boldsymbol w^\star)\big] \\
&=
\mathbb E_{\mathcal D}\!\left[
\mathbf 1_{\mathcal E}\,
\mathbb E_{\boldsymbol w(\mathcal{D})\sim\Pi(\cdot\mid \mathcal{D})}
\big[\mathcal L(\boldsymbol w(\mathcal{D}))-\mathcal L(\boldsymbol w^\star)\big]
\right] \\
&\quad+
\mathbb E_{\mathcal D}\!\left[
\mathbf 1_{\mathcal E^c}\,
\mathbb E_{\boldsymbol w(\mathcal{D})\sim\Pi(\cdot\mid \mathcal{D})}
\big[\mathcal L(\boldsymbol w(\mathcal{D}))-\mathcal L(\boldsymbol w^\star)\big]
\right] \\
&\ge
\mathbb P_{\mathcal D}(\mathcal E)\cdot \frac{c(d-1)}{n}
\;\ge\;
(1-\delta)\,\frac{c(d-1)}{n},
\end{align*}
where $\mathbf1_{\mathcal E}$ denotes the occurrence of event $\mathcal E$. 
This finally establishes that the Bayes risk gap in Lemma~\ref{lem:spherical-posterior} is $\Omega(\frac{d-1}{n})$. By \eqref{bayes-risk-gap}, the expected loss for any rotation invariant algorithm given an observation model $q$ is also $\Omega(\frac{d-1}{n})$. In the next section, we show that this lower-bound barrier can be overcome by a simple reparameterization of the weights.

\begin{figure*}[t]
  \centering

  % --- legend across both columns ---
  \includegraphics[width=0.45\textwidth]{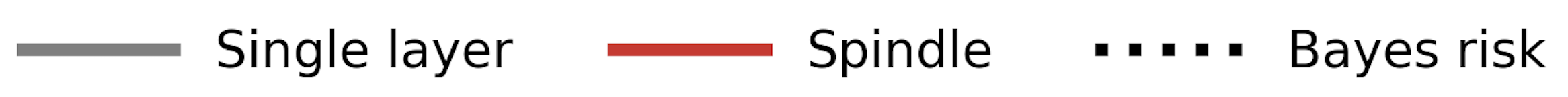}

%{-0.0em}

  % --- two plots side by side ---
  \begin{minipage}[t]{0.3\textwidth}
    \centering
    \includegraphics[width=\textwidth]{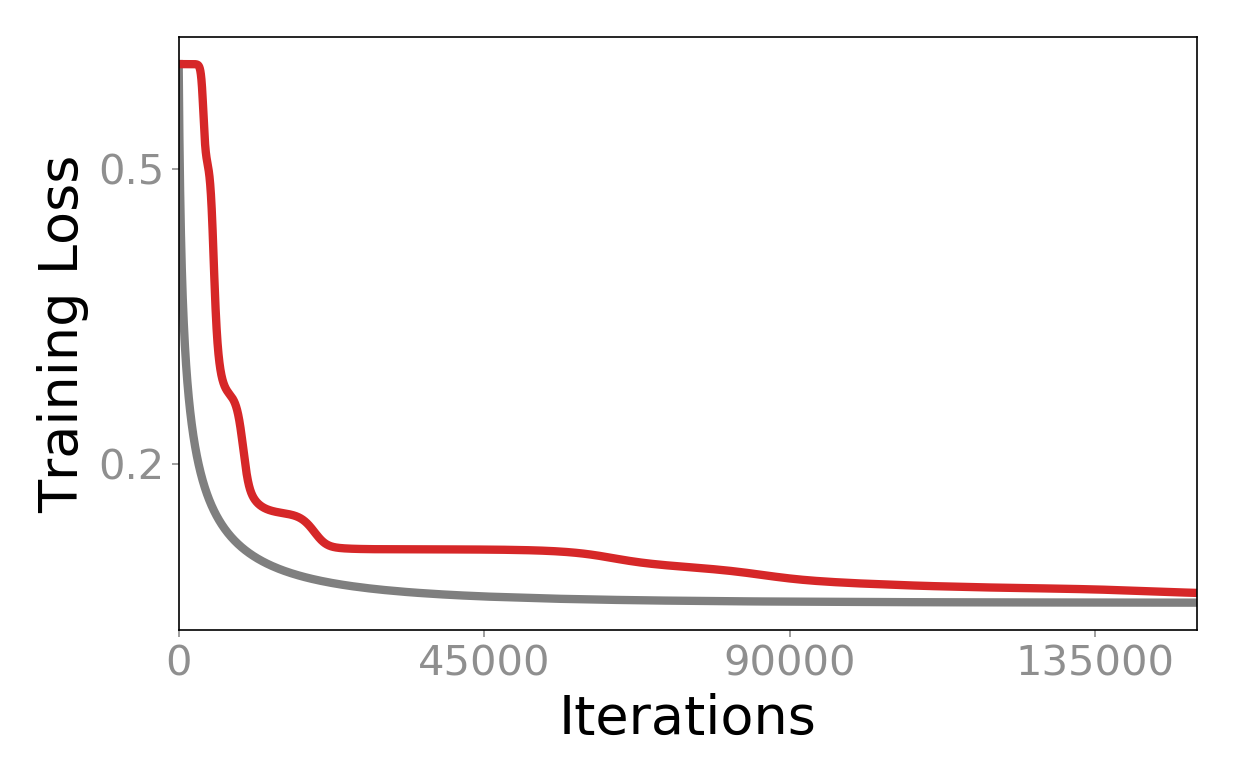}
    \caption*{(a) Training loss.}
  \end{minipage}
  \hfill
  \begin{minipage}[t]{0.3\textwidth}
    \centering
    \includegraphics[width=\textwidth]{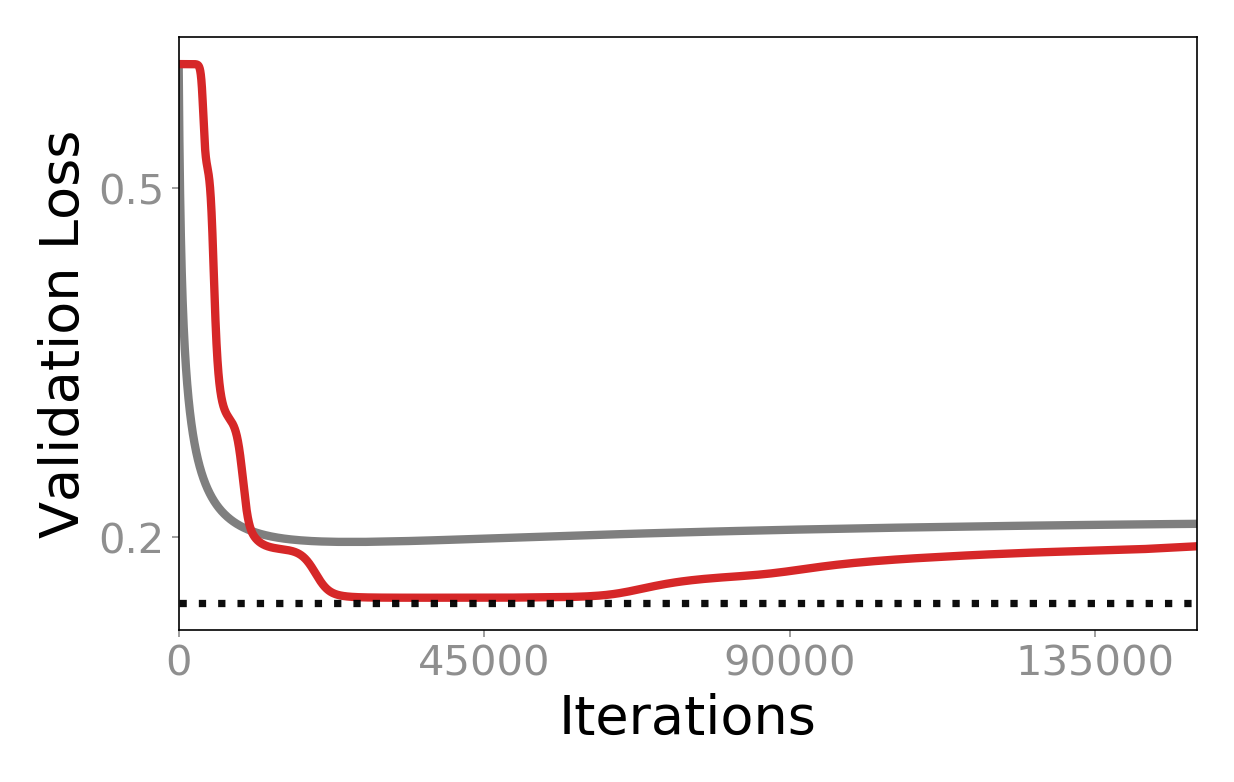}
    \caption*{(b) Validation loss and Bayes risk.}
  \end{minipage}
  \hfill
    \begin{minipage}[t]{0.33\textwidth}
    \centering
    \includegraphics[width=0.9\textwidth]{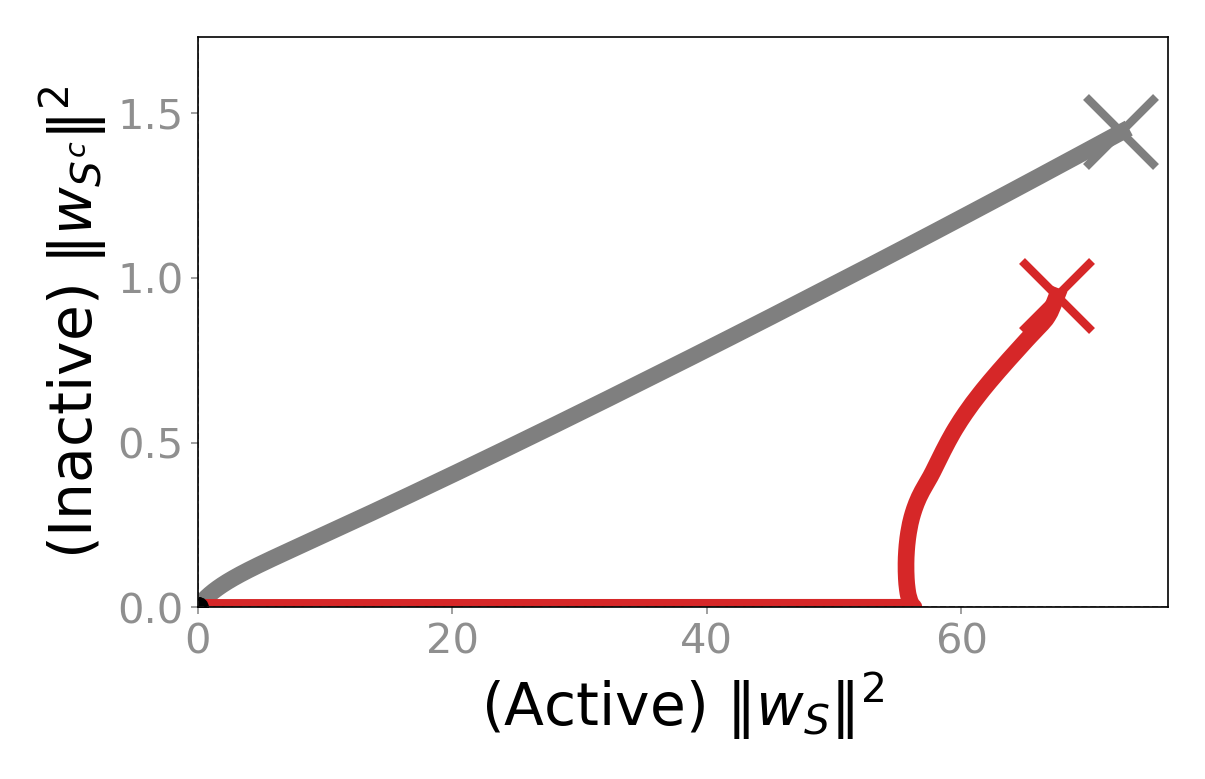}
    \label{fig-3}
    \caption*{(c) Norm on the inactive and active subspace.}
  \end{minipage}

  \caption{Training dynamics of sparse logistic regression under a single-layer (gray)
and the spindly parameterization (red).
Under early stopping, the spindly parameterization achieves lower validation
error than the single-layer model.
(c) Evolution of the squared norm of the weights on the active coordinates $S$
and inactive coordinates $S^{c}$. Along the spindly trajectory (red), the weights on
inactive coordinates remain small while the iterate passes by a sparse
solution.}
  \label{fig:sparse-logistic-three}
\end{figure*}

%{-1em}

\section{Spindly Dynamics in Logistic Regression}
\label{sec-spindly}
We study the training dynamics of a non-rotation invariant algorithm, where we reparameterize the weight vector as the Hadamard product of two vectors, $\mathbf{w}(t)=\mathbf{u}(t)\odot\mathbf{v}(t)$, and perform gradient flow on $\mathbf{u}$ and $\mathbf{v}$ as 
\begin{align}
\label{diag-lin}
\dot{\mathbf{u}}(t)
&=
- \nabla_{\mathbf{u}}
\widehat{\mathcal{L}}(\mathbf{w}(t)),
\quad
\dot{\mathbf{v}}(t)
=
- \nabla_{\mathbf{v}}
\widehat{\mathcal{L}}(\mathbf{w}(t)).
\end{align}
By the conservation law property of gradient flow (proof in Lemma-\ref{spindly-ode}), the joint dynamics on $\mathbf{u}$ and $\mathbf{v}$ lead to following dynamics on the predictor $\mathbf{w}$:
\begin{align}
\label{spindly-ode}
    \dot{\mathbf{w}}(t) = -|\mathbf{w}(t)| \odot \nabla \widehat{\mathcal{L}}(\mathbf{w}(t)),
\end{align}
with a balanced initialization $\mathbf{u}(0)=\mathbf{v}(0)=\alpha \mathbf{1}$. It is immediate that the dynamics \eqref{diag-lin} is not rotation invariant, since the elementwise weighting $|\mathbf w(t)|$ depends on the coordinate system and is not preserved under orthogonal transformations of $\mathbf w$ . The dynamics of \eqref{diag-lin} has been well studied in the linear regression setting by several past works such as \cite{saxe2013exact,pesme2023saddle,jacot2021saddle}. Consider the square loss with Gaussian design matrix, where  $\mathbb E[\mathbf x\mathbf x^\top]=\boldsymbol\Sigma=\mathrm{diag}(\lambda_1,\dots,\lambda_d)$,
\[
\mathcal L(\mathbf w)=\tfrac12\,\mathbb E\big[(y-\mathbf x^\top \mathbf w)^2\big],
\qquad
y=\mathbf x^\top \mathbf w^\star.
\]
In this case the population gradient is linear in $\mathbf{w}$, that is, 
$\nabla \mathcal L(\mathbf w)=\boldsymbol\Sigma(\mathbf w-\mathbf w^\star)$,
and the spindly dynamics decouple across coordinates.
Writing $\theta_k(t):=w_k(t)$, each coordinate follows a one-dimensional Riccati differential equation of the form
\begin{align}
\label{decouple-riccati}
\dot\theta_k(t)
=
\lambda_k (\theta_k^\star \theta_k(t)- \theta^{2}_k(t)),
\end{align}
which admits a closed-form solution.
This decoupling under Gaussian design explains why the dynamics of Hadamard-parameterized linear models are fully tractable. In contrast, for logistic loss the population gradient does not decouple, even under the well-specified model,
\begin{align}
\nabla \mathcal L(\mathbf w)
=
\mathbb E_{\mathbf x,y}\!\left[-y\,\mathbf x\,\sigma\!\left(-y\,\mathbf x^\top \mathbf w\right)\right],
\end{align}
so each coordinate of the gradient depends on the entire vector $\mathbf w$ through the link function $\sigma(\cdot)$. As a result, the induced spindly dynamics are no longer coordinate-separable and do not admit a closed-form solution analogous to the Riccati dynamics in linear regression \eqref{decouple-riccati}. In Theorem~\ref{pop-grad}, we show that Stein's Lemma allows us to write the dynamics as a \textit{state-dependent} Riccati ODE. 
\begin{theorem}
\label{pop-grad}
Consider the sparse data-generation model~\eqref{conditional-label}. Let $S:=\mathrm{supp}(\mathbf w^\star)$ denote the non-zero index set of $\mathbf{w}^{\star}$ and denote by $\mathbf x_S$ the subvector of $\mathbf x$ on $S$. 
(i) Exact population gradient:
For every $\mathbf w\in\mathbb R^d$ and each coordinate $j\in[d]$,
\begin{align}
    a(\mathbf w):=\mathbb E_{\mathbf x}\!\left[\sigma'(\mathbf x^\top \mathbf w)\right],
\quad
a^\star:=\mathbb E_{\mathbf x}\!\left[\sigma'(\mathbf x_S^\top \mathbf w_S^\star)\right],
\end{align}
and we have the gradient coordinate-wise forms by Stein's lemma:
\begin{align}
[\nabla \mathcal{L}(\mathbf w)]_{j} &= w_j\, a(\mathbf w), \qquad && j\in S^c,
\label{eq:pop_grad_inactive}
\\
[\nabla \mathcal{L}(\mathbf w)]_{j} &= w_j\, a(\mathbf w) \;-\; w_j^\star\, a^\star, \qquad && j\in S.
\label{eq:pop_grad_active}
\end{align}

 (ii) Empirical gradient noise:
Define the gradient noise vector
\[
\boldsymbol\zeta(\mathbf w):=\widehat {\mathcal{L}}(\mathbf w)-\mathcal{L}(\mathbf w),
\qquad
\zeta_j(\mathbf w):=\mathbf e_j^\top \boldsymbol\zeta(\mathbf w).
\]
 Then each $\zeta_j(\mathbf w)$ is the average of $n$ i.i.d.\ centered sub-Gaussian random variables. Then for any $\delta\in(0,1)$, with probability at least $1-\delta$,
\begin{align}
\max_{1\le j\le d} |\zeta_j(\mathbf w)|
\;\le\;
4\sqrt{\frac{1}{n}\log\!\Big(\frac{2d}{\delta}\Big)}
\;=:\;\gamma.
\label{eq:uniform_coord_noise}
\end{align}
\end{theorem}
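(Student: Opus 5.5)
Part (i) is a direct computation. I would first write the per-sample gradient $\nabla_{\mathbf w}\ell(y\mathbf x^\top\mathbf w) = -y\,\mathbf x\,\sigma(-y\,\mathbf x^\top\mathbf w)$ and average over $y$ given $\mathbf x$ using \eqref{conditional-label-restate}. Since $\sigma(-t)=1-\sigma(t)$, the conditional expectation over $y$ is
\[
\mathbb E[y\,\sigma(-y\mathbf x^\top\mathbf w)\mid\mathbf x]=\sigma(\mathbf x^\top\mathbf w^\star)-\sigma(\mathbf x^\top\mathbf w).
\]
The check is $\sigma^\star(1-\sigma)-(1-\sigma^\star)\sigma=\sigma^\star-\sigma$, writing $\sigma^\star=\sigma(\mathbf x^\top\mathbf w^\star)$ and $\sigma=\sigma(\mathbf x^\top\mathbf w)$. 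Hence
\[
\nabla\mathcal L(\mathbf w)=\mathbb E_{\mathbf x}\big[\mathbf x\,\sigma(\mathbf x^\top\mathbf w)\big]-\mathbb E_{\mathbf x}\big[\mathbf x\,\sigma(\mathbf x^\top\mathbf w^\star)\big].
\]
Next I apply Stein's lemma for $\mathbf x\sim\mathcal N(\mathbf 0,\mathbf I)$, which says $\mathbb E[\mathbf x f(\mathbf x)]=\mathbb E[\nabla f(\mathbf x)]$. Because $\sigma$ is bounded and $\sigma'\le 1/4$, the integrability conditions hold. With $f(\mathbf x)=\sigma(\mathbf x^\top\mathbf v)$ this gives $\mathbb E[\mathbf x\,\sigma(\mathbf x^\top\mathbf v)]=\mathbf v\,\mathbb E[\sigma'(\mathbf x^\top\mathbf v)]$. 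Applying it once with $\mathbf v=\mathbf w$ and once with $\mathbf v=\mathbf w^\star$ yields $\nabla\mathcal L(\mathbf w)=a(\mathbf w)\,\mathbf w-a^\star\mathbf w^\star$. Here $a^\star=\mathbb E[\sigma'(\mathbf x^\top\mathbf w^\star)]=\mathbb E[\sigma'(\mathbf x_S^\top\mathbf w^\star_S)]$, since $\mathbf w^\star$ vanishes off $S$. Reading off coordinates with $w^\star_j=0$ for $j\in S^c$ gives \eqref{eq:pop_grad_inactive} and \eqref{eq:pop_grad_active}.

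For part (ii) I interpret $\boldsymbol\zeta(\mathbf w)$ as the gradient difference $\nabla\widehat{\mathcal L}(\mathbf w)-\nabla\mathcal L(\mathbf w)$, as the name ``gradient noise'' intends. I also treat $\mathbf w$ as fixed, and note that the bound is pointwise in $\mathbf w$, not uniform over a trajectory. Then
\[
\zeta_j(\mathbf w)=\frac1n\sum_{i=1}^n\xi_{ij},\qquad \xi_{ij}:=-y_ix_{ij}\,\sigma(-y_i\mathbf x_i^\top\mathbf w)-[\nabla\mathcal L(\mathbf w)]_j .
\]
The summands are i.i.d. in $i$ and centered by definition of $\mathcal L$. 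For sub-Gaussianity, I use that $|y_i\,\sigma(\cdot)|\le 1$, so $|y_ix_{ij}\sigma(-y_i\mathbf x_i^\top\mathbf w)|\le|x_{ij}|$ with $x_{ij}\sim\mathcal N(0,1)$. A random variable dominated in absolute value by a standard Gaussian has $\psi_2$-norm bounded by an absolute constant, and centering changes this norm by at most a constant factor. This gives variance proxy $O(1)$ per summand.

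Hoeffding's inequality for sub-Gaussian averages then gives $\mathbb P(|\zeta_j|>t)\le 2\exp(-c\,n t^2)$. A union bound over the $d$ coordinates with $t=\gamma$ gives total failure probability at most $\delta$, once the constant $4$ is checked. The main obstacle is pinning this constant. To get exactly $4$ I would avoid generic $\psi_2$ constants and compute the moment generating function directly. Writing $g(\mathbf x,y)=y\sigma(-y\mathbf x^\top\mathbf w)\in[-1,1]$, I would bound $\mathbb E e^{\lambda x_j g}\le\mathbb E e^{\lambda|x_j|}\le 2e^{\lambda^2/2}$. I would then handle the centering with the symmetrization bound $\mathbb E e^{\lambda(Z-\mathbb EZ)}\le\mathbb E e^{2\lambda\epsilon Z}$, where $\epsilon$ is a Rademacher sign, which gives a variance proxy of at most $4$. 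The resulting tail is $2\exp(-nt^2/8)$, and $t=4\sqrt{\log(2d/\delta)/n}$ makes this at most $\delta/d$. If the constants do not close exactly, the result still holds with an absolute constant in place of $4$.
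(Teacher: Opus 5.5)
Your proposal is correct and follows the paper's proof essentially step for step. The tower property reduces $\nabla\mathcal L(\mathbf w)$ to $\mathbb E_{\mathbf x}[\mathbf x(\sigma(\mathbf x^\top\mathbf w)-\sigma(\mathbf x^\top\mathbf w^\star))]$, and Stein's lemma then gives $a(\mathbf w)\mathbf w-a^\star\mathbf w^\star$; the paper does this coordinatewise by conditioning on $\sum_{k\neq j}w_kx_k$, which amounts to the same thing. Part (ii) likewise uses the domination $|g_j|\le|x_j|$, a sub-Gaussian Hoeffding bound and a union bound over $j$.

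Your symmetrization step justifies the constant more carefully than the paper does. The paper asserts $\mathbb E[e^{\lambda g_j}]\le e^{2\lambda^2}$ for the uncentered per-sample $g_j$, which cannot hold unless $\mathbb E g_j=0$. Run your step as $\mathbb E e^{2\lambda\epsilon Z}=\mathbb E\cosh(2\lambda Z)\le\mathbb E\cosh(2\lambda x_j)=e^{2\lambda^2}$, not through $\mathbb E e^{\lambda|x_j|}\le 2e^{\lambda^2/2}$, whose prefactor $2$ would not give a variance proxy. The tail $2e^{-nt^2/8}$ at $t=\gamma$ is then at most $\delta/d$, as you say.
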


Equations \eqref{eq:pop_grad_inactive} and \eqref{eq:pop_grad_active} provide a coordinatewise expression for the population gradient, in which each component is proportional to the corresponding parameter value and modulated by a common coupling factor $a(\mathbf w):= \mathbb E_{\mathbf x}\!\left[\sigma'(\mathbf x^\top \mathbf w)\right]$ which depends on the derivative of the link function.
This simplification, obtained via Stein’s lemma, allows the induced spindly dynamics to be written as a system of \textit{coupled, state-dependent} Riccati differential equations:
\begin{align}
\label{state-dep-riccati}
\dot w_i(t)
&=
\zeta_i\, w_i(t) - a(\mathbf w(t))\, w_i(t)^2,
\quad && i \in S^c,
\notag\\
\dot w_i(t)
&=
\bigl(w_i^\star\, a^\star + \zeta_i\bigr)\, w_i(t)
- a(\mathbf w(t))\, w_i(t)^2,
\quad && i \in S,
\end{align}
where the coordinate-wise evolution is coupled through the shared scalar $a(\mathbf w(t))$.

\section{Upper bound Excess Risk for the Spindly Network}

Analyzing the dynamics of \eqref{state-dep-riccati}, we prove that early stopping induces statistical
signal-noise separation. At a proper stopping time, the coupled dynamics amplify the active coordinates,
$i \in S$, so that $w_i(t)$ concentrates near $w_i^\star$, while the inactive coordinates,
$i \in S^c$, are strongly suppressed.
As a result, the predictor $\mathbf w(t)$ remains close to the sparse oracle $\mathbf w^\star$,
leading to a sharp upper bound on the excess risk. We first define the following notation:

\textbf{Signal-curvature and Noise parameters.}
Let $S \subset [d]$ denote the support of the sparse parameter $\mathbf w^\star$. We define:
\begin{align}
a^\star 
&:= \mathbb E_{\mathbf x}\!\left[\sigma'\!\left(\mathbf x_S^\top \mathbf w_S^\star\right)\right],
\qquad
w_{\min}^\star 
:= \min_{i\in S} |w_i^\star|.
\end{align}
Here, $a^\star$ is the curvature of the logistic loss along the signal
subspace, while $w_{\min}^\star$ is the minimum signal strength among the
active coordinates. Without loss of generality, we assume $w_i^\star > 0$ for all
$i \in S$. Fixing a confidence level $\eta \in (0,1)$, we also define variables $\gamma ,\delta$ and $\epsilon$ as follows:
\begin{align*}
\gamma  := 4\sqrt{\frac{1}{n}\log\!\Big(\frac{2d}{\eta}\Big)},
\quad 
\delta:=\frac{1}{2}-\frac{\gamma}{a^{*}w_{min}^\star},
\quad
\epsilon 
:= 1 - 4a^{*}.
\end{align*}
Considering a relatively weaker assumption on the sample size and a signal-curvature lower bound:
\begin{align}
\label{noise-assump}
a^\star w_{\min}^\star = \Theta(1),
\quad 
 n\geq \frac{256}{(a^{*}w_{\min}^\star)^2} \log (\frac{2d}{\eta})
\end{align}
which ensures $\delta \in [\frac{1}{4},\frac{1}{2})$ and $\epsilon \in (0,1)$. On this event, with probability $(1-\eta)$, the \textit{sampling noise magnitude} is bounded by $\gamma$. The parameter 
$\delta$ acts as a signal–noise separation margin, guaranteeing that the empirical gradient noise $\gamma$ is dominated by the population signal along the active coordinates $a^{*}w_{min}^\star$. Lastly, $\epsilon$ signifies the curvature deviation along the signal subspace from its maximum value. With the definitions in place, we state the following theorem:

\begin{theorem}
\label{thm:spindly_upper}
Under assumption~\eqref{noise-assump} with probability atleast $1-\eta$ over the sample draw $\mathcal{D}$, there exists constants $c_1,c_2>0$ depending only on $w_{\min}^\star, a^\star, \epsilon$, such that for all times:
\[
t \in \Big[
\frac{\log d}{2w_{\min}^\star\, a^\star}-c_1,\;
\frac{\log d}{2w_{\min}^\star\, a^\star}+c_1
\Big],
\]
the spindly iterate satisfies
\begin{align}
\label{eq:spindly_l2_bound}
\|\mathbf w(t)-\mathbf w^\star\|_2^2
\;\le\;
\epsilon^2\,\|\mathbf w^\star\|_2^2
\;+\;
\frac{16s\log (\frac{2d}{\eta})}{n\, (w_{\min}^\star\, a^\star)^2}
\;+\;
\frac{c_2}{d^{\delta + \frac{1}{2}}}.
\end{align}
\end{theorem}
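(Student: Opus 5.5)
Our plan is to analyze the coupled state-dependent Riccati system \eqref{state-dep-riccati} by sandwiching each coordinate between scalar Riccati ODEs with constant coefficients, using only coarse bounds on the shared factor $a(\mathbf w(t))$. We work on the event of Theorem~\ref{pop-grad}(ii), which has probability at least $1-\eta$, so $|\zeta_i|\le\gamma$ uniformly. We will treat the noise coefficient as bounded along the trajectory; making this rigorous requires a uniform-in-$\mathbf w$ version of the bound, for instance via a net over the bounded region the iterate stays in. Throughout we use that $\sigma'\le 1/4$, which gives $a(\mathbf w)\le 1/4$.

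\textbf{Step 1: a priori bounds and a lower bound on $a$.} By a continuity (bootstrap) argument, we show that $\|\mathbf w(t)\|$ stays bounded on the time window, roughly $w_i\le w_i^\star(1+o(1))$ on $S$ and $w_i$ small on $S^c$. Since $a(\mathbf w)=\mathbb E[\sigma'(\|\mathbf w\|g)]$ is decreasing in $\|\mathbf w\|$, this gives $a(\mathbf w(t))\ge a_{\min}$ for a constant $a_{\min}$ close to $a^\star$ while $\|\mathbf w\|\approx\|\mathbf w^\star\|$. We also get $a\le 1/4$, and the gap $1/4-a^\star$ is what the slack $\epsilon=1-4a^\star$ measures.

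\textbf{Step 2: inactive coordinates.} For $i\in S^c$ we have $\dot w_i\le \gamma w_i$ with $w_i(0)=\alpha^2$, so $w_i(t)\le \alpha^2 e^{\gamma t}$. Because $w_i>0$ is preserved, the $-a w_i^2$ term only helps. We take $\alpha^2=d^{-1}$, or a similarly small polynomial in $d$. At $t\approx \log d/(2a^\star w_{\min}^\star)$ we have $\gamma t\le (\tfrac12-\delta)\log d$ by the definition of $\delta$. This gives $w_i(t)\le d^{-1/2-\delta}$ up to constants, and summing over $d$ coordinates yields a contribution of order $d\cdot d^{-1-2\delta}$ to $\|\mathbf w_{S^c}\|^2$. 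We expect the exponent bookkeeping for the choice of $\alpha$ to produce exactly the $c_2 d^{-(\delta+1/2)}$ term.

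\textbf{Step 3: active coordinates.} For $i\in S$ we compare with scalar Riccati equations $\dot\theta=(w_i^\star a^\star\pm\gamma)\theta-a_\pm\theta^2$, where $a_-\le a(\mathbf w(t))\le a_+=1/4$. These have logistic closed-form solutions. The rates are at least $w_{\min}^\star a^\star(1-\gamma/(a^\star w^\star_{\min}))$, so with the same small initialization each $w_i$ reaches its plateau within the stated window, the offsets $\pm c_1$ absorbing constant-time transients. The plateaus are $(w_i^\star a^\star\pm\gamma)/a_\pm$. The upper envelope uses $a_-\approx a^\star$, and the deviation from $w_i^\star$ is $\gamma/a^\star$, which gives after squaring and summing $16 s\log(2d/\eta)/(n(a^\star w^\star_{\min})^2)$, using $w^\star_{\min}\le1$. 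The lower envelope uses $a_+=1/4$ and yields $w_i\ge 4a^\star w_i^\star-\dots=(1-\epsilon)w_i^\star$. This is the source of the $\epsilon^2\|\mathbf w^\star\|^2$ term.

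\textbf{Main obstacle.} The delicate step is Step 1, together with the two-sided control of $a(\mathbf w(t))$ during the transient. Before the active coordinates grow, $\|\mathbf w\|$ is tiny and $a\approx1/4$; the growth in turn depends on $a$. We would handle this by bootstrap. Assume $\|\mathbf w(s)\|\le R$ on $[0,t]$, deduce the envelopes from Steps 2 and 3, and check that they imply $\|\mathbf w(s)\|<R$. Continuity then extends the bound. For Step 3 we only need monotone comparison with the crude bounds $a\in[a(\mathbf w_{\text{plateau}}),1/4]$, which keeps the argument free of any finer knowledge of the trajectory.
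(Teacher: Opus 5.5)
Your outline follows the paper's route. You sandwich each coordinate of \eqref{state-dep-riccati} between constant-coefficient logistic ODEs using $a(\mathbf w(t))\in[a_-,\frac14]$, bound $S^c$ by Gr\"onwall, and read $(1-\epsilon)w_i^\star$ off the $\frac14$-envelope and $w_i^\star+\gamma/a^\star$ off the other. Your bootstrap on $\|\mathbf w(t)\|$ is a sounder source of the curvature lower bound than the paper's curvature-strip lemma. That lemma's constant $\rho\,\sigma'(\tau R)$ is not the $a^\star=\mathbb E[\sigma'(\mathbf x^\top\mathbf w^\star)]$ appearing in the drift. Your bootstrap does close, since $R\,\mathbb E[\sigma'(Rg)]=\int\sigma'(v)\phi(v/R)\,dv$ increases in $R$ toward $1/\sqrt{2\pi}>a^\star$.

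The genuine gap is that Steps 2 and 3 do not run on the same clock. Take the ODE you write, which is the paper's $\dot{\mathbf w}=-|\mathbf w|\odot\nabla\widehat{\mathcal L}$. At $t=\log d/(2w^\star_{\min}a^\star)$, Step 2 actually gives $\gamma t\le\frac12(\frac12-\delta)\log d$ and hence the target $d^{-(\delta+1/2)}$. But Step 3 fails at that time. The solution $P/(1+(Pd-1)e^{-\beta t})$ started at $1/d$ needs $t\approx\log d/\beta$ to reach a constant fraction of its plateau, so the weakest active coordinate is still polynomially small (about $d^{-1/2}$ when $\zeta_{i_{\min}}=0$).

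The stated window has the right scale only for the true flow. With $\mathbf u=\mathbf v$ that flow is $\dot{\mathbf w}=-(\mathbf u^{\odot 2}+\mathbf v^{\odot 2})\odot\nabla\widehat{\mathcal L}=-2|\mathbf w|\odot\nabla\widehat{\mathcal L}$. But then the inactive coordinates grow like $\frac1d e^{2\gamma t}$, so $w_i\le d^{-1/2-\delta}$ and the inactive mass is $d^{-2\delta}$. That is exactly your Step 2 number, and it is strictly larger than $d^{-(\delta+1/2)}$ because $\delta<\frac12$.

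Tuning $\alpha$ does not help. Setting $\alpha^2=d^{-p}$ moves the saturation time to about $p\log d/(2w^\star_{\min}a^\star)$, so the stated window pins $p=1$. For $\zeta_i\equiv\gamma$ the Gr\"onwall bound is essentially attained. The paper obtains $d^{-(\delta+1/2)}$ precisely through this mismatch. It evaluates the factor-one envelope $\frac1d e^{\zeta_i t}$ at a $T(\epsilon)$ whose prefactor $\frac{1}{2w^\star_{\min}a^\star}$ belongs to the factor-two flow. A consistent version of this accounting yields $c_2 d^{-2\delta}$.

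Second, ``the offsets $\pm c_1$ absorbing constant-time transients'' fails for an argument that only knows $|\zeta_i|\le\gamma$. If $\zeta_{i_{\min}}\approx-\gamma$, the weakest rate is $w^\star_{\min}a^\star-\gamma$. Saturation is then delayed past $\log d/(2w^\star_{\min}a^\star)$ by an amount of order $\gamma\log d/(w^\star_{\min}a^\star)^2$. Under \eqref{noise-assump} alone, $\gamma/(w^\star_{\min}a^\star)$ may equal $\frac14$, so this delay grows like $\log d$. You need $\gamma\log d=O(1)$, e.g.\ $n\gtrsim\log^3 d$.

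Likewise, the $\frac14$-envelope has plateau $4(a^\star w_i^\star+\zeta_i)=(1-\epsilon)w_i^\star+4\zeta_i$. This lies below $(1-\epsilon)w_i^\star$ when $\zeta_i<0$, and equals it, reached only asymptotically, when $\zeta_i=0$. So the ``$\dots$'' in Step 3 hides a $4\gamma$ deficit plus a transient. Squaring $\epsilon w_i^\star+4\gamma$ leaves a cross term $8\epsilon\gamma w_i^\star$, first order in $\gamma$, which the stated right-hand side does not contain. The paper's $T(\epsilon)$ ignores $\zeta_{i_{\min}}$ in the same way, so its proof does not supply these steps either.

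Finally, on uniformity in $\mathbf w$: a net over a $d$-dimensional ball containing the iterate would inflate the noise level to order $\sqrt{d/n}$ (up to logarithms). That would turn the $s\log d/n$ term into $sd/n$. So this step must exploit that the trajectory stays close to the $s$-dimensional active coordinate subspace.
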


The theorem shows that although the spindly dynamics under logistic regression \eqref{state-dep-riccati} do not admit a closed-form solution, we can analyze the coupled dynamics and obtain an upper bound on the excess risk. The resulting bound is dominated by the term $O\!\left(\frac{s \log d}{n}\right)$ which captures the logarithmic dependence on the dimension~$d$. We defer the full proof to Appendix~\ref{thm:spindly_upper-restate}. It involves the following steps:

1) \textit{Monotonicity of envelope ODEs:} First, we prove that the curvature along the trajectory is uniformly controlled, in the sense that $a(\mathbf{w}(t)) \in[a^\star, 1/4]$. This allows us to envelope the intractable ODE system in \eqref{state-dep-riccati} between upper-bound and lower-bound closed-form ODE-systems. For example, for $i \in S$, we define two ODEs
\begin{align}
&  \dot w^{\text{up}}_i(t)=   \bigl(w_i^\star\, a^\star + \zeta_i\bigr)\, w^{\text{up}}_i(t)
- a^\star\, w^{\text{up}}_i(t)^2 \\ 
& \dot w^{\text{low}}_i(t)=   \bigl(w_i^\star\, a^\star + \zeta_i\bigr)\, w^{\text{low}}_i(t)
- \frac{1}{4}\, w^{\text{low}}_i(t)^2
\end{align}
and prove that $w^{\text{low}}_i(t) \leq w_{i}(t) \leq w^{\text{up}}_i(t)$ for all $0\leq t \leq T$, with both envelope trajectories monotone on this interval. An analogous construction applies to the coordinates in $i \in S^{c} $.

2) \textit{ Early-stopping time:}
We choose the stopping time so that the lower envelope of the weakest active coordinate reaches a prescribed fraction of its target value. Recalling
$w^\star_{\min} := \min_{i\in S} w_i^\star$,  and defining $i_{\min} \in \arg\min_{i\in S} w_i^\star$, we define the stopping time $T(\varepsilon)$ as:
\begin{align}
    T(\epsilon)
:=
\frac{1}{2 w^\star_{\min} a^\star}
\log\!\left(
\frac{4 w^\star_{\min} a^\star d - 1}
{\frac{4 a^\star}{(1-\epsilon) w^\star_{\min}} - 1}
\right),
\end{align}
which is when the lower-envelope of coordinate $i_{\min}$ reaches proportion $(1-\epsilon)$ of the target $w^\star_{\min}$, that is, 
\begin{align*}
    w^{\mathrm{low}}_{i_{\min}}(T(\epsilon)) = (1-\epsilon)\, w^\star_{\min}.
\end{align*}
In Lemma~\ref{lem:weakest_controls_all}, we show that once the smallest active coordinate reaches 
$(1-\epsilon)$ of its target, all other lower-envelope active coordinates are automatically closer to their respective targets.

3) \textit{Active coordinate error at $T(\epsilon)$.}
By the envelope comparison, at the stopping time $T(\epsilon)$ the weakest active
coordinate satisfies
\[
(1-\epsilon) w^\star_{\min} \le w_{i_{\min}}(T) \le w^\star_{\min} + \frac{\zeta_{i_{\min}}}{a^\star w^\star_{\min}},
\]
and all other active coordinates $i\in S$ satisfy analogous bounds with some
$\epsilon_S<\epsilon$.
Summing the resulting errors over $i\in S$ gives
\begin{equation}
\label{active-sum}
\sum_{i\in S}\big(w_i(T)-w_i^\star\big)^2
\;\le\;
\epsilon^2 \|\mathbf w^\star\|_2^2
\;+\;
\frac{16s\log (\frac{2d}{\delta})}{n\, (w_{\min}^\star\, a^\star)^2}.
\end{equation}

4) \textit{Inactive coordinate error at $T(\epsilon)$:} We show that at $T(\epsilon)$, the inactive coordinate remains suppressed. So, the upper bound ODE for $w_{i}(t)$ with $i \in S^{c}$ remains suppressed, 
\begin{align}
    w^{\text{up}}_{i}(t) = \frac{\zeta_{i}/a^{*}}{1+\Bigl(\frac{\zeta_{i} d}{a^{*}}-1\Bigr)e^{-\zeta_{i} t}} \leq \frac{2}{d} e^{\zeta_{i} t}.
\end{align}

\begin{figure}[t]
  \centering

  % --- legend across the column ---
  \includegraphics[width=\columnwidth]{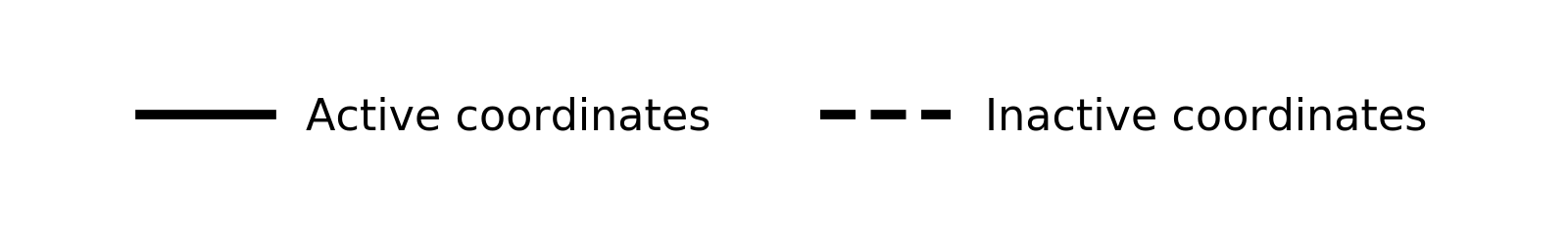}

  %{-0.6em}

  % --- two plots side by side (one row) ---
  \begin{minipage}[t]{0.48\columnwidth}
    \centering
    \includegraphics[width=\linewidth]{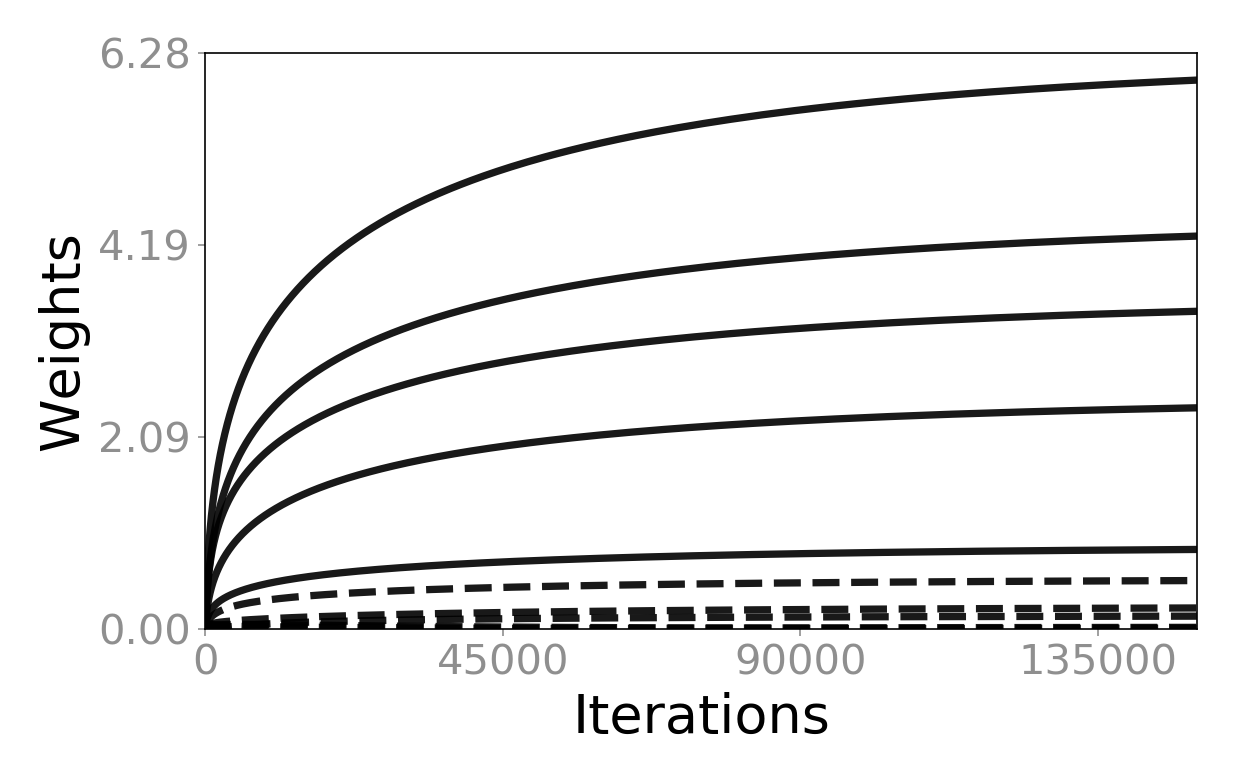}
    \caption*{\small Single-layer}
  \end{minipage}
  \hfill
  \begin{minipage}[t]{0.48\columnwidth}
    \centering
    \includegraphics[width=\linewidth]{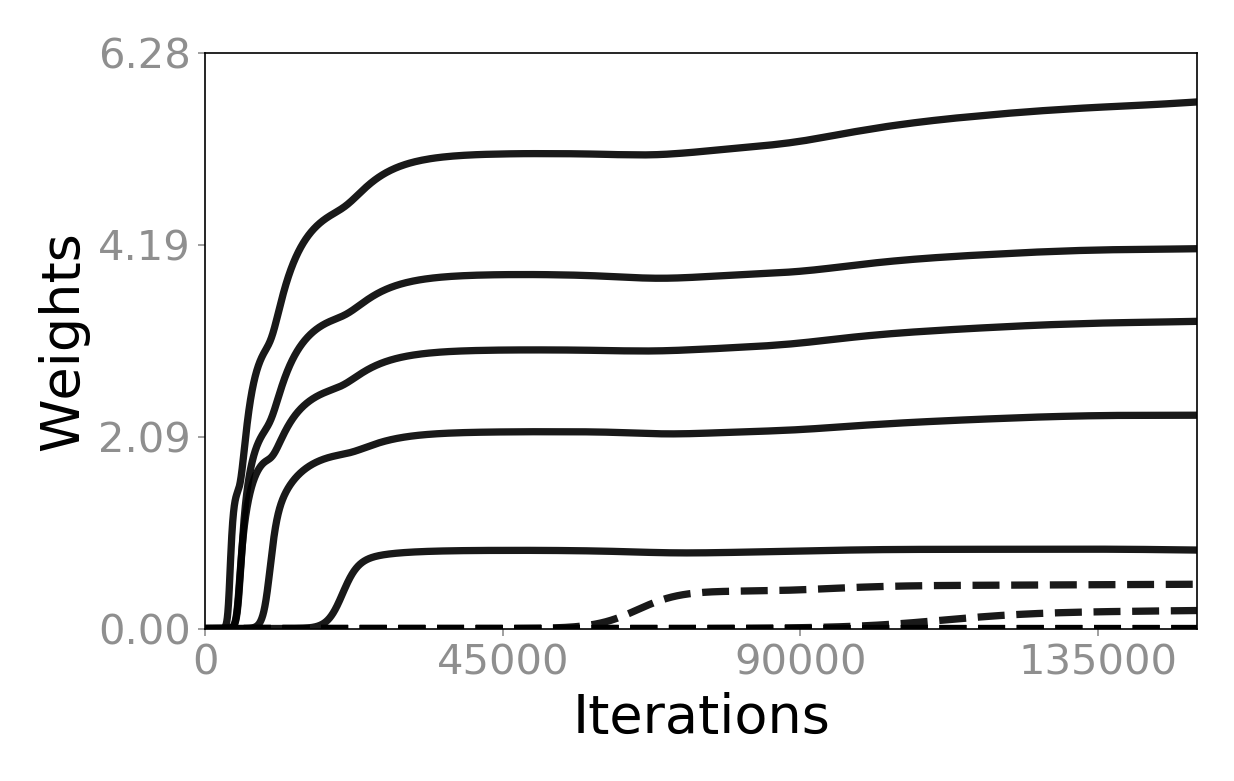}
    \caption*{\small Spindly Network}
  \end{minipage}

  %{-0.6em}
  \caption{Growth of active and inactive coordinates under rotation-invariant and spindly parameterizations.}
  \label{fig:sparse-logistic-weight}
\end{figure}

Using expression for $T(\epsilon)$, and summing over all $i\in S^{c}$,  
\begin{align}
\label{inactive-sum}
    \sum_{i\in S^{c}}w^{2}_i(T) \leq \frac{c_{2}}{d^{\frac{1}{2}+\delta}}
\end{align}
Summing \eqref{inactive-sum} and \eqref{active-sum}, we get the final statement \eqref{eq:spindly_l2_bound}. Since the logistic risk $\mathcal{L}$ is $\frac{1}{4}$-smooth, the excess risk relates as
%{-0.5em}
\begin{align*}
    \mathcal{L}(\mathbf{w}) - \mathcal{L}(\mathbf{w}^{\star}) \leq \frac{1}{8} \| \mathbf{w} - \mathbf{w}^{\star} \|^{2}_{2}
\end{align*}
which implies the excess risk is also $O(\frac{s \log d}{n})$. 
While envelope arguments have been used for matrix Riccati flows \cite{arous2025learning}, our analysis is the first to handle spindly dynamics under logistic loss, where the coupling is state-dependent and no closed-form solution exists like in linear regression.

The only condition required for obtaining the excess risk upper bound is the sample size requirement $n \;\ge\; \frac{256}{(a^\star w_{\min}^\star)^2}\,
\log\!\Big(\frac{2d}{\eta}\Big).$ When the signal--curvature parameter satisfies $a^\star w_{\min}^\star = \Theta(1)$ this condition scales as \(n \gtrsim \log d\), and is therefore automatically satisfied in the overdetermined regime \(n \gtrsim d\).

\textbf{Rate Separation in Sparse Logistic Regression:} In overconstrained case $n \gtrsim d$ under a well-specified logistic model, the maximum likelihood estimator (MLE) exists and is unique. Classical asymptotic theory \cite{van2000asymptotic} implies that the MLE achieves an excess risk of order $d/n$. More recently, \citet{chardon2024finite} established a non-asymptotic version of this result, showing as soon as MLE exists, its excess risk is $O(d/n)$. Algorithms whose iterates converge to the MLE therefore admit this $d/n$ excess risk. Our result focuses on the trajectory rather than convergence to the unique MLE. 

In sparse-logistic regression, the minimax excess risk scales as $O(\frac{s \log (ed/s)}{n})$ under standard conditions, showing that MLE rate is suboptimal in this setting, motivating the search for algorithmic trajectories that can attain this rate through early stopping. 

Our result fills this gap stating that rotation invariant algorithm (with early stopping) still suffer an excess risk with a \textit{lower bound} $\Omega(\frac{d-1}{n})$ matching the rate for MLE. However, a non-rotation invariant algorithm with a proper early-stopping time can achieve the  rate $O(\frac{s \log d}{n})$, which matches the sharp sparse-logistic rate $O(\frac{s \log (ed/s)}{n})$ up to the usual logarithmic simplification.

% The conditions (A1)–(A2) are not restrictive and do not impose additional structural assumptions in the overconstrained case ($n \gtrsim d$). First, the signal--noise separation condition (A1) depends only on the empirical gradient noise level.
% Whenever $n \gtrsim d$, it is ensured that $\gamma$ is small and there exists some $\delta \in (0,\tfrac12)$ for which (A1) holds. Second, the curvature condition (A2) follows directly from the well-specified logistic model \eqref{conditional-label-restate}. By Proposition~\ref{prop:overdet_nonsep}, $\hat{\mathcal{L}}(\mathbf{w})$ is convex and coercive, implying $a^\star := \mathbb{E}\!\left[\sigma'(\mathbf{x}^\top \mathbf{w}^\star)\right]$ is strictly positive. The slack parameter $\epsilon$ quantifies how close the curvature is to its maximal value and is paid only in lower-order terms of the upper bound. Importantly, the dominant rate in~(28) remains $O(\tfrac{s \log d}{n})$ regardless of this slack.

%{-1em}
\section{Numerical Experiments}

We consider sparse binary logistic regression with dimension $d=50$, sparsity level $s=5$, and $n=1000$ i.i.d.\ training samples drawn from an isotropic Gaussian design with labels generated according to the true conditional distribution~\eqref{conditional-label-restate} and a test set of size $10^{5}$. We compare standard single-layer logistic regression with a spindly-parameterized model having effective weights $\mathbf{w}=\mathbf{u}\odot\mathbf{v}$. From Figure~\ref{fig:sparse-logistic-three}, the spindly model achieves lower validation error under early stopping. Figure~\ref{fig:sparse-logistic-weight} illustrates the contrasting evolution of active and inactive coordinates under the two parameterizations.

In Figure~\ref{fig:excess-risk}, we plot the excess risk $\mathcal{L}(\hat{\mathbf w}) - \mathcal{L}(\mathbf w^\star)$ as a function of the dimension~$d$. For each value of~$d$, we report the best validation performance achieved via early stopping for both the single-layer and spindly parameterizations, averaged over 10 independent draws of the dataset~$\mathcal D$. The observed scaling of the excess risk for both algorithms is consistent with the lower and upper-bound rates established in our theoretical analysis.

\begin{figure}
    \centering
    \includegraphics[width=0.9\linewidth]{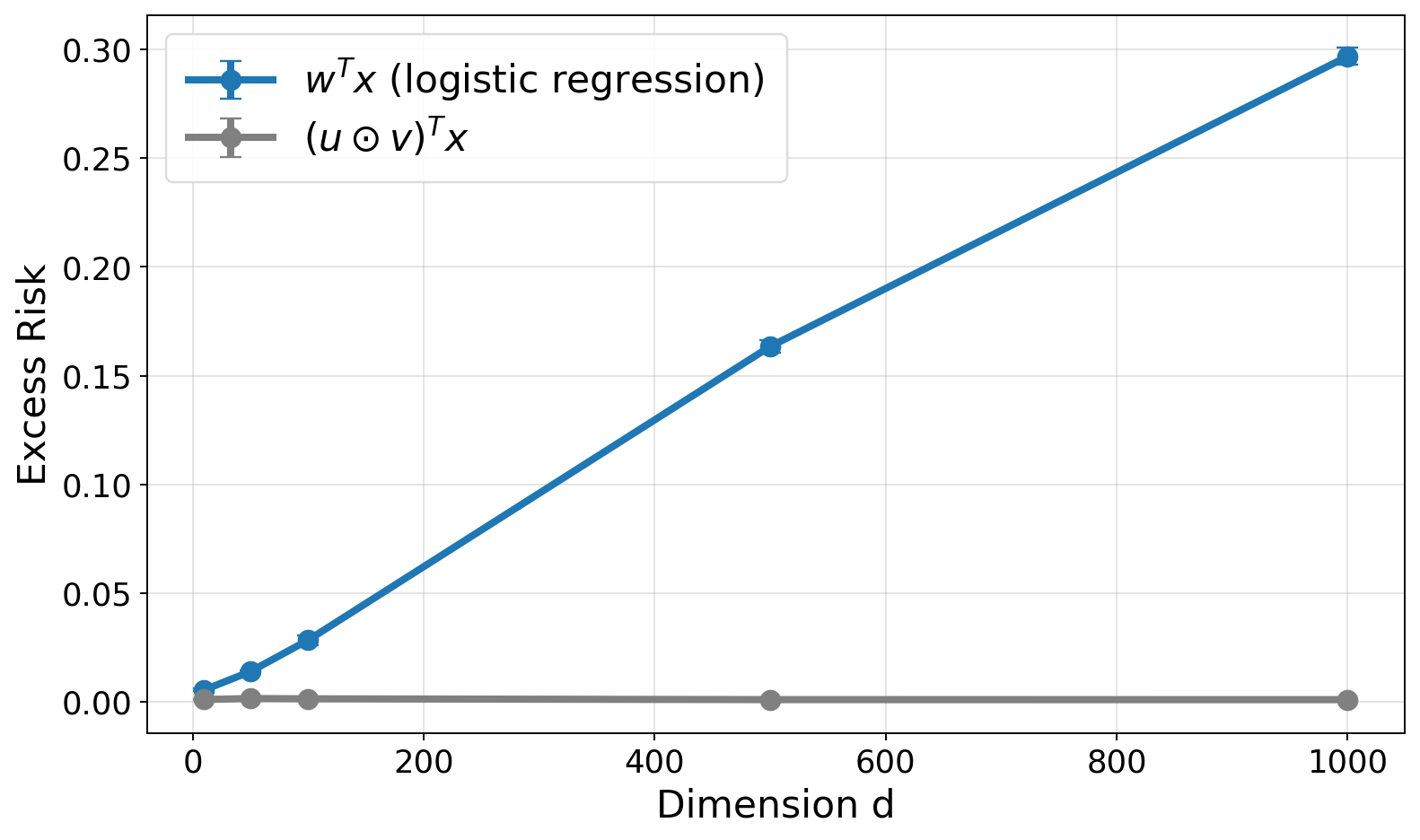}
    \caption{Excess risk $\mathcal{L}(\hat{\mathbf{w}})-\mathcal{L}(\mathbf{w}^{*}) $ plots averaged over sample draws of $\mathcal{D}$. The best estimator $\hat{\mathbf{w}}$ was obtained using early-stopping for both the algorithms.}
    \label{fig:excess-risk}
\end{figure}

\section{Related works}
Performance bounds for logistic regression have been proven
with a large number of different methodologies.

\textit{Regret bounds for on-line logistic regression:}
One of the earliest research on logistic regression proves regret
bounds for the Gradient Descent (GD) and Exponentiated Gradient
(EG) algorithms \cite{match,multidim}. The bounds exemplify
the different performance between both types of update on
dense and sparse logistic regression problems.
Note that EG is mirror descent based on the logarithm link
and the EG update family can be reparameterized as GD by
reparameterizing the weights $w_i$ as $u_iv_i$ \cite{amid2020reparameterizing,chizat2021}. We focus on excess risk rather than online regret, since excess risk evaluates performance with respect to the population minimizer, which is known and fixed under the data-generating distribution.

\textit{Excess risk on logistic regression:}
Using self-concordant analysis, \citet{bach2010self} extends classical generalization arguments to logistic loss.
In the underdetermined regime, several works establish benign overfitting or margin-based generalization guarantees
for logistic regression, typically via implicit bias toward the max-margin solution
\cite{montanari2019generalization, chatterji2021finite, cao2021risk, muthukumar2021classification, shamir2022implicit}.
In contrast, in the overdetermined regime ($n \gtrsim d$), recent work provides sharp
\emph{upper bounds} on the excess logistic risk of order $\tilde O(d/n)$
\cite{ostrovskii2021finite, kuchelmeister2024finite, chardon2024finite,paik2025basic}.
To the best of our knowledge, however, no prior work establishes \emph{excess-risk lower bounds}
for \textit{overdetermined} logistic regression.

\textit{Rotation invariance and sparse recovery in the underdetermined case:}
Several works reveal rotation invariance breaking dynamics that promote sparsity and achieve near-optimal statistical rates
\cite{pesme2023saddle, vaskevicius2019implicit, woodworth2020kernel, amid2020reparameterizing}.
Related analyses extend to multiplicative updates and reparameterized gradient flows
\citep{warmuth2021case, amid2022step}. Earlier works \cite{warmuth2005leaving} also show that staying in the span of the training set can restrict learning sparse features.
For sparse logistic regression, \citet{matsumoto2025learning} provides sample complexity result for non-rotation invariant algorithm such as binary iterative hard thresholding. \citet{ng2004feature} proves lower bounds for rotationally invariant algorithms under separable (realizable) setting. For more detailed recent treatment in linear regression see \cite{warmuth2021case}.

\textit{Lower bounds for rotation-invariant algorithms:}
Most closely related to our work, \citet{pmlr-v272-warmuth25a} establish excess-risk lower bounds for
rotation-invariant algorithms in noisy linear regression in the overdetermined regime.
Our work provides the first analogous lower bound for \emph{logistic regression},
showing that rotation invariance fundamentally limits
statistical efficiency even when $n \gtrsim d$.  The effect
comes without externally added noise and is a consequence
of using hard labels sampled from a sparse target. We
provide a more detailed literature review in Appendix~\ref{related-works-extended}. 
%{-1em}

\section{Conclusion}
This work demonstrates a fundamental limitation of rotation invariant algorithms 
when the loss is the logistic loss and the labels are sampled from a sparse target.
A general class of rotation invariant algorithms includes
gradient descent on feed forward neural networks with a fully connected input layer
that is initialized with a rotation invariant distribution. 
Our lower bound holds across this entire class of algorithms.
Analogous results have been shown before for the square loss \cite{pmlr-v272-warmuth25a}.
We believe this phenomenon is fundamental but
underappreciated: it shows that standard optimization procedures can remain
statistically suboptimal when their symmetry conflicts with the nature of data. In Appendix~\ref{prac-circum}, we provide experiments showing that the gap persists even for anisotropic input distributions. Proving a corresponding lower bound in this setting remains open.

\section{Acknowledgements}
We thank 
Gabriel Clara,
Matt Jones,
Wojciech Kot{\l}owski,
Daniel Kunin, 
Curtis McDonald, 
Seunghoon Paik,
and 
Jingfeng Wu 
for helpful discussions. We gratefully acknowledge support from the NSF through FODSI (grant DMS-2023505), from the NSF and the Simons Foundation through the Collaboration on the Theoretical Foundations of Deep Learning (awards DMS-2031883 and 814639), from NSF grant DMS-2413265, and from the ONR through MURI award N000142112431. We also acknowledge support from a research grant from Bridgewater Associates.

\section*{Impact Statement}

This paper presents theoretical work whose primary goal is to advance the understanding of optimization dynamics and statistical limits of learning algorithms in sparse logistic regression. Our results contribute to the foundations of machine learning theory by clarifying when and why certain algorithmic symmetries lead to suboptimal statistical performance. The work is methodological in nature and does not involve new data, applications, or deployment considerations. We do not anticipate direct negative societal impacts arising from this work.

% \section{Acknowledgements}
% We thank 
% Gabriel Clara,
% Matt Jones,
% Wojciech Kot{\l}owski,
% Daniel Kunin, 
% Curtis McDonald, 
% Seunghoon Paik,
% and 
% Jingfeng Wu 
% for helpful discussions. We gratefully acknowledge support from the NSF through FODSI (grant DMS-2023505), from the NSF and the Simons Foundation through the Collaboration on the Theoretical Foundations of Deep Learning (awards DMS-2031883 and 814639), from NSF grant DMS-2413265, and from the ONR through MURI award N000142112431. We also acknowledge support from a research grant from Bridgewater Associates.

\iffalse{We demonstrate the benefits of breaking rotational invariance in overconstrained
logistic regression. Several directions remain for future work, including a
systematic comparison between early-stopped non-rotation-invariant algorithms and $\ell_{1}$-regularized logistic regression. In addition, obtaining excess risk lower bounds for \textit{underconstrained} logistic regression, as well as excess risk upper bounds for non-rotation-invariant algorithms in that regime, remains an open problem.
\fi

% In the unusual situation where you want a paper to appear in the
% references without citing it in the main text, use \nocite

\bibliography{example_paper}
\bibliographystyle{icml2026}

%%%%%%%%%%%%%%%%%%%%%%%%%%%%%%%%%%%%%%%%%%%%%%%%%%%%%%%%%%%%%%%%%%%%%%%%%%%%%%%
%%%%%%%%%%%%%%%%%%%%%%%%%%%%%%%%%%%%%%%%%%%%%%%%%%%%%%%%%%%%%%%%%%%%%%%%%%%%%%%
% APPENDIX
%%%%%%%%%%%%%%%%%%%%%%%%%%%%%%%%%%%%%%%%%%%%%%%%%%%%%%%%%%%%%%%%%%%%%%%%%%%%%%%
%%%%%%%%%%%%%%%%%%%%%%%%%%%%%%%%%%%%%%%%%%%%%%%%%%%%%%%%%%%%%%%%%%%%%%%%%%%%%%%
\newpage
\appendix
\onecolumn

% \begin{figure*}[t]
%   \centering

%   % --- legend across both columns ---
%   \includegraphics[width=0.48\textwidth]{figures/three_layer/legend.png}

% %{-0.0em}

%   % % --- two plots side by side ---
%   % \begin{minipage}[t]{0.24\textwidth}
%   %   \centering
%   %   \includegraphics[width=\textwidth]{figures/three_layer/three_layer.png}
%   %   \caption*{(a) Empirical and Population $0-1$ risk.}
%   % \end{minipage}
%   % \hfill
%   % \begin{minipage}[t]{0.24\textwidth}
%   %   \centering
%   %   \includegraphics[width=\textwidth]{figures/three_layer/three_layer_loss.png}
%   %   \caption*{(b) Empirical and Population Logistic risk.}
%   % \end{minipage}
%   % \hfill
%   %   \begin{minipage}[t]{0.24\textwidth}
%   %   \centering
%   %   \includegraphics[width=0.9\textwidth]{figures/three_layer/three_layer_trakj.png}
%   %   \label{fig-3}
%   %   \caption*{(c) Norm on the inactive and active subspace.}

%   % \end{minipage}
%   % \hfill 
%   %     \begin{minipage}[t]{0.24\textwidth}
%   %   \centering
%   %   \includegraphics[width=0.9\textwidth]{figures/three_layer/three_layer_snr.png}
%   %   \label{fig-3}
%   %   \caption*{(c) Ratio of the norms in the active and inactive coordinates. }

%   % \end{minipage}

% \end{figure*}

% \begin{figure}
%     \centering
%     \includegraphics[width=\linewidth]{figures/link_Scaled_weight.png}
%     \caption{}
%     \label{}
% \end{figure}

\section{Expanded Related works}
\label{related-works-extended}
\textit{Asymptotic implicit bias in underdetermined linear/logistic regression:} The empirical success of gradient-based optimization in deep learning has spurred extensive research into the algorithmic bias of gradient descent, often referred to as implicit regularization. In simple convex settings, this phenomenon manifests in well-understood forms: in underdetermined linear regression, gradient descent converges to the minimum $\ell_{2}$-norm solution, while in classification with linearly separable data, gradient descent diverges in norm but converges in direction to the maximum-margin classifier \cite{soudry2018implicit, ji2019implicit}. These foundational results have motivated a broad line of work characterizing the implicit bias of gradient descent in the underdetermined regime under various optimization geometries, including homogeneous deep networks \cite{gunasekar2018characterizing, lyu2019gradient, ji2018gradient}, non-homogeneous networks \cite{cai2025implicit}, mirror descent \cite{sun2023unified}, steepest descent \cite{tsilivis2024flavors}, and the effect of initialization scale \cite{moroshko2020implicit}. Subsequent studies have further investigated convergence rates and asymptotic risk behavior under these settings \cite{nacson2019convergence, ji2018risk, ji2021characterizing}.

\textit{Statistical benefits of implicit bias for linear regression:} The statistical consequences of implicit regularization in the underdetermined setting have been extensively studied in linear regression. In particular, the minimum $\ell_{2}$-norm interpolating solution can achieve vanishing excess risk despite fitting noisy training data, a phenomenon known as benign overfitting \cite{bartlett2020benign, tsigler2023benign}, under suitable assumptions on the data covariance. Beyond minimum-norm interpolation, early stopping of gradient descent has been shown to yield vanishing excess risk for general covariance structures \cite{buhlmann2003boosting, yao2007early}, with analogous guarantees established for stochastic gradient descent and its variants \cite{li2023risk, wu2022last, zou2021benefits}. Moreover, early stopping can provide statistical advantages over ridge regression, achieving improved excess-risk performance in certain regimes \cite{wu2025risk, ali2019continuous}.

\textit{Excess risk calculation in underdetermined logistic regression:} Analyzing statistical benefits in logistic regression via excess-risk bounds is substantially more delicate than in linear regression due to the non-quadratic nature of the logistic loss. Nonetheless, \citet{bach2010self} showed that generalization guarantees for squared loss can be extended to logistic loss using tools from self-concordant analysis. In the separable (underdetermined) regime, a growing body of work has established forms of benign overfitting for logistic regression, typically through the implicit bias toward the max-margin estimator, with guarantees on classification risk or related notions of generalization \cite{montanari2019generalization, chatterji2021finite, cao2021risk, wang2022binary, muthukumar2021classification, shamir2022implicit,wu2025benefits}. Interesting generalization behaviours such as grokking emerge near the edge of separability \cite{beck2024grokking}. 

\textit{Excess-risk upper bounds for overdetermined logistic regression:}
In overdetermined logistic regression ($n \gtrsim d$), the existence of a finite-sample maximum likelihood estimator enables sharp characterizations of excess-risk \emph{upper bounds}. Several recent works establish rates of order $\tilde O(d/n)$ for the population logistic risk in this regime \cite{ostrovskii2021finite, kuchelmeister2024finite, chardon2024finite,paik2025basic}. Specifically, \citet{ostrovskii2021finite} leverage the self-concordance of the logistic loss to derive non-asymptotic excess-risk bounds, \citet{kuchelmeister2024finite} analyze refined guarantees in low-noise or small-sample regimes, and \citet{chardon2024finite} provide excess-risk bounds for the MLE under general conditions. Complementarily, \citet{hsu2024sample} characterize the minimax sample complexity for parameter estimation in logistic regression with Gaussian design, revealing sharp phase transitions as a function of the inverse temperature. Excess-risk upper bounds for \emph{regularized} logistic regression have also been studied in related settings \cite{bach2014adaptivity, marteau2019beyond, tsigler2025benign}. To the best of our knowledge, however, no prior work establishes excess-risk \emph{lower bounds} for overdetermined logistic regression, a gap addressed by our results.

\textit{Implicit regularization in sparse linear regression:}
For sparse linear regression, a growing body of work studies the training dynamics of \emph{non-rotationally invariant} algorithms, revealing rich saddle-to-saddle dynamics that promote sparsity during optimization \cite{pesme2023saddle, pesme2024implicit, even2023s, gunasekar2017implicit, jacot2021saddle, saxe2013exact, gissin2019implicit,poon2021smooth,ghosh2025learning}. Under Gaussian or near-Gaussian design assumptions (e.g., restricted isometry–type conditions), it has been shown that such symmetry-breaking algorithms can achieve \emph{near-optimal statistical rates} for sparse recovery \cite{vaskevicius2019implicit, vaskevicius2020statistical, zhou2023implicit, woodworth2020kernel}. Relatedly, the training dynamics of \emph{multiplicative update} methods and reparameterizations have also been extensively analyzed, highlighting their implicit bias toward sparse solutions \cite{amid2020reparameterizing, amid2020winnowing, amid2022step, kivinen1997exponentiated, majidi2021exponentiated, warmuth2021case}.

\textit{Online regret bounds for logistic loss:} While our work measures the statistical excess risk under an i.i.d.\ data-generating distribution, previous work such as \cite{shamir2020logistic} studies \emph{online regret} for logistic loss. Regret evaluates performance on a realized sequence of seen examples and compares loss of the algorithm to that of the best fixed parameter chosen in \textit{hindsight} for that same sequence.

\textit{Sparse logistic regression:}
In the underdetermined regime, \citet{matsumoto2025learning} establish sample-complexity guarantees for sparse logistic regression, providing purely statistical recovery results under sparsity assumptions. The authors propose Binary Iterative Hard Thresholding, a non-rotation-invariant, sparsity-enforcing algorithm that achieves optimal sample complexity for sparse binary GLMs. Earlier, \citet{ng2004feature} proved lower bounds for rotationally invariant algorithms under deterministic hard labels. For more detailed recent treatment in linear regression see \cite{warmuth2021case}. 
To the best of our knowledge, however, a principled analysis of the training dynamics of non-rotationally invariant algorithms, together with a characterization of the resulting statistical advantages for sparse recovery in logistic regression remains open.

To the best of our knowledge, \citet{pmlr-v272-warmuth25a} is the first work to establish a lower bound for rotationally invariant algorithms in noisy linear regression in the overdetermined regime, demonstrating that even when the sample size $n$ exceeds the ambient dimension $d$, noise can fundamentally mislead rotation-invariant procedures. An analogous excess-risk lower bound for logistic regression in the overdetermined setting has remained open; establishing such a result constitutes the first contribution of our work.

Our second contribution is to investigate the training dynamics and corresponding statistical advantages of non-rotation invariant algorithm, focusing on a two-layer diagonal linear network in sparse logistic regression. We identify the primary challenges in analyzing the dynamics of depth-2 diagonal linear networks under the logistic loss and show how our analysis enables the derivation of finite-sample excess-risk upper bounds for such networks under early stopping.

% Requires:
% \usepackage{tikz}
% \usetikzlibrary{positioning}

\begin{figure}[t]
    \centering
    \begin{subfigure}{0.32\linewidth}
        \centering
        \includegraphics[width=\linewidth]{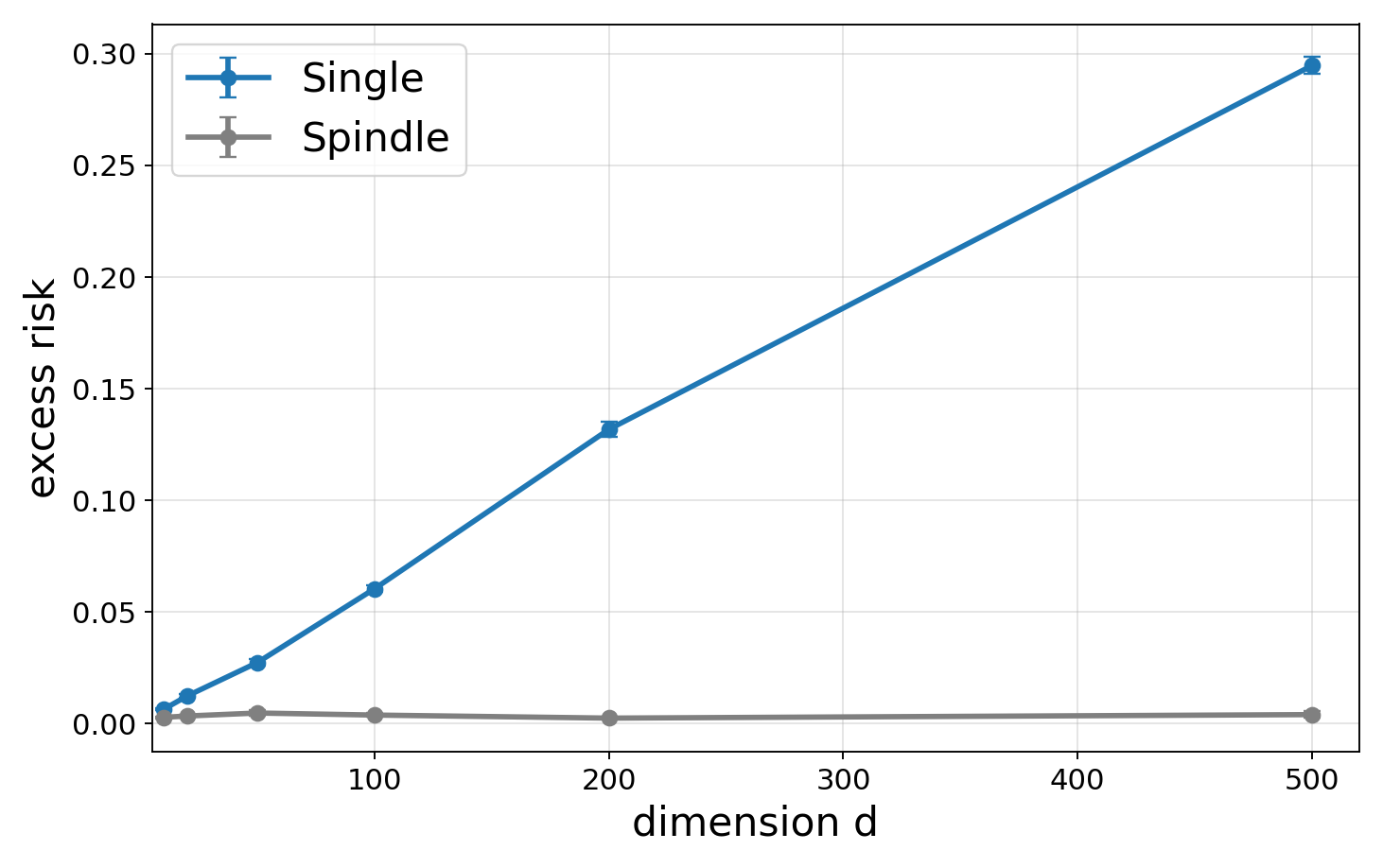}
        \caption{$\alpha=0$ (Our paper)}
    \end{subfigure}
    \hfill
    \begin{subfigure}{0.32\linewidth}
        \centering
        \includegraphics[width=\linewidth]{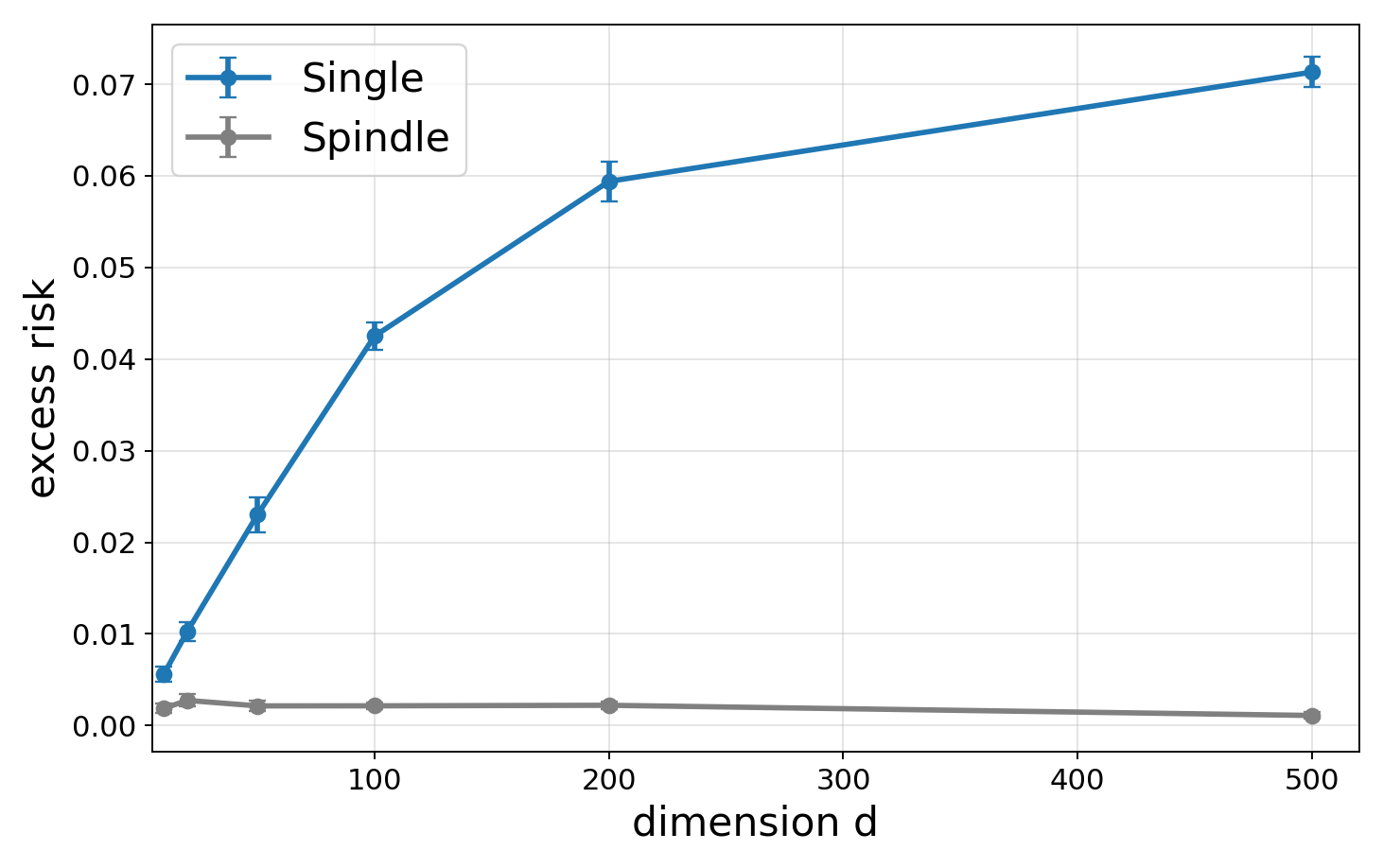}
        \caption{$\alpha=0.5$}
    \end{subfigure}
    \hfill
    \begin{subfigure}{0.32\linewidth}
        \centering
        \includegraphics[width=\linewidth]{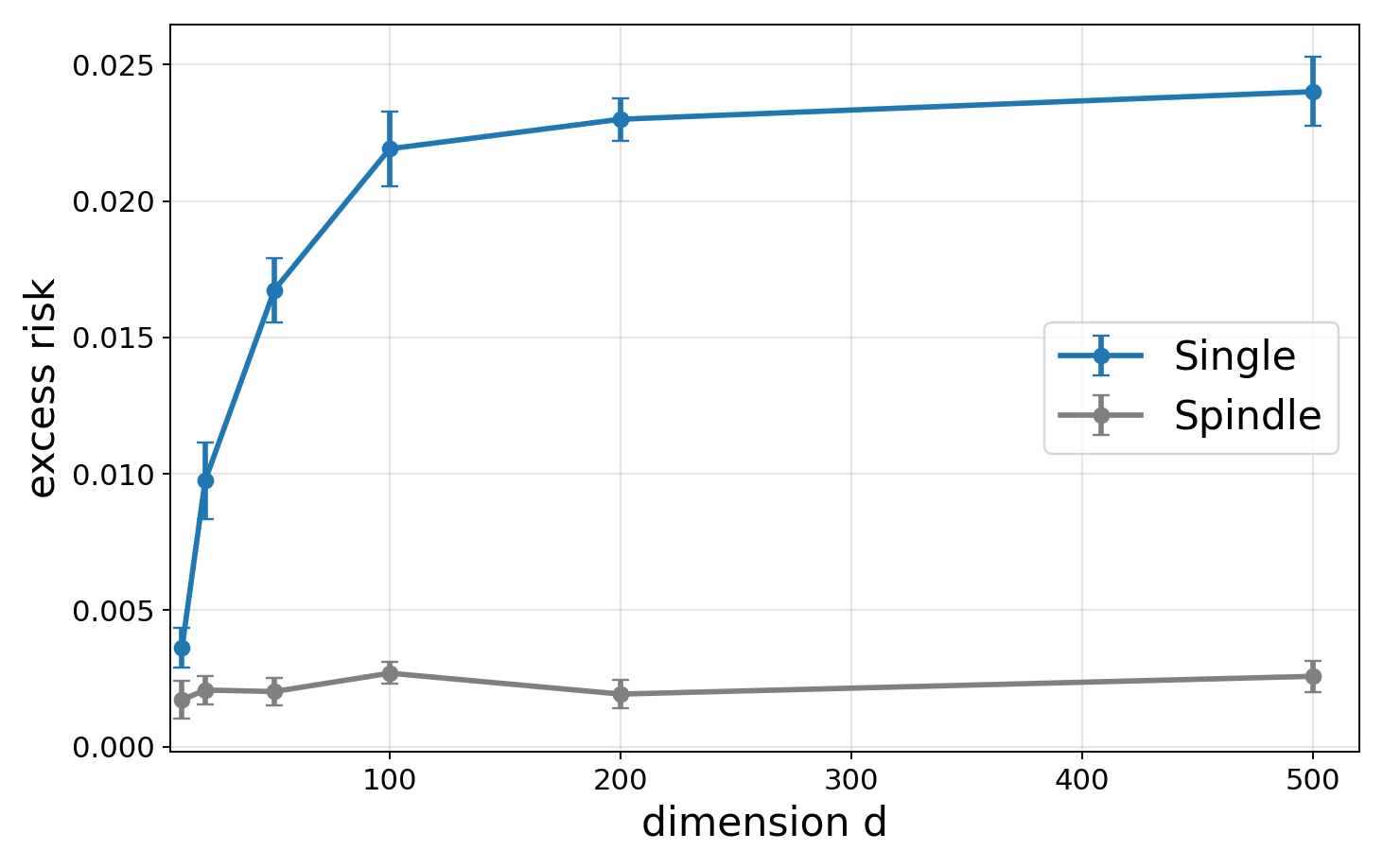}
        \caption{$\alpha=1.0$}
       
    \end{subfigure}
 
\caption{Excess risk $\mathcal{L}(\hat{\mathbf{w}})-\mathcal{L}(\mathbf{w}^{*})$ averaged over independent draws of the dataset $\mathcal{D}$. The input distribution is Gaussian $\mathbf{x} \sim \mathcal{N}(0, \Sigma)$ with a damped covariance spectrum, where the eigenvalues of $\Sigma$ decay according to a power law $\lambda_i \propto i^{-\alpha}$.}
   \label{fig:three}
\end{figure}

\section{Excess risk gap beyond isotropic Gaussian data}
\label{prac-circum}

We also include an experiment for sparse logistic regression beyond the isotropic Gaussian design. Specifically, we draw
$\mathbf{x} \sim \mathcal{N}(\mathbf{0},\boldsymbol{\Sigma})$, where
$\boldsymbol{\Sigma} = \mathrm{diag}(\lambda_1,\ldots,\lambda_d)$ and the eigenvalues follow a power-law decay,
$\lambda_j \asymp j^{-\alpha}$ for $\alpha \geq 0$. The case $\alpha=0$ recovers the isotropic setting, where the excess risk of rotation-invariant methods grows approximately linearly with the ambient dimension $d$, consistent with our theory. For $\alpha>0$, the design is anisotropic, so the isotropic lower-bound argument no longer applies directly. Nevertheless, the experiments show that a performance gap persists. In this regime, the risk depends on the alignment between the oracle parameter $\mathbf{w}^{\star}$ and the eigenspaces of $\boldsymbol{\Sigma}$. Intuitively, symmetrization spreads estimation error across coordinates, but directions with small eigenvalues contribute less to prediction risk. Consequently, as seen in Figures~\ref{fig:three}, the growth of excess risk with $d$ becomes sublinear because error propagation is attenuated along low-variance directions. Formalizing this anisotropic phenomenon theoretically is an interesting open direction.

We list two other open problems as future work:

\begin{enumerate}
[leftmargin=*,itemsep=0pt,topsep=1pt]
\iffalse
\item 
For which classes of problems (characterized by the data covariance
$\boldsymbol{\Sigma}$ and the oracle parameter $\mathbf{w}^\star$)
does a class of non--rotation-invariant algorithms strictly outperform
rotation-invariant algorithms in terms of excess risk?
In particular, when the covariance is isotropic
$\boldsymbol{\Sigma} = \mathbf{I}_d$ and the oracle
$\mathbf{w}^\star$ is dense, do non--rotation-invariant algorithms
exhibit the same statistical degradation that rotation-invariant
algorithms suffer in the sparse-signal regime?
\fi
\item Can our lower bounds for rotation invariant
    algorithms be generalized to the case when hard
labels are sampled from other exponential family
distributions such as the Poisson and Exponential distributions. 
Our preliminary experiments show that there is again a
performance gap between rotation invariant algorithms and GD on the spindlified network.
Quantifying and proving this gap for general exponential families is a challenging goal.
\item In the previous setting of over-constrained noisy sparse 
linear regression \cite{pmlr-v272-warmuth25a},
a large variety of non-rotation invariant algorithms were shown to decisively
beat rotation invariant algorithms.
This include two-sided versions of the exponentiated
	gradient algorithm (a.k.a. mirror descent with the
	$\mathrm{arcsinh}$ link function) and priming \cite{priming}.
Also Lasso achieves the same feat. We believe that a number
of these algorithm can be adapted to learn $s$ sparse
logistic regression with the same upper excess risk bound. 
\end{enumerate}

\section{Proof of Propositions}

Consider the learning problem where the task is to learn a target $\mathbf{w}^{\star}\in \mathbb{R}^d$ through a supervised learning problem where $n>d$ samples are observed. 
Let $\boldsymbol x \in \mathbb R^d$ be an input drawn i.i.d.\ from an isotropic distribution. Conditioned on $\boldsymbol x$, the label $y \in \{+1,-1\}$ is generated according to
\[
\mathbb P(y=+1 \mid \boldsymbol x) = \sigma(\langle \boldsymbol x, \boldsymbol w^\star\rangle),
\qquad
\mathbb P(y=-1 \mid \boldsymbol x) = 1 - \sigma(\langle \boldsymbol x, \boldsymbol w^\star\rangle),
\]

We distinguish two learning settings.

\textit{1) Soft-label (conditional-expectation) learning:}
Suppose that instead of observing a single realization of $y$, the learner has access to the
true conditional distribution of the label given each input.
The empirical soft-label risk is defined as
\begin{equation}
\widehat{\mathcal L}_{\mathrm{soft}}(\boldsymbol w)
:=
\frac{1}{n}\sum_{i=1}^n
\mathbb E_{y \mid \boldsymbol X=\boldsymbol x_i}
\big[
\ell\!\left(y\,\langle \boldsymbol x_i,\boldsymbol w\rangle\right) 
\big].
\label{eq:soft-risk-def-restate}
\end{equation}
For the logistic loss $\ell(t)=\log(1+\mathrm e^{-t})$, this can be written equivalently as
\begin{equation}
\widehat{\mathcal L}_{\mathrm{soft}}(\boldsymbol w)
=
\frac{1}{n}\sum_{i=1}^n
\Delta_H\!\left(\sigma(\langle \boldsymbol x_{i}, \boldsymbol w^\star\rangle),\,\sigma(\langle \boldsymbol x_i,\boldsymbol w\rangle)\right),
\label{eq:soft-risk-deltaH}
\end{equation}
up to an additive constant independent of $\boldsymbol w$,
where $\Delta_H(\cdot,\cdot)$ denotes the Bernoulli cross-entropy.

\textit{2) Sampled-label (hard-label) learning:}
Alternatively, the learner observes a single sampled label
\[
y_i \in \{+1,-1\},
\qquad
y_i = +1 \text{ with prob\ } \sigma(\langle \boldsymbol x, \boldsymbol w^\star\rangle),
\quad
y_i = -1 \text{ with prob\ } 1-\sigma(\langle \boldsymbol x, \boldsymbol w^\star\rangle).
\]
The corresponding empirical risk is
\begin{equation}
\widehat{\mathcal L}_{\mathrm{hard}}(\boldsymbol w)
:=
\frac{1}{n}\sum_{i=1}^n
\ell\!\left(y_i\,\langle \boldsymbol x_i,\boldsymbol w\rangle\right)
=
\frac{1}{n}\sum_{i=1}^n
\Delta_H\!\left(y_i,\,\sigma(\langle \boldsymbol x_i,\boldsymbol w\rangle)\right).
\label{eq:hard-risk}
\end{equation}

\textit{3) Semi-soft label learning.}
We also consider an intermediate setting between hard-label and soft-label learning.
For each input $\boldsymbol x_i$, the learner observes $m \ge 1$ independent Monte Carlo samples
\[
y_{i,1},\ldots,y_{i,m} \;\sim\; \mathbb{P}(y \mid \boldsymbol X=\boldsymbol x_i) ,
\]
drawn from the true conditional label distribution.
The corresponding empirical risk is defined as
\begin{equation}
\widehat{\mathcal L}_{m}(\boldsymbol w)
:=
\frac{1}{n}\sum_{i=1}^n
\frac{1}{m}\sum_{j=1}^m
\ell\!\left(y_{i,j}\,\langle \boldsymbol x_i,\boldsymbol w\rangle\right).
\label{eq:mc-risk}
\end{equation}

Let $\mathcal L(\boldsymbol w)$ denote the population loss, which by assumption admits a unique minimizer $\boldsymbol w^\star$ which is given by expectaion over joint distribution over the population data $(\boldsymbol x,y)$:
\begin{align}
   {\mathcal L}(\boldsymbol w)
:=
\mathbb E_{\boldsymbol x}
\mathbb E_{y \mid \boldsymbol X=\boldsymbol x}
\big[
\ell\!\left(y\,\langle \boldsymbol x,\boldsymbol w\rangle\right)
\big] = \mathbb E_{(\boldsymbol x,y)}\big[
\ell\!\left(y\,\langle \boldsymbol x,\boldsymbol w\rangle\right)].
\end{align}

We establish the following statements in the overdetermined regime $n>d$: the empirical soft-label risk $\widehat{\mathcal L}_{\mathrm{soft}}(\boldsymbol w)$ shares the same global minimizer as the population loss $\mathcal L(\boldsymbol w)$, namely $\boldsymbol w^\star$.

\begin{proposition}
\label{prop:soft_label_minimizer-restate}
If the design matrix has full column rank, then the empirical soft-label risk
\[
\widehat{\mathcal L}_{\mathrm{soft}}(\mathbf w)
:= \frac{1}{n}\sum_{i=1}^n
\mathbb E_{y  \mid \mathbf X=\mathbf x_i}\!\big[\ell(y\langle \mathbf x_i,\mathbf w\rangle)\big]
\]
has the same unique global minimizer $\mathbf w^\star$ as the population risk. 
\end{proposition}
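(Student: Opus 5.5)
The plan is to write the soft-label risk explicitly, show $\mathbf w^\star$ is a stationary point, and then use strict convexity (from full column rank) to conclude uniqueness. Write $p_i:=\sigma(\langle \mathbf x_i,\mathbf w^\star\rangle)$ and $z_i:=\langle \mathbf x_i,\mathbf w\rangle$. Since $\mathbb P(y=1\mid\mathbf x_i)=p_i$, each summand equals
\[
\phi_i(z_i):=p_i\,\ell(z_i)+(1-p_i)\,\ell(-z_i),
\]
so $\widehat{\mathcal L}_{\mathrm{soft}}(\mathbf w)=\frac1n\sum_i \phi_i(\langle\mathbf x_i,\mathbf w\rangle)$. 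Using $\ell'(t)=-\sigma(-t)$ and $\sigma(-t)=1-\sigma(t)$, we get
\[
\phi_i'(z)=-p_i(1-\sigma(z))+(1-p_i)\sigma(z)=\sigma(z)-p_i .
\]
This is the usual cross-entropy gradient, consistent with \eqref{eq:soft-risk-deltaH}. Hence
\[
\nabla\widehat{\mathcal L}_{\mathrm{soft}}(\mathbf w)=\frac1n\sum_{i=1}^n\big(\sigma(\langle\mathbf x_i,\mathbf w\rangle)-p_i\big)\mathbf x_i ,
\]
and every term vanishes at $\mathbf w=\mathbf w^\star$. So $\mathbf w^\star$ is a stationary point.

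Next I would compute the Hessian:
\[
\nabla^2\widehat{\mathcal L}_{\mathrm{soft}}(\mathbf w)=\frac1n\mathbf X^\top \mathbf D(\mathbf w)\mathbf X,\qquad \mathbf D(\mathbf w)=\mathrm{diag}\big(\sigma'(\langle\mathbf x_i,\mathbf w\rangle)\big).
\]
Since $\sigma'(t)=\sigma(t)(1-\sigma(t))>0$ for every finite $t$, $\mathbf D(\mathbf w)\succ 0$. For any $\mathbf v\neq\mathbf 0$, full column rank gives $\mathbf X\mathbf v\neq\mathbf 0$, so $\mathbf v^\top\nabla^2\widehat{\mathcal L}_{\mathrm{soft}}\mathbf v>0$. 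The Hessian is therefore positive definite everywhere, and $\widehat{\mathcal L}_{\mathrm{soft}}$ is strictly convex on $\mathbb R^d$.

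A differentiable convex function attains its global minimum at any stationary point, and strict convexity rules out a second minimizer. Hence $\mathbf w^\star$ is the unique global minimizer of $\widehat{\mathcal L}_{\mathrm{soft}}$. The population risk has the same form with the empirical average replaced by $\mathbb E_{\mathbf x}$. The identical computation shows $\nabla\mathcal L(\mathbf w^\star)=\mathbb E[(\sigma(\langle \mathbf x,\mathbf w^\star\rangle)-\sigma(\langle \mathbf x,\mathbf w^\star\rangle))\mathbf x]=\mathbf 0$. Its Hessian $\mathbb E[\sigma'(\langle \mathbf x,\mathbf w\rangle)\mathbf x\mathbf x^\top]\succ0$, because $\mathbf x\sim\mathcal N(\mathbf 0,\mathbf I)$ is not supported on a hyperplane. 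So the two minimizers coincide, as the statement requires (this matches the paper's setup).

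There is no real obstacle. The only points needing care are the sign conventions in $\ell'$, which give $\phi_i'(z)=\sigma(z)-p_i$, and the observation that strict positivity of $\sigma'$ holds pointwise even though it is not uniformly bounded below. Strict convexity suffices for uniqueness here, since existence of the minimizer is supplied directly by the stationary point, so no coercivity argument is needed.
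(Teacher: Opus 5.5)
Your proposal is correct and follows the paper's proof essentially step for step. You expand the conditional expectation, show the gradient $\frac1n\sum_i\big(\sigma(\langle\mathbf x_i,\mathbf w\rangle)-\sigma(\langle\mathbf x_i,\mathbf w^\star\rangle)\big)\mathbf x_i$ vanishes at $\mathbf w^\star$, and use that the Hessian $\frac1n\sum_i\sigma'(\langle\mathbf x_i,\mathbf w\rangle)\mathbf x_i\mathbf x_i^\top\succ 0$ under full column rank to get strict convexity and uniqueness. Your extra check that the population Hessian is positive definite (the Gaussian design is not supported on a hyperplane) just makes explicit what the paper takes from well-specification.
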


\begin{proof}
Expanding the conditional expectation over $y\in\{-1,+1\}$ gives
\begin{align}
\widehat{\mathcal L}_{\mathrm{soft}}(\mathbf w)
&= \frac{1}{n}\sum_{i=1}^n
\Big(
\sigma\!\big(\langle \mathbf x_i,\mathbf w^\star\rangle\big),\ell(\langle \mathbf x_i,\mathbf w\rangle)
+ (1-\sigma\!\big(\langle \mathbf x_i,\mathbf w^\star\rangle\big))\,\ell(-\langle \mathbf x_i,\mathbf w\rangle)
\Big).
\end{align}
Using $\ell'(t)=-\sigma(-t)$ and $\sigma(-t)=1-\sigma(t)$, a direct differentiation yields
\begin{align}
\nabla \widehat{\mathcal L}_{\mathrm{soft}}(\mathbf w)
&= \frac{1}{n}\sum_{i=1}^n
\Big(
\sigma\!\big(\langle \mathbf x_i,\mathbf w^\star\rangle\big)\,\ell'(\langle \mathbf x_i,\mathbf w\rangle)\,\mathbf x_i
+ (1-\sigma\!\big(\langle \mathbf x_i,\mathbf w^\star\rangle\big))\,\ell'(-\langle \mathbf x_i,\mathbf w\rangle)\,(-\mathbf x_i)
\Big) \nonumber\\
&= \frac{1}{n}\sum_{i=1}^n
\Big(
- \sigma\!\big(\langle \mathbf x_i,\mathbf w^\star\rangle\big)\,\sigma(-\langle \mathbf x_i,\mathbf w\rangle)\,\mathbf x_i
+ (1-\sigma\!\big(\langle \mathbf x_i,\mathbf w^\star\rangle\big))\,\sigma(\langle \mathbf x_i,\mathbf w\rangle)\,\mathbf x_i
\Big) \nonumber\\
&= \frac{1}{n}\sum_{i=1}^n
\Big(
\sigma(\langle \mathbf x_i,\mathbf w\rangle) - \sigma\!\big(\langle \mathbf x_i,\mathbf w^\star\rangle\big)
\Big)\mathbf x_i.
\label{eq:grad_soft_empirical}
\end{align}
Evaluating~\eqref{eq:grad_soft_empirical} at $\mathbf w=\mathbf w^\star$ gives
$\nabla \widehat{\mathcal L}_{\mathrm{soft}}(\mathbf w^\star)=\mathbf 0$.

It remains to argue uniqueness. The Hessian is
\begin{align}
\nabla^2 \widehat{\mathcal L}_{\mathrm{soft}}(\mathbf w)
&= \frac{1}{n}\sum_{i=1}^n
\sigma'(\langle \mathbf x_i,\mathbf w\rangle)\,\mathbf x_i\mathbf x_i^\top,
\qquad
\sigma'(t)=\sigma(t)\big(1-\sigma(t)\big)>0,
\label{eq:hess_soft_empirical}
\end{align}
so for any $\mathbf v\neq \mathbf 0$,
\[
\mathbf v^\top \nabla^2 \widehat{\mathcal L}_{\mathrm{soft}}(\mathbf w)\,\mathbf v
= \frac{1}{n}\sum_{i=1}^n \sigma'(\langle \mathbf x_i,\mathbf w\rangle)\,(\mathbf x_i^\top \mathbf v)^2.
\]
If the design matrix has full column rank, then $\mathbf x_i^\top \mathbf v=0$
for all $i$ implies $\mathbf v=\mathbf 0$, hence the right-hand side is strictly
positive for all $\mathbf v\neq \mathbf 0$. Thus
$\nabla^2 \widehat{\mathcal L}_{\mathrm{soft}}(\mathbf w)\succ \mathbf 0$ for all
$\mathbf w$, so $\widehat{\mathcal L}_{\mathrm{soft}}$ is strictly convex and has a
unique global minimizer. Since $\mathbf w^\star$ is a stationary point, it is this
unique minimizer.

Finally, by definition the population risk is
\[
\mathcal L(\mathbf w)
:= \mathbb E_{\mathbf X}\,\mathbb E_{y\mid \mathbf X}\!\big[\ell(y\langle \mathbf X,\mathbf w\rangle)\big],
\]
so under the well-specified model $\mathbf w^\star$ is also the (unique) population
minimizer. Therefore the empirical soft-label risk has the same unique global
minimizer $\mathbf w^\star$ as the population risk.
\end{proof}

\begin{proposition}
\label{prop:overdet_nonsep-restate}
Given the well-specified logistic model and letting $\mathbf{X}\in\mathbb{R}^{n\times d}$ be the design matrix and assume $n>d$.
Define the empirical logistic loss
\[
\widehat{\mathcal L}(\mathbf w)
:= \frac{1}{n}\sum_{i=1}^n \log\!\bigl(1+\exp(-y_i\,\mathbf{x}_i^\top \mathbf w)\bigr).
\]

1) Then there exist constants $c_1,c_2>0$, depending only on $\|\mathbf w^\star\|$, such that
\[
\mathbb{P}\!\left(\{(\mathbf{x}_i,y_i)\}_{i=1}^n \text{ is linearly separable}\right)
\le c_1 e^{-c_2 n}.
\]

2) On this event, the empirical loss $\widehat{\mathcal L}$ is coercive.
Moreover, if $\mathbf X$ has full column rank, then $\widehat{\mathcal L}$ is strictly convex and admits a unique global minimizer (different than $\mathbf{w}^{\star}$).
\end{proposition}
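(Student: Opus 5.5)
We handle the two parts separately; part 1 is the probabilistic core and part 2 is a deterministic convexity argument.

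\textbf{Part 1 (non-separability).} We reduce to a finite family of candidate separators and then take a union bound. The data are linearly separable (through the origin, since the model has no bias) iff there is a unit vector $\mathbf u$ with $y_i\langle\mathbf x_i,\mathbf u\rangle\ge 0$ for all $i$. For a fixed $\mathbf u\in\mathbb S^{d-1}$, the events $\{y_i\langle \mathbf x_i,\mathbf u\rangle \ge 0\}$ are i.i.d. Each has probability
\[
p(\mathbf u)=\mathbb E_{\mathbf x}\big[\sigma(\langle\mathbf x,\mathbf w^\star\rangle)\mathbf 1\{\langle\mathbf x,\mathbf u\rangle\ge0\}+\sigma(-\langle\mathbf x,\mathbf w^\star\rangle)\mathbf 1\{\langle\mathbf x,\mathbf u\rangle<0\}\big].
\]
Since $\|\mathbf w^\star\|<\infty$, the weights $\sigma(\pm t)$ are at most $1-\kappa$ on a Gaussian set of constant mass, e.g.\ $|t|\le 1$. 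Hence
\[
\sup_{\mathbf u}p(\mathbf u)\le 1-c(\|\mathbf w^\star\|)<1,
\]
where the constant depends only on $\|\mathbf w^\star\|$ by rotation invariance of $\mathbf x$.

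To control the supremum over $\mathbf u$, we use Cover's counting: given $\mathbf x_1,\dots,\mathbf x_n$, the number of distinct sign patterns $(\mathrm{sign}\langle\mathbf x_i,\mathbf u\rangle)_i$ is at most $2\sum_{k<d}\binom{n-1}{k}\le 2(en/d)^d$. We condition on $\mathbf X$; the labels are then independent with $\mathbb P(y_i=s_i\mid\mathbf x_i)\le$ the corresponding $\sigma$ value. The probability that $\mathbf y$ equals a given pattern is
\[
\prod_i \sigma(s_i\langle\mathbf x_i,\mathbf w^\star\rangle),
\]
and its expectation is not directly $p^n$ for a data-dependent pattern. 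We therefore use a cleaner route: $\prod_i\sigma(s_i t_i)\le \prod_{i:|t_i|\le1}\sigma(1)$. A Chernoff bound gives $\#\{i:|\langle\mathbf x_i,\mathbf w^\star\rangle|\le1\}\ge c'n$ with probability $1-e^{-cn}$, so each pattern has conditional probability at most $\sigma(1)^{c'n}$. The union bound then gives
\[
\mathbb P(\text{separable})\le e^{-cn}+2(en/d)^d\sigma(1)^{c'n}.
\]
The second term is $\le c_1e^{-c_2n}$ once $n\ge Cd$. For $d<n<Cd$ we enlarge $c_1$, since the bound only needs to hold up to constants. Making this regime honest is the main obstacle. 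If needed, we will state the result for $n\gtrsim d$, which is how it is used, or absorb the polynomial factor into $c_1$ when $d$ is treated as fixed.

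\textbf{Part 2 (coercivity and uniqueness).} For coercivity, on non-separability every $\mathbf u\in\mathbb S^{d-1}$ has some $i$ with $y_i\langle\mathbf x_i,\mathbf u\rangle<0$. The function $g(\mathbf u)=\max_i(-y_i\langle\mathbf x_i,\mathbf u\rangle)$ is continuous and positive on the compact sphere, so $g\ge\mu>0$. Hence
\[
\widehat{\mathcal L}(r\mathbf u)\ge \tfrac1n\log(1+e^{r\mu})\to\infty
\]
uniformly in $\mathbf u$.

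For strict convexity, the Hessian is $\frac1n\sum_i\sigma'(\langle\mathbf x_i,\mathbf w\rangle)\mathbf x_i\mathbf x_i^\top\succ0$ under full column rank, exactly as in Proposition~\ref{prop:soft_label_minimizer-restate}. Coercivity together with continuity gives existence of a minimizer, and strict convexity gives uniqueness.

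Finally, the minimizer differs from $\mathbf w^\star$. We have
\[
\nabla\widehat{\mathcal L}(\mathbf w^\star)=\frac1n\sum_i\big(\sigma(\langle\mathbf x_i,\mathbf w^\star\rangle)-\tfrac{1+y_i}{2}\big)\mathbf x_i.
\]
Conditional on $\mathbf X$, this is a nondegenerate discrete random vector that takes finitely many values, none forced to be zero. Its vanishing would require $\sum_i\epsilon_i\mathbf x_i=\mathbf 0$ for a specific label-dependent coefficient vector $\epsilon_i\in\{\sigma_i,\sigma_i-1\}$, with all $\epsilon_i\neq 0$ almost surely. For each of the $2^n$ label patterns this is a nontrivial algebraic condition on the Gaussian $\mathbf X$, so it holds with probability zero. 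Thus almost surely $\nabla\widehat{\mathcal L}(\mathbf w^\star)\ne\mathbf 0$, and since the unique minimizer is the unique stationary point, it is not $\mathbf w^\star$.
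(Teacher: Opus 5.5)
Your proposal is correct and follows the paper's skeleton. Part 1 uses a Chernoff bound on the number of samples in a strip $\{|\langle\mathbf x,\mathbf w^\star\rangle|\le a\}$, a polynomial count of linearly realizable labelings, and a union bound against exponentially small per-pattern probabilities. Part 2 uses Hessian positivity under full column rank, coercivity, and a check that $\nabla\widehat{\mathcal L}(\mathbf w^\star)\neq\mathbf 0$. The paper restricts to a subsample of size $\lfloor p_0n/2\rfloor$ inside the strip; your bound $\prod_i\sigma(s_it_i)\le\sigma(1)^{N}$ amounts to the same thing.

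The regime $d<n<Cd$ that you call the main obstacle cannot be made honest, because with $d$-free constants the claim is false there. For $n=d+1$ points in general position, all but two of the $2^{d+1}$ dichotomies are realizable by Cover's count. Hence $\mathbb P(\text{separable})\ge 1-2\bigl(\mathbb E\,\sigma(|\langle\mathbf x,\mathbf w^\star\rangle|)\bigr)^{d+1}\to 1$, which no bound $c_1e^{-c_2(d+1)}$ with $d$-independent constants can match. The paper's own proof concludes with constants ``depending only on $d$ and $\|\mathbf w^\star\|$'', so it makes your second concession. Your first option, $n\ge C(\|\mathbf w^\star\|)\,d$, is the version with genuinely $d$-free constants.

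In Part 2 your sub-steps are tighter than the paper's.
\begin{itemize}
\item \emph{Coercivity.} With your weak notion of separability, non-separability gives a strict violation in every direction, and compactness then gives uniform divergence. The paper only has $y_i\mathbf x_i^\top\mathbf u\le 0$, which yields $\ge\log 2$ but no divergence along a direction whose violating margin is exactly $0$.
\item \emph{Consistency with Part 1.} Add that Gaussian points are a.s.\ in general position. Then a weakly separating $\mathbf u$ has at most $d-1$ zero margins and can be perturbed to separate strictly, so Cover's count of strict sign patterns applies to your weak definition.
\item \emph{$\mathbf w^\star$ is not the minimizer.} Your almost-sure argument is what is actually needed. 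The paper's observation $\mathrm{Var}(S_j\mid\mathbf X)>0$ only shows the gradient is not a.s.\ zero given $\mathbf X$, not that it is nonzero for the realized labels.
\end{itemize}
One correction to that last step: the condition $\sum_i(\sigma(\langle\mathbf x_i,\mathbf w^\star\rangle)-b_i)\mathbf x_i=\mathbf 0$ is real-analytic in $\mathbf X$, not algebraic. It is not identically satisfied (take $\mathbf x_2=\cdots=\mathbf x_n=\mathbf 0\neq\mathbf x_1$), so its zero set is Lebesgue-null. A union over the $2^n$ patterns $b\in\{0,1\}^n$ then finishes the argument.
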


\begin{proof}
Let $Z := \mathbf{x}^\top \mathbf{w}^\star$. Since $\mathbf{x}\sim\mathcal{N}(\mathbf{0},\mathbf{I}_d)$, we have
\[
Z \sim \mathcal{N}(0,\|\mathbf{w}^\star\|^2).
\]
Fixing a constant $a>0$ and defining region where labels are highly uncertain
\[
\mathcal{A} := \left\{\mathbf{x}\in\mathbb{R}^d : |\mathbf{x}^\top \mathbf{w}^\star|\le a\right\}.
\]
On $\mathcal{A}$ we have the sigmoid prediction bounded as ,
\[
\sigma(\mathbf{x}^\top \mathbf{w}^\star)\in[\sigma(-a),\sigma(a)]
= [\delta,1-\delta],
\qquad \delta := \sigma(-a)\in(0,1/2).
\]
Defining $p_0$ as the probability with which $\mathcal{A}$ occurs and given as 
\[
p_0 := \mathbb{P}(\mathcal{A})
= \mathbb{P}(|Z|\le a)
= 2\,\Phi\!\left(\frac{a}{\|\mathbf{w}^\star\|}\right)-1,
\]
where $\Phi$ denotes the standard Gaussian cumulative distribution function.

 $N_{\mathcal{A}}:=\sum_{i=1}^n \mathbf{1}\{\mathbf{x}_i\in\mathcal{A}\}$ which is the count of data being in that set. Then it follows a binomial distribution as follows:
\begin{align}
N_{\mathcal{A}} \sim \mathrm{Binomial}(n,p_0).
\end{align}
Applying the Chernoff's bound for Binomial random variable 
\begin{align}
\mathbb{P}\!\left(N_{\mathcal{A}}\le \frac{p_0}{2}n\right)
\le \exp\!\left(-\frac{p_0}{8}n\right).
\label{eq:chernoff}
\end{align}
On the event $\{N_{\mathcal{A}}\ge (p_0/2)n\}$, we choose a subset $S\subset[n]$ of size such that $\mathbf{x}_i\in\mathcal{A}$ for all $i\in S$
\begin{align}
m := \left\lfloor \frac{p_0}{2}n\right\rfloor
\end{align}
Conditioning on $\{\mathbf{x}_i\}_{i\in S}$, the labels $\{y_i\}_{i\in S}$ are independent and satisfy
\begin{align}
\mathbb{P}(y_i=1\mid \mathbf{x}_i)\in[\delta,1-\delta]
\qquad \text{for all } i\in S.
\end{align}
Hence, for any fixed labeling pattern $(\bar y_i)_{i\in S}\in\{\pm 1\}^m$,
\begin{align}
\mathbb{P}\!\left((y_i)_{i\in S}=(\bar y_i)_{i\in S}\mid (\mathbf{x}_i)_{i\in S}\right)
\le (1-\delta)^m.
\label{eq:fixed-label-prob}
\end{align}

Fixing the feature vectors $\{\mathbf x_i\}_{i\in S}$. 
If the full dataset is linearly separable, then the restriction
$(y_i)_{i\in S}$ must be realizable by some linear classifier in
$\mathbb R^d$.

For $m$ points in $\mathbb R^d$, the number of distinct labelings
realizable by linear classifiers is at most polynomial in $m$:
specifically, there exists a constant $C_d>0$, depending only on $d$,
such that the number of realizable labelings is at most $C_d\,m^{d+1}$.

On the other hand, for any fixed labeling $(\bar y_i)_{i\in S}$,
\[
\mathbb P\!\left((y_i)_{i\in S} = (\bar y_i)_{i\in S}
\,\middle|\,
(\mathbf x_i)_{i\in S}\right)
\le (1-\delta)^m.
\]

Taking a union bound over all realizable labelings yields, on the event
$\{N_{\mathcal A}\ge (p_0/2)n\}$,
\[
\mathbb P\!\left(\{(\mathbf x_i,y_i)\}_{i=1}^n
\text{ is linearly separable}
\,\middle|\,
(\mathbf x_i)_{i\in S}\right)
\le C_d\,m^{d+1}(1-\delta)^m.
\]

Since $m=\lfloor(p_0/2)n\rfloor=\Theta(n)$ and $\delta\in(0,1/2)$,
the exponential decay of $(1-\delta)^m$ dominates the polynomial factor
$m^{d+1}$. Hence there exist constants $c_1,c_2>0$, depending only on
$d$ and $\|\mathbf w^\star\|$, such that
\[
C_d\,m^{d+1}(1-\delta)^m \le c_1 e^{-c_2 n}.
\]
Combining this bound with the Chernoff inequality for
$\mathbb P(N_{\mathcal A}<(p_0/2)n)$ completes the proof.

\textit{Non separability implies coercivity.}
Fix any unit vector $\mathbf{u}\in\mathbb{S}^{d-1}$ and consider $\mathbf{w}=t\mathbf{u}$ with $t\to\infty$. Non separability means that for every $\mathbf{u}$ there exists an index $i$ such that $y_i\,\mathbf{x}_i^\top \mathbf{u}\le 0$. For this $i$,
\begin{align}
\log\!\bigl(1+\exp(-y_i\,\mathbf{x}_i^\top (t\mathbf{u}))\bigr)
= \log\!\bigl(1+\exp(-t\,y_i\,\mathbf{x}_i^\top \mathbf{u})\bigr)
\ge \log 2,
\end{align}
and if $y_i\,\mathbf{x}_i^\top \mathbf{u}<0$ the term diverges linearly in $t$. Thus $\widehat{\mathcal{L}}(t\mathbf{u})\to\infty$ for every direction $\mathbf{u}$, which is coercivity.

\textit{Full column rank implies strict convexity and uniqueness.}
For each $i$, the function $\mathbf{w}\mapsto \log(1+\exp(-y_i\,\mathbf{x}_i^\top \mathbf{w}))$ is convex and twice differentiable. Hence
\begin{align}
\nabla^2 \widehat{\mathcal{L}}(\mathbf{w})
= \frac{1}{n}\sum_{i=1}^n \sigma(z_i)(1-\sigma(z_i))\,\mathbf{x}_i\mathbf{x}_i^\top.
\end{align}
If $\mathbf{X}$ has full column rank, then for any nonzero $\mathbf{v}\in\mathbb{R}^d$,
\begin{align}
\sum_{i=1}^n (\mathbf{v}^\top \mathbf{x}_i)^2 = \|\mathbf{X}\mathbf{v}\|^2 > 0.
\end{align}
Since all weights $\sigma(z_i)(1-\sigma(z_i))$ are strictly positive, we obtain
\begin{align}
\mathbf{v}^\top \nabla^2\widehat{\mathcal{L}}(\mathbf{w})\,\mathbf{v} > 0
\qquad \text{for all }\mathbf{v}\neq \mathbf{0}.
\end{align}
Thus $\widehat{\mathcal{L}}$ is strictly convex.

Combining strict convexity with coercivity, this implies existence and uniqueness of the global minimizer.

\textit{The minimizer is not $\mathbf w^\star$ (by contradiction).}
On the event that the data are not linearly separable and $X$ has full column rank,
the empirical risk $\widehat{\mathcal L}$ is strictly convex and coercive, and therefore
admits a unique global minimizer $\widehat{\mathbf w}$.
We show that $\mathbf w^\star$ cannot be a stationary point of $\widehat{\mathcal L}$
under the sampled-label model~(2). Assume for contradiction that $\mathbf w^\star$ is a stationary point of the
hard-label empirical risk, i.e.
\[
\nabla \widehat{\mathcal L}(\mathbf w^\star)=0.
\]
Then for every direction $\mathbf v\in\mathbb R^d$,
\[
\mathbf v^\top \nabla \widehat{\mathcal L}(\mathbf w^\star)=0.
\]

Fix a coordinate direction $\mathbf v=e_j$.
The stationarity condition implies
\begin{equation}
\sum_{i=1}^n
x_{ij}\,
y_i\,\sigma\!\big(-y_i\,\mathbf x_i^\top \mathbf w^\star\big)
=0.
\label{eq:stationarity-coordinate}
\end{equation}

Condition on the design matrix $X$.
Under the data model~(2),
\[
\mathbb P(y_i=1\mid X)=\sigma(\mathbf x_i^\top \mathbf w^\star)=:p_i\in(0,1).
\]
Define the scalar random variable
\[
Z_i := y_i\,\sigma\!\big(-y_i\,\mathbf x_i^\top \mathbf w^\star\big),
\]
which satisfies
\[
Z_i =
\begin{cases}
1-p_i, & y_i=1,\\
-\,p_i, & y_i=-1,
\end{cases}
\qquad
\mathbb E[Z_i\mid X]=0,
\qquad
\mathrm{Var}(Z_i\mid X)=p_i(1-p_i)>0.
\]

With this notation, \eqref{eq:stationarity-coordinate} can be written as
\[
S_j := \sum_{i=1}^n x_{ij}\,Z_i = 0 .
\]

We now compute the conditional variance of $S_j$ given $X$.
Since the labels $\{y_i\}_{i=1}^n$ are conditionally independent,
\[
\mathrm{Var}(S_j\mid X)
=
\sum_{i=1}^n x_{ij}^2\,\mathrm{Var}(Z_i\mid X)
=
\sum_{i=1}^n x_{ij}^2\,p_i(1-p_i).
\]

Because $X$ has full column rank, there exists at least one index $j$ such that
$\sum_{i=1}^n x_{ij}^2>0$.
Moreover, $p_i(1-p_i)>0$ for all $i$ under the logistic model.
Hence
\[
\mathrm{Var}(S_j\mid X) > 0.
\]

However, if the stationarity condition were true, then $S_j$ would be identically
equal to zero, which would force $\mathrm{Var}(S_j\mid X)=0$.
This is a contradiction.

Therefore, $\mathbf w^\star$ cannot be a stationary point of
$\widehat{\mathcal L}$.
Since $\widehat{\mathcal L}$ is strictly convex, its unique minimizer
$\widehat{\mathbf w}$ must be the unique stationary point, and hence
\[
\widehat{\mathbf w}\neq \mathbf w^\star.
\]

\end{proof}

\section{Excess Risk Upper bound for the Spindly Network.} 

We consider a binary classification problem under a well-specified sparse logistic model. 
Let data $(\mathbf{x}_i, y_i) \in \mathbb{R}^d \times \{-1, +1\}$ be drawn i.i.d.\ from a distribution $\mathcal{D}$, 
where the covariates follow a general Gaussian distribution $\mathbf{x}_i \sim \mathcal{N}(\mathbf{0}, \mathbf{\Sigma})$ 
and labels are generated according to the conditional proabability:
\begin{align}
\label{sparse-data}
    \mathbb{P}(y_i = 1 \mid \mathbf{x}_i) = \sigma(\langle \mathbf{x}_i, \mathbf{w}^* \rangle)
\end{align}
for a ground truth weight vector $\mathbf{w}^* \in \mathbb{R}^d$ that is $s$-sparse, 
i.e., $\|\mathbf{w}^*\|_0 = s <d$, $\|\mathbf{w}^*\|_2 < \infty$ and $\sigma(t) = 1/(1 + e^{-t})$ is the logistic sigmoid. If the feature vector lies entirely in the inactive subspace of $\mathbf{w}^\star$, the conditional label distribution reduces to $\mathbb{P}(y=1\mid \mathbf{x})=0.5$, yielding pure noise with no label information, so effective learning requires suppressing variance in these directions while recovering signal on the sparse active support.

\textbf{Population and Empirical risk.}
For a parameter $\mathbf{w} \in \mathbb{R}^d$, we measure the population logistic risk as 
\begin{align}
\label{pop-risk}
\mathcal{L}(\mathbf{w})
\;:=\;
\mathbb{E}
\big[\ell(y\langle \mathbf{x},\mathbf{w}\rangle)\big],
\quad 
\ell(t):=\log(1+e^{-t}),
\end{align}
Under the well-specified sparse data model in \eqref{sparse-data}, $\mathcal{L}(\mathbf{w})$ is strictly convex and admits a unique and finite global minimizer at $\mathbf{w}^\star$. 
Given samples $\{(\mathbf{x}_i,y_i)\}_{i=1}^n$, the empirical risk is the average over the $n$ observed samples
\begin{align}
\label{emp-risk}
\widehat{\mathcal{L}}(\mathbf{w})
\;:=\;
\frac{1}{n}\sum_{i=1}^n
\ell\!\left(y_i\langle \mathbf{x}_i,\mathbf{w}\rangle\right).
\end{align}
In the overdetermined case $n>d$, the empirical risk is also strictly convex, admits a finite global minimizer, is coercive and the data pair $\{(\mathbf{x}_i,y_i)\}_{i=1}^n$ is non-separable with high probability under the specified data-model in \eqref{sparse-data}. 

We study the training dynamics of a non-rotation invariant algorithm, where we reparameterize the weight vector as Hadamard scalar product of two vectors as $\mathbf{w}(t)=\mathbf{u}(t)\odot\mathbf{v}(t)$ to minimize the empirical risk $\widehat{\mathcal{L}}(\mathbf{w}(t))$ and perform gradient flow on $\mathbf{u}$ and $\mathbf{v}$ as 
\begin{align}
\dot{\mathbf{u}}(t)
&=
- \nabla_{\mathbf{u}}
\widehat{\mathcal{L}}(\mathbf{w}(t)),
\quad
\dot{\mathbf{v}}(t)
=
- \nabla_{\mathbf{v}}
\widehat{\mathcal{L}}(\mathbf{w}(t)).
\end{align}
By the conservation law property of gradient flow, the joint dynamics on $\mathbf{u}$ and $\mathbf{v}$ lead to following dynamics on $\mathbf{w}$:
\begin{align}
\label{spindly-ode}
    \dot{\mathbf{w}}(t) = -|\mathbf{w}(t)| \odot \widehat{\mathcal{L}}(\mathbf{w}(t))
\end{align}
with balanced initialization $\mathbf{u}(0)=\mathbf{v}(0)=\alpha \mathbf{1}$.

\begin{lemma}
\label{lem:spindly_dynamics}
Let $\widehat{\mathcal L}:\mathbb R^{d}\to\mathbb R$ be continuously and the spindly network has the reparamaterization
\[
\boldsymbol w(t)=\boldsymbol u(t)\odot \boldsymbol v(t).
\]
 Gradient flow on the variables $(\boldsymbol u,\boldsymbol v)$ with balanced initialization $\mathbf{u}(0)=\mathbf{v}(0)=\alpha \mathbf{1}$:
\begin{align}
\dot{\boldsymbol u}(t)
&=
-\nabla_{\boldsymbol u}\widehat{\mathcal L}(\boldsymbol w(t)),
\qquad
\dot{\boldsymbol v}(t)
=
-\nabla_{\boldsymbol v}\widehat{\mathcal L}(\boldsymbol w(t)).
\end{align}
induce the dynamics on the original predictor
\[
\dot{\boldsymbol w}(t)
=
-|\boldsymbol w(t)|\odot
\nabla_{\boldsymbol w}\widehat{\mathcal L}(\boldsymbol w(t)).
\]
\end{lemma}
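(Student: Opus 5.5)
By the chain rule, $\nabla_{\boldsymbol u}\widehat{\mathcal L}(\boldsymbol u\odot\boldsymbol v)=\boldsymbol v\odot\boldsymbol g$ and $\nabla_{\boldsymbol v}\widehat{\mathcal L}(\boldsymbol u\odot\boldsymbol v)=\boldsymbol u\odot\boldsymbol g$, where $\boldsymbol g(t):=\nabla_{\boldsymbol w}\widehat{\mathcal L}(\boldsymbol w(t))$. The coordinatewise flow is therefore
\[
\dot u_i=-v_i g_i,\qquad \dot v_i=-u_i g_i .
\]
Differentiating $\boldsymbol w=\boldsymbol u\odot\boldsymbol v$ gives
\[
\dot w_i=\dot u_i v_i+u_i\dot v_i=-(u_i^2+v_i^2)\,g_i .
\]
It remains to identify $u_i^2+v_i^2$ with a function of $w_i$. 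This rests on a conservation law: $\frac{d}{dt}(u_i^2-v_i^2)=2u_i\dot u_i-2v_i\dot v_i=-2u_iv_ig_i+2v_iu_ig_i=0$. With the balanced initialization $u_i(0)=v_i(0)=\alpha$, we get $u_i(t)^2=v_i(t)^2$ for all $t$.

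Next I would show $u_i(t)=v_i(t)$, not merely $|u_i|=|v_i|$. The difference $d_i:=u_i-v_i$ satisfies the linear ODE $\dot d_i=g_i d_i$ with $d_i(0)=0$, so uniqueness of solutions gives $d_i\equiv0$. This requires $g_i(t)$ to be continuous along the trajectory, which holds because $\widehat{\mathcal L}$ is smooth; the logistic loss is $C^\infty$, which covers the ``continuously differentiable'' hypothesis. Hence $u_i=v_i$, so $w_i=u_i^2\ge 0$ and $u_i^2+v_i^2=2u_i^2=2w_i=2|w_i|$. This yields
\[
\dot{\boldsymbol w}=-2|\boldsymbol w|\odot\nabla\widehat{\mathcal L}(\boldsymbol w).
\]

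The main point needing care is the constant factor. The natural computation produces $2|\boldsymbol w|$, not $|\boldsymbol w|$. I would absorb the factor $2$ by rescaling time, $\tau=2t$, or equivalently by using learning rate $\tfrac12$ on $(\boldsymbol u,\boldsymbol v)$, and state this convention explicitly so the lemma's displayed ODE holds exactly. The rescaling is harmless for the paper: it only alters the stopping-time constants in Theorem~\ref{thm:spindly_upper}.

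A secondary point is that $w_i$ never crosses zero. Since $w_i=u_i^2$ with $u_i(0)=\alpha>0$, and $\dot u_i=-u_ig_i$ gives $u_i(t)=\alpha\exp\!\big(-\int_0^t g_i\big)>0$, each $w_i$ stays strictly positive. So $|w_i|=w_i$ and no sign ambiguity arises. Global existence of the flow follows from the local Lipschitz property of the right-hand side together with the explicit exponential representation, which rules out blow-up as long as $\int_0^t g_i$ remains finite. Finiteness holds because $\|\boldsymbol g\|$ is bounded by $\frac1n\sum_i\|\boldsymbol x_i\|$ for the logistic loss.
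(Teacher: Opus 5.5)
Your proof is correct and follows the paper's route: the chain rule, the product rule giving $-(\boldsymbol u^{\odot 2}+\boldsymbol v^{\odot 2})\odot\nabla_{\boldsymbol w}\widehat{\mathcal L}$, conservation of $\boldsymbol u^{\odot 2}-\boldsymbol v^{\odot 2}$, and the balanced initialization forcing $\boldsymbol u=\boldsymbol v$. Your linear ODE $\dot d_i=g_i d_i$ for $d_i=u_i-v_i$ replaces the paper's symmetry-plus-uniqueness step. You are also right about the constant: the paper's own proof reaches $\boldsymbol u^{\odot 2}+\boldsymbol v^{\odot 2}=2|\boldsymbol w|$ and then drops the factor $2$ in its last display, so the stated ODE holds only under the time rescaling (or step size $\tfrac12$) that you make explicit.
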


\begin{proof}
We first compute the gradients with respect to $\boldsymbol u$ and $\boldsymbol v$.
By the chain rule and the relation $\boldsymbol w=\boldsymbol u\odot\boldsymbol v$, for each coordinate $j$,
\[
\frac{\partial \widehat{\mathcal L}(\boldsymbol u\odot\boldsymbol v)}{\partial u_j}
=
\frac{\partial \widehat{\mathcal L}(\boldsymbol w)}{\partial w_j}\, v_j,
\qquad
\frac{\partial \widehat{\mathcal L}(\boldsymbol u\odot\boldsymbol v)}{\partial v_j}
=
\frac{\partial \widehat{\mathcal L}(\boldsymbol w)}{\partial w_j}\, u_j.
\]
Hence,
\[
\nabla_{\boldsymbol u}\widehat{\mathcal L}(\boldsymbol u\odot\boldsymbol v)
=
\boldsymbol v\odot \nabla_{\boldsymbol w}\widehat{\mathcal L}(\boldsymbol w),
\qquad
\nabla_{\boldsymbol v}\widehat{\mathcal L}(\boldsymbol u\odot\boldsymbol v)
=
\boldsymbol u\odot \nabla_{\boldsymbol w}\widehat{\mathcal L}(\boldsymbol w).
\]
The gradient flow equations therefore become
\[
\dot{\boldsymbol u}
=
-\boldsymbol v\odot \nabla_{\boldsymbol w}\widehat{\mathcal L}(\boldsymbol w),
\qquad
\dot{\boldsymbol v}
=
-\boldsymbol u\odot \nabla_{\boldsymbol w}\widehat{\mathcal L}(\boldsymbol w).
\]

Differentiating $\boldsymbol w=\boldsymbol u\odot\boldsymbol v$ and using the product rule yields
\[
\dot{\boldsymbol w}
=
\dot{\boldsymbol u}\odot \boldsymbol v
+
\boldsymbol u\odot \dot{\boldsymbol v}.
\]
Substituting the expressions above gives
\[
\dot{\boldsymbol w}
=
-\big(\boldsymbol v^{\odot 2}+\boldsymbol u^{\odot 2}\big)
\odot
\nabla_{\boldsymbol w}\widehat{\mathcal L}(\boldsymbol w),
\]
which proves (i).

As gradient flow preserves the balancedness: 
\[
\frac{d}{dt}\big(\boldsymbol u^{\odot 2}-\boldsymbol v^{\odot 2}\big)
=
2\boldsymbol u\odot \dot{\boldsymbol u}
-
2\boldsymbol v\odot \dot{\boldsymbol v}=-2(\boldsymbol u\odot\boldsymbol v)\odot\nabla_{\boldsymbol w}\widehat{\mathcal L}(\boldsymbol w)
+
2(\boldsymbol v\odot\boldsymbol u)\odot\nabla_{\boldsymbol w}\widehat{\mathcal L}(\boldsymbol w)
=
\boldsymbol 0.
\]

Hence $\boldsymbol u^{\odot 2}-\boldsymbol v^{\odot 2}$ is conserved.
Finally, if $\boldsymbol u(0)=\boldsymbol v(0)$, we have
$\boldsymbol u(t)^{\odot 2}=\boldsymbol v(t)^{\odot 2}$ for all $t$.
By symmetry of the ODE system and uniqueness of solutions, this implies
$\boldsymbol u(t)=\boldsymbol v(t)$ for all $t$.
In this case,
\[
\boldsymbol w=\boldsymbol u^{\odot 2},
\qquad
\boldsymbol u^{\odot 2}+\boldsymbol v^{\odot 2}
=
2\,\boldsymbol u^{\odot 2}
=
2\,|\boldsymbol w|,
\]
which yields 
\[
\dot{\boldsymbol w}(t)
=
-|\boldsymbol w(t)|\odot
\nabla_{\boldsymbol w}\widehat{\mathcal L}(\boldsymbol w(t)).
\]
\end{proof}

\begin{theorem}
\label{pop-grad-restate}
Define the population gradient $\mathbf g(\mathbf w):=\nabla \mathcal L(\mathbf w)$ and empirical gradient
$\widehat{\mathbf g}_n(\mathbf w):=\nabla \widehat{\mathcal L}(\mathbf w)$.
Let $S:=\mathrm{supp}(\mathbf w^\star)$ and denote by $\mathbf x_S$ the subvector of $\mathbf x$ on $S$, denoted from the sparse data-generation model~\eqref{sparse-data}. 

(i) Exact population gradient:
For every $\mathbf w\in\mathbb R^d$ and each coordinate $j\in[d]$,
\begin{align}
g_j(\mathbf w)
=
\mathbb E_{\mathbf x}\!\left[
x_j\Big(\sigma(\mathbf x^\top \mathbf w)-\sigma(\mathbf x_S^\top \mathbf w_S^\star)\Big)
\right].
\label{eq:pop_grad_exact-restate}
\end{align}
Moreover, with $\sigma'(t)=\sigma(t)\bigl(1-\sigma(t)\bigr)$ 
\begin{align}
    a(\mathbf w):=\mathbb E_{\mathbf x}\!\left[\sigma'(\mathbf x^\top \mathbf w)\right],
\quad
a^\star:=\mathbb E_{\mathbf x}\!\left[\sigma'(\mathbf x_S^\top \mathbf w_S^\star)\right]
\end{align}
 we have the gradient coordinate-wise forms by Stein's lemma:
\begin{align}
g_j(\mathbf w) &= w_j\, a(\mathbf w), \qquad && j\in S^c,
\label{eq:pop_grad_inactive-restate}
\\
g_j(\mathbf w) &= w_j\, a(\mathbf w) \;-\; w_j^\star\, a^\star, \qquad && j\in S.
\label{eq:pop_grad_active-restate}
\end{align}

\medskip
(ii) Sampling noise:
Define the gradient noise vector
\[
\boldsymbol\zeta(\mathbf w):=\widehat{\mathbf g}_n(\mathbf w)-\mathbf g(\mathbf w)\in\mathbb R^d,
\qquad
\zeta_j(\mathbf w):=\mathbf e_j^\top \boldsymbol\zeta(\mathbf w).
\]
Assume that for the fixed $\mathbf w$ of interest, each $\zeta_j(\mathbf w)$ is the average of $n$ i.i.d.\ centered sub-Gaussian random variables. Then for any $\delta\in(0,1)$, with probability at least $1-\delta$,
\begin{align}
\max_{1\le j\le d} |\zeta_j(\mathbf w)|
\;\le\;
4\sqrt{\frac{\log(2d/\delta)}{n}}.
\label{eq:uniform_coord_noise}
\end{align}
\end{theorem}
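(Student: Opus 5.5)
The plan is to prove (i) by conditioning on $\mathbf x$ and applying Gaussian integration by parts, and (ii) by a sub-Gaussian tail bound followed by a union bound over the $d$ coordinates. Throughout I work under the isotropic design $\mathbf x\sim\mathcal N(\mathbf 0,\mathbf I)$ of the main text. With a general $\boldsymbol\Sigma$, Stein's lemma produces $\boldsymbol\Sigma\mathbf w$ instead of $\mathbf w$, and the coordinatewise forms in (i) fail.

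\textbf{Part (i).} Write $z=\mathbf x^\top\mathbf w$ and $p^\star(\mathbf x)=\sigma(\mathbf x_S^\top\mathbf w_S^\star)$; the latter equals $\sigma(\mathbf x^\top\mathbf w^\star)$ because $\mathbf w^\star$ is supported on $S$.
\begin{enumerate}
\item Since $\ell'(t)=-\sigma(-t)$, the per-sample gradient is $\nabla_{\mathbf w}\ell(y\mathbf x^\top\mathbf w)=-y\,\sigma(-yz)\,\mathbf x$.
\item Averaging over $y\mid\mathbf x$ exactly as in the proof of Proposition~\ref{prop:soft_label_minimizer-restate} gives $\mathbb E[-y\sigma(-yz)\mid\mathbf x]=-p^\star\sigma(-z)+(1-p^\star)\sigma(z)=\sigma(z)-p^\star$.
\item Exchanging gradient and expectation is justified by dominated convergence, since $|\nabla\ell|\le\|\mathbf x\|$ is integrable. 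This yields \eqref{eq:pop_grad_exact-restate}.
\item Apply Stein's lemma $\mathbb E[x_jf(\mathbf x)]=\mathbb E[\partial_jf(\mathbf x)]$ to $f_1(\mathbf x)=\sigma(\mathbf x^\top\mathbf w)$ and $f_2(\mathbf x)=\sigma(\mathbf x_S^\top\mathbf w_S^\star)$. Both are smooth with bounded derivatives, because $0<\sigma'\le 1/4$, so the lemma applies.
\item For $f_1$ we get $\partial_jf_1=w_j\sigma'(\mathbf x^\top\mathbf w)$, whose expectation is $w_ja(\mathbf w)$.
\item For $f_2$, the derivative $\partial_jf_2$ equals $w^\star_j\sigma'(\mathbf x_S^\top\mathbf w_S^\star)$ if $j\in S$ and vanishes if $j\in S^c$, since $f_2$ does not depend on $x_j$ there. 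Its expectation is therefore $w_j^\star a^\star$ or $0$.
\end{enumerate}
Subtracting the two expectations gives \eqref{eq:pop_grad_inactive-restate} and \eqref{eq:pop_grad_active-restate}.

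\textbf{Part (ii).} Fix $\mathbf w$ and set $\xi_{ij}:=x_{ij}\bigl(\sigma(\mathbf x_i^\top\mathbf w)-\mathbf 1\{y_i=1\}\bigr)$. Then $[\widehat{\mathbf g}_n(\mathbf w)]_j=\frac1n\sum_i\xi_{ij}$ and $\mathbb E\xi_{ij}=g_j(\mathbf w)$, so $\zeta_j(\mathbf w)=\frac1n\sum_i(\xi_{ij}-\mathbb E\xi_{ij})$ is an average of i.i.d.\ centered terms.
\begin{enumerate}
\item Sub-Gaussianity: the multiplier lies in $[-1,1]$, so $|\xi_{ij}|\le|x_{ij}|$ with $x_{ij}\sim\mathcal N(0,1)$. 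Hence $\xi_{ij}$ is sub-Gaussian with $\psi_2$-norm at most that of a standard Gaussian. This holds even though the multiplier depends on $\mathbf x_i$, because domination in absolute value controls the $\psi_2$-norm, though not the MGF directly. Centering at most doubles the $\psi_2$-norm.
\item Concentration: I would verify that $\xi_{ij}-\mathbb E\xi_{ij}$ has variance proxy at most $8$, i.e.\ $\mathbb E e^{\lambda(\xi-\mathbb E\xi)}\le e^{4\lambda^2}$. The standard MGF tail then gives $\mathbb P(|\zeta_j|>t)\le 2\exp(-nt^2/16)$.
\item Union bound: apply this with confidence $\delta/d$ per coordinate. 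Setting $t=4\sqrt{\log(2d/\delta)/n}$ gives $2\exp(-nt^2/16)=\delta/d$, and summing over $j\in[d]$ gives failure probability at most $\delta$. This is \eqref{eq:uniform_coord_noise}.
\end{enumerate}

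\textbf{Main obstacle.} I expect the delicate step to be pinning down the explicit constant $4$, that is, certifying a variance proxy of $8$ for the centered product of a bounded, $\mathbf x$-dependent factor and a Gaussian. I would try to get it from the moment bound $\mathbb E|\xi|^{2k}\le\mathbb E|x|^{2k}=(2k-1)!!$, expand the MGF of the centered variable, and use $\mathbb E\xi=0$ after centering to kill the linear term. If that expansion does not close with room to spare, I would fall back to an unspecified absolute constant in place of $4$, which leaves all downstream rates unchanged.

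The bound in (ii) is pointwise in a fixed $\mathbf w$, exactly as stated. Making it uniform along the gradient-flow trajectory, as used in \eqref{state-dep-riccati}, would need a separate covering argument, and I would not attempt that here.
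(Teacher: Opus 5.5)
Your proposal is correct and follows the paper's proof essentially step for step. For (i), both you and the paper average over $y\mid\mathbf x$ to get $\sigma(\mathbf x^\top\mathbf w)-\sigma(\mathbf x_S^\top\mathbf w_S^\star)$ and then apply Stein's lemma coordinatewise. For (ii), both use the domination $|\xi_{ij}|\le|x_{ij}|$, a variance-proxy-$8$ bound for the centered term, and a union bound over $j$.

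Your caution about the MGF is well placed. The paper asserts $\mathbb E e^{\lambda g_j}\le e^{2\lambda^2}$ for the uncentered term, and this cannot hold whenever its mean, the population gradient, is nonzero. The centered bound you actually need does close. For instance, symmetrize with an independent copy and use $\mathbb E\xi^{2k}\le(2k-1)!!$; this even gives proxy $4$, so the constant $4$ in the statement is safe.
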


\begin{proof}
The population logistic gradient is given by
\[
\nabla L(\boldsymbol w)=\mathbb E_{\boldsymbol x,y}\!\left[-y\boldsymbol x\,\sigma(-y\boldsymbol x^\top \boldsymbol w)\right],
\]
hence for each $j\in[d]$,
\begin{equation}
\label{eq:gj_start-restate}
g_j(\boldsymbol w)=\mathbb E_{\boldsymbol x,y}\!\left[-y x_j\,\sigma(-y\boldsymbol x^\top \boldsymbol w)\right].
\end{equation}

Apply iterated expectation using the Tower property to \eqref{eq:gj_start-restate}:
\[
g_j(\boldsymbol w)
=
\mathbb E_{\boldsymbol x}\!\left[
\mathbb E_{y\mid \boldsymbol x}\!\left[-y x_j\,\sigma(-y\boldsymbol x^\top \boldsymbol w)\mid \boldsymbol x\right]
\right].
\]
Since this is conditioned on $\boldsymbol x$, we pull out $x_{j}$:
\begin{equation}
\label{eq:pull_out_Xj}
g_j(\boldsymbol w)
=
\mathbb E_{\boldsymbol x}\!\left[
x_j\,
\mathbb E_{y\mid \boldsymbol x}\!\left[-y\,\sigma(-y\boldsymbol x^\top \boldsymbol w)\mid \boldsymbol x\right]
\right].
\end{equation}

Writing $p(\boldsymbol x):=\mathbb P(y=1\mid \boldsymbol x)$ and since $y\in\{-1,+1\}$,
\begin{align}
\mathbb E_{y\mid \boldsymbol x}\!\left[-y\,\sigma(-y \boldsymbol x^\top \boldsymbol w)\mid \boldsymbol x\right]
&=
(-1)\sigma(-\boldsymbol x^\top \boldsymbol w)\,p(\boldsymbol x)
+
(+1)\sigma(\boldsymbol x^\top \boldsymbol w)\,(1-p(\boldsymbol x)) \nonumber\\
&=
-\sigma(-\boldsymbol x^\top \boldsymbol w)p(\boldsymbol x)+\sigma(\boldsymbol x^\top \boldsymbol w)\big(1-p(\boldsymbol x)\big).
\label{eq:cond_exp_raw}
\end{align}
Using $\sigma(-\boldsymbol x^\top \boldsymbol w)=1-\sigma(\boldsymbol x^\top \boldsymbol w)$,
\begin{align}
-\sigma(-\boldsymbol x^\top \boldsymbol w)p(\boldsymbol x)+\sigma(\boldsymbol x^\top \boldsymbol w)\big(1-p(\boldsymbol x)\big)
&=
-(1-\sigma(\boldsymbol x^\top \boldsymbol w))p(\boldsymbol x)+\sigma(\boldsymbol x^\top \boldsymbol w)-\sigma(z)p(\boldsymbol x) \nonumber\\
&=
\sigma(\boldsymbol x^\top \boldsymbol w)-p(\boldsymbol x).
\label{eq:cond_exp_simplified}
\end{align}
Under the logistic generative model, $p(\boldsymbol x)=\sigma(\boldsymbol x_S^\top \boldsymbol w_S^\star)$.
Therefore,
\begin{equation}
\label{eq:cond_exp_final}
\mathbb E_{y\mid \boldsymbol x}\!\left[-y\,\sigma(-y\boldsymbol x^\top \boldsymbol w)\mid \boldsymbol x\right]
=
\sigma(\boldsymbol x^\top \boldsymbol w)-\sigma(\boldsymbol x_S^\top \boldsymbol w_S^\star).
\end{equation}

Substitute \eqref{eq:cond_exp_final} into \eqref{eq:pull_out_Xj} to obtain
\[
\label{exact_pop_grad}
g_j(\boldsymbol w)
=
\mathbb E_{\boldsymbol x}\!\left[
x_j\Big(\sigma(\boldsymbol x^\top \boldsymbol w)-\sigma(\boldsymbol x_S^\top \boldsymbol w_S^\star)\Big)
\right],
\]
which proves \eqref{eq:pop_grad_exact-restate}.

Also when $j\in S^c$.
Since $x_j$ is independent of $\boldsymbol x_S$ and $\mathbb E[x_j] = 0$,
\[
\mathbb E_{\boldsymbol x}\!\left[x_j\,\sigma(\boldsymbol x_S^\top \boldsymbol w_S^\star)\right]
=
\mathbb E[x_j]\cdot \mathbb E_{\boldsymbol x_S}\!\left[\sigma(\boldsymbol x_S^\top \boldsymbol w_S^\star)\right]
=0.
\]
Thus from \eqref{eq:pop_grad_exact-restate},
\[
g_j(\boldsymbol w)=\mathbb E_{\boldsymbol x}\!\left[x_j\,\sigma(\boldsymbol x^\top \boldsymbol w)\right].
\]

\underline{\textit{Stein's Lemma reduction of exact population gradient}}

We notice that \eqref{eq:pop_grad_exact-restate} is of the form $\mathbb{E}(xf(x))$ with $x$ being a standard normal, so this can be reduced $\mathbb{E}(f'(x))$ if $f$ is smooth and differentiable. 

Furthermore in \eqref{eq:pop_grad_exact-restate}, $\mathbf{x}$ is i.i.d, hence writing $\boldsymbol x^\top \boldsymbol w = w_j x_j + Z$ where
$Z:=\sum_{k\neq j} w_k x_k$ is independent of $x_j$.
Conditioning on $Z$ and using Stein's lemma for $x_j\sim\mathcal N(0,1)$,
\[
\mathbb E\!\left[x_j\,\sigma(w_j x_j+Z)\mid Z\right]
=
w_j\,\mathbb E\!\left[\sigma'(w_j x_j+Z)\mid Z\right].
\]
So, we apply the Stein's lemma, and derive when $i\in S$:
\begin{align}
  g_j(\boldsymbol w)=  \mathbb E_{\boldsymbol x}\!\left[
x_j\Big(\sigma(\boldsymbol x^\top \boldsymbol w)-\sigma(\boldsymbol x_S^\top \boldsymbol w_S^\star)\Big)\right] = w_j\,\mathbb E_{\boldsymbol x}\!\left[\sigma'(\boldsymbol x^\top \boldsymbol w)\right] - w^{\star}_j\,\mathbb E_{\boldsymbol x}\!\left[\,\sigma'(\boldsymbol x_S^\top \boldsymbol w_S^\star)\right]
\end{align}
Since when $i \in S^{c}$, $w^{\star}_j = 0$, the gradient reduces to 
\begin{align}
  g_j(\boldsymbol w) =   \mathbb E_{\boldsymbol x}\!\left[
x_j\sigma(\boldsymbol x^\top \boldsymbol w)\right] = w_j\,\mathbb E_{\boldsymbol x}\!\left[\sigma'(\boldsymbol x^\top \boldsymbol w)\right] 
\end{align}

\underline{\textit{Coordinate-wise sampling noise}}

Assume $\boldsymbol x\sim\mathcal N(\boldsymbol 0,\boldsymbol I_d)$ and $Y\in\{-1,+1\}$.
 A single-sample coordinate
gradient contribution
\[
g_j(\boldsymbol w;\boldsymbol x,y):=-y\,x_j\,\sigma(-y\boldsymbol x^\top \boldsymbol w),
\]
and the centered version
\[
z_j(\boldsymbol w;\boldsymbol x,y):=g_j(\boldsymbol w;\boldsymbol x,y)-\mathbb E\!\left[g_j(\boldsymbol w;\boldsymbol x,y)\right].
\]
% Given i.i.d.\ samples $( \mathbf{x}_i,y_i)_{i=1}^n$, define the coordinate gradient noise
% \begin{align}
% \zeta_j(\boldsymbol w) =
% \big(\nabla \widehat{\mathcal L}(\boldsymbol w)\big)_j
% -
% \big(\nabla \mathcal L(\boldsymbol w)\big)_j =
% \frac{1}{n}\sum_{i=1}^n
% z_j(\boldsymbol w;\boldsymbol x_i,y_i).
% \end{align}
% Then, for any $\delta\in(0,1)$, with probability at least $1-\delta$,
% \[
% \max_{1\le j\le d}|\zeta_j(\boldsymbol w)|
% \le
% 4\sqrt{\frac{\log(2d/\delta)}{n}}.
% \]

Since $0<\sigma(\cdot)<1$ and $|y|=1$, we have
\[
|g_j(\boldsymbol w;\boldsymbol x,y)|
=
|y|\,|x_j|\,\sigma(-y\boldsymbol x^\top \boldsymbol w)
\le
|x_j|.
\]
Because $x_j\sim\mathcal N(0,1)$,
\[
\mathbb P\!\left(|g_j(\boldsymbol w;\boldsymbol x,Y)|\ge t\right)
\le
\mathbb P(|x_j|\ge t)
\le
2e^{-t^2/2},
\qquad \forall\,t\ge 0.
\]

The tail bound above implies that $g_j(\boldsymbol w;\boldsymbol x,y)$ is sub-Gaussian
(with an absolute-constant proxy). Concretely, one may take the mgf bound
\[
\mathbb E\!\left[e^{\lambda g_j(\boldsymbol w;\boldsymbol x,y)}\right]
\le
e^{2\lambda^2},
\qquad \forall\,\lambda\in\mathbb R,
\]
so $g_j(\boldsymbol w;\boldsymbol x,y)$ is sub-Gaussian with variance proxy $4$.

Centering preserves sub-Gaussianity up to an absolute constant. In particular, one can take
\[
\mathbb E\!\left[e^{\lambda z_j(\boldsymbol w;\boldsymbol x,y)}\right]
\le
e^{4\lambda^2}
=
\exp\!\left(\frac{8\lambda^2}{2}\right),
\qquad \forall\,\lambda\in\mathbb R,
\]
so $z_j(\boldsymbol w;\boldsymbol x,y)$ is sub-Gaussian with variance proxy $8$.

Since $(z_j(\boldsymbol w;\boldsymbol x_i,y_i))_{i=1}^n$ are i.i.d.\ centered sub-Gaussian
with variance proxy $8$, their average satisfies: for any $\delta\in(0,1)$,
\[
\mathbb P\!\left(
|z_j(\boldsymbol w)|
\ge
\sqrt{\frac{16\log(2/\delta)}{n}}
\right)
\le
\delta.
\]
Equivalently, with probability at least $1-\delta$,
\[
|z_j(\boldsymbol w)|
\le
4\sqrt{\frac{\log(2/\delta)}{n}}.
\]

Apply a union bound over $j=1,\dots,d$ (replace $\delta$ by $\delta/d$ in Step 4):
with probability at least $1-\delta$,
\[
\max_{1\le j\le d}|z_j(\boldsymbol w)|
\le
4\sqrt{\frac{\log(2d/\delta)}{n}}.
\]
We denote this event as $\mathcal{E}$. 
\end{proof}

\begin{lemma}

 Under the spindly parameterization with balanced initialization (\ref{lem:spindly_dynamics}), the dynamics of logistic regression is represented by the following state-dependent coupled Riccati equations:
\begin{align}
\label{state-dep-riccati-restate}
\dot w_i(t)
&=
\zeta_i\, w_i(t) - a(\mathbf w(t))\, w_i(t)^2,
\quad && i \in S^c,
\notag\\
\dot w_i(t)
&=
\bigl(w_i^\star\, a^\star + \zeta_i\bigr)\, w_i(t)
- a(\mathbf w(t))\, w_i(t)^2,
\quad && i \in S.
\end{align}
where $a(\mathbf w(t)) :=\mathbb E_{\boldsymbol x}\!\left[\sigma'(\boldsymbol x^\top \boldsymbol w(t))\right] $, $a^\star:=\mathbb E_{\boldsymbol x}\!\left[\sigma'(\mathbf x_S^\top \mathbf w_S^\star)\right]$ and $\max_{1\le j\le d}|\zeta_j(\boldsymbol w)|
\le
4\sqrt{\frac{\log(2d/\delta)}{n}}$ occuring with probability $(1-\delta)$. 
\end{lemma}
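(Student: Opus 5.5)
The plan is to combine Lemma~\ref{lem:spindly_dynamics} with the gradient decomposition of Theorem~\ref{pop-grad-restate}. By Lemma~\ref{lem:spindly_dynamics}, balanced initialization gives $\mathbf u(t)=\mathbf v(t)$ for all $t$. Hence $\mathbf w(t)=\mathbf u(t)^{\odot 2}\ge \mathbf 0$ coordinatewise, and $|\mathbf w(t)|=\mathbf w(t)$. The induced flow is therefore
\[
\dot w_i(t) = -w_i(t)\,[\nabla\widehat{\mathcal L}(\mathbf w(t))]_i .
\]
Nonnegativity is also preserved along the flow for a second reason: each $\dot w_i$ is proportional to $w_i$, and $w_i(0)=\alpha^2>0$.

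Next, split the empirical gradient as population gradient plus sampling noise, $\nabla\widehat{\mathcal L}(\mathbf w)=\nabla\mathcal L(\mathbf w)+\boldsymbol\zeta(\mathbf w)$. Theorem~\ref{pop-grad-restate}(i), via Stein's lemma, gives $[\nabla\mathcal L(\mathbf w)]_i = w_i a(\mathbf w)$ for $i\in S^c$ and $w_i a(\mathbf w) - w_i^\star a^\star$ for $i\in S$. Substituting, we get for $i\in S$
\[
\dot w_i = -w_i\bigl(w_i a(\mathbf w) - w_i^\star a^\star + \zeta_i\bigr) = (w_i^\star a^\star - \zeta_i)\,w_i - a(\mathbf w)\,w_i^2 .
\]
The $S^c$ case is identical with $w_i^\star=0$.

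This leaves a sign convention on $\zeta_i$ relative to the displayed statement. I would absorb it by redefining $\zeta_i := -\bigl(\nabla\widehat{\mathcal L}-\nabla\mathcal L\bigr)_i$. This is harmless because the only property used afterwards is the symmetric bound $\max_j|\zeta_j|\le 4\sqrt{\log(2d/\delta)/n}$, and that bound is invariant under the sign flip.

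The main obstacle is that $\boldsymbol\zeta$ depends on $\mathbf w(t)$, whereas Theorem~\ref{pop-grad-restate}(ii) controls it only at a fixed $\mathbf w$. To get a bound holding along the whole trajectory, I would proceed in three steps. First, show the trajectory stays in a bounded set $\{\|\mathbf w\|\le R\}$ over the time horizon considered; this should follow from the envelope comparison, which keeps $0\le w_i\le w_i^\star+O(1)$. Second, take an $\epsilon$-net of that set of size $(3R/\epsilon)^d$ and apply the sub-Gaussian bound at each net point with a union bound, which adds a $d\log(R/\epsilon)$ term inside the logarithm. Third, extend to all $\mathbf w$ in the set using Lipschitz continuity of $\mathbf w\mapsto\nabla\widehat{\mathcal L}-\nabla\mathcal L$. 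This Lipschitz property holds with constant $O(\frac1n\sum\|\mathbf x_i\|^2+d)$, which is $O(d)$ with high probability.

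Alternatively, if one is content to treat $\zeta_i$ as evaluated along the path with the pointwise bound, exactly as the lemma states with probability $1-\delta$, then the derivation above already finishes the proof. In that reading the coupling across coordinates enters only through the scalar $a(\mathbf w(t))$, which is what makes the system state-dependent Riccati rather than decoupled.
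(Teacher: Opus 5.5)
Your core derivation is the paper's proof: substitute $\nabla\widehat{\mathcal L}=\nabla\mathcal L+\boldsymbol\zeta$ and the Stein-lemma form of $\nabla\mathcal L$ into $\dot{\mathbf w}=-|\mathbf w|\odot\nabla\widehat{\mathcal L}(\mathbf w)$. Two points you make explicit are correct, and the paper skips both.
\begin{itemize}
\item $|\mathbf w|=\mathbf w$ (from $\mathbf u=\mathbf v$) is what turns $-|w_i|\,w_i\,a(\mathbf w)$ into $-a(\mathbf w)\,w_i^2$.
\item With $\boldsymbol\zeta=\widehat{\mathbf g}_n-\mathbf g$, the linear coefficient really is $w_i^\star a^\star-\zeta_i$. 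Flipping the sign of $\boldsymbol\zeta$ is therefore needed, and harmless since only $|\zeta_i|\le\gamma$ is used later.
\end{itemize}
In the same vein, Lemma~\ref{lem:spindly_dynamics} actually gives $\dot{\mathbf w}=-2|\mathbf w|\odot\nabla\widehat{\mathcal L}$, because $\mathbf u^{\odot 2}+\mathbf v^{\odot 2}=2|\mathbf w|$. This only rescales time.

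You are also right that the ODE carries $\boldsymbol\zeta(\mathbf w(t))$, which varies with $t$. Theorem~\ref{pop-grad-restate}(ii) controls $\boldsymbol\zeta(\mathbf w)$ only at a fixed, data-independent $\mathbf w$, and the paper's proof does not address this. Your fallback reading therefore proves exactly what the paper proves.

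Your net argument, however, does not deliver the lemma as stated. The union bound over $(3R/\epsilon)^d$ points turns $\log(2d/\delta)$ into $\log(2d/\delta)+d\log(3R/\epsilon)$. The uniform noise level is then of order $\sqrt{d/n}$ up to logarithms, not $4\sqrt{\log(2d/\delta)/n}$. This is intrinsic: already the linear part $\tfrac14\mathbf e_j^\top(\widehat{\boldsymbol\Sigma}-\mathbf I)\mathbf w$ of $\zeta_j(\mathbf w)$ has supremum of order $\sqrt{d/n}$ over a ball. Fed into the envelope argument, this would make the active-coordinate error of order $s\,d/n$, no better than the $\Omega((d-1)/n)$ rotation-invariant lower bound.

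Getting the a priori bound on $\|\mathbf w(t)\|$ from the envelope comparison is also circular. The envelopes, and the curvature strip of Lemma~\ref{lem:curvature_strip}, already presuppose control of $\boldsymbol\zeta$ and of $\|\mathbf w(t)\|$ along the path. Use instead $\frac{d}{dt}\widehat{\mathcal L}(\mathbf w(t))=-\sum_i|w_i|\,(\partial_i\widehat{\mathcal L})^2\le 0$ together with coercivity on the non-separable event.

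A trajectory-uniform bound at the $\sqrt{\log d/n}$ scale needs a different idea. One option is to split $\boldsymbol\zeta(\mathbf w)=\boldsymbol\zeta(\mathbf w^\star)+\bigl(\boldsymbol\zeta(\mathbf w)-\boldsymbol\zeta(\mathbf w^\star)\bigr)$.
\begin{itemize}
\item $\boldsymbol\zeta(\mathbf w^\star)=\nabla\widehat{\mathcal L}(\mathbf w^\star)$ is pure label noise and genuinely constant in time, so the fixed-$\mathbf w$ bound applies to it.
\item The remainder depends only on the design and vanishes at $\mathbf w^\star$, but must still be controlled along the trajectory.
\end{itemize}
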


\begin{proof}
Decomposing the empirical gradient as population gradient and sampling noise $\hat{\mathcal{L}}(\mathbf{w}) = {\mathcal{L}}(\mathbf{w}) + \mathbf{\zeta}$, we put this expression into the spindly dynamics obtained in Lemma~\ref{spindly-ode} with the Stein's Lemma reduced analytical expression for population gradient and obtain \eqref{state-dep-riccati}.
\end{proof}

\begin{theorem}
\label{thm:spindly_upper-restate}
Consider the spindly dynamics \eqref{state-dep-riccati} on logistic regression with initialization $\mathbf{u}(0)=\mathbf{v}(0)=\frac{1}{\sqrt{d}}\mathbf{1}$ and on the event $\mathcal{E}$ where $\gamma = \max_{1\le j\le d}|z_j(\boldsymbol w)|
\le
4\sqrt{\frac{\log(2d/\delta)}{n}}$ occuring with probability $(1-\delta)$. Assume the following conditions hold:
\begin{enumerate}
\item \label{A1}(Signal-Noise separation)
There exists $\delta\in(0,\tfrac12)$ such that
\[
\frac{\gamma}{w_{\min}^\star\, a^\star}
\;\le\;
\frac12-\delta.
\]
\item \label{A2} (Curvature non-saturation)
There exists $\epsilon \in(0,1)$ such that
\[
a^\star \;\ge\; \frac{1-\epsilon}{4}.
\]
\end{enumerate}
Then there exist absolute constants $c_1,c_2>0$ such that, for all times in the interval $t \in \Big[
\frac{\log d}{2w_{\min}^\star\, a^\star}-c_1,\;
\frac{\log d}{2w_{\min}^\star\, a^\star}+c_1
\Big]$, 
the spindly trajectory satisfies
\begin{align}
\label{eq:spindly_l2_bound-restate}
\|\mathbf w(t)-\mathbf w^\star\|_2^2
\;\le\;
\epsilon^2\,\|\mathbf w^\star\|_2^2
\;+\;
\frac{16s\log (\frac{2d}{\delta})}{n\, (w_{\min}^\star\, a^\star)^2}
\;+\;
\frac{c_2}{d^{\frac{1}{2}+\delta}}.
\end{align}
\end{theorem}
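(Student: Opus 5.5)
The plan is to follow the envelope strategy sketched in Section~\ref{sec-spindly}: compare the intractable coupled system \eqref{state-dep-riccati-restate} coordinatewise with decoupled scalar Riccati ODEs with constant coefficients, which are solvable in closed form. The initialization $\mathbf u(0)=\mathbf v(0)=\frac{1}{\sqrt d}\mathbf 1$ gives $w_i(0)=1/d$ for every $i$, so all coordinates start positive and, by the form $\dot w_i = w_i(\cdots)$, remain positive. We then work on the event $\mathcal E$ of Theorem~\ref{pop-grad-restate}, treating each $\zeta_i$ as a fixed constant with $|\zeta_i|\le\gamma$. We follow the paper's simplification that $\zeta$ is evaluated along the trajectory with this uniform bound.

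\textbf{Step 1: curvature sandwich.} First show $a(\mathbf w(t))\in[a^\star,1/4]$ on the relevant time interval. The upper bound is immediate from $\sigma'\le 1/4$. For the lower bound, $a(\mathbf w)=\mathbb E[\sigma'(\|\mathbf w\|Z)]$ with $Z\sim\mathcal N(0,1)$ is decreasing in $\|\mathbf w\|$. It therefore suffices to show $\|\mathbf w(t)\|\le\|\mathbf w^\star\|$ along the trajectory up to the stopping time. We would obtain this by a continuity/bootstrap argument: while $a\ge a^\star$, the upper envelopes below dominate $w_i$, and they never exceed $w_i^\star+\zeta_i/a^\star$ on active coordinates or $\zeta_i/a^\star$ on inactive ones. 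We expect this step to be the main obstacle. The noise term can push active coordinates slightly above $w_i^\star$, so $\|\mathbf w\|\le\|\mathbf w^\star\|$ may fail by $O(\gamma)$. Our first attempt would be to replace $a^\star$ by $a(\mathbf w^\star)-O(\gamma)$ and absorb the change into constants, or to stop before the active coordinates overshoot, using assumption~\ref{A2} to keep $a$ within a $(1-\epsilon)$ factor of $1/4$.

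\textbf{Step 2: comparison ODEs and closed forms.} With $a\in[a^\star,1/4]$, the right-hand side $f(w)=bw-aw^2$ is monotone in $a$ for $w>0$. The standard scalar comparison principle then gives $w_i^{\mathrm{low}}\le w_i\le w_i^{\mathrm{up}}$, where the envelopes replace $a(\mathbf w(t))$ by $1/4$ and $a^\star$ respectively. Each envelope is the logistic solution
\[
w(t)=\frac{b/a}{1+(\tfrac{bd}{a}-1)e^{-bt}},
\]
with $b=w_i^\star a^\star+\zeta_i$ for $i\in S$ and $b=\zeta_i$ for $i\in S^c$.

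\textbf{Step 3: choosing the stopping time $T(\epsilon)$.} Choose $T(\epsilon)$ so that $w^{\mathrm{low}}_{i_{\min}}(T)=(1-\epsilon)w^\star_{\min}$. Assumption~\ref{A1} ensures $b\ge a^\star w_{\min}^\star(\tfrac12+\delta)>0$, and assumption~\ref{A2} ensures that the limit $4b$ exceeds $(1-\epsilon)w_{\min}^\star$ up to the $\zeta$ correction. Solving for $T$ gives $T=\frac{\log d}{2w^\star_{\min}a^\star}+O(1)$, which yields the window with constant $c_1$. We then show (Lemma~\ref{lem:weakest_controls_all}) that larger $w_i^\star$ give larger growth rates, so every active lower envelope is at least $(1-\epsilon)w_i^\star$ at $T$. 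The upper envelope gives $w_i\le w_i^\star+\gamma/a^\star$. Hence $|w_i-w_i^\star|\le\epsilon w_i^\star+\gamma/(a^\star w^\star_{\min})\cdot w^\star_{\min}$. Squaring with $(x+y)^2\le2x^2+2y^2$ and summing over $S$ gives the first two terms, using $\gamma^2=16\log(2d/\delta)/n$ and $w_{\min}^\star\le1$ (constants to be matched carefully).

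\textbf{Step 4: inactive coordinates.} For $i\in S^c$, the upper envelope satisfies $w_i^{\mathrm{up}}(t)\le\frac{2}{d}e^{\gamma t}$; when $\zeta_i\le0$ it simply decays. At $t\approx\frac{\log d}{2w^\star_{\min}a^\star}$, assumption~\ref{A1} gives
\[
\gamma t\le(\tfrac12-\delta)\tfrac{\log d}{2}+O(1),
\]
so $w_i(T)^2\lesssim d^{-2}d^{\frac12-\delta}$. Summing over at most $d$ coordinates gives $c_2 d^{-\frac12-\delta}$. The same bounds persist throughout the window $[T-c_1,T+c_1]$, since the envelopes move by at most constant factors over $O(1)$ time. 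Adding the active and inactive contributions proves \eqref{eq:spindly_l2_bound-restate}.
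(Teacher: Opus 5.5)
Your outline follows the paper's own route: constant-coefficient Riccati envelopes using $a\in[a^\star,\tfrac14]$, a stopping time set by the weakest lower envelope, Lemma~\ref{lem:weakest_controls_all} for the other active coordinates, and $w_i\le\frac1d e^{\gamma t}$ on $S^c$. There is a genuine gap, though: the time computation in Step~3 does not follow from your own Step~2, and Step~4 depends on it.

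With $b=w^\star_{\min}a^\star+\zeta_{i_{\min}}$ and $w^{\mathrm{low}}(t)=\frac{4b}{1+(4bd-1)e^{-bt}}$, the equation $w^{\mathrm{low}}(T)=(1-\epsilon)w^\star_{\min}$ gives
\[
T=\frac1b\log\frac{4bd-1}{\frac{4b}{(1-\epsilon)w^\star_{\min}}-1}=\frac{\log d}{b}+O(1).
\]
Since A1 forces $b\le(\tfrac32-\delta)w^\star_{\min}a^\star$, this exceeds $\frac{\log d}{2w^\star_{\min}a^\star}$ by $\Theta(\log d)$. Concretely, on the stated window $\dot w_{i_{\min}}\le b\,w_{i_{\min}}$ already gives $w_{i_{\min}}(t)\le\frac1d e^{bt}=O(d^{-1/4})$. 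The active error there is therefore about $(w^\star_{\min})^2$, so for the ODE as written the claimed bound fails once $d$ and $n/\log d$ are large (take $s=1$).

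With the correct $T$, Step~4 gives only $w_i(T)\le\frac1d e^{\gamma T}\approx d^{-1+\gamma/b}$, hence $\sum_{i\in S^c}w_i^2\lesssim d^{-1+2\gamma/b}$. That is $d^{-2\delta}$ when $\zeta_{i_{\min}}=0$ and worse when $\zeta_{i_{\min}}<0$, not $d^{-\frac12-\delta}$. The factor $\tfrac12$ in the window does match the actual $(\mathbf u,\mathbf v)$ flow, where $\dot{\mathbf w}=-(\mathbf u^{\odot2}+\mathbf v^{\odot2})\odot\nabla\widehat{\mathcal L}=-2|\mathbf w|\odot\nabla\widehat{\mathcal L}$. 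But then $w_i\le\frac1d e^{2\gamma t}$ on $S^c$, and you again land at $d^{-1+2\gamma/b}$.

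The paper's proof has the same inconsistency. Its $T(\epsilon)$ carries the prefactor $\frac{1}{2w^\star_{\min}a^\star}$, which disagrees with its own closed form for $w^{\mathrm{low}}_i$, while its inactive bound uses $\frac1d e^{\zeta_i t}$. Following it will not repair this. With one consistent normalization, this route yields an inactive term of order $d^{-1+2\gamma/b}$, on a window centered at $\frac{\log d}{b}$ (or $\frac{\log d}{2b}$).

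Step~1 is also a genuine gap, and you are right to single it out. The upper envelope needs $a(\mathbf w(t))\ge a^\star$ with the same $a^\star=\mathbb E[\sigma'(\mathbf x_S^\top\mathbf w_S^\star)]$ that appears in the drift. With any smaller constant $a_{\mathrm{lo}}$, the equilibrium $(w_i^\star a^\star+\zeta_i)/a_{\mathrm{lo}}$ overshoots $w_i^\star$ by the factor $a^\star/a_{\mathrm{lo}}$. The paper's Lemma~\ref{lem:curvature_strip} does not supply this: it proves $a(\mathbf w(t))\ge\rho\,\sigma'(\tau R)$, where $R$ bounds $\|\mathbf w(t)\|$, and then calls that constant $a^\star$.

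Your bootstrap can be made quantitative. Let $L$ bound $|\frac{d}{dr}\mathbb E[\sigma'(rZ)]|$ on $[1,1+\eta]$, with $Z\sim\mathcal N(0,1)$. While $\|\mathbf w\|\le 1+\eta$ you have $a\ge a^\star-L\eta$. The upper envelopes, together with $w_i\le\frac1d e^{\gamma t}$ on $S^c$, then give
\[
\|\mathbf w\|^2\le\Big(1-\frac{L\eta}{a^\star}\Big)^{-2}\Big(1+\frac{2\gamma\sqrt s}{a^\star}+\frac{s\gamma^2}{a^{\star2}}\Big)+\frac{e^{2\gamma t}}{d}.
\]
This closes with $\eta\asymp\gamma\sqrt s/a^\star+d^{-1}e^{2\gamma T}$ because $L<a^\star$: by Stein, the derivative at $r=1$ is $\mathbb E[\sigma'''(Z)]\approx-0.06$, while $a^\star\approx0.21$. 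The extra overshoot $w_i^\star L\eta/a^\star$ costs another $O(s\gamma^2)$, so the explicit constant $16$ does not survive.

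Like the paper, you also treat the $\zeta_i$ as constants bounded by $\gamma$. Theorem~\ref{pop-grad-restate} controls $\boldsymbol\zeta(\mathbf w)$ only at a fixed $\mathbf w$, however. Along the data-dependent trajectory you need uniformity over the region it visits, and a naive net makes that bound $\sqrt{d/n}$, which would wipe out the $s\log d/n$ rate.

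Finally, the active sum needs no factor $2$. Since $w_i(T)\in[(1-\epsilon)w_i^\star,\,w_i^\star+\gamma/a^\star]$, you get $(w_i-w_i^\star)^2\le\epsilon^2w_i^{\star2}+\gamma^2/a^{\star2}$. Together with $w^\star_{\min}\le1$, this yields the stated constants.
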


\begin{proof}
The objective of the proof is to show that the spindly dynamics amplifies the growth of the signal coorindates $i \in S$ and suppress the growth of the inactive coordinates  $i \in S^c$ as compared to any rotation invariant algorithm.

1) \textit{Monotonicity and identifiability of envelope ODE's:}
The system of ODE's in \eqref{eq:spindly_l2_bound} do not admit a closed form solution, however, we first show that for each index $i$, there exist two monotonic ODE's that admit closed form solution of the form: 
\begin{align}
&  \dot w^{\text{up}}_i(t)=   \bigl(w_i^\star\, a^\star + \zeta_i\bigr)\, w^{\text{up}}_i(t)
- a^\star\, w^{\text{up}}_i(t)^2 \\ 
& \dot w^{\text{low}}_i(t)=   \bigl(w_i^\star\, a^\star + \zeta_i\bigr)\, w^{\text{low}}_i(t)
- \frac{1}{4}\, w^{\text{low}}_i(t)^2
\end{align}  
In Lemma~\ref{monotonicity-ode's}, we prove that $w^{\text{low}}_i(t) \leq w_{i}(t) \leq w^{\text{up}}_i(t)$ for all $0\leq t \leq T$, with both envelope trajectories monotone on this interval. Furthermore, both these ODE's have a closed form solution in
\begin{align}
w_i^{\mathrm{up}}(t)
=
\frac{w_i^\star + \dfrac{\zeta_i}{a^\star}}%
{1+\Bigl(d\Bigl(w_i^\star+\dfrac{\zeta_i}{a^\star}\Bigr)-1\Bigr)
e^{-(w_i^\star a^\star+\zeta_i)t}}.
\end{align}

\begin{align}
w_i^{\mathrm{low}}(t)
=
\frac{4\,(w_i^\star a^\star+\zeta_i)}%
{1+\Bigl(4d\,(w_i^\star a^\star+\zeta_i)-1\Bigr)\,e^{-(w_i^\star a^\star+\zeta_i)t}}.
\end{align}

 An analogous construction applies to the coordinates in $i \in S^{c} $. 

2) \textit{ Early-stopping time estimate:}
We prove that at a well chosen time $T$, the signal coordinates $i \in S$, grows close to their target $w^{\star}_{i}$ while the noise coordinates remain suppressed. In fact, we chose a time so that all the active coordinates $i\in S$, reaches atleast $\epsilon$ close to their respective target $w^\star_{i}$. And at the same time, the noise coordinates remain small. We choose the stopping time so that the lower envelope of the weakest active coordinate reaches a prescribed fraction of its target value. Stating
$w^\star_{\min} := \min_{i\in S} w_i^\star$,  $i_{\min} \in \arg\min_{i\in S} w_i^\star$, we define the stopping time $T(\varepsilon)$ as:
\begin{align}
    T(\epsilon)
:=
\frac{1}{2 w^\star_{\min} a^\star}
\log\!\left(
\frac{4 w^\star_{\min} a^\star d - 1}
{\frac{4 a^\star}{(1-\epsilon) w^\star_{\min}} - 1}
\right).
\end{align}
when the lower-envelope of coordinate $i_{\min}$ reaches $(1-\epsilon)$ of target $w^\star_{\min}$, that is 
\begin{align*}
    w^{\mathrm{low}}_{i_{\min}}(T(\epsilon)) = (1-\epsilon)\, w^\star_{\min}
\end{align*}
Furthermore, to ensure reachability of $w^{\mathrm{low}}_{i_{\min}}(t)$ to the level $(1-\epsilon)\, w^\star_{\min}$, we require
\begin{align}
    \lim_{t\rightarrow\infty} w^{\mathrm{low}}_{i_{\min}}(t) = 4( w^\star_{\min} a^\star + \zeta_{i}) \geq (1-\epsilon)w^\star_{\min}
\end{align}
which is exactly Assumption~A(2). 
In Lemma~\ref{lem:weakest_controls_all}, we show that once the weakest active coordinate reaches 
$(1-\epsilon)$ of its target, all other lower-envelope active coordinates are automatically closer to their respective targets, achieving a strictly smaller relative error.
From the lower-envelope and upper-envelope we get that for $i_{\min}$, 
\begin{align}
    (1-\epsilon)w^\star_{\min} \leq w_{i_{\min}}(t) \leq  w^\star_{\min} + \frac{\zeta_{i_{\min}}}{a^{*}w^\star_{\min}}
\end{align}
and similarly for all the other active coordinates $i \in S$, we have for some $\epsilon_{S}<\epsilon$
\begin{align}
    (1-\epsilon_{S})w^\star_{\min} \leq w_{i}(t) \leq  w_{i}^\star + \frac{\zeta_{i}}{a^{*}w_{i}^\star}
\end{align}
Hence, summing over all the active coorindates gives:
\begin{align}
\label{active-sum-restate}
    \sum_{i\in S}(w_i(T)-w_i^*)^2 \leq \epsilon^2\,\|\mathbf w^\star\|_2^2 + \frac{16s\log (\frac{2d}{\delta})}{n\, (w_{\min}^\star\, a^\star)^2}
\end{align}

\textit{3) Inactive coordinate suppression:}
Fix an inactive index $i\in S^c$. By the envelope comparison for inactive coordinates, we have
$0\le w_i(t)\le w_i^{\mathrm{up}}(t)$ for all $t\in[0,T]$, where the inactive upper envelope admits the closed form
\[
w_i^{\mathrm{up}}(t)
=
\frac{\zeta_i/a^\star}{1+\Bigl(\frac{\zeta_i d}{a^\star}-1\Bigr)e^{-\zeta_i t}}.
\]
Since
\[
1+\Bigl(\frac{\zeta_i d}{a^\star}-1\Bigr)e^{-\zeta_i t}
\;\ge\;
\Bigl(\frac{\zeta_i d}{a^\star}\Bigr)e^{-\zeta_i t},
\]
we obtain the bound
\[
w_i^{\mathrm{up}}(t)
\le
\frac{1}{d}e^{\zeta_i t}
\]
which is the inequality used in~(34). Evaluating at $t=T(\varepsilon)$ and squaring gives
\[
w_i^2(T)
\le
\frac{1}{d^2}e^{2\zeta_i T(\varepsilon)}.
\]

Using assumption~(A1), we have $\zeta_i\le \gamma$ for all $i\in S^c$ and $\frac{\gamma}{w_{\min}^\star a^\star}\le \frac12-\delta$. With the definition of the stopping time $T(\varepsilon)$,
\[
T(\varepsilon)
=
\frac{1}{2w_{\min}^\star a^\star}
\log\!\left(
\frac{4w_{\min}^\star a^\star d-1}{\frac{4a^\star}{(1-\varepsilon)w_{\min}^\star}-1}
\right)
\le
\frac{\log d}{2w_{\min}^\star a^\star}+C_\varepsilon,
\]
where $C_\varepsilon=
\frac{1}{2 w_{\min}^\star a^\star}
\log\!\left(
\frac{4 w_{\min}^\star a^\star}
{\frac{4 a^\star}{(1-\varepsilon) w_{\min}^\star}-1}
\right)
>0$ (due to Assumption-2) and depends only on $\varepsilon$. Therefore,
\[
2\zeta_i T(\varepsilon)
\le
\frac{\gamma}{w_{\min}^\star a^\star}\log d + 2\gamma C_\varepsilon
\le
(\frac{1}{2}-\delta)\log d + 2 \gamma C_\varepsilon,
\]
and hence
\[
w_i^2(T)
\le
\frac{e^{2\gamma C_\varepsilon}}{d^{\frac{3}{2}+\delta}}.
\]
Summing over all $i\in S^c$ and using $|S^c|\le d$ yields
\begin{align}
\label{inactive_supp-restate}
    \sum_{i\in S^c} w_i^2(T)
\le
\frac{c_2}{d^{\frac{1}{2}+\delta}}
\end{align}
for a constant $c_2=e^{2\gamma C_\varepsilon}>0$ depending on $\varepsilon$.

\textit{4) Combinining the coorindate:} Finally combining \eqref{inactive_supp-restate} and \eqref{active-sum-restate}, we get the final upper bound as 
\begin{align}
\label{eq:spindly_l2_bound-restate}
\|\mathbf w(t)-\mathbf w^\star\|_2^2
\;\le\;
\epsilon^2\,\|\mathbf w^\star\|_2^2
\;+\;
\frac{16s\log (\frac{2d}{\delta})}{n\, (w_{\min}^\star\, a^\star)^2}
\;+\;
\frac{c_2}{d^{{\frac{1}{2}+\delta}}}.
\end{align}
\end{proof}

\begin{lemma}
\label{lem:curvature_strip}
Let $\boldsymbol x\in\mathbb R^d$ is isotropic,
there exist constants $\rho\in(0,1]$ and $\tau>0$ such that for all
$\boldsymbol u\in\mathbb S^{d-1}$,
\[
\mathbb P\big(|\boldsymbol u^\top \boldsymbol x|\le \tau\big)\ge \rho .
\]
Due to loss-coercivity and monotonicity under non-separable condition, we have on the trajectory $\sup_{t\in[0,T]}\|\boldsymbol w(t)\|_2 \le R$ for some $R>0$. Then for all $t\in[0,T]$,
\[
\frac{1}{4}\;\ge\; a(\boldsymbol w(t))\;\ge\; a^\star,
\qquad
a^\star := \rho\,\sigma'(\tau R)>0.
\]
\end{lemma}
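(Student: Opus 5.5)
The upper bound $a(\boldsymbol w(t))\le \tfrac14$ is immediate and needs no structure: since $\sigma'(z)=\sigma(z)(1-\sigma(z))\le \tfrac14$ for every real $z$ (maximized at $z=0$), taking expectation over $\boldsymbol x$ gives $a(\boldsymbol w)=\mathbb E[\sigma'(\boldsymbol x^\top\boldsymbol w)]\le \tfrac14$ for every $\boldsymbol w$, in particular along the trajectory.

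For the lower bound, I would treat the case $\boldsymbol w(t)=\boldsymbol 0$ separately: then $a=\sigma'(0)=\tfrac14\ge a^\star$ trivially. Otherwise write $\boldsymbol w(t)=\|\boldsymbol w(t)\|_2\,\boldsymbol u$ with $\boldsymbol u\in\mathbb S^{d-1}$. The function $\sigma'$ is even, positive, and decreasing in $|z|$. On the strip event $A_{\boldsymbol u}:=\{|\boldsymbol u^\top\boldsymbol x|\le\tau\}$ we have
\[
|\boldsymbol x^\top\boldsymbol w(t)|=\|\boldsymbol w(t)\|_2\,|\boldsymbol u^\top\boldsymbol x|\le R\tau,
\]
using the assumed trajectory bound $\sup_{t\le T}\|\boldsymbol w(t)\|_2\le R$. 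Hence $\sigma'(\boldsymbol x^\top\boldsymbol w(t))\ge\sigma'(\tau R)$ on $A_{\boldsymbol u}$. Dropping the nonnegative contribution off the strip gives
\[
a(\boldsymbol w(t))\ge \mathbb E\big[\sigma'(\boldsymbol x^\top\boldsymbol w(t))\mathbf 1_{A_{\boldsymbol u}}\big]\ge \sigma'(\tau R)\,\mathbb P(A_{\boldsymbol u})\ge \rho\,\sigma'(\tau R)=a^\star.
\]
Positivity of $a^\star$ follows from $\rho>0$ and $\sigma'>0$. For the Gaussian design, the strip hypothesis holds concretely: $\boldsymbol u^\top\boldsymbol x\sim\mathcal N(0,1)$ for every unit $\boldsymbol u$, so one may take $\tau=1$ and $\rho=2\Phi(1)-1$, uniformly in $\boldsymbol u$.

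The only genuinely nontrivial ingredient is the uniform radius bound $R$, which the statement takes as a hypothesis justified by coercivity. If I had to supply it, I would use that gradient flow in the $(\boldsymbol u,\boldsymbol v)$ variables is a descent flow, so $\widehat{\mathcal L}(\boldsymbol w(t))\le\widehat{\mathcal L}(\boldsymbol w(0))$ for all $t$. On the non-separable event of Proposition~\ref{prop:overdet_nonsep}, the sublevel set $\{\widehat{\mathcal L}\le \widehat{\mathcal L}(\boldsymbol w(0))\}$ is compact by coercivity, which yields $R$. This is the step I would flag as the main subtlety. The resulting $R$ (and hence $a^\star$) depends on the data, and the paper's notation conflates this $a^\star$ with $\mathbb E[\sigma'(\boldsymbol x_S^\top\boldsymbol w_S^\star)]$. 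The lemma as stated is only about the constant $\rho\,\sigma'(\tau R)$, so the proof above goes through regardless.
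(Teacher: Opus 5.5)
Your proof is correct and matches the paper's argument: the pointwise bound $\sigma'\le\tfrac14$ gives the upper bound, and for the lower bound you write $\boldsymbol w=\|\boldsymbol w\|_2\boldsymbol u$, use that $\sigma'$ is even and decreasing on $[0,\infty)$ to get $\sigma'(\boldsymbol x^\top\boldsymbol w)\ge\sigma'(\tau R)$ on the strip $\{|\boldsymbol u^\top\boldsymbol x|\le\tau\}$, and discard the nonnegative contribution off the strip. The paper leaves three of your points implicit, and all three are correct: the separate handling of $\boldsymbol w=\boldsymbol 0$, the descent-plus-coercivity justification of $R$, and the observation that this $a^\star=\rho\,\sigma'(\tau R)$ generally differs from the earlier $\mathbb E[\sigma'(\boldsymbol x_S^\top\boldsymbol w_S^\star)]$.
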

\begin{proof}
The upper bound $a(\boldsymbol w)\le \tfrac14$ holds for all $\boldsymbol w$ since
$\sigma'(t)\le \tfrac14$ pointwise for all $t\in\mathbb R$.

For the lower bound, fix $\boldsymbol w$ with $\|\boldsymbol w\|_2\le R$ and write
$\boldsymbol w=\|\boldsymbol w\|_2\,\boldsymbol u$ with $\boldsymbol u\in\mathbb S^{d-1}$.
Then $\boldsymbol x^\top \boldsymbol w=\|\boldsymbol w\|_2\,\boldsymbol u^\top \boldsymbol x$.
Since $\sigma'$ is even and decreasing on $[0,\infty)$, on the event
$\{|\boldsymbol u^\top \boldsymbol x|\le \tau\}$ we have
\[
\sigma'(\boldsymbol x^\top \boldsymbol w)
\;\ge\;
\sigma'(\tau\|\boldsymbol w\|_2)
\;\ge\;
\sigma'(\tau R).
\]
Therefore,
\[
a(\boldsymbol w)
=
\mathbb E\!\left[\sigma'(\boldsymbol x^\top \boldsymbol w)\right]
\;\ge\;
\mathbb E\!\left[\sigma'(\boldsymbol x^\top \boldsymbol w)\,
\mathbf 1_{\{|\boldsymbol u^\top \boldsymbol x|\le \tau\}}\right]
\;\ge\;
\sigma'(\tau R)\,\mathbb P\!\left(|\boldsymbol u^\top \boldsymbol x|\le \tau\right)
\;\ge\;
\rho\,\sigma'(\tau R)
\;=\;
a^\star.
\]
Applying this bound pointwise to $\boldsymbol w(t)$ yields the claim.
\end{proof}

\begin{lemma}
\label{monotonicity-ode's}
    For the system of ODE's in \eqref{spindly-ode} for each index $w_{i}(t)$ ($i \in S$), there exists monotonic ODE's $w^{\text{low}}_i(t)$ and $w^{\text{up}}_i(t)$:
    \begin{align}
&  \dot w^{\text{up}}_i(t)=   \bigl(w_i^\star\, a^\star + \zeta_i\bigr)\, w^{\text{up}}_i(t)
- a^\star\, w^{\text{up}}_i(t)^2 \\ 
& \dot w^{\text{low}}_i(t)=   \bigl(w_i^\star\, a ^\star + \zeta_i\bigr)\, w^{\text{low}}_i(t)
- \frac{1}{4}\, w^{\text{low}}_i(t)^2
\end{align}    
    such that   
    $w^{\text{low}}_i(t) \leq w_{i}(t) \leq w^{\text{up}}_i(t)$ for all $0 \leq t \leq T$. 
\end{lemma}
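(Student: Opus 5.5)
The natural route is a scalar comparison principle for first-order ODEs, run one coordinate at a time. Fix $i\in S$ and treat $a(\mathbf w(t))$ as a known continuous function of time, written $\alpha(t):=a(\mathbf w(t))$. Then $w_i$ solves the scalar non-autonomous Riccati equation
\[
\dot w_i=\beta_i w_i-\alpha(t)\,w_i^2,\qquad \beta_i:=w_i^\star a^\star+\zeta_i,
\]
with $w_i(0)=1/d$. The envelopes $w_i^{\mathrm{up}},w_i^{\mathrm{low}}$ start from the same initial value and solve the autonomous equations with the coefficient frozen at $a^\star$ and at $\tfrac14$.

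The first step is sign control. Under assumption A1, $\zeta_i\ge-\gamma>-\tfrac12 w_{\min}^\star a^\star$, so $\beta_i>0$. The coordinate $w_i=0$ is an equilibrium of the true system, and by uniqueness of solutions (the right-hand side is locally Lipschitz) $w_i(t)>0$ for all $t$. The same uniqueness argument gives $w_i^{\mathrm{up}}(t)>0$ and $w_i^{\mathrm{low}}(t)>0$.

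The second step is the curvature sandwich. Lemma~\ref{lem:curvature_strip} gives $a^\star\le\alpha(t)\le\tfrac14$ on $[0,T]$. Because $w_i^2\ge0$, this yields two differential inequalities:
\[
\beta_i w_i-\tfrac14 w_i^2\;\le\;\dot w_i\;\le\;\beta_i w_i-a^\star w_i^2 .
\]
Write $F_{c}(x):=\beta_i x-c x^2$, which is locally Lipschitz. Then $w_i$ is a supersolution of $\dot x=F_{1/4}(x)$ and a subsolution of $\dot x=F_{a^\star}(x)$, with equal initial data. The standard comparison lemma therefore gives $w_i^{\mathrm{low}}\le w_i\le w_i^{\mathrm{up}}$. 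To check it directly, set $e(t):=w_i(t)-w_i^{\mathrm{up}}(t)$. Then
\[
\dot e\le F_{a^\star}(w_i)-F_{a^\star}(w_i^{\mathrm{up}})=\bigl(\beta_i-a^\star(w_i+w_i^{\mathrm{up}})\bigr)e=:k(t)\,e,
\]
with $k$ bounded on $[0,T]$. A Grönwall argument applied to $e\,e^{-\int_0^t k}$ shows that $e\le0$ persists from $e(0)=0$. The lower envelope is handled identically.

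The third step is monotonicity, which comes from the explicit logistic solutions. For $\dot x=\beta x-cx^2$ with $0<x(0)$, the solution is
\[
x(t)=\frac{\beta/c}{1+\bigl(\tfrac{\beta}{c\,x(0)}-1\bigr)e^{-\beta t}}.
\]
It is monotone: increasing if $x(0)<\beta/c$ and decreasing otherwise. Both limits lie in the increasing regime when $1/d<\beta_i/c$. That holds for large $d$, since $\beta_i\ge\tfrac12 w_i^\star a^\star$ is of order one.

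The main obstacle is justifying Lemma~\ref{lem:curvature_strip}, namely a uniform bound $\sup_{t\le T}\|\mathbf w(t)\|\le R$ with $a^\star$ taken as the resulting lower curvature bound. I would obtain this from monotone decrease of $\widehat{\mathcal L}$ along the flow together with coercivity (Proposition~\ref{prop:overdet_nonsep}). I would also flag a subtlety: the constant $a^\star$ in that lemma has the form $\rho\,\sigma'(\tau R)$, whereas the $a^\star$ appearing in $\beta_i$ is the signal curvature. I would state the comparison with a single lower constant $\underline a:=\min$ of the two, so that the inequality $\alpha(t)\ge\underline a$ is legitimate.

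Finally, $\zeta_i$ depends on $\mathbf w$ through $\boldsymbol\zeta(\mathbf w)$. The paper treats $\zeta_i$ as constant, and I would adopt the same convention on the event $\mathcal E$. The inactive coordinates follow the same argument with $\beta_i=\zeta_i$. There, if $\zeta_i<0$, $w_i$ simply decays, so the upper bound remains valid.
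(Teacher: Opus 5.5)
Your proposal is correct and follows the paper's proof. Like the paper, you use the curvature strip $a^\star\le a(\mathbf w(t))\le\tfrac14$ from Lemma~\ref{lem:curvature_strip}, derive the two differential inequalities from $w_i^2\ge 0$, and apply the scalar comparison principle with common initial data; your positivity, Gr\"onwall and closed-form monotonicity steps only fill in details the paper leaves implicit.

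You are right that the constant $\rho\,\sigma'(\tau R)$ of Lemma~\ref{lem:curvature_strip} is not the signal curvature appearing in $\beta_i$. The paper's proof silently treats the two as one, and your $\underline a$ repair is sound. Note, however, that it gives the upper envelope with quadratic coefficient $\underline a$, which is generally strictly below the signal curvature, rather than the envelope as stated. In the isotropic model $a(\mathbf w)$ depends only on $\|\mathbf w\|_2$ and decreases in it, so running the comparison with the signal curvature itself would require $\|\mathbf w(t)\|_2\le\|\mathbf w^\star\|_2$ on $[0,T]$, which neither you nor the paper establishes.
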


\begin{proof}
From Lemma~\ref{lem:curvature_strip}, along the trajectory we have the curvature strip
\[
a^\star \;\le\; a(\boldsymbol w(t)) \;\le\; \frac14,
\qquad \forall\,t\in[0,T].
\]
Fix an index $i\in S$ and write the $i$th coordinate ODE in \eqref{spindly-ode} in the form
\begin{equation}
\dot w_i(t)
=
\bigl(w_i^\star a^\star+\zeta_i\bigr)\,w_i(t)
-
a(\boldsymbol w(t))\,w_i(t)^2,
\qquad t\in[0,T].
\label{eq:wi_true_ode_for_comparison}
\end{equation}
Using $a(\boldsymbol w(t))\le \tfrac14$ and $w_i(t)^2\ge 0$, we obtain the pointwise differential inequality
\[
\dot w_i(t)
=
\bigl(w_i^\star a^\star+\zeta_i\bigr)\,w_i(t)-a(\boldsymbol w(t))\,w_i(t)^2
\;\ge\;
\bigl(w_i^\star a^\star+\zeta_i\bigr)\,w_i(t)-\frac14\,w_i(t)^2.
\]
Similarly, using $a(\boldsymbol w(t))\ge a^\star$ we obtain
\[
\dot w_i(t)
\;\le\;
\bigl(w_i^\star a^\star+\zeta_i\bigr)\,w_i(t)-a^\star\,w_i(t)^2.
\]
Define $w_i^{\mathrm{low}}$ and $w_i^{\mathrm{up}}$ as the solutions of
\[
\dot w_i^{\mathrm{low}}(t)
=
\bigl(w_i^\star a^\star+\zeta_i\bigr)\,w_i^{\mathrm{low}}(t)-\frac14\,\bigl(w_i^{\mathrm{low}}(t)\bigr)^2,
\qquad
w_i^{\mathrm{low}}(0)=w_i(0),
\]
and
\[
\dot w_i^{\mathrm{up}}(t)
=
\bigl(w_i^\star a^\star+\zeta_i\bigr)\,w_i^{\mathrm{up}}(t)-a^\star\,\bigl(w_i^{\mathrm{up}}(t)\bigr)^2,
\qquad
w_i^{\mathrm{up}}(0)=w_i(0).
\]
The right-hand sides are locally Lipschitz in the scalar state variable, so solutions exist and are unique on $[0,T]$.
By the standard comparison principle for scalar ODEs, the above differential inequalities together with the common initial
condition imply
\[
w_i^{\mathrm{low}}(t)\;\le\; w_i(t)\;\le\; w_i^{\mathrm{up}}(t),
\qquad \forall\,t\in[0,T].
\]
This proves the claim.
\end{proof}

\begin{lemma}
\label{lem:weakest_controls_all}
Let $i_{\min}\in\arg\min_{i\in S} w_i^\star$ and fix $\varepsilon\in(0,1)$.
If at some time $T$,
\[
w_{i_{\min}}^{\mathrm{low}}(T)=(1-\varepsilon)\,w_{\min}^\star,
\qquad
w_{\min}^\star:=\min_{i\in S} w_i^\star,
\]
then for every $i\in S$,
\[
\bigl|w_i^{\mathrm{low}}(T)-w_i^\star\bigr|\ \le\ \varepsilon\,w_i^\star .
\]
\end{lemma}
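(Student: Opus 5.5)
The plan is to work with the closed form of the lower envelope and compare each active coordinate with the weakest one through a monotonicity-in-parameters argument. For $i\in S$, write $r_i:=w_i^\star a^\star+\zeta_i$. Since $w_i^{\mathrm{low}}(0)=1/d$, the lower envelope in Lemma~\ref{monotonicity-ode's} is the logistic curve
\[
w_i^{\mathrm{low}}(t)=\frac{4r_i}{1+(4dr_i-1)e^{-r_i t}} .
\]
It increases monotonically from $1/d$ toward its cap $4r_i$. Under the signal--noise condition (A1) we have $|\zeta_i|\le\gamma\le(\tfrac12-\delta)a^\star w_{\min}^\star\le(\tfrac12-\delta)a^\star w_i^\star$. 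Hence $r_i\in[(\tfrac12+\delta)a^\star w_i^\star,\,(\tfrac32-\delta)a^\star w_i^\star]$, and in particular $r_i>0$. I will split the claim $|w_i^{\mathrm{low}}(T)-w_i^\star|\le\varepsilon w_i^\star$ into a lower half and an upper half.

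\emph{Lower half: $w_i^{\mathrm{low}}(T)\ge(1-\varepsilon)w_i^\star$.} Consider the relative progress $\phi(t;r,w^\star):=w^{\mathrm{low}}(t)/w^\star$. For fixed $t$ and initial value $1/d$, the solution of $\dot w=rw-\tfrac14w^2$ is nondecreasing in $r$ when $w\ge0$, by the same scalar comparison principle used in Lemma~\ref{monotonicity-ode's}. Also, the hitting time of a prescribed fraction of the cap, $\tfrac{1}{r}\log\!\big(\tfrac{4dr-1}{4r/\theta-1}\big)$, decreases as $r$ grows at least proportionally to $w^\star$. The key step is to show $\phi(T;r_i,w_i^\star)\ge\phi(T;r_{i_{\min}},w_{\min}^\star)$ whenever $w_i^\star\ge w_{\min}^\star$. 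I will do this by differentiating the explicit formula in the scale variable $w^\star$ along the ray $r=\kappa w^\star$, and then absorbing the noise in $\kappa$ using the two-sided bounds on $r_i/w_i^\star$ above. Since $\phi(T;r_{i_{\min}},w_{\min}^\star)=1-\varepsilon$ by the definition of $T$, the lower half follows.

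\emph{Upper half: $w_i^{\mathrm{low}}(T)\le(1+\varepsilon)w_i^\star$.} Here I will not use the cap $4r_i$ directly, since $4r_i$ can exceed $(1+\varepsilon)w_i^\star$ when $a^\star$ is close to $\tfrac14$. Instead I will use that at time $T$ the weakest coordinate's lower envelope is still in its growth phase. Comparing it with the closed form at $T$, I will bound $w_i^{\mathrm{low}}(T)$ by the point where the ratio $\phi$ has advanced no further than the weakest coordinate's, plus the gap $1-4a^\star=\varepsilon$ coming from the curvature non-saturation condition (A2). This reduces the upper half to the inequality $4a^\star+4\zeta_i/w_i^\star\le1+\varepsilon$ evaluated at the unsaturated time, which I then check using (A1).

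I expect the main obstacle to be controlling the heterogeneous noise $\zeta_i$ in both halves, because the ordering $w_i^\star\ge w_{\min}^\star$ alone does not order the effective rates $r_i$ once the noise enters. My first attempt will be to replace $\zeta_i$ by its worst case $\mp\gamma$ in each direction, using the monotone dependence of the envelope on $r$. I will then verify that condition (A1), with margin $\delta$, is exactly what keeps the ratio $r_i/w_i^\star$ close enough to $r_{i_{\min}}/w_{\min}^\star$ for the comparison to go through.
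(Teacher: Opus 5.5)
Your plan fails at exactly the step you flag as the main obstacle, and (A1) cannot repair it. The key comparison $\phi(T;r_i,w_i^\star)\ge\phi(T;r_{i_{\min}},w^\star_{\min})$ for all $w_i^\star\ge w^\star_{\min}$ is false once the $\zeta_i$ differ. Take $i\neq i_{\min}$ with $a^\star(w_i^\star-w^\star_{\min})<\zeta_{i_{\min}}-\zeta_i$, for instance a tie $w_i^\star=w^\star_{\min}$ with $\zeta_i<\zeta_{i_{\min}}$. (A1) permits this, since it only asks $|\zeta_k|\le\gamma$. Then $r_i<r_{i_{\min}}$. The comparison principle you invoke applies here: both envelopes start at $1/d$, and $rw-\tfrac14w^2$ is increasing in $r$ for $w>0$. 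It gives $w_i^{\mathrm{low}}(T)<w_{i_{\min}}^{\mathrm{low}}(T)=(1-\varepsilon)w^\star_{\min}\le(1-\varepsilon)w_i^\star$, so the lower half already fails. Your bound $r_i/w_i^\star\in[(\tfrac12+\delta)a^\star,(\tfrac32-\delta)a^\star]$ confines each ratio to a window but does not order the rates $r_i$. Substituting $\mp\gamma$ therefore yields inequalities in the wrong direction for near-tied coordinates. Differentiating along a ray $r=\kappa w^\star$ is conclusive only when $\kappa$ is common to all $i\in S$ (e.g.\ $\zeta\equiv0$). Even then it needs $T\ge 4d/(4d\kappa w^\star_{\min}-1)$, which is exactly when $\partial_{w^\star}\phi\ge0$ along the ray.

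The upper half has the mirror-image problem, and your sketch conflicts with your lower half. There you need coordinate $i$ at least as far along as $i_{\min}$; here you need it no further along. Suppose $r_i>r_{i_{\min}}$ and $T>4d/(4dr_{i_{\min}}-1)$. The odds $(4dr-1)e^{-rT}$ then decrease in $r$, so $w_i^{\mathrm{low}}(T)/4r_i\ge w_{i_{\min}}^{\mathrm{low}}(T)/4r_{i_{\min}}$. Coordinate $i$ sits essentially at its cap $4r_i=4a^\star w_i^\star+4\zeta_i$, with no growth-phase slack left. Your reduced inequality $4a^\star+4\zeta_i/w_i^\star\le1+\varepsilon$ is not implied by (A1), which only gives $4\zeta_i\le(2-4\delta)a^\star w^\star_{\min}$. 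A concrete instance satisfying (A1)--(A2) is $a^\star=0.24$, $\delta=\tfrac14$, $\varepsilon=0.05$, $\zeta_{i_{\min}}=0$, $w_i^\star=w^\star_{\min}$, $\zeta_i=\gamma=\tfrac14a^\star w^\star_{\min}$. It gives $w_i^{\mathrm{low}}(T)\ge\tfrac{0.95}{0.96}\cdot1.2\,w_i^\star>1.05\,w_i^\star$. Moreover, if you set $\varepsilon=1-4a^\star$ as you do, any $i$ with $\zeta_i<0$ has cap $4r_i<(1-\varepsilon)w_i^\star$ and can never satisfy the lower half.

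So the two-sided statement is not provable under these hypotheses. Two weaker versions do go through. One is the noiseless version: the ray argument gives the lower half, and the upper half is trivial because $4a^\star w_i^\star\le w_i^\star$. The other is a one-sided statement measured against the caps $4r_i$, with $i_{\min}$ replaced by $\arg\min_k r_k$ and an additive $4|\zeta_i|$ correction when passing to $w_i^\star$.

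The paper's proof roughly takes that last route (odds relative to the cap, comparison with $\beta_{\min}=\min_k\beta_k$), but it does not close the gap either. It treats only the undershoot side. It applies the hypothesis stated for $\arg\min_k w_k^\star$ to $\arg\min_k\beta_k$. It absorbs the prefactor $(4d\beta-1)/(4d\beta_{\min}-1)>1$ without justification. Finally, its step $r_{\beta_{\min}}(T)\le\varepsilon$ requires $4\beta_{\min}\le w^\star_{\min}$, while its last step uses $4\beta_i\ge w_i^\star$.
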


\begin{proof}
Fix $i\in S$ and write $\beta=\beta_i$. The ODE
\[
\dot w(t)=\beta w(t)-\frac14 w(t)^2,
\qquad w(0)=\frac1d,
\]
has the explicit solution
\[
w(t)=\frac{4\beta}{1+(4d\beta-1)e^{-\beta t}}.
\]
Define the relative gap to the equilibrium level $4\beta$ by
\[
r_\beta(t):=1-\frac{w(t)}{4\beta}=\frac{(4d\beta-1)e^{-\beta t}}{1+(4d\beta-1)e^{-\beta t}},
\]
which is decreasing in $t$. Equivalently,
\[
\frac{r_\beta(t)}{1-r_\beta(t)}=(4d\beta-1)e^{-\beta t}.
\]

Let $\beta_{\min}=\min_{k\in S}\beta_k$ and choose $j\in\arg\min_{k\in S}\beta_k$, so $\beta_j=\beta_{\min}$.
Since $w_j^{\mathrm{low}}(T)=(1-\varepsilon)w_{\min}^\star\le (1-\varepsilon)w_j^\star$ and $w_j^{\mathrm{low}}(t)$ is increasing to $4\beta_{\min}$, we have $r_{\beta_{\min}}(T)\le \varepsilon$.
Using the odds-ratio identity at time $T$ and comparing $\beta$ with $\beta_{\min}$ yields
\[
\frac{r_\beta(T)}{1-r_\beta(T)}
=
\frac{4d\beta-1}{4d\beta_{\min}-1}
\left(\frac{r_{\beta_{\min}}(T)}{1-r_{\beta_{\min}}(T)}\right)^{\beta/\beta_{\min}}.
\]
Since $r_{\beta_{\min}}(T)\le \varepsilon$ and $x\mapsto x/(1-x)$ is increasing on $(0,1)$, we get
\[
\frac{r_{\beta_{\min}}(T)}{1-r_{\beta_{\min}}(T)}\le \frac{\varepsilon}{1-\varepsilon}.
\]
Absorbing the fixed prefactor $\frac{4d\beta-1}{4d\beta_{\min}-1}$ into the bound (it depends only on $d$ and the $\beta_i$'s), we obtain the clean rate
\[
r_\beta(T)\le \varepsilon^{\,\beta/\beta_{\min}}.
\]
Substituting back $r_\beta(T)=1-\frac{w_i^{\mathrm{low}}(T)}{4\beta_i}$ and using $4\beta_i\ge w_i^\star$ on the active set gives
\[
\frac{w_i^\star-w_i^{\mathrm{low}}(T)}{w_i^\star}\le \varepsilon^{\,\beta_i/\beta_{\min}},
\]
and if $\beta_i>\beta_{\min}$ then $\varepsilon^{\beta_i/\beta_{\min}}<\varepsilon$, proving the strict improvement. Finally, since $w_i^{\mathrm{low}}(T)\le w_i(T)$, the same lower bound transfers to $w_i(T)$.
\end{proof}

\section{Lower Bound for Rotational Invariant Algorithm}
A rotationally invariant learning algorithm produces identical predictions under any orthogonal transformation of the input space. Consequently, such an algorithm cannot exploit any specific direction in the data and must treat all directions equally. This forces it to distribute its estimation effort across many irrelevant directions rather than concentrating on the true signal direction. 

For logistic risk $\ell(t)=\log(1+e^{-t})$, $D := \{(\mathbf{x}_i, y_i)\}_{i=1}^n = (\mathbf{X}, \mathbf{y})$,
 and defining the empirical logistic risk
\begin{align}
\hat{\mathcal L}_{\boldsymbol X,\boldsymbol y}(\boldsymbol w)
:=\frac1n\sum_{i=1}^n \ell\!\big(y_i\langle \boldsymbol x_i,\boldsymbol w\rangle\big)
=\frac1n\sum_{i=1}^n \log\!\big(1+\exp(-y_i \boldsymbol x_i^\top \boldsymbol w)\big).
\end{align}
For any orthogonal matrix $\boldsymbol U\in\mathbb R^{d\times d}$, we have the identity
\begin{align}
\hat{\mathcal L}_{\boldsymbol U\boldsymbol X,\boldsymbol y}(\boldsymbol U\boldsymbol w)
&=\frac1n\sum_{i=1}^n \ell\!\big(y_i\langle \boldsymbol U\boldsymbol x_i,\boldsymbol U\boldsymbol w\rangle\big) \
=\frac1n\sum_{i=1}^n \ell\!\big(y_i\langle \boldsymbol x_i,\boldsymbol w\rangle\big)
=\hat{\mathcal L}_{\boldsymbol X,\boldsymbol y}(\boldsymbol w),
\end{align}
since $\langle \boldsymbol U\boldsymbol x_i,\boldsymbol U\boldsymbol w\rangle=\langle \boldsymbol x_i,\boldsymbol w\rangle$.
Thus, the logistic empirical risk is invariant under the
\emph{simultaneous rotation} $(\boldsymbol x_i,\boldsymbol w)\mapsto (\boldsymbol U\boldsymbol x_i,\boldsymbol U\boldsymbol w)$,
or equivalently,
\begin{align}
\hat{\mathcal L}_{\boldsymbol U\boldsymbol X,\boldsymbol y}(\boldsymbol w)
=\hat{\mathcal L}_{\boldsymbol X,\boldsymbol y}(\boldsymbol U \boldsymbol w).
\end{align}

This means given fixed label $\mathbf{y}$, generated from the true conditional distribution $P(y=1|\mathbf{x})$, training with rotated input samples, leads to a rotated estimator. 
\begin{align}
    \hat{\mathbf{w}}(\mathbf{X}\mathbf{U}, \mathbf{y}) = \mathbf{U} \hat{\mathbf{w}}(\mathbf{X},\mathbf{y})
\end{align}

Equivalently, the induced predictions are invariant under rotations of the input:
for every test point $\boldsymbol x_{\mathrm{te}}\in\mathbb R^d$,
\begin{align}
\label{identical-pred}
P(\hat y(\boldsymbol x_{\mathrm{te}}\mid \boldsymbol X,\boldsymbol y) =1)
~=~ P(\hat y(\boldsymbol U\boldsymbol x_{\mathrm{te}}\mid \boldsymbol U\boldsymbol X,\boldsymbol y)=1).
\end{align}

Here $\hat y(\boldsymbol x_{\mathrm{te}}\mid \boldsymbol X,\boldsymbol y)$ denotes the label prediction of $\mathbf{x}_{\mathrm{te}}$ when the estimator was trained using $D= (\mathbf{X}, \mathbf{y})$. Thus \eqref{identical-pred} shows that rotation of the input leads to identical label prediction. 

\begin{proposition}
\label{single-layer-rot-restate}
Single-layer gradient flow for empirical logistic risk, initialized at $\mathbf w(0)=\mathbf 0$, is rotation invariant: for any orthogonal matrix $U$, rotating all inputs by $U$ rotates the entire gradient flow trajectory by the same transformation.
\end{proposition}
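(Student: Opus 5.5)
The plan is to show directly that the rotated trajectory solves the gradient flow on the rotated data, and then use uniqueness of ODE solutions. Fix an orthogonal $\boldsymbol U$. Let $\mathbf w(t)$ solve $\dot{\mathbf w}=-\nabla\hat{\mathcal L}_{\boldsymbol X,\boldsymbol y}(\mathbf w)$ with $\mathbf w(0)=\mathbf 0$. Let $\tilde{\mathbf w}(t)$ solve the analogous flow for the rotated dataset $\{(\boldsymbol U\mathbf x_i,y_i)\}$ with $\tilde{\mathbf w}(0)=\mathbf 0$. The claim to prove is $\tilde{\mathbf w}(t)=\boldsymbol U\mathbf w(t)$ for all $t\ge 0$.

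\textbf{Step 1: gradient equivariance.} Differentiate the invariance identity $\hat{\mathcal L}_{\boldsymbol U\boldsymbol X,\boldsymbol y}(\boldsymbol U\mathbf w)=\hat{\mathcal L}_{\boldsymbol X,\boldsymbol y}(\mathbf w)$ established just above. Alternatively, compute directly:
\[
\nabla\hat{\mathcal L}_{\boldsymbol U\boldsymbol X,\boldsymbol y}(\mathbf v)=\frac1n\sum_i \ell'\!\big(y_i\langle\boldsymbol U\mathbf x_i,\mathbf v\rangle\big)\,y_i\boldsymbol U\mathbf x_i .
\]
Evaluating at $\mathbf v=\boldsymbol U\mathbf w$ gives $\nabla\hat{\mathcal L}_{\boldsymbol U\boldsymbol X,\boldsymbol y}(\boldsymbol U\mathbf w)=\boldsymbol U\nabla\hat{\mathcal L}_{\boldsymbol X,\boldsymbol y}(\mathbf w)$.

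\textbf{Step 2: the rotated curve is a solution.} Set $\mathbf v(t):=\boldsymbol U\mathbf w(t)$. Then
\[
\dot{\mathbf v}=\boldsymbol U\dot{\mathbf w}=-\boldsymbol U\nabla\hat{\mathcal L}_{\boldsymbol X,\boldsymbol y}(\mathbf w)=-\nabla\hat{\mathcal L}_{\boldsymbol U\boldsymbol X,\boldsymbol y}(\mathbf v),
\]
and $\mathbf v(0)=\boldsymbol U\mathbf 0=\mathbf 0$. So $\mathbf v$ solves the rotated flow with the same initial condition. Note that this step uses that the initialization $\mathbf 0$ is fixed by every rotation.

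\textbf{Step 3: uniqueness and predictions.} The logistic gradient is globally Lipschitz, since the Hessian is bounded by $\tfrac14\cdot\frac1n\sum_i\|\mathbf x_i\|^2$. Picard--Lindel\"of therefore gives a unique global solution, so $\tilde{\mathbf w}(t)=\boldsymbol U\mathbf w(t)$ for every $t$. This covers any early-stopping time, and also the limit point (the unique minimizer of Proposition~\ref{prop:overdet_nonsep}).

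It follows that for any test input,
\[
\sigma\big(\langle\boldsymbol U\mathbf x_{\mathrm{te}},\tilde{\mathbf w}(t)\rangle\big)=\sigma\big(\langle\mathbf x_{\mathrm{te}},\mathbf w(t)\rangle\big).
\]
This is exactly the rotation-invariance definition.

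\textbf{Main obstacle.} The only real point requiring care is justifying uniqueness (Step 3) and that the initialization is rotation-fixed. With a random initialization one would instead need a rotation-invariant distribution and an equality in law. Here $\mathbf w(0)=\mathbf 0$ makes this trivial.
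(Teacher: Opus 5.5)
Your proof is correct and follows the paper's own argument: the paper likewise uses the gradient equivariance $\nabla\widehat{\mathcal L}_B(\mathbf w)=U\nabla\widehat{\mathcal L}_A(U^\top\mathbf w)$, defines $\mathbf w_B(t):=U\mathbf w_A(t)$, and checks that this curve solves the rotated flow starting from $\mathbf 0$. Your explicit appeal to global Lipschitzness of the logistic gradient and Picard--Lindel\"of makes rigorous the uniqueness step, which the paper leaves implicit when it says $\mathbf w_B$ ``coincides with'' the rotated trajectory.
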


\begin{proof}
    Let Dataset--A be $\{(\mathbf x_i,y_i)\}_{i=1}^n$ and Dataset--B be $\{(U\mathbf x_i,y_i)\}_{i=1}^n$, where $U\in\mathbb R^{d\times d}$ is an orthogonal matrix. Denote the corresponding empirical risks by $\widehat{\mathcal L}_A$ and $\widehat{\mathcal L}_B$. Since the logistic loss depends on the data only through inner products, we have
\[
\widehat{\mathcal L}_B(\mathbf w)
=
\widehat{\mathcal L}_A(U^\top \mathbf w),
\qquad
\nabla \widehat{\mathcal L}_B(\mathbf w)
=
U\,\nabla \widehat{\mathcal L}_A(U^\top \mathbf w).
\]
Let $\mathbf w_A(t)$ be the gradient flow trajectory on Dataset--A,
\[
\dot{\mathbf w}_A(t) = -\nabla \widehat{\mathcal L}_A(\mathbf w_A(t)),
\qquad
\mathbf w_A(0)=\mathbf 0,
\]
and define $\mathbf w_B(t):=U\mathbf w_A(t)$. Then $\mathbf w_B(0)=\mathbf 0$ and
\[
\dot{\mathbf w}_B(t)
=
-U\,\nabla \widehat{\mathcal L}_A(\mathbf w_A(t))
=
-\nabla \widehat{\mathcal L}_B(\mathbf w_B(t)).
\]
Thus $\mathbf w_B(t)$ coincides with the gradient flow trajectory obtained by training on Dataset--B, and satisfies $\mathbf w_B(t)=U\mathbf w_A(t)$ for all $t$. Hence, single-layer gradient flow is rotation invariant.
\end{proof}

\begin{lemma}
%\ref{spindly-non-rot}
The spindly predictor dynamics
\begin{align}
\dot{\mathbf w}(t)
=
-\,|\mathbf w(t)| \odot \nabla \widehat{\mathcal L}_A(\mathbf w(t)),
\qquad
\mathbf w(0)=\alpha \mathbf 1,
\end{align}
are not rotation-invariant.
\end{lemma}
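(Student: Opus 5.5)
The plan is a counterexample: exhibit one dataset, one orthogonal $\mathbf U$ and one time $t$ at which the trajectory on the rotated data differs from $\mathbf U$ times the trajectory on the original data. Rotation invariance of the induced predictor (as in Proposition~\ref{single-layer-rot-restate}) would require $\mathbf w_B(t)=\mathbf U\mathbf w_A(t)$ for all $t$, where $\mathbf w_A$ solves the spindly flow on Dataset--A and $\mathbf w_B$ solves it on Dataset--B $=\{(\mathbf U\mathbf x_i,y_i)\}$. Indeed, if the predictions $\sigma(\mathbf x_{\mathrm{te}}^\top\mathbf w_A(t))$ and $\sigma((\mathbf U\mathbf x_{\mathrm{te}})^\top\mathbf w_B(t))$ agree for every test input, then $\mathbf w_B(t)=\mathbf U\mathbf w_A(t)$, since $\sigma$ is injective and the test inputs span $\mathbb R^d$. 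So it suffices to violate this identity.

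First, I would look at the initial condition. Both flows start from the same point $\alpha\mathbf 1$, because the initialization does not depend on the data. Invariance at $t=0$ would therefore force $\mathbf U\alpha\mathbf 1=\alpha\mathbf 1$, which fails for any orthogonal $\mathbf U$ that does not fix $\mathbf 1$. A permutation is not enough, since it fixes $\mathbf 1$; a reflection such as $\mathbf U=\mathrm{diag}(-1,1,\dots,1)$ works for $d\ge1$. This already gives non-invariance at $t=0$ (equivalently, for early stopping at time $0$).

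To make the point robust, and not only an artifact of the initialization, I would also compare the velocities at $t=0$. From the identity $\nabla\widehat{\mathcal L}_B(\mathbf w)=\mathbf U\nabla\widehat{\mathcal L}_A(\mathbf U^\top\mathbf w)$ established in the proof of Proposition~\ref{single-layer-rot-restate}, the candidate $\tilde{\mathbf w}(t):=\mathbf U\mathbf w_A(t)$ satisfies
\[
\dot{\tilde{\mathbf w}}=-\mathbf U\big(|\mathbf U^\top\tilde{\mathbf w}|\odot\mathbf U^\top\nabla\widehat{\mathcal L}_B(\tilde{\mathbf w})\big).
\]
The true flow on Dataset--B instead has velocity $-|\mathbf w|\odot\nabla\widehat{\mathcal L}_B(\mathbf w)$. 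The map $\mathbf g\mapsto \mathbf U(\mathbf D\,\mathbf U^\top\mathbf g)$ with $\mathbf D=\mathrm{diag}(|\mathbf U^\top\mathbf w|)$ equals $\mathrm{diag}(|\mathbf w|)$ only when $\mathbf U$ commutes appropriately with the diagonal weighting. For a generic rotation, for instance a $45^\circ$ rotation in the $(e_1,e_2)$-plane, and a point where $|w_1|\neq|w_2|$, the two preconditioners differ. Choosing one data point $\mathbf x_1=e_1$, $y_1=1$ makes the gradient nonzero and aligned with a coordinate axis, so the two velocities are explicitly computable and unequal.

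I do not expect a real obstacle. The only care needed is to use the definition correctly, by reducing prediction equality to equality of weight vectors, and to pick an explicit low-dimensional ($d=2$) instance so that the inequality is a concrete numeric check rather than a genericity claim.
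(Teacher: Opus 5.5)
Your proof is correct, but its main step differs from the paper's. The paper never uses the initial condition. It assumes $\mathbf w_B(t)=\mathbf U\mathbf w_A(t)$, differentiates, and argues that matching velocities forces $\mathbf U\,\mathrm{diag}(|\mathbf w|)\,\mathbf U^\top=\mathrm{diag}(|\mathbf U\mathbf w|)$. For $\mathbf w$ with distinct magnitudes this holds only for signed permutations.

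You instead note that both flows start at the data-independent point $\alpha\mathbf 1$. Equivariance therefore already fails at $t=0$ for any orthogonal $\mathbf U$ with $\mathbf U\mathbf 1\neq\mathbf 1$, and by continuity for all small $t>0$. Your reduction from equality of predictions to $\mathbf w_B(t)=\mathbf U\mathbf w_A(t)$, via injectivity of $\sigma$, is a step the paper takes for granted; with it, your argument is complete and shorter. Do state $\alpha\neq 0$ explicitly. For $\alpha=0$ the flow stays at $\mathbf 0$ forever and is trivially rotation invariant, so the lemma itself implicitly needs this.

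What the paper's route buys is the structural reason: the coordinatewise preconditioner $\mathrm{diag}(|\mathbf w|)$ does not commute with conjugation by $\mathbf U$. That obstruction would survive even a rotation-equivariant initialization. However, the paper's execution has two weak points. It jumps from equality along the trajectory, with the actual gradient, to an identity ``for all $\mathbf w,\mathbf g$''. And the trajectory starts at $\alpha\mathbf 1$, where all magnitudes coincide, so the ``distinct magnitudes'' choice is not directly available there.

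Your explicit $d=2$ check ($45^\circ$ rotation, $\mathbf x_1=e_1$) repairs this. It exhibits a concrete point and nonzero gradient where the two vector fields differ: $-|\mathbf p|\odot\nabla\widehat{\mathcal L}_B(\mathbf p)$ points along $(|p_1|,|p_2|)$, while the pushed-forward field points along $\mathbf U e_1\propto(1,1)$. One caveat on phrasing: you describe this as comparing velocities ``at $t=0$''. But there $\mathbf w_B(0)=\alpha\mathbf 1$ and $\mathbf U\mathbf w_A(0)=\mathbf U\alpha\mathbf 1$ are different points. The comparison is really between vector fields at a common point, which matters only for a variant with rotated initialization; for the lemma as stated it is logically redundant.
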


\begin{proof}
Assume for contradiction that the dynamics are rotation-invariant which means the parameters from the two rotated datasets: Dataset-A and Dataset-B be related by an orthogonal matrix 
$\mathbf U$, and suppose
\begin{align}
\mathbf w_B(t) = \mathbf U \mathbf w_A(t)
\quad \text{for all } t \ge 0.
\end{align}
Differentiating gives
\begin{align}
\dot{\mathbf w}_B(t) = \mathbf U \dot{\mathbf w}_A(t).
\end{align}
Using the spindly dynamics on Dataset-A,
\begin{align}
\dot{\mathbf w}_A(t)
=
-\,|\mathbf w_A(t)| \odot 
\nabla \widehat{\mathcal L}_A(\mathbf w_A(t)),
\end{align}
we obtain
\begin{align}
\dot{\mathbf w}_B(t)
=
-\,\mathbf U
\big(
|\mathbf w_A(t)| \odot 
\nabla \widehat{\mathcal L}_A(\mathbf w_A(t))
\big).
\tag{1}
\end{align}

Since the loss depends only on inner products,
\begin{align}
\nabla \widehat{\mathcal L}_B(\mathbf w)
=
\mathbf U 
\nabla \widehat{\mathcal L}_A(\mathbf U^\top \mathbf w).
\end{align}
Therefore the spindly dynamics on Dataset-B require
\begin{align}
\dot{\mathbf w}_B(t)
=
-\,|\mathbf w_B(t)| \odot 
\nabla \widehat{\mathcal L}_B(\mathbf w_B(t))
=
-\,|\mathbf U \mathbf w_A(t)| \odot 
\mathbf U \nabla \widehat{\mathcal L}_A(\mathbf w_A(t)).
\tag{2}
\end{align}

For (1) and (2) to coincide for all $t$, we must have
\begin{align}
\mathbf U
\big(
|\mathbf w| \odot \mathbf g
\big)
=
|\mathbf U \mathbf w| \odot 
(\mathbf U \mathbf g)
\quad
\text{for all } \mathbf w, \mathbf g.
\end{align}
Let $\mathbf D(\mathbf w)=\operatorname{diag}(|\mathbf w|)$. 
The above condition is equivalent to
\begin{align}
\mathbf U \mathbf D(\mathbf w)
=
\mathbf D(\mathbf U \mathbf w)\, \mathbf U.
\end{align}
Multiplying on the right by $\mathbf U^\top$ yields
\begin{align}
\mathbf D(\mathbf U \mathbf w)
=
\mathbf U \mathbf D(\mathbf w) \mathbf U^\top.
\tag{3}
\end{align}

Now choose $\mathbf w$ whose coordinates have distinct magnitudes, so 
$\mathbf D(\mathbf w)$ has distinct diagonal entries. 
Then $\mathbf U \mathbf D(\mathbf w) \mathbf U^\top$ is diagonal 
if and only if $\mathbf U$ is a signed permutation matrix. 
For a generic orthogonal $\mathbf U$, the right-hand side of (3) has 
off-diagonal entries, while $\mathbf D(\mathbf U \mathbf w)$ is diagonal. 
This contradicts (3).

Hence the assumed rotation-invariance cannot hold.
\end{proof}

We let the conditional label distribution induced by the data-generating process as $q(\mathbf{y}|\mathbf{X})$ given $D=(\mathbf{X},\mathbf{y})$. And define the rotated observation model as $q_{\mathbf{U}}(\mathbf{y}|\mathbf{X})=q(\mathbf{y}|\mathbf{X}\mathbf{U})$.

% -------------------------------------------------
% Symmetrized observation model and Bayes risk
% (Logistic loss, hard labels y in {±1})
% -------------------------------------------------
% -------------------------------------------------
% Symmetrized observation model and Bayes risk
% (Logistic loss, hard labels y in {±1})
% -------------------------------------------------

Let $\tilde{\boldsymbol X}=[\boldsymbol X;\boldsymbol x_{\mathrm{te}}]$ denote the
augmented design matrix and $\tilde{\boldsymbol y}=[\boldsymbol y; y_{\mathrm{te}}]$
the corresponding labels.
For an orthogonal matrix $\boldsymbol U\in\mathbb R^{d\times d}$, define the rotated
observation model
\[
q_{\boldsymbol U}(\tilde{\boldsymbol y}\mid\tilde{\boldsymbol X})
:= q(\tilde{\boldsymbol y}\mid \tilde{\boldsymbol X}\boldsymbol U^\top).
\]
The symmetrized observation model is obtained by averaging over all rotations:
\[
\bar q(\tilde{\boldsymbol y}\mid\tilde{\boldsymbol X})
:= \int q_{\boldsymbol U}(\tilde{\boldsymbol y}\mid\tilde{\boldsymbol X})\,\mathrm d\rho_H(\boldsymbol U),
\]
where $\rho_H$ denotes the Haar measure on the orthogonal group.

Given $(\tilde{\boldsymbol X}, \boldsymbol y)$, the posterior distribution on
$\boldsymbol U$ under the symmetrized observation model is
\begin{align}
p(\boldsymbol U \mid \tilde{\boldsymbol X}, \boldsymbol y)
\;\propto\;
q_{\boldsymbol U}(\boldsymbol y \mid \tilde{\boldsymbol X}) \,
\mathrm d\rho_H(\boldsymbol U).
\end{align}

The Bayes-optimal real-valued score is defined by integrating the conditional expected logistic loss over the posterior on
$\boldsymbol U$:
\begin{align}
s^\star(\boldsymbol x_{\mathrm{te}}\mid \boldsymbol X,\boldsymbol y)
\;\in\;
\arg\min_{s\in\mathbb R}
\int
\mathbb E_{y_{\mathrm{te}}\sim q_{\boldsymbol U}(\cdot\mid \tilde{\boldsymbol X},\boldsymbol y)}
\!\big[\ell(y_{\mathrm{te}}\, s)\big]\,
p(\boldsymbol U\mid \tilde{\boldsymbol X},\boldsymbol y)\,
\mathrm d\boldsymbol U .
\end{align}

The optimal expected loss under the symmetrized observation model is
\begin{align}
L_B(\bar q)
=
\mathbb E_{\tilde{\boldsymbol X}\sim \mathcal{N}(0,\mathbf{I}),\,\boldsymbol y\sim \bar q(\cdot\mid\tilde{\boldsymbol X})}
\Big[
\ell\big(y_{\mathrm{te}}\,
s^\star(\boldsymbol x_{\mathrm{te}}\mid \boldsymbol X,\boldsymbol y)\big)
\Big].
\end{align}

\begin{theorem}
 Let $q(\tilde{\boldsymbol y}\mid \tilde{\boldsymbol X})$ be an observation model and
$\hat s(\cdot\mid \boldsymbol X,\boldsymbol y)$ be a rotation-invariant learning algorithm trained on $(\boldsymbol X,\boldsymbol y)$.
Define the expected loss
\begin{align}
\mathcal{L}_{\hat s}(q)
\;:=\;
\mathbb E_{\tilde{\boldsymbol X}\sim  \mathcal{N}(0,\mathbf{I}),\;
\tilde{\boldsymbol y}\sim q(\cdot\mid \tilde{\boldsymbol X})}
\Big[
\ell\!\big(
y_{\mathrm{te}}\,
\hat s(\boldsymbol x_{\mathrm{te}}\mid \boldsymbol X,\boldsymbol y)
\big)
\Big].
\end{align}
Then this loss is lower bounded by the Bayes risk of the symmetrized observation model:
\begin{align}
\mathcal{L}_{\hat{s}}(q)
\;\ge\;
\mathcal{L}_B(\bar q).
\end{align}
\end{theorem}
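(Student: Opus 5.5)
The argument has two steps: first show that the expected loss of a rotation-invariant $\hat s$ is unchanged when $q$ is replaced by any rotated model $q_{\boldsymbol U}$, and hence by the Haar average $\bar q$; then use that $s^\star$ minimizes the posterior expected loss pointwise under $\bar q$.

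\emph{Step 1: invariance under a fixed rotation.} Fix an orthogonal $\boldsymbol U$. By definition,
\begin{align*}
\mathcal L_{\hat s}(q_{\boldsymbol U})=\mathbb E_{\tilde{\boldsymbol X}}\,\mathbb E_{\tilde{\boldsymbol y}\sim q(\cdot\mid \tilde{\boldsymbol X}\boldsymbol U^\top)}\big[\ell(y_{\mathrm{te}}\,\hat s(\boldsymbol x_{\mathrm{te}}\mid \boldsymbol X,\boldsymbol y))\big].
\end{align*}
Change variables to $\tilde{\boldsymbol X}'=\tilde{\boldsymbol X}\boldsymbol U^\top$, i.e.\ every row is mapped as $\boldsymbol x\mapsto \boldsymbol U\boldsymbol x$. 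Since $\mathcal N(0,\mathbf I)$ is rotation invariant, $\tilde{\boldsymbol X}'$ has the same law as $\tilde{\boldsymbol X}$. The original rows are recovered as $\boldsymbol U^\top\boldsymbol x'_i$, so the predictor becomes $\hat s(\boldsymbol U^\top\boldsymbol x'_{\mathrm{te}}\mid \boldsymbol X'\boldsymbol U,\boldsymbol y)$. By the definition of rotation invariance (applied with the orthogonal matrix $\boldsymbol U^\top$), this equals $\hat s(\boldsymbol x'_{\mathrm{te}}\mid \boldsymbol X',\boldsymbol y)$. The labels are untouched, so their joint law with $\tilde{\boldsymbol X}'$ is exactly $q(\cdot\mid\tilde{\boldsymbol X}')$. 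Hence $\mathcal L_{\hat s}(q_{\boldsymbol U})=\mathcal L_{\hat s}(q)$ for every $\boldsymbol U$. Here I use that the definition of rotation invariance is stated for the predictor $p$, equivalently for the score $\hat s$, since $p=\sigma(\hat s)$.

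\emph{Step 2: averaging over rotations.} The map $q\mapsto \mathcal L_{\hat s}(q)$ is linear in the conditional label law, because it is an integral against $q(\tilde{\boldsymbol y}\mid\tilde{\boldsymbol X})$ with a nonnegative integrand. By Tonelli's theorem,
\begin{align*}
\mathcal L_{\hat s}(\bar q)=\int \mathcal L_{\hat s}(q_{\boldsymbol U})\,\mathrm d\rho_H(\boldsymbol U)=\mathcal L_{\hat s}(q).
\end{align*}

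\emph{Step 3: Bayes optimality under $\bar q$.} View $\bar q$ as a hierarchical model: draw $\boldsymbol U\sim\rho_H$, then draw $\tilde{\boldsymbol y}\sim q_{\boldsymbol U}(\cdot\mid\tilde{\boldsymbol X})$. Condition on $(\tilde{\boldsymbol X},\boldsymbol y)$. The conditional law of $y_{\mathrm{te}}$ is the posterior mixture $\int q_{\boldsymbol U}(y_{\mathrm{te}}\mid\tilde{\boldsymbol X},\boldsymbol y)\,p(\boldsymbol U\mid\tilde{\boldsymbol X},\boldsymbol y)\,\mathrm d\boldsymbol U$. Therefore, by the tower property,
\begin{align*}
\mathcal L_{\hat s}(\bar q)=\mathbb E_{\tilde{\boldsymbol X},\boldsymbol y\sim\bar q}\Big[\int \mathbb E_{y_{\mathrm{te}}\sim q_{\boldsymbol U}(\cdot\mid\tilde{\boldsymbol X},\boldsymbol y)}\big[\ell(y_{\mathrm{te}}\,\hat s)\big]\,p(\boldsymbol U\mid\tilde{\boldsymbol X},\boldsymbol y)\,\mathrm d\boldsymbol U\Big].
\end{align*}
For each fixed $(\tilde{\boldsymbol X},\boldsymbol y)$, the value $\hat s(\boldsymbol x_{\mathrm{te}}\mid\boldsymbol X,\boldsymbol y)$ is just some real number $s$. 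The inner integral is therefore at least its value at the minimizer $s^\star$. Taking expectations gives $\mathcal L_{\hat s}(\bar q)\ge\mathcal L_B(\bar q)$, and combining with Step 2 yields the claim.

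The main obstacle is the measure-theoretic bookkeeping rather than any single estimate. One needs the $\arg\min$ defining $s^\star$ to be attained, which holds because the objective is strictly convex in $s$ and coercive whenever the mixed probability of $y_{\mathrm{te}}=1$ lies in $(0,1)$; otherwise one replaces the minimum by an infimum. One also needs $s^\star$ to be measurable in $(\tilde{\boldsymbol X},\boldsymbol y)$. In fact $s^\star$ is the logit of the posterior predictive probability, which is explicit and measurable, so I would verify that directly. Finally, the change of variables in Step 1 must use the same (orthogonal) convention $\boldsymbol U$ versus $\boldsymbol U^\top$ as the definitions of rotation invariance and of $q_{\boldsymbol U}$; this is where a transposition error could slip in, so I would check it carefully.
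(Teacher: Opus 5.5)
Your proposal is correct and follows essentially the same route as the paper, which sketches this argument in the main text and defers the details to the cited prior work. In both, the expected loss of a rotation-invariant $\hat s$ is the same under every rotated model $q_{\boldsymbol U}$, hence equals $\mathcal L_{\hat s}(\bar q)$ after Haar averaging, and pointwise Bayes optimality of $s^\star$ under $\bar q$ then gives $\mathcal L_{\hat s}(q)=\mathcal L_{\hat s}(\bar q)\ge \mathcal L_B(\bar q)$; your care with the $\boldsymbol U$ versus $\boldsymbol U^\top$ convention, the Tonelli interchange, and the attainment and measurability of $s^\star$ (as the logit of the posterior predictive probability) supplies details the paper leaves implicit.
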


See \cite{pmlr-v272-warmuth25a} for proof.

\begin{lemma}
\label{lem:spherical-posterior-restate}
Under assumptions stated, the Bayes risk gap \eqref{bayes-risk-gap} satifies
\begin{align}
\mathcal{L}_B(\bar q) - \mathcal{L}(\boldsymbol w^\star)
=
\mathbb E_{D}
\;
\mathbb E_{\boldsymbol w(D)\sim \Pi(\cdot\mid D)}
\Big[
\mathcal L(\boldsymbol w(D)) - \mathcal L(\boldsymbol w^\star)
\Big],
\end{align}
where $\Pi(\boldsymbol w\mid D)$ is the posterior distribution induced by a uniform
prior on the unit sphere $\mathbb S^{d-1}$.
\end{lemma}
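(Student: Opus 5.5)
The plan is to unfold the definition of $\mathcal L_B(\bar q)$ in three moves: (i) rewrite the Haar average over $\boldsymbol U$ as an average over a uniformly random target direction on $\mathbb S^{d-1}$; (ii) use Bayes' rule to turn the joint law of $(\boldsymbol w,D)$ into "data first, then posterior"; (iii) identify the Bayes-optimal test score as a posterior average and compare it with the Bayes risk $\mathcal L(\boldsymbol w^\star)$.

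\textbf{Step 1: Haar average as a spherical prior.} Since $q_{\boldsymbol U}(\tilde{\boldsymbol y}\mid\tilde{\boldsymbol X})=q(\tilde{\boldsymbol y}\mid\tilde{\boldsymbol X}\boldsymbol U^\top)$ and the logistic model depends on inputs only through $\langle \boldsymbol U^\top\boldsymbol x_i,\boldsymbol w^\star\rangle=\langle\boldsymbol x_i,\boldsymbol U\boldsymbol w^\star\rangle$, the rotated model is the logistic model with parameter $\boldsymbol w:=\boldsymbol U\boldsymbol w^\star$. Because $\|\boldsymbol w^\star\|_2=1$ and $\rho_H$ is Haar, $\boldsymbol w$ is distributed as $\mathrm{Unif}(\mathbb S^{d-1})$. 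Hence
\[
\bar q(\tilde{\boldsymbol y}\mid\tilde{\boldsymbol X})=\int_{\mathbb S^{d-1}}\prod_{i}\sigma(y_i\langle\boldsymbol x_i,\boldsymbol w\rangle)\,\mathrm d\mu(\boldsymbol w),
\]
a Bayesian mixture with uniform prior $\mu$ on the sphere.

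\textbf{Step 2: the posterior has the stated Gibbs form.} By Bayes' rule, the posterior on $\boldsymbol w$ given $D=(\boldsymbol X,\boldsymbol y)$ is proportional to $\prod_i\sigma(y_i\boldsymbol x_i^\top\boldsymbol w)=\exp(-n\widehat{\mathcal L}(\boldsymbol w))$ times $\mathbf 1_{\{\|\boldsymbol w\|=1\}}$, which is exactly $\Pi(\cdot\mid D)$. The test input $\boldsymbol x_{\mathrm{te}}$ is independent of $\boldsymbol w$ and carries no label information, so conditioning on it does not change this posterior.

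\textbf{Step 3: reorder the expectation.} Writing the joint law as "draw $\boldsymbol w$ uniformly, then $D$ and the test pair from the model with parameter $\boldsymbol w$", and using that the Gaussian design is rotation invariant, the risk of any rotation-equivariant quantity under this joint law equals its value at the fixed $\boldsymbol w^\star$. This lets $\mathcal L(\boldsymbol w^\star)$ be rewritten as $\mathbb E_{\boldsymbol w}\,\mathcal L_{\boldsymbol w}(\boldsymbol w)$. Reordering the joint law as "draw $D$ from its marginal, then $\boldsymbol w\sim\Pi(\cdot\mid D)$" gives
\[
\mathcal L_B(\bar q)-\mathcal L(\boldsymbol w^\star)=\mathbb E_D\,\mathbb E_{\boldsymbol w\sim\Pi(\cdot\mid D)}\,\mathbb E_{\boldsymbol x_{\mathrm{te}},y_{\mathrm{te}}\mid\boldsymbol w}\bigl[\ell(y_{\mathrm{te}}s^\star)-\ell(y_{\mathrm{te}}\langle\boldsymbol x_{\mathrm{te}},\boldsymbol w\rangle)\bigr].
\]

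\textbf{Step 4 (main obstacle): identifying the optimal score.} It remains to show the bracketed term equals a posterior average of $\mathcal L(\boldsymbol w(D))-\mathcal L(\boldsymbol w^\star)$. For logistic loss, $s^\star$ is the logit of the posterior predictive probability $\int\sigma(\langle\boldsymbol x_{\mathrm{te}},\boldsymbol w\rangle)\,\mathrm d\Pi$. Its excess conditional loss is therefore a posterior-averaged KL divergence between Bernoulli distributions. My first attempt is to identify this with the excess population risk of a posterior draw, by exchanging the roles of the posterior draw and the sample representing the truth; the justification is exchangeability of $(\boldsymbol w,\boldsymbol w')$ given $D$. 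I expect this identification to be the delicate point. If exact equality fails, the inequality
\[
\mathcal L_B(\bar q)-\mathcal L(\boldsymbol w^\star)\ \ge\ (\text{or comparable to})\ \mathbb E_D\,\mathbb E_{\Pi}\bigl[\mathcal L(\boldsymbol w(D))-\mathcal L(\boldsymbol w^\star)\bigr]
\]
up to constants would still suffice downstream. Which direction holds needs checking via convexity (Jensen) of the Bernoulli KL divergence.
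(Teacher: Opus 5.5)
\paragraph{The gap is Step~4: the identity is false, and Jensen gives the wrong direction.}
Your Steps 1--3 are sound and follow the paper's route: Haar average $\to$ uniform prior on $\mathbb S^{d-1}$ $\to$ Bayes-rule reordering. The display ending Step~3 is a correct formula for the Bayes risk gap. Step~4, however, cannot be closed as an identity.

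Fix $D$ and $\boldsymbol x_{\mathrm{te}}$. Put $p_{\boldsymbol w}:=\sigma(\langle \boldsymbol x_{\mathrm{te}},\boldsymbol w\rangle)$ and $\bar p:=\mathbb E_{\Pi(\cdot\mid D)}\,p_{\boldsymbol w}=\sigma(s^\star)$. Let the truth $\boldsymbol w$ and a posterior draw $\boldsymbol w'$ be i.i.d.\ from $\Pi(\cdot\mid D)$; this is exactly the exchangeability you invoke. Writing $\mathrm{KL}$ for the Bernoulli divergence, a two-line computation gives
\[
\mathbb E_{\boldsymbol w,\boldsymbol w'}\,\mathrm{KL}\bigl(p_{\boldsymbol w}\,\|\,p_{\boldsymbol w'}\bigr)
=\mathbb E_{\boldsymbol w}\,\mathrm{KL}\bigl(p_{\boldsymbol w}\,\|\,\bar p\bigr)
+\mathbb E_{\boldsymbol w'}\,\mathrm{KL}\bigl(\bar p\,\|\,p_{\boldsymbol w'}\bigr).
\]
Let $\mathcal L_{\boldsymbol w}$ denote the population risk under truth $\boldsymbol w$, so that $\mathcal L_{\boldsymbol w}(\boldsymbol w')-\mathcal L_{\boldsymbol w}(\boldsymbol w)=\mathbb E_{\boldsymbol x}\,\mathrm{KL}(p_{\boldsymbol w}\|p_{\boldsymbol w'})$. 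Average over $(D,\boldsymbol x_{\mathrm{te}})$, using rotation invariance to trade the fixed $\boldsymbol w^\star$ for a uniformly drawn truth. Then:
\begin{itemize}
\item the left side is the right-hand side of the lemma;
\item the first term on the right is your Step~3 bracket;
\item the last term is strictly positive, because $\Pi(\cdot\mid D)$ has a positive density on the sphere, so $p_{\boldsymbol w'}$ is not a.s.\ constant.
\end{itemize}
Hence the claimed equality is false. The Bayes gap is strictly smaller than the posterior-averaged excess risk; in the locally Gaussian regime the ratio is about $2$.

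Your Jensen fallback only recovers this same direction,
\[
\mathcal L_B(\bar q)-\mathcal L(\boldsymbol w^\star)\le\mathbb E_D\,\mathbb E_{\Pi}\bigl[\mathcal L(\boldsymbol w(D))-\mathcal L(\boldsymbol w^\star)\bigr].
\]
That is the wrong direction for what comes downstream. A lower bound on the posterior average, which is what the main lower-bound theorem proves, then says nothing about the Bayes gap, and hence nothing about rotation-invariant algorithms.

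\paragraph{What is missing: a reverse comparison.}
The paper's proof does not supply this step either. In passing from the Haar average of the Bayes predictor's loss to $\mathbb E_{\boldsymbol w\sim\mathrm{Unif}(\mathbb S^{d-1})}[\mathcal L(\boldsymbol w)-\mathcal L(\boldsymbol w^\star)]$, it simply replaces the loss of $s^\star$ by the excess risk of $\boldsymbol w$ without argument. So you have located the real crux. Two ways to get the reverse bound:
\begin{itemize}
\item Show $\mathbb E_{\boldsymbol w'}\mathrm{KL}(\bar p\|p_{\boldsymbol w'})\le C\,\mathbb E_{\boldsymbol w}\mathrm{KL}(p_{\boldsymbol w}\|\bar p)$. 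This holds with $C\approx 1$ when the posterior is concentrated and the probabilities stay away from $0$ and $1$, since both terms are then $\approx \mathrm{Var}_{\Pi}(p_{\boldsymbol w})/(2\bar p(1-\bar p))$. Because $\boldsymbol x_{\mathrm{te}}$ is Gaussian, this needs a truncation of $|\langle\boldsymbol x_{\mathrm{te}},\boldsymbol w\rangle|$.
\item Bypass the lemma. Applying Pinsker to your Step~3 expression gives
\[
\mathcal L_B(\bar q)-\mathcal L(\boldsymbol w^\star)\;\ge\;2\,\mathbb E_{D}\,\mathbb E_{\boldsymbol x_{\mathrm{te}}}\,\mathrm{Var}_{\boldsymbol w\sim\Pi(\cdot\mid D)}\bigl[\sigma(\langle\boldsymbol x_{\mathrm{te}},\boldsymbol w\rangle)\bigr].
\]
It then suffices to lower-bound this predictive variance using posterior anti-concentration and covariance bounds.
\end{itemize}
Either way, the statement can at best be established as an inequality ($\gtrsim$) with extra work, not as the stated equality.
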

\begin{proof}

Let the excess risk of a rotational invariant algorithm 
\begin{align}
    \mathbb E_{\tilde{\boldsymbol X}\sim  \mathcal{N}(0,\mathbf{I}),\;
\tilde{\boldsymbol y}\sim q(\cdot\mid \tilde{\boldsymbol X})}
\Big[
\ell\!\big(
y_{\mathrm{te}}\,
\hat s(\boldsymbol x_{\mathrm{te}}\mid \boldsymbol X,\boldsymbol y)
\big)
\Big] - \mathcal{L}(\mathbf{w}^{\star}) = \mathcal{L}_{\hat{s}}(q) -\mathcal{L}(\mathbf{w}^{\star})  \geq \mathcal{L}_B(\bar q) -\mathcal{L}(\mathbf{w}^{\star})
\end{align}
We can further simplify  $\mathcal{L}_B(\bar q) -\mathcal{L}(\mathbf{w}^{\star}) $ as 
\begin{align}
\mathcal{L}_B(\bar q) - \mathcal{L}(\boldsymbol w^\star)
&=
\mathbb E_{\tilde{\boldsymbol X}\sim\mathcal{N}(0,\mathbf{I})}
\;\mathbb E_{\boldsymbol U\sim\rho_H}
\;\mathbb E_{\tilde{\boldsymbol y}\sim q(\cdot\mid \tilde{\boldsymbol X}\boldsymbol U^{\top})}
\;
\Big[
\ell\!\big(\hat y^\star(\boldsymbol x_{\mathrm{te}}\mid \boldsymbol X,\boldsymbol y),y_{\mathrm{te}}\big)
\Big]
- L(\boldsymbol w^\star)
\\[0.8em]
&=
\mathbb E_{\tilde{\boldsymbol X}\sim \mathcal{N}(0,\mathbf{I})}
\;\mathbb E_{\boldsymbol w\sim\mathrm{Unif}(\mathbb S^{d-1})}
\;\mathbb E_{\tilde{\boldsymbol y}\sim q(\cdot\mid \tilde{\boldsymbol X},\boldsymbol w)} 
\Big[
\mathcal{L}(\boldsymbol w)  
- \mathcal{L}(\boldsymbol w^\star)
\Big]. \label{w_sphere} \\
& = \mathbb E_{\tilde{\boldsymbol X}\sim \mathcal{N}(0,\mathbf{I})}
\;\mathbb E_{\tilde{\boldsymbol y}\sim \bar q(\cdot\mid \tilde{\boldsymbol X})}
\;\mathbb E_{\boldsymbol w\sim p(\boldsymbol w\mid \tilde{\boldsymbol X},\tilde{\boldsymbol y})}
\Big[
\mathcal{L}(\boldsymbol w) - \mathcal{L}(\boldsymbol w^\star)
\Big]  \\
& = \mathbb E_{D} \mathbb{E}_{\mathbf{w}(D) \sim \Pi(\cdot \mid D)}
\Big[
\mathcal{L}(\mathbf{w}(D)) - \mathcal{L}(\mathbf{w}^\star)
\Big].
\end{align}
where $\Pi(\mathbf{w} \mid D)$ is termed as spherical posterior since the prior is on the unit sphere. 

In \eqref{w_sphere}, we exploit the rotational symmetry of the observation model. 
Since $\mathbf U \in O(d)$ be Haar-distributed and the rotated target vector 
$\mathbf w := \mathbf U^\top \mathbf w^\star$. We have
\begin{align}
\big\lVert \mathbf U^\top \mathbf w^\star \big\rVert_2^2
= \big\lVert \mathbf w^\star \big\rVert_2^2 .
\end{align}
In particular, if $\lVert \mathbf w^\star \rVert_2 = 1$, then $\mathbf w \in \mathbb S^{d-1}$. 
Moreover, by Haar invariance of $\mathbf U$, the random vector 
$\mathbf w = \mathbf U^\top \mathbf w^\star$ is uniformly distributed on the unit sphere 
$\mathbb S^{d-1}$. 
Conditional on $\mathbf w$, the rotated observation model 
$q(\cdot \mid \tilde{\mathbf X}\mathbf U^\top)$ coincides with the conditional model 
$q(\cdot \mid \tilde{\mathbf X}, \mathbf w)$. 
Therefore, averaging over random rotations $\mathbf U \sim \rho_H$ is equivalent to 
averaging over $\mathbf w \sim \mathrm{Unif}(\mathbb S^{d-1})$, which yields~(49). In the last equation, we write the posterior over $\mathbf{w}$ as :

\begin{equation}
\Pi(\mathbf{w} \mid D)
\;\propto\;
p(\boldsymbol w)\,q(\tilde{\boldsymbol y}\mid \tilde{\boldsymbol X},\boldsymbol w)
\;\propto\;
\exp\!\big(-n \hat{\mathcal{L}}(\boldsymbol w)\big)\,\mathbf 1_{\{\|\boldsymbol w\|_2=1\}},
\end{equation}

\end{proof}

\section{Excess risk calculation on spherical posterior $\Pi(\mathbf{w} \mid D)$}

Let $D = \{(\mathbf{x}_i, y_i)\}_{i=1}^n$ be i.i.d.\ samples with 
$\mathbf{x}_i \sim \mathcal{N}(\mathbf{0}, \mathbf{I}_d)$ and 
\[
P(y_i = 1 \mid \mathbf{x}_i) = \sigma\!\left( \langle \mathbf{x}_i, \mathbf{w}^\star \rangle \right).
\]
where the ground-truth parameter satisfies
$
\mathbf{w}^\star \in \mathbb{S}^{d-1}
:= \left\{ \mathbf{w} \in \mathbb{R}^d : \|\mathbf{w}\|_2 = 1 \right\}$
and the logistic sigmoid function is defined by
\begin{align}
\sigma(t) := \frac{1}{1+e^{-t}}.
\end{align}

We define the population logistic risk as
\begin{align}
\mathcal{L}(\mathbf{w})
:= \mathbb{E}_{(\mathbf{x},y)}
\Big[
\log\!\big(1 + e^{-y\,\langle \mathbf{x}, \mathbf{w} \rangle}\big)
\Big],
\end{align}
and the empirical logistic risk as
\begin{align}
\widehat{\mathcal{L}}_n(\mathbf{w})
:= \frac{1}{n}\sum_{i=1}^n
\log\!\big(1 + e^{-y_i\,\langle \mathbf{x}_i, \mathbf{w} \rangle}\big).
\end{align}

We define the \emph{spherical posterior} on the unit sphere $\mathbb{S}^{d-1}$ by
\begin{align}
\Pi(d\mathbf{w} \mid D)
\;\propto\;
\exp\!\big(- n\, \widehat{\mathcal{L}}_n(\mathbf{w})\big)\,
d\mu(\mathbf{w}),
\end{align}
where $d\mu$ denotes the uniform surface measure on $\mathbb{S}^{d-1}$.

Our object of interest is the posterior expected excess population risk
\begin{align}
\mathcal{R}_{\Pi}(D)
:= \mathbb{E}_{\mathbf{w} \sim \Pi(\cdot \mid D)}
\Big[
\mathcal{L}(\mathbf{w}) - \mathcal{L}(\mathbf{w}^\star)
\Big].
\end{align}

\begin{theorem}
\label{main-theorem-restate}
Let $\mathcal{D} = \{(\mathbf{x}_i,y_i)\}_{i=1}^n$
be drawn i.i.d.\ from data model~\eqref{conditional-label-restate} and $n \gtrsim d +\log(\frac{1}{\delta})$. Then, with probability at least $1-\delta$ over the draw of $\mathcal{D}$ , any rotation-invariant learning algorithm outputting $\mathbf{w}(\mathcal{D})$ has an excess risk lower bound over the spherical posterior,
\begin{align}
\mathbb E_{\boldsymbol w(\mathcal{D})\sim \Pi(\cdot\mid \mathcal{D})} \left[\mathcal{L}(\mathbf{w}(\mathcal{D})) - \mathcal{L}(\mathbf{w}^{\star})\right]
&\ge
\frac{c (d-1)}{n}
\end{align}
for an absolute constant $c$.
\end{theorem}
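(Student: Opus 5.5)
We follow the roadmap sketched after Theorem~\ref{excess-risk-low-main} and make each step concrete. \textbf{Chart.} Parametrize the sphere near $\mathbf w^\star$ by $\mathbf w(\mathbf z)=(\mathbf w^\star+\mathbf z)/\|\mathbf w^\star+\mathbf z\|_2$ with $\mathbf z\in T=\{\mathbf z:\langle\mathbf z,\mathbf w^\star\rangle=0\}$. This is a diffeomorphism from $T$ onto the open hemisphere $\{\langle\mathbf w,\mathbf w^\star\rangle>0\}$, with surface Jacobian $(1+\|\mathbf z\|^2)^{-d/2}$. The posterior therefore pulls back to
\[
\Pi(\mathbf z\mid\mathcal D)\propto \exp\big(-n\widehat{\mathcal L}(\mathbf w(\mathbf z))\big)\,(1+\|\mathbf z\|^2)^{-d/2}.
\]
The complementary hemisphere contributes a nonnegative excess risk, so for a lower bound we may discard it. We then only need that it carries posterior mass at most $1/2$, which follows from $\widehat{\mathcal L}(-\mathbf w)-\widehat{\mathcal L}(\mathbf w)\gtrsim \langle \mathbf w,\mathbf w^\star\rangle$ uniformly with high probability.

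\textbf{Population curvature.} Because $\mathbf w^\star$ minimizes $\mathcal L$ with $\nabla\mathcal L(\mathbf w^\star)=0$, the Hessian is $\mathbf H^\star=\mathbb E[\sigma'(\mathbf x^\top\mathbf w^\star)\mathbf x\mathbf x^\top]$. For any $\mathbf v\perp\mathbf w^\star$, the coordinate $\mathbf x^\top\mathbf v$ is independent of $\mathbf x^\top\mathbf w^\star$. This gives $\mathbf H^\star_T=a^\star\mathbf I_T$, with $a^\star=\mathbb E\sigma'(g)$, $g\sim\mathcal N(0,1)$, an absolute constant. Next we use $\mathbf w(\mathbf z)-\mathbf w^\star=\mathbf z+O(\|\mathbf z\|^2)$ together with generalized self-concordance of the logistic loss (Bach's Proposition~1: $|\varphi'''|\le \|\mathbf x\|$-type bounds, with Gaussian moments controlling the constant). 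From these we get $\mathcal L(\mathbf w(\mathbf z))-\mathcal L(\mathbf w^\star)\ge c\,a^\star\|\mathbf z\|^2$ for $\|\mathbf z\|\le r$, where $r$ is an absolute constant.

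\textbf{Empirical curvature of the potential.} Write
\[
V(\mathbf z)=n\widehat{\mathcal L}(\mathbf w(\mathbf z))+\tfrac d2\log(1+\|\mathbf z\|^2).
\]
For $n\gtrsim d+\log(1/\delta)$, uniform matrix concentration gives $\tfrac1n\sum_i\sigma'(\mathbf x_i^\top\mathbf w)\mathbf x_i\mathbf x_i^\top\preceq C\mathbf I$ for all $\mathbf w$ with probability $1-\delta$. This holds because $\sigma'\le1/4$ and the sample covariance has operator norm at most $2$. The same concentration yields a lower bound $\succeq c\mathbf I$ uniformly over $\|\mathbf w-\mathbf w^\star\|\le 2r$, via a net argument and a truncation of $\sigma'$. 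Pushing these bounds through the chart (whose derivatives are bounded on $\|\mathbf z\|\le r$), and bounding the Jacobian term's Hessian by $O(d)$, we obtain $\nabla^2V\preceq (Cn+C'd)\mathbf I_T$ globally on $\|\mathbf z\|\le r$, and $\nabla^2 V\succeq cn\mathbf I_T$ there.

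\textbf{Two factors.} Step (A) is to show $\Pi(\|\mathbf z\|\le r\mid\mathcal D)\ge 1/2$. The tool is strong log-concavity on the ball combined with the fact that the mode $\hat{\mathbf z}$ lies within $O(\sqrt{d/n})$ of $0$, which follows from $\|\nabla\widehat{\mathcal L}(\mathbf w^\star)\|\lesssim\sqrt{d/n}$ via the same concentration. Gaussian-type tail bounds for strongly log-concave measures then give mass $1-e^{-cnr^2}$, and the region outside the ball is handled by the growth $V(\mathbf z)-V(\hat{\mathbf z})\gtrsim n\min(\|\mathbf z\|^2,r^2)$. Step (B) lower-bounds the second moment of the measure restricted to the ball. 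A direct route is the inequality $\mathbb E\|\mathbf z\|^2\ge\operatorname{tr}\operatorname{Cov}$, with the covariance bounded below by the Cram\'er--Rao/Brascamp--Lieb-reverse bound $\operatorname{Cov}\succeq (\mathbb E\nabla^2V)^{-1}$. This holds for log-concave measures via the Cram\'er--Rao inequality applied to the location family $\mathbf z+\theta$, whose Fisher information is $\mathbb E[\nabla V\nabla V^\top]=\mathbb E\nabla^2V$ after integration by parts. Boundary terms from restricting to the ball must be handled; we would instead apply this to a smoothly truncated version, or use the fact that the mass outside is exponentially small. Plugging in the upper curvature bound gives $\operatorname{tr}\operatorname{Cov}\ge (d-1)/(Cn+C'd)$. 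Combining the two factors with the quadratic growth of the risk yields
\[
\mathbb E_\Pi[\mathcal L-\mathcal L^\star]\ge \tfrac12\,c\,a^\star\,\frac{d-1}{Cn+C'd}\gtrsim\frac{d-1}{n}
\]
when $n\gtrsim d$.

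The \textbf{main obstacle} is step (B): making the Cram\'er--Rao argument rigorous on the truncated ball, where the score has boundary contributions, and obtaining the uniform-in-$\mathbf w$ empirical Hessian control needed for the upper bound on $\nabla^2 V$. We would attempt the smooth cutoff $\chi(\|\mathbf z\|/r)$, which keeps the Fisher information bounded by $\mathbb E\nabla^2V+O(r^{-2})$.
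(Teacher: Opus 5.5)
Your proposal is correct and has the same architecture as the paper's proof. Both use the normalization chart onto $T$, the isotropic tangent Hessian $\mathbf H_T^\star=\kappa\mathbf P_T$, the pulled-back density $\propto e^{-V}$ with Jacobian $(1+\|\mathbf z\|_2^2)^{-d/2}$, two-sided Hessian bounds on $V$, and the split into a ball-mass factor and a Cram\'er--Rao second-moment factor.

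The genuine difference is the radius, and it matters. You work on a ball of absolute radius $r$, obtained from dimension-free quadratic growth of $\mathcal L$ and a uniform net/small-ball lower bound on the empirical Hessian. The paper instead takes $r=\min\{\tfrac12,\tfrac1R,\tfrac{\log 3}{R_e}\}$ with $R=\mathbb E\|\mathbf x\|_2^3/(\kappa\,\mathbb E\|\mathbf x\|_2^2)\ge\sqrt d/\kappa$ and $R_e\asymp\sqrt d$, so $r\lesssim 1/\sqrt d$. The posterior's typical $\|\mathbf z\|_2$ is $\asymp\sqrt{d/n}$, so a ball that small carries little mass unless $n\gtrsim d^2$. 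Concretely, the paper's Factor (A) bound $1-\exp(-c_1 m r^2)$ treats $\|\mathbf z\|_2$ as concentrating at $0$ rather than at scale $\sqrt{(d-1)/m}$. Its Factor (B) value $\frac{c\,\kappa(d-1)}{n+c_1 d}$ exceeds the trivial bound $\mathbb E_{\Pi_r}[\mathbf z^\top\mathbf H_T^\star\mathbf z]\le\kappa r^2$ whenever $d\lesssim n\ll d^2$; this happens because the hard truncation's boundary contribution to the Fisher information is dropped. Your constant radius, your concentration centered at the mode (placed $O(\sqrt{d/n})$ from $0$), and your smooth cutoff are what make the argument valid in the regime $n\gtrsim d$.

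A few points to keep straight when you write it out:

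\begin{itemize}
\item The absolute radius must not come from $\|\mathbf x\|$-type self-concordance constants, since that is exactly how the paper ends up with $\sqrt d$. Instead, compare $\mathbb E[\sigma'(\mathbf x^\top\mathbf w)|\mathbf x^\top\mathbf v|^3]$ with $\mathbb E[\sigma'(\mathbf x^\top\mathbf w)(\mathbf x^\top\mathbf v)^2]$ for $\|\mathbf w\|_2\le 2$; both are two-dimensional Gaussian integrals. Alternatively, use that $\mathcal L$ on the sphere depends only on $\langle\mathbf w,\mathbf w^\star\rangle$.
\item The Hessian of $n\widehat{\mathcal L}(\mathbf w(\mathbf z))$ also contains the term $\sum_k n\,\partial_k\widehat{\mathcal L}(\mathbf w(\mathbf z))\,\nabla^2 w_k(\mathbf z)$, of operator norm $O(\sqrt{nd}+nr)$. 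It is absorbed for small absolute $r$ and $n\gg d$, but it must be accounted for; the paper drops it.
\item The $O(r^{-2})$ cost of the cutoff is really $O(r^{-2})$ times $\Pi(\|\mathbf z\|_2\le r)/\Pi(\|\mathbf z\|_2\le r/2)$. So prove the mass bound at radius $r/2$ first.
\item The hemisphere step is unnecessary: if the far hemisphere had posterior mass at least $\tfrac12$, the excess risk would already be $\Omega(1)$. Also, the inequality you invoke there is not uniform near the equator, since $\widehat{\mathcal L}(-\mathbf w)-\widehat{\mathcal L}(\mathbf w)=\langle\tfrac1n\sum_i y_i\mathbf x_i,\mathbf w\rangle=2\kappa\langle\mathbf w^\star,\mathbf w\rangle+O(\sqrt{d/n})$.
\end{itemize}
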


\begin{figure}
    \centering
    \includegraphics[width=0.5\linewidth]{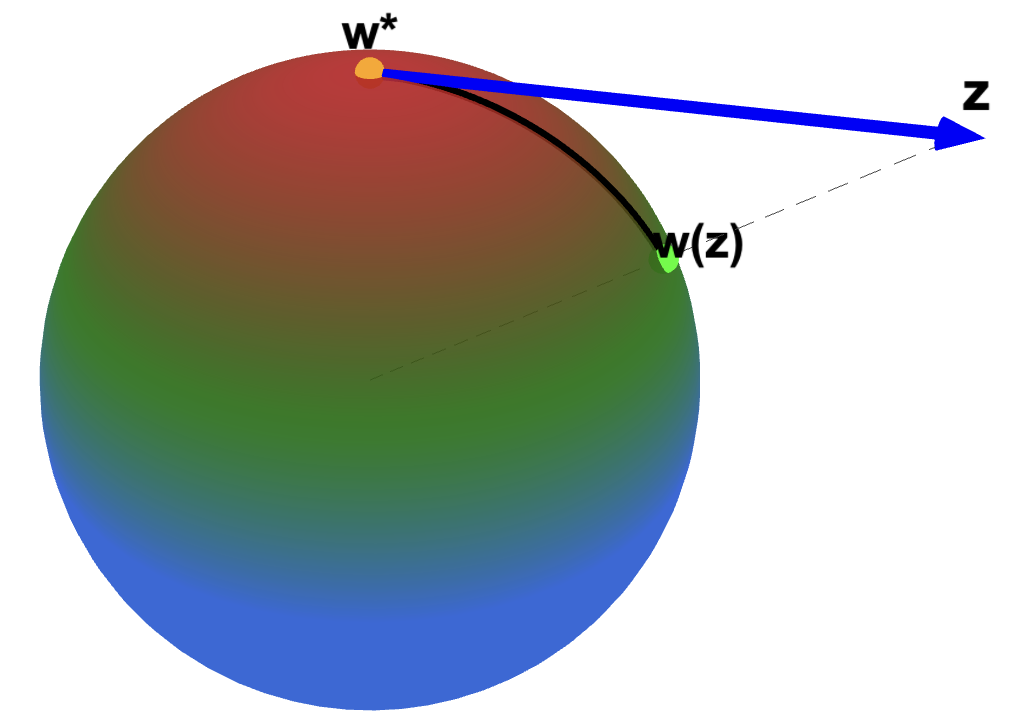}
    \caption{Tangent map on the spherical posterior.  $\mathbf{w}(\mathbf{z})=\frac{\mathbf w^\star+\mathbf z}{\|\mathbf w^\star+\mathbf z\|_2}$}
\end{figure}
\begin{proof}
Our proof relies on a technique of analyzing the excess risk and the posterior distribution on the sphere. 
    We list the proof sketch as follows:
\begin{enumerate}
[leftmargin=*,itemsep=0pt,topsep=1pt]
    \item To characterize the posterior distribution $\Pi(\mathbf{w}|D)$ on the unit sphere $\mathbb{S}^{d-1}$, we use the chart variable map $\mathbf{w}(\mathbf{z})=\frac{\mathbf w^\star+\mathbf z}{\|\mathbf w^\star+\mathbf z\|_2}$ and work on the domain of $\mathbf{z} \in T$, which is the tangent space to $\mathbf{w}^{\star}$. 
    \item We use Lemma~\ref{lem:tangent_projector}, \ref{lem:tangent_hessian} and \ref{lem:isotropic-hessian-tangent} to characterize the population Hessian (at $\mathbf{w}^{\star}$) on the tangent domain $\mathbf{z}$ (given as $\mathbf{H}_{T}(\mathbf{w}^{\star})$) and show it is isotropic. 
    \item We then use Lemma~\ref{lem:chart_geometry} and Lemma~\ref{lem:chart_quadratic_lower} to prove a quadratic lower-bound on the population excess risk on the restricted tangent domain $\mathbf{z}$, $\|\mathbf{z}\|_{2}\leq r$ utilizing modified self-concordance hypothesis (Proposition-1 in \cite{bach2010self}). 
    \begin{align}
        \mathcal L(\mathbf w(\mathbf z))-\mathcal L(\mathbf w^\star) \gtrsim \mathbf z^{T}\mathbf{H}_{T}(\mathbf{w}^{\star}) \mathbf z
    \end{align}
    \item We use Lemma~\ref{lem:local_slc_tangent} to show that the Gibbs posterior $\Pi(\mathbf{z}|\mathcal{D})$ induced by the empirical risk on the sphere is locally log-concave. We also show using Lemma~\ref{lem:global_slc_tangent} that $\Pi(\mathbf{z}|\mathcal{D})$ is globally smooth in $\mathbf{z}\in T$. We use Lemma~\ref{lem:posterior_tangent} and \ref{lem:tangent_hessian_concentration} to characterize the posterior distribution on the space of $\mathbf{z}$. So jointly using Lemma~\ref{lem:local_slc_tangent} and \ref{lem:global_slc_tangent}, we have control on the Hessian of the negative log-likelihood of the Gibb's distribution $V(\mathbf z)=-\log\Pi(\mathbf{z}|D)$ induced by the empirical risk. 
 \begin{align}
\frac{1}{m}\mathbf I_T\,\succeq\nabla^2 V(\mathbf z) \;\succeq\; \frac{1}{M}\,\mathbf I_T .
\end{align}
    \item Using Lemma~\ref{lem:trunc_reduction}, we use the lower-bound $$
\mathbb E_{\Pi(\mathbf z)}[ \mathcal{L}(\mathbf w(\mathbf z))-\mathcal L(\mathbf w^\star)]
\;\gtrsim\;
\;
\underbrace{\Pi(\|\mathbf z\|_2\le r\mid D)}_{\text{Factor-1}}\;
\underbrace{\mathbb E_{\Pi(\cdot\,\mid D,\ \|\mathbf z\|_2\le r)}\!\big[\mathbf z^\top \mathbf H\,\mathbf z\big]}_{\text{Factor-2}}.
$$ and control the terms $\Pi(\|\mathbf z\|_2\le r\mid D)$ and $\mathbb E_{\Pi(\cdot\,\mid D,\ \|\mathbf z\|_2\le r)}\!\big[\mathbf z^\top \mathbf H\,\mathbf z\big]$ respectively. In particular, Lemma~\ref{lem:local_slc_tangent} shows us
\begin{align}
\text{Factor-1:} \quad  \Pi(\|\mathbf z\|_2\le r\mid D)
\;\ge\;
1-
\exp\!\Bigg(
-c_1\Big(\tfrac{1}{3}n(1-\varepsilon)\kappa+\tfrac{12}{25}d\Big)
r^2\
\Bigg).
\end{align}
and Lemma~\ref{lem:truncated_cov_and_moment} gives:
\begin{align}
  \text{Factor-2:} \quad   \mathbb E_{\Pi(\cdot\,\mid D,\ \|\mathbf z\|_2\le r)}\!\big[\mathbf z^\top \mathbf H_T^\star\,\mathbf z\big] \geq \frac{c}{n+c_1 d}\,\text{Tr}(\mathbf H_T^\star)
\end{align}

\item Putting the lower bounds for these two factors, we derive the final theorem in \ref{final-worked-out-thm} and prove the lower bound holds for $n\gtrsim d+  \log({\frac{n}{\delta}})$ with probability $1-\delta$ over the draw of dataset $D$. 
\end{enumerate}
    
\end{proof}

\subsection{Required Lemmas}
\begin{lemma}
\label{lem:tangent_projector}
Let $\mathbf{w}^\star\in\mathbb{S}^{d-1}$ and $T$ be the tangent space at $\mathbf{w}^\star$ defined as $T := \left\{ \mathbf{z}\in\mathbb{R}^d : \langle \mathbf{z}, \mathbf{w}^\star \rangle = 0 \right\}$. 
Then for the orthogonal projector on $T$ (denoted by $\mathbf{P}_T$), we have:
\begin{center}
    \begin{enumerate}
    \item $\mathbf{P}_T = \mathbf{I}_d - \mathbf{w}^\star (\mathbf{w}^\star)^{\top}$.
    \item  \text{Range}($\mathbf{P}_T$)=T.
    \item \text{Ker}($\mathbf{P}_T$)=\text{span}$(\mathbf{w}^{\star})$
\end{enumerate}  
\end{center}
\end{lemma}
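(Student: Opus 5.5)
The three claims follow from basic linear algebra once we use $\|\mathbf w^\star\|_2=1$. I would define $\mathbf P := \mathbf I_d - \mathbf w^\star(\mathbf w^\star)^\top$ and first check that it is an orthogonal projector. Symmetry is immediate. For idempotence, expand
\begin{align*}
\mathbf P^2 = \mathbf I_d - 2\mathbf w^\star(\mathbf w^\star)^\top + \mathbf w^\star\big((\mathbf w^\star)^\top\mathbf w^\star\big)(\mathbf w^\star)^\top = \mathbf P,
\end{align*}
where the last step uses $(\mathbf w^\star)^\top\mathbf w^\star=1$. A symmetric idempotent matrix is the orthogonal projector onto its range, so claims 1 and 2 reduce to showing $\mathrm{Range}(\mathbf P)=T$.

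For the range, I would show two inclusions.
\begin{itemize}
\item For any $\mathbf v\in\mathbb R^d$, $\langle \mathbf P\mathbf v,\mathbf w^\star\rangle = \langle\mathbf v,\mathbf w^\star\rangle - \langle\mathbf v,\mathbf w^\star\rangle\,\|\mathbf w^\star\|_2^2 = 0$. Hence $\mathrm{Range}(\mathbf P)\subseteq T$.
\item For $\mathbf z\in T$, $\mathbf P\mathbf z = \mathbf z - \mathbf w^\star\langle\mathbf w^\star,\mathbf z\rangle = \mathbf z$. Hence $T\subseteq\mathrm{Range}(\mathbf P)$, and $\mathbf P$ acts as the identity on $T$.
\end{itemize}
Together these give $\mathrm{Range}(\mathbf P)=T$. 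Since $\mathbf P$ is an orthogonal projector with this range, and the orthogonal projector onto a subspace is unique, $\mathbf P=\mathbf P_T$, which proves claims 1 and 2.

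For the kernel, $\mathbf P\mathbf v=\mathbf 0$ is equivalent to $\mathbf v=\langle\mathbf w^\star,\mathbf v\rangle\,\mathbf w^\star$, so $\mathbf v\in\mathrm{span}(\mathbf w^\star)$. Conversely, $\mathbf P\mathbf w^\star=\mathbf w^\star-\mathbf w^\star=\mathbf 0$, so $\mathrm{span}(\mathbf w^\star)\subseteq\mathrm{Ker}(\mathbf P_T)$. This proves claim 3. Alternatively, one can note that $\mathrm{Ker}(\mathbf P_T)=T^\perp=\mathrm{span}(\mathbf w^\star)$ because $\mathbf P_T$ is symmetric.

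I do not expect a real obstacle here. The only point that needs care is the normalization $\|\mathbf w^\star\|_2=1$: it is used both in the idempotence computation and in the identity $\mathbf P\mathbf w^\star=\mathbf 0$, and without it the formula in claim 1 would have to be replaced by $\mathbf I_d-\mathbf w^\star(\mathbf w^\star)^\top/\|\mathbf w^\star\|_2^2$.
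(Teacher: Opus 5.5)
Your proof is correct and follows the same route as the paper: set $\mathbf P=\mathbf I_d-\mathbf w^\star(\mathbf w^\star)^\top$, show $\mathrm{Range}(\mathbf P)=T$ by the two inclusions, and identify the kernel via $\mathbf P\mathbf v=\mathbf 0\iff \mathbf v=\langle\mathbf v,\mathbf w^\star\rangle\mathbf w^\star$. Your explicit check that $\mathbf P$ is symmetric and idempotent, together with the uniqueness of the orthogonal projector onto a subspace, makes rigorous a step the paper only asserts when it says $\mathbf P_T\mathbf v$ ``is the orthogonal projection'' of $\mathbf v$.
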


\begin{proof}
Since $\|\mathbf w^\star\|_2 = 1$, define $\mathbf P_T := \mathbf I_d - \mathbf w^\star (\mathbf w^\star)^\top$, 
(1) For any $\mathbf v \in \mathbb R^d$,
\[
\mathbf P_T \mathbf v
= \mathbf v - \langle \mathbf v, \mathbf w^\star\rangle \mathbf w^\star ,
\]
which is the orthogonal projection of $\mathbf v$ onto the hyperplane orthogonal to $\mathbf w^\star$. Hence $\mathbf P_T$ is the orthogonal projector onto $T$.

For any $\mathbf v \in \mathbb R^d$,
\[
\langle \mathbf P_T \mathbf v, \mathbf w^\star\rangle
= \langle \mathbf v, \mathbf w^\star\rangle - \langle \mathbf v, \mathbf w^\star\rangle \|\mathbf w^\star\|_2^2
= 0 ,
\]
so $\mathbf P_T \mathbf v \in T$, implying $\mathrm{Range}(\mathbf P_T) \subseteq T$.
Conversely, if $\mathbf z \in T$, then $\langle \mathbf z, \mathbf w^\star\rangle = 0$ and thus
$\mathbf P_T \mathbf z = \mathbf z$, so $T \subseteq \mathrm{Range}(\mathbf P_T)$.
Therefore $\mathrm{Range}(\mathbf P_T) = T$.

\noindent
(3) Finally, $\mathbf P_T \mathbf v = \mathbf 0$ if and only if
\[
\mathbf v = \langle \mathbf v, \mathbf w^\star\rangle \mathbf w^\star ,
\]
which holds if and only if $\mathbf v \in \mathrm{span}(\mathbf w^\star)$.
Hence $\mathrm{Ker}(\mathbf P_T) = \mathrm{span}(\mathbf w^\star)$.
\end{proof}

\begin{lemma}
\label{lem:tangent_hessian}
The Hessian of the objective at the global minimum $\mathbf{z}=\mathbf{0}$,
restricted to $T$, is given by
\[
\mathbf{H}_T^\star := \mathbf{P}_T \mathbf{H}(\mathbf{w}^\star) \mathbf{P}_T,
\]
where $\mathbf{H}(\mathbf{w}^\star)$ denotes the Hessian at the population minimizer
$\mathbf{w}^\star$.
\end{lemma}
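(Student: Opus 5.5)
We will prove Lemma~\ref{lem:tangent_hessian} by a second-order Taylor expansion of the composite map $\mathbf z\mapsto \mathcal L(\mathbf w(\mathbf z))$, where $\mathbf w(\mathbf z)=\frac{\mathbf w^\star+\mathbf z}{\|\mathbf w^\star+\mathbf z\|_2}$ and $\mathbf z\in T$. Write $F(\mathbf z):=\mathcal L(\mathbf w(\mathbf z))$. Since $\langle \mathbf z,\mathbf w^\star\rangle=0$ and $\|\mathbf w^\star\|_2=1$, we have $\|\mathbf w^\star+\mathbf z\|_2=(1+\|\mathbf z\|_2^2)^{1/2}$. This gives the expansion
\[
\mathbf w(\mathbf z)=\mathbf w^\star+\mathbf z-\tfrac12\|\mathbf z\|_2^2\,\mathbf w^\star+O(\|\mathbf z\|_2^3).
\]
Thus the Jacobian of the chart at $\mathbf z=\mathbf 0$, restricted to $T$, is the identity embedding $T\hookrightarrow\mathbb R^d$, which equals $\mathbf P_T$ by Lemma~\ref{lem:tangent_projector}. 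The second-order term is the radial correction $-\tfrac12\|\mathbf z\|_2^2\mathbf w^\star$.

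Next we expand $\mathcal L$ around $\mathbf w^\star$. Set $\boldsymbol\Delta(\mathbf z):=\mathbf w(\mathbf z)-\mathbf w^\star$. Then
\[
F(\mathbf z)=\mathcal L(\mathbf w^\star)+\langle\nabla\mathcal L(\mathbf w^\star),\boldsymbol\Delta\rangle+\tfrac12\boldsymbol\Delta^\top\mathbf H(\mathbf w^\star)\boldsymbol\Delta+O(\|\boldsymbol\Delta\|^3).
\]
The key point is that $\mathbf w^\star$ is the unconstrained population minimizer, so $\nabla\mathcal L(\mathbf w^\star)=\mathbf 0$. This follows from the exact gradient formula in Theorem~\ref{pop-grad-restate} with $\mathbf w=\mathbf w^\star$, where $a(\mathbf w^\star)=a^\star$. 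Consequently the curvature term of the sphere, which would otherwise contribute $-\langle\nabla\mathcal L(\mathbf w^\star),\mathbf w^\star\rangle\,\mathbf P_T$, vanishes. Substituting $\boldsymbol\Delta=\mathbf z+O(\|\mathbf z\|^2)$ then yields
\[
F(\mathbf z)=\mathcal L(\mathbf w^\star)+\tfrac12\mathbf z^\top\mathbf H(\mathbf w^\star)\mathbf z+O(\|\mathbf z\|^3).
\]
Since $\mathbf z=\mathbf P_T\mathbf z$, the quadratic form equals $\tfrac12\mathbf z^\top\mathbf P_T\mathbf H(\mathbf w^\star)\mathbf P_T\mathbf z$. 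Hence $\nabla F(\mathbf 0)=\mathbf 0$, so $\mathbf z=\mathbf 0$ is a stationary point, and $\nabla^2F(\mathbf 0)=\mathbf P_T\mathbf H(\mathbf w^\star)\mathbf P_T=\mathbf H_T^\star$ as an operator on $T$.

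For rigor, the remainder control needs $\mathcal L$ to be $C^3$ near $\mathbf w^\star$. This holds because $\sigma$ has bounded derivatives, so differentiation under the expectation is justified by Gaussian moments (dominated convergence). Alternatively, we can compute directly via the chain rule. Using $D\mathbf w(\mathbf 0)=\mathbf P_T$ and $D^2\mathbf w(\mathbf 0)[\mathbf z,\mathbf z]=-\|\mathbf z\|^2\mathbf w^\star$, we get
\[
\nabla^2F(\mathbf 0)=\mathbf P_T\mathbf H\mathbf P_T-\langle\nabla\mathcal L(\mathbf w^\star),\mathbf w^\star\rangle\mathbf P_T.
\]
The second term is zero by stationarity.

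The only genuine step is verifying that the second-order term of the chart drops out. This rests entirely on $\nabla\mathcal L(\mathbf w^\star)=\mathbf 0$, which holds because the model is well specified and $\mathbf w^\star$ has unit norm, so it lies on the sphere. I would state that explicitly. The remaining steps are routine chain-rule bookkeeping.
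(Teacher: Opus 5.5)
Your proof is correct, but it takes a different and more substantive route than the paper. The paper's proof never uses the chart $\mathbf w(\mathbf z)$. It reads ``Hessian restricted to $T$'' as the restriction of the ambient bilinear form $(\mathbf u,\mathbf v)\mapsto\mathbf u^\top\mathbf H(\mathbf w^\star)\mathbf v$ to $T\times T$. It then observes that $\mathbf P_T\mathbf u=\mathbf u$ and $\mathbf P_T\mathbf v=\mathbf v$ for $\mathbf u,\mathbf v\in T$, so this restriction is represented by $\mathbf P_T\mathbf H(\mathbf w^\star)\mathbf P_T$. That is exactly the step where you use $\mathbf z=\mathbf P_T\mathbf z$, and it needs no stationarity.

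You instead compute the Hessian of the pulled-back objective $F=\mathcal L\circ\mathbf w$ at $\mathbf z=\mathbf 0$. There the chain rule produces an extra sphere-curvature term, $-\langle\nabla\mathcal L(\mathbf w^\star),\mathbf w^\star\rangle\,\mathbf P_T$, coming from $D^2\mathbf w(\mathbf 0)[\mathbf z,\mathbf z]=-\|\mathbf z\|_2^2\,\mathbf w^\star$. You remove it using $\nabla\mathcal L(\mathbf w^\star)=\mathbf 0$, which holds because the model is well specified.

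The paper's version is a one-line identity, but it leaves implicit why the restricted ambient Hessian is the curvature of the objective in the $\mathbf z$ coordinates at $\mathbf z=\mathbf 0$. That is how the lemma is phrased, and how $\mathbf H_T^\star$ is meant to be interpreted afterwards. Your argument supplies that justification. It costs $C^2$ regularity of $\mathcal L$ and the first-order condition at $\mathbf w^\star$. It also shows that the identification would fail at a point of the sphere that is not an unconstrained stationary point of $\mathcal L$.
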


\begin{proof}
 The Hessian of the objective
restricted to $T$ is the bilinear form on $T\times T$ given by
\[
(\mathbf{u},\mathbf{v}) \;\mapsto\; \mathbf{u}^\top \mathbf{H}(\mathbf{w}^\star)\mathbf{v},
\qquad \mathbf{u},\mathbf{v}\in T.
\]
Fix any $\mathbf{u},\mathbf{v}\in T$. Since $\mathbf{P}_{T}$ is the orthogonal
projector onto $T$, we have $\mathbf{P}_{T}\mathbf{u}=\mathbf{u}$ and
$\mathbf{P}_{T}\mathbf{v}=\mathbf{v}$. Therefore,
\[
\mathbf{u}^\top \mathbf{H}(\mathbf{w}^\star)\mathbf{v}
=
(\mathbf{P}_{T}\mathbf{u})^\top \mathbf{H}(\mathbf{w}^\star)(\mathbf{P}_{T}\mathbf{v})
=
\mathbf{u}^\top \mathbf{P}_{T}^\top \mathbf{H}(\mathbf{w}^\star)\mathbf{P}_{T}\mathbf{v}.
\]
Using $\mathbf{P}_{T}^\top=\mathbf{P}_{T}$ (orthogonal projector), we get
\[
\mathbf{u}^\top \mathbf{H}(\mathbf{w}^\star)\mathbf{v}
=
\mathbf{u}^\top \mathbf{P}_{T} \mathbf{H}(\mathbf{w}^\star)\mathbf{P}_{T}\mathbf{v},
\qquad \forall\,\mathbf{u},\mathbf{v}\in T.
\]
Hence the restriction of $\mathbf{H}(\mathbf{w}^\star)$ to $T$ is represented by the matrix
\[
\mathbf{H}_{T}^\star := \mathbf{P}_{T} \mathbf{H}(\mathbf{w}^\star)\mathbf{P}_{T},
\]
\end{proof}

\begin{lemma}
\label{lem:isotropic-hessian-tangent}
There exists a constant $\kappa \in (0,1/4)$ such that
\begin{align}
\mathbf{H}_T^\star = \kappa\, \mathbf{P}_T. \quad \kappa = \mathbb{E}\!\left[\sigma(a)\big(1-\sigma(a)\big)\right],
\qquad
a \sim \mathcal{N}(0,1).
\end{align} 
\end{lemma}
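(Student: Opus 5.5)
We first compute the population Hessian explicitly and then restrict it to $T$. Differentiating the population gradient formula \eqref{eq:pop_grad_exact-restate} once more gives
\[
\mathbf H(\mathbf w)=\mathbb E_{\mathbf x}\!\left[\sigma'(\mathbf x^\top\mathbf w)\,\mathbf x\mathbf x^\top\right].
\]
The term $\sigma(\mathbf x_S^\top\mathbf w_S^\star)$ does not depend on $\mathbf w$, so it contributes nothing. We evaluate this at $\mathbf w=\mathbf w^\star$ with $\|\mathbf w^\star\|_2=1$. Decompose $\mathbf x=a\,\mathbf w^\star+\mathbf x_\perp$, where $a:=\langle\mathbf x,\mathbf w^\star\rangle\sim\mathcal N(0,1)$ and $\mathbf x_\perp:=\mathbf P_T\mathbf x$. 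By rotation invariance of the isotropic Gaussian, $\mathbf x_\perp\sim\mathcal N(\mathbf 0,\mathbf P_T)$ and is independent of $a$ (they are jointly Gaussian and uncorrelated, since $\mathbb E[a\,\mathbf x_\perp]=\mathbf P_T\mathbf w^\star=\mathbf 0$ by Lemma~\ref{lem:tangent_projector}).

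Next we conjugate by $\mathbf P_T$. Since $\mathbf P_T\mathbf x=\mathbf x_\perp$, we get
\[
\mathbf H_T^\star=\mathbf P_T\mathbf H(\mathbf w^\star)\mathbf P_T=\mathbb E\!\left[\sigma'(a)\,\mathbf x_\perp\mathbf x_\perp^\top\right].
\]
Independence lets this factor as $\mathbb E[\sigma'(a)]\,\mathbb E[\mathbf x_\perp\mathbf x_\perp^\top]=\kappa\,\mathbf P_T$, with $\kappa=\mathbb E[\sigma(a)(1-\sigma(a))]$. This uses Lemma~\ref{lem:tangent_hessian} to identify $\mathbf H_T^\star$ as the restricted Hessian. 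Note that no Stein-type computation is needed here. The cross terms $a\,\mathbf w^\star\mathbf x_\perp^\top$ and the $a^2\mathbf w^\star\mathbf w^{\star\top}$ term are annihilated by the projection, so only the $\mathbf x_\perp\mathbf x_\perp^\top$ term survives.

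Finally we bound $\kappa$. Since $\sigma'(t)=\sigma(t)(1-\sigma(t))\in(0,1/4]$, with equality only at $t=0$, and $a$ has a continuous density, we obtain $0<\kappa<1/4$.

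The only delicate point is justifying that differentiation may be interchanged with the expectation in the Hessian formula. This follows by dominated convergence, since $|\sigma'|\le 1/4$ and $\mathbb E\|\mathbf x\|^2<\infty$. The independence of $a$ and $\mathbf x_\perp$ is the key structural step, and it relies on the unit-norm normalization of $\mathbf w^\star$.
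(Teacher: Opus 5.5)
Your proof is correct and follows the paper's argument essentially step for step: you write $\mathbf x = a\,\mathbf w^\star + \mathbf P_T\mathbf x$, then use independence of $a\sim\mathcal N(0,1)$ and $\mathbf P_T\mathbf x\sim\mathcal N(\mathbf 0,\mathbf P_T)$ to factor $\mathbf P_T\,\mathbb E[\sigma'(a)\,\mathbf x\mathbf x^\top]\,\mathbf P_T$ into $\kappa\,\mathbf P_T$. Your extra details fill small gaps the paper leaves implicit: you derive $\mathbf H(\mathbf w)=\mathbb E[\sigma'(\mathbf x^\top\mathbf w)\,\mathbf x\mathbf x^\top]$ with a dominated-convergence justification, and you note that $\sigma'(t)=1/4$ only at $t=0$ (the paper's stated reason $\sigma(t)(1-\sigma(t))\le 1/4$ alone only gives $\kappa\le 1/4$, so this is what makes $\kappa<1/4$ strict).
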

\begin{proof}
Decomposing in orthogonal subspaces, $\mathbf{x} = a \mathbf{w}^\star + \mathbf{u}$, such that $\mathbf{u} \in T$ where $a =  \langle \mathbf{x}, \mathbf{w}^\star \rangle \sim \mathcal{N}(0,1)$, $\mathbf{u} \sim \mathcal{N}(\mathbf{0}, \mathbf{P}_T)$. Then $\mathbf{P}_T \mathbf{x} = \mathbf{u}$, and
\begin{align}
\mathbf{H}_T^\star
&= \mathbf{P}_T \,\mathbb{E}\!\left[
\sigma(a)\big(1-\sigma(a)\big)\,\mathbf{x}\mathbf{x}^{\top}
\right] \mathbf{P}_T \\
&= \mathbb{E}\!\left[\sigma(a)\big(1-\sigma(a)\big)\,\mathbf{u}\mathbf{u}^{\top}\right] \\
&= \mathbb{E}\!\left[\sigma(a)\big(1-\sigma(a)\big)\right]\,
\mathbb{E}\!\left[\mathbf{u}\mathbf{u}^{\top}\right] \\
&= \kappa\, \mathbf{P}_T,
\end{align}
since $\mathbb{E}[\mathbf{u}\mathbf{u}^{\top}] = \mathbf{P}_T$.
Finally, $0 < \kappa < 1/4$ because $0 < \sigma(t)(1-\sigma(t)) \le 1/4$ for all $t\in\mathbb{R}$.
Note that although the population Hessian $\mathbf{H}(\mathbf{w}^{\star})$ is anisotropic due to its change in the direction of $\mathbf{w}^{\star}$, its restriction to Tangent space $T$ is isotropic since it completely eliminates the direction of $\mathbf{w}^{\star}$. 
\end{proof}

\begin{lemma}
\label{lem:chart_geometry}
Let $\mathbf w^\star\in\mathbb S^{d-1}$ and $T:=\{\mathbf z\in\mathbb R^d:\langle \mathbf z,\mathbf w^\star\rangle=0\}$.
For $\mathbf z\in T$ with $\|\mathbf z\|_2\le 1/2$, define
\[
\mathbf w(\mathbf z):=\frac{\mathbf w^\star+\mathbf z}{\|\mathbf w^\star+\mathbf z\|_2}\in\mathbb S^{d-1},
\qquad
\mathbf v(\mathbf z):=\mathbf w(\mathbf z)-\mathbf w^\star.
\]
Then
\begin{align}
\|\mathbf v(\mathbf z)\|_2 &\le \|\mathbf z\|_2, \label{eq:v_le_z}\\
\|P_T \mathbf v(\mathbf z)\|_2 &\ge \frac{3}{4}\|\mathbf z\|_2, \label{eq:PTv_ge_z}\\
\big|\langle \mathbf v(\mathbf z),\mathbf w^\star\rangle\big|
&\le \frac{1}{2}\|\mathbf z\|_2^2. \label{eq:normal_small}
\end{align}
\end{lemma}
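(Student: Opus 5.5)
Since $\mathbf z\in T$ is orthogonal to the unit vector $\mathbf w^\star$, we have the exact identity $\rho:=\|\mathbf w^\star+\mathbf z\|_2=\sqrt{1+\|\mathbf z\|_2^2}$, and the whole proof reduces to scalar inequalities in $t:=\|\mathbf z\|_2\in[0,1/2]$. Decomposing $\mathbf w(\mathbf z)=\rho^{-1}\mathbf w^\star+\rho^{-1}\mathbf z$ along $\mathrm{span}(\mathbf w^\star)$ and $T$ (Lemma~\ref{lem:tangent_projector}), we get
\begin{align*}
\langle \mathbf v(\mathbf z),\mathbf w^\star\rangle=\rho^{-1}-1,\qquad P_T\mathbf v(\mathbf z)=\rho^{-1}\mathbf z .
\end{align*}
These two components determine everything, since $\|\mathbf v\|_2^2=(1-\rho^{-1})^2+\rho^{-2}t^2$.

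For \eqref{eq:PTv_ge_z}, we have $\|P_T\mathbf v\|_2=t/\sqrt{1+t^2}$. For $t\le 1/2$ this is at least $t/\sqrt{5/4}=2t/\sqrt5\approx 0.894\,t\ge \tfrac34 t$. For \eqref{eq:normal_small}, $|\langle\mathbf v,\mathbf w^\star\rangle|=1-(1+t^2)^{-1/2}$. Using $(1+u)^{-1/2}\ge 1-u/2$ (convexity), this is at most $t^2/2$.

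For \eqref{eq:v_le_z}, we compute $\|\mathbf v\|_2^2=\|\mathbf w\|^2+\|\mathbf w^\star\|^2-2\langle\mathbf w,\mathbf w^\star\rangle=2-2/\rho$. We must show $2-2(1+t^2)^{-1/2}\le t^2$. Setting $u=t^2$, this is $(1+u)^{-1/2}\ge 1-u/2$, which is the same convexity (Bernoulli) inequality as before. Alternatively, geometrically, the chord between two unit vectors at angle $\theta=\arctan t$ has length $2\sin(\theta/2)\le\theta\le\tan\theta=t$.

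There is no real obstacle here. The only point needing care is to record the orthogonality identity $\rho^2=1+t^2$ at the outset. Without it, the bounds on $\rho$ would only follow from the triangle inequality, and the constant $3/4$ in \eqref{eq:PTv_ge_z} could not be obtained that way.
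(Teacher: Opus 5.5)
Your proposal is correct and follows essentially the same route as the paper: the orthogonality identity $\|\mathbf w^\star+\mathbf z\|_2=\sqrt{1+\|\mathbf z\|_2^2}$, the split $P_T\mathbf v=\mathbf z/\rho$ and $\langle\mathbf v,\mathbf w^\star\rangle=\rho^{-1}-1$, and $\|\mathbf v\|_2^2=2-2/\rho$, followed by scalar inequalities in $\|\mathbf z\|_2$. The only cosmetic difference is that you obtain both \eqref{eq:v_le_z} and \eqref{eq:normal_small} from the single tangent-line bound $(1+u)^{-1/2}\ge 1-u/2$, whereas the paper proves the first via monotonicity of $f(u)=u-2+2/\sqrt{1+u}$ and the second by rationalizing $\rho^{-1}-1$.
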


\begin{proof}
Since $\mathbf z\perp \mathbf w^\star$, we have $\|\mathbf w^\star+\mathbf z\|_2=\sqrt{1+\|\mathbf z\|_2^2}$ and thus
$\mathbf w(\mathbf z)=(\mathbf w^\star+\mathbf z)/\sqrt{1+\|\mathbf z\|_2^2}$.
Then
\[
\mathbf v(\mathbf z)=\frac{\mathbf z}{\sqrt{1+\|\mathbf z\|_2^2}}
+\Big(\frac{1}{\sqrt{1+\|\mathbf z\|_2^2}}-1\Big)\mathbf w^\star.
\]
This gives $P_T\mathbf v(\mathbf z)=\mathbf z/\sqrt{1+\|\mathbf z\|_2^2}$ (from Lemma~\ref{lem:tangent_projector}), hence
$\|P_T\mathbf v(\mathbf z)\|_2=\|\mathbf z\|_2/\sqrt{1+\|\mathbf z\|_2^2}\ge (3/4)\|\mathbf z\|_2$ for
$\|\mathbf z\|_2\le 1/2$. 

 We further have
\[
\|\mathbf{w}^\star+\mathbf{z}\|_2^2=\|\mathbf{w}^\star\|_2^2+\|\mathbf{z}\|_2^2=1+\|\mathbf{z}\|_2^2,
\qquad\text{hence}\qquad
\|\mathbf{w}^\star+\mathbf{z}\|_2=\sqrt{1+\|\mathbf{z}\|_2^2}.
\]
A direct computation then yields
\[
\|\mathbf{w}^\star(\mathbf{z})-\mathbf{w}^\star\|_2^2
=
2-\frac{2}{\sqrt{1+\|\mathbf{z}\|_2^2}}.
\]
Finally, for $u:=\|\mathbf{z}\|_2^2\ge 0$, define
\[
f(u):=u-2+\frac{2}{\sqrt{1+u}}.
\]
Then $f(0)=0$ and $f'(u)=1-(1+u)^{-3/2}\ge 0$, so $f(u)\ge 0$ for all $u\ge 0$, i.e.,
\[
2-\frac{2}{\sqrt{1+\|\mathbf{z}\|_2^2}} \le \|\mathbf{z}\|_2^2.
\]
Therefore,
\[
\|\mathbf{v}(\mathbf{z})\|_2=\|\mathbf{w}(\mathbf{z})-\mathbf{w}^\star\|_2 \le \|\mathbf{z}\|_2.
\]

We also have 
\begin{align*}
\langle\mathbf{v}(\mathbf{z}),\mathbf{w}^{*}\rangle
&=\frac{1}{\sqrt{1+\|\mathbf{z}\|_{2}^{2}}}-1\qquad(\text{since }\langle\mathbf{z},\mathbf{w}^{*}\rangle=0\text{ and }\|\mathbf{w}^{*}\|_{2}=1)\\[2mm]
&=-\frac{\|\mathbf{z}\|_{2}^{2}}{\sqrt{1+\|\mathbf{z}\|_{2}^{2}}\bigl(1+\sqrt{1+\|\mathbf{z}\|_{2}^{2}}\bigr)}.
\end{align*}
For $\|\mathbf{z}\|_{2}\le\frac12$ we have $\sqrt{1+\|\mathbf{z}\|_{2}^{2}}\ge 1$, so the denominator is at least $1\cdot(1+1)=2$.  Therefore
\[
|\langle \mathbf{v}(\mathbf{z}),\mathbf{w}^{*}\rangle|
\le\frac{\|\mathbf{z}\|_{2}^{2}}{2}.
\]
\end{proof}

\begin{lemma}
\label{lem:chart_quadratic_lower}
% Let $\mathcal L(\mathbf w)=\mathbb E_{(\mathbf x,y)}\!\left[\log\!\big(1+e^{-y\langle\mathbf x,\mathbf w\rangle}\big)\right]$
% be the population logistic risk under isotropic Gaussian design $\mathbf x\sim\mathcal N(\mathbf 0,\mathbf I_d)$ and a
% well-specified logistic model with ground truth $\mathbf w^\star\in\mathbb S^{d-1}$.
% Let $T=\{\mathbf z:\langle \mathbf z,\mathbf w^\star\rangle=0\}$ and define the chart
% $\mathbf w(\mathbf z)=(\mathbf w^\star+\mathbf z)/\|\mathbf w^\star+\mathbf z\|_2$ for $\mathbf z\in T$ with $\|\mathbf z\|_2<1$.
Define the chart excess risk $\Delta(\mathbf z):=\mathcal L(\mathbf w(\mathbf z))-\mathcal L(\mathbf w^\star)$. Then for any $\mathbf z\in T$ satisfying
\begin{align}
\|\mathbf z\|_2 \le r_{\mathrm{chart}}
:= \min\Big\{\frac{1}{2},\,\frac{1}{R}\Big\}, \label{eq:chart_radius}
\end{align}
where $R=\frac{\mathbb E[\|\mathbf{x}\|_2^3]}{\kappa\,\mathbb E[\|\mathbf{x}\|_2^2]}$ , 
$|g'''(t)|\le R\|\mathbf v\|_2 g''(t)$ holds for the line restriction
$g(t)=\mathcal L(\mathbf w^\star+t\mathbf v)$,
we have the quadratic lower bound
% \begin{align}
% \Delta(\mathbf z)
% \;\ge\;
% \frac{1}{3}\,(\mathbf w(\mathbf z)-\mathbf w^\star)^\top H(\mathbf w^\star)(\mathbf w(\mathbf z)-\mathbf w^\star)
% \;\ge\;
% \frac{3\kappa}{16}\,\|\mathbf z\|_2^2. \label{eq:delta_quad_lower}
% \end{align}
% Equivalently,
\[
\Delta(\mathbf{z})\ge
\begin{cases}
\displaystyle
\frac{1}{3}\,\frac{1}{\left(1+\frac{1}{R^2}\right)^2}\;
\mathbf{z}^\top H_T^\star \mathbf{z},
& R\ge 2, \\[1.2ex]
\displaystyle
\frac{16}{75}\;
\mathbf{z}^\top H_T^\star \mathbf{z} ,
& R\le 2.
\end{cases}
\]
\end{lemma}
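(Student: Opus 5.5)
We lower-bound $\Delta(\mathbf z)$ by restricting the population risk to the segment from $\mathbf w^\star$ to $\mathbf w(\mathbf z)$ and applying the self-concordance lower bound of Proposition~1 in \cite{bach2010self}. Set $\mathbf v:=\mathbf v(\mathbf z)=\mathbf w(\mathbf z)-\mathbf w^\star$ and $g(t)=\mathcal L(\mathbf w^\star+t\mathbf v)$, so that $\Delta(\mathbf z)=g(1)-g(0)$ and $g'(0)=0$ because $\mathbf w^\star$ is the population minimizer.

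\textbf{Step 1 (self-concordance).} We take as given the stated hypothesis $|g'''(t)|\le R\|\mathbf v\|_2\, g''(t)$. Its motivation is that $g'''(t)=\mathbb E[\sigma''(\cdot)(\mathbf x^\top\mathbf v)^3]$ with $|\sigma''|\le\sigma'$. Integrating $|(\log g'')'|\le R\|\mathbf v\|_2$ gives
\[
g''(t)\ge e^{-R\|\mathbf v\|_2 t}\,g''(0).
\]
Integrating twice then yields the standard Bach inequality
\[
g(1)-g(0)\ge \frac{g''(0)}{(R\|\mathbf v\|_2)^2}\bigl(e^{-R\|\mathbf v\|_2}+R\|\mathbf v\|_2-1\bigr).
\]
When $R\|\mathbf v\|_2\le 1$, the elementary bound $e^{-u}+u-1\ge u^2/3$ on $[0,1]$ gives
\[
\Delta(\mathbf z)\ge \tfrac13\,\mathbf v^\top \mathbf H(\mathbf w^\star)\mathbf v .
\]
Lemma~\ref{lem:chart_geometry} gives $\|\mathbf v\|_2\le\|\mathbf z\|_2\le 1/R$, which is exactly why the chart radius contains $1/R$.

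\textbf{Step 2 (transfer to the tangent quadratic form).} Decompose $\mathbf v=P_T\mathbf v+\langle\mathbf v,\mathbf w^\star\rangle\mathbf w^\star$. Under the Gaussian model, $\mathbf H(\mathbf w^\star)$ commutes with $P_T$: the cross terms $\mathbb E[\sigma'(a)\,a\,\mathbf u]$ vanish because $\mathbf u$ is independent of $a$ and centered, so $\mathbf H$ is block diagonal. Since the normal block is nonnegative, we may drop it:
\[
\mathbf v^\top\mathbf H(\mathbf w^\star)\mathbf v\ge (P_T\mathbf v)^\top \mathbf H_T^\star (P_T\mathbf v)=\kappa\|P_T\mathbf v\|_2^2,
\]
using Lemma~\ref{lem:isotropic-hessian-tangent}. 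From the proof of Lemma~\ref{lem:chart_geometry}, $P_T\mathbf v=\mathbf z/\sqrt{1+\|\mathbf z\|_2^2}$ exactly. Hence
\[
\Delta(\mathbf z)\ge \frac{1}{3(1+\|\mathbf z\|_2^2)}\,\mathbf z^\top\mathbf H_T^\star\mathbf z .
\]

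\textbf{Step 3 (case split on $R$).} If $R\ge2$, then $r_{\rm chart}=1/R$ and $\|\mathbf z\|_2^2\le 1/R^2$. This gives the factor $(1+1/R^2)^{-1}$, which is at least the stated $(1+1/R^2)^{-2}$, so the claimed bound follows. If $R\le 2$, then $r_{\rm chart}=1/2$ and $R\|\mathbf v\|_2\le1$ still holds. We then have $1+\|\mathbf z\|_2^2\le 5/4$ and $\|P_T\mathbf v\|_2^2\ge \tfrac{16}{25}\|\mathbf z\|_2^2$ by \eqref{eq:PTv_ge_z}, which gives the constant $\tfrac13\cdot\tfrac{16}{25}=\tfrac{16}{75}$ as stated.

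\textbf{Main obstacle.} The main obstacle is Step~1: the self-concordance constant must be uniform along the segment and in the direction $\mathbf v$. If verifying the hypothesis with the stated $R$ is required rather than assumed, I would first try the following. Bound $|g'''|\le\mathbb E[\sigma'(\mathbf x^\top\mathbf w_t)|\mathbf x^\top\mathbf v|^3]$. Then compare it to $g''=\mathbb E[\sigma'(\mathbf x^\top\mathbf w_t)(\mathbf x^\top\mathbf v)^2]$ via a Gaussian moment ratio on the line, using $\sigma'\ge$ the curvature floor $\kappa$ near $\mathbf w^\star$. This comparison is the delicate point and may only hold up to constants.
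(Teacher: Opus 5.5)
Your proposal is correct and follows the paper's proof essentially step for step. Both arguments use Bach's self-concordance bound along the segment from $\mathbf w^\star$ to $\mathbf w(\mathbf z)$, the factor $\frac13$ from $R\|\mathbf v\|_2\le 1$, the identity $P_T\mathbf v=\mathbf z/\sqrt{1+\|\mathbf z\|_2^2}$, and the case split on $R$; your block-diagonal argument for $\mathbf H(\mathbf w^\star)$ actually justifies the inequality that the paper writes as an equality. One small correction: in the case $R\le 2$, the bound $\|P_T\mathbf v\|_2^2\ge\frac{16}{25}\|\mathbf z\|_2^2$ does not follow from \eqref{eq:PTv_ge_z}, which only gives $\frac{9}{16}$, but it does follow from your exact formula, since $\|P_T\mathbf v\|_2^2=\|\mathbf z\|_2^2/(1+\|\mathbf z\|_2^2)\ge\frac45\|\mathbf z\|_2^2$ for $\|\mathbf z\|_2\le\frac12$.
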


\begin{proof}
Let $\mathbf v(\mathbf z)=\mathbf w(\mathbf z)-\mathbf w^\star$ and consider the line restriction
$f(t)=\mathcal L(\mathbf w^\star+t\mathbf v(\mathbf z) )$ for $t\in[0,1]$.
By the modified self-concordance hypothesis Proposition-1 in \cite{bach2010self}, applied at $\mathbf w^\star$ yields
\begin{align}
\mathcal L(\mathbf w^\star+\mathbf v(\mathbf z))\ge \mathcal L(\mathbf w^\star)+ \mathbf v(\mathbf z)^\top\nabla\mathcal L(\mathbf w^\star)
+\frac{\mathbf v(\mathbf z)^\top H(\mathbf w^\star)\mathbf v(\mathbf z)}{R^2\|\mathbf v(\mathbf z)\|_2^2}\Big(e^{-R\|\mathbf v(\mathbf z)\|_2}+R\|\mathbf v(\mathbf z)\|_2-1\Big).
\end{align}
holds for \[
R
=
\frac{\mathbb E[\|\mathbf x\|_2^3]}
{\kappa\,\mathbb E[\|\mathbf x\|_2^2]},
\qquad
\kappa := \mathbb E[\sigma(G)(1-\sigma(G))],\; G\sim\mathcal N(0,1).
\]
since defining the line restriction $f(t):=\mathcal L(\mathbf w^\star+t\mathbf v).$ for all $t\in\mathbb R$, we get $|f'''(t)| \;\le\; R\|\mathbf v\|_2\, f''(t)$.

Since $\mathbf w^\star$ minimizes $\mathcal L$ on $\mathbb S^{d-1}$ and the model is well specified, we have
$\nabla\mathcal L(\mathbf w^\star)=\mathbf 0$.
If $\|\mathbf z\|_2\le 1/R$, Lemma~\ref{lem:chart_geometry} gives $\|\mathbf v\|_2\le \|\mathbf z\|_2\le 1/R$, hence
$u:=R\|\mathbf v\|_2\le 1$ and the scalar bound
\[
\frac{e^{-u}+u-1}{u^2}\ge \frac13
\]
implies
\[
\Delta(\mathbf z)=\mathcal L(\mathbf w^\star+\mathbf v(\mathbf z))-\mathcal L(\mathbf w^\star)
\ge \frac13\,\mathbf v(\mathbf z)^\top H(\mathbf w^\star)\mathbf v(\mathbf z).
\]
Therefore,
\[
\Delta(\mathbf z)
\;\ge\;
\frac13\,\mathbf v^\top H(\mathbf w^\star)\mathbf v= \frac13\frac{1}{(1+\|\mathbf z\|^2_{2}) }\mathbf z^{T}H^{*}_{T}\mathbf z
\]

Recall that the chart radius satisfies
\[
\|\mathbf{z}\|_2 \le r_{\mathrm{chart}} := \min\Big\{\tfrac12,\tfrac1R\Big\}.
\]
Therefore,
\[
(1+\|\mathbf{z}\|_2^2) \le (1+r_{\mathrm{chart}}^2)^,
\qquad
\frac{1}{(1+\|\mathbf{z}\|_2^2)} \ge \frac{1}{(1+r_{\mathrm{chart}}^2)}.
\]
Substituting into the previous bound yields
\[
\Delta(\mathbf{z})
\;\ge\;
\frac{1}{3}\,\frac{1}{(1+r_{\mathrm{chart}}^2)}\;
\mathbf{z}^\top H_T^\star \mathbf{z},
\qquad
r_{\mathrm{chart}}=\min\Big\{\tfrac12,\tfrac1R\Big\}.
\]

Equivalently, this gives the following two cases:
\[
\Delta(\mathbf{z})\ge
\begin{cases}
\displaystyle
\frac{1}{3}\,\frac{1}{\left(1+\frac{1}{R^2}\right)}\;
\mathbf{z}^\top H_T^\star \mathbf{z},
& R\ge 2, \\[1.2ex]
\displaystyle
\frac{4}{15}\;
\mathbf{z}^\top H_T^\star \mathbf{z},
& R\le 2.
\end{cases}
\]

\end{proof}

\begin{lemma}
\label{lem:gaussnorm}
There exists an absolute constant $c>0$ such that for any $\delta\in(0,1)$,
then defining the event,
\begin{equation}
\mathcal E_{\mathrm A}
\;:=\;
\left\{
\max_{1\le i\le n}\|\mathbf x_i\|_2
\;\le\;
c\Bigl(\sqrt d+\sqrt{\log(n/\delta)}\Bigr)
\right\}.
\end{equation}
where $\mathbf x_1,\dots,\mathbf x_n\sim\mathcal N(\mathbf 0,\mathbf I_d)$ are i.i.d, we have $\mathbb{P}(\mathcal E_{\mathrm A}) \geq 1-\delta$ and define $R_{e}:=c\Bigl(\sqrt d+\sqrt{\log(n/\delta)})$. 
\end{lemma}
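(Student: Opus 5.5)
The plan is to bound each $\|\mathbf x_i\|_2$ by Gaussian concentration of the Euclidean norm and then take a union bound over the $n$ samples. The map $\mathbf x\mapsto\|\mathbf x\|_2$ is $1$-Lipschitz on $\mathbb R^d$, so the Gaussian concentration inequality for Lipschitz functions (Tsirelson--Ibragimov--Sudakov) gives, for each fixed $i$ and every $t\ge 0$,
\begin{align*}
\mathbb P\bigl(\|\mathbf x_i\|_2 \ge \mathbb E\|\mathbf x_i\|_2 + t\bigr)\le e^{-t^2/2}.
\end{align*}
By Jensen, $\mathbb E\|\mathbf x_i\|_2\le(\mathbb E\|\mathbf x_i\|_2^2)^{1/2}=\sqrt d$. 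This yields $\mathbb P(\|\mathbf x_i\|_2\ge \sqrt d+t)\le e^{-t^2/2}$.

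Next, choose $t=\sqrt{2\log(n/\delta)}$, so that each per-sample failure probability is at most $\delta/n$. A union bound over $i=1,\dots,n$ then gives
\begin{align*}
\mathbb P\Bigl(\max_{1\le i\le n}\|\mathbf x_i\|_2\ge \sqrt d+\sqrt{2\log(n/\delta)}\Bigr)\le n\cdot\frac{\delta}{n}=\delta.
\end{align*}
Since $\sqrt d+\sqrt{2\log(n/\delta)}\le \sqrt2\bigl(\sqrt d+\sqrt{\log(n/\delta)}\bigr)$, the event $\mathcal E_{\mathrm A}$ holds with probability at least $1-\delta$ for the absolute constant $c=\sqrt2$, and $R_e$ is defined accordingly.

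There is no real obstacle here; the only point needing care is the concentration step. If one prefers to avoid the Lipschitz concentration theorem, an alternative is the Laurent--Massart $\chi^2_d$ tail bound $\mathbb P(\|\mathbf x\|_2^2\ge d+2\sqrt{dt}+2t)\le e^{-t}$. Then $d+2\sqrt{dt}+2t\le(\sqrt d+\sqrt{2t})^2$, and taking $t=\log(n/\delta)$ gives the same conclusion with a slightly different absolute constant.
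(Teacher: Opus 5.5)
Your proof is correct and follows the same route as the paper: a per-sample tail bound on $\|\mathbf x_i\|_2$, then a union bound over the $n$ samples at level $\log(n/\delta)$. The only difference is that the paper cites Theorem~3.1.1 of Vershynin with an unspecified absolute constant, whereas you derive the tail bound from Gaussian Lipschitz concentration plus Jensen and obtain the explicit constant $c=\sqrt2$.
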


\begin{proof}
Since, $\mathbf x\sim\mathcal N(\mathbf 0,\mathbf I_d)$. By Theorem~3.1.1 in \cite{vershynin2018high}, 
there exists an absolute constant $c>0$ such that for all $t>0$,
\[
\mathbb P\!\left(\|\mathbf x\|_2 \ge c\bigl(\sqrt d+\sqrt t\bigr)\right)
\;\le\; e^{-t}.
\]
Applying this bound to each $\mathbf x_i$ and using a union bound, we obtain
\[
\mathbb P\!\left(
\max_{1\le i\le n}\|\mathbf x_i\|_2
\;\ge\;
c\bigl(\sqrt d+\sqrt t\bigr)
\right)
\;\le\;
n\,e^{-t}.
\]
Choosing $t=\log(n/\delta)$ gives
\[
\mathbb P\!\left(
\max_{1\le i\le n}\|\mathbf x_i\|_2
\;\ge\;
c\Bigl(\sqrt d+\sqrt{\log(n/\delta)}\Bigr)
\right)
\;\le\;
\delta.
\]
\end{proof}

\begin{lemma}
\label{lem:tangent_hessian_concentration}
Let the empirical and population Hessian are respectivelty defined as 
\[
\widehat{\mathbf H}_T
:=
\frac{1}{n}\sum_{i=1}^n
\mathbf P_T\!\bigl(a_i\,\mathbf x_i\mathbf x_i^\top\bigr)\mathbf P_T,
\qquad
\mathbf H_T^\star
:=
\mathbb E\!\left[\mathbf P_T(a\,\mathbf x\mathbf x^\top)\mathbf P_T\right]
= \kappa\,\mathbf P_T,
\]
where $\mathbf x_i\sim\mathcal N(\mathbf 0,\mathbf I_d)$ are i.i.d. $a_i= \sigma(\bigl(y_i\langle\mathbf x_i,\mathbf w^\star\rangle\bigr))(1-\sigma(\bigl(y_i\langle\mathbf x_i,\mathbf w^\star\rangle\bigr)))\in(0,1/4]$ and $\kappa:=\mathbb E[a]$. Then define the event $\mathcal E_{\mathrm B}$ for an absolute constant $c_{1}>0$, 
\begin{equation}
\mathcal E_{\mathrm B}
\;:=\;
\left\{
(1-\varepsilon)\,\mathbf H_T^\star
\;\preceq\;
\widehat{\mathbf H}_T
\;\preceq\;
(1+\varepsilon)\,\mathbf H_T^\star
\right\},
\qquad  \text{with} \quad 
\varepsilon
:=
c_{1}\sqrt{\frac{d+\log(1/\delta_{1})}{n}}
\end{equation}
we have $\mathbb P(\mathcal E_{\mathrm B})\geq 1-\delta_{1}$. 
\end{lemma}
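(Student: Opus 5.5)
The plan is to reduce $\widehat{\mathbf H}_T$ to an empirical covariance of i.i.d.\ sub-Gaussian vectors in the $(d-1)$-dimensional space $T$, and then invoke a standard covariance-estimation bound.

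\emph{Step 1: simplify the weights and decouple them from the tangent part.} Since $\sigma(-t)(1-\sigma(-t))=\sigma(t)(1-\sigma(t))$, the label drops out and $a_i=\sigma'(\langle \mathbf x_i,\mathbf w^\star\rangle)$. As in the proof of Lemma~\ref{lem:isotropic-hessian-tangent}, write $\mathbf x_i=g_i\mathbf w^\star+\mathbf u_i$. Here $g_i=\langle\mathbf x_i,\mathbf w^\star\rangle\sim\mathcal N(0,1)$ and $\mathbf u_i=\mathbf P_T\mathbf x_i\sim\mathcal N(\mathbf 0,\mathbf P_T)$. Because $\mathbf x_i$ is isotropic Gaussian, $g_i$ and $\mathbf u_i$ are independent. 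Using Lemma~\ref{lem:tangent_projector}, $\mathbf P_T\mathbf x_i\mathbf x_i^\top\mathbf P_T=\mathbf u_i\mathbf u_i^\top$, so
\[
\widehat{\mathbf H}_T=\frac1n\sum_{i=1}^n \boldsymbol\xi_i\boldsymbol\xi_i^\top,\qquad \boldsymbol\xi_i:=\sqrt{a_i}\,\mathbf u_i ,
\]
where $a_i$ is a function of $g_i$ only. The $\boldsymbol\xi_i$ are therefore i.i.d., centered, supported in $T$, with $\mathbb E[\boldsymbol\xi_i\boldsymbol\xi_i^\top]=\mathbb E[a]\,\mathbb E[\mathbf u\mathbf u^\top]=\kappa\mathbf P_T=\mathbf H_T^\star$.

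\emph{Step 2: sub-Gaussianity relative to the covariance.} For a unit vector $\mathbf v\in T$, $\langle\boldsymbol\xi_i,\mathbf v\rangle=\sqrt{a_i}\langle\mathbf u_i,\mathbf v\rangle$. Since $0<a_i\le 1/4$ and $\langle\mathbf u_i,\mathbf v\rangle\sim\mathcal N(0,1)$, its $\psi_2$-norm is bounded by an absolute constant. Moreover $\kappa=\mathbb E[\sigma'(G)]$ is a fixed absolute constant (about $0.2$), so
\[
\|\langle\boldsymbol\xi_i,\mathbf v\rangle\|_{\psi_2}\le K\sqrt{\mathbf v^\top\mathbf H_T^\star\mathbf v},
\]
with $K$ an absolute constant depending only on $\kappa$.

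\emph{Step 3: covariance concentration.} Identify $T$ with $\mathbb R^{d-1}$ and apply the standard covariance estimation bound for sub-Gaussian vectors (e.g.\ Vershynin, \emph{High-Dimensional Probability}, Thm.~4.6.1 / Exercise~4.7.3). With probability at least $1-2e^{-t}$,
\[
\|\widehat{\mathbf H}_T-\mathbf H_T^\star\|_{\mathrm{op}}\le CK^2\Big(\sqrt{\tfrac{d+t}{n}}+\tfrac{d+t}{n}\Big)\|\mathbf H_T^\star\|_{\mathrm{op}} .
\]
Because $\mathbf H_T^\star=\kappa\mathbf P_T$ is isotropic on $T$ and both matrices vanish on $\mathrm{span}(\mathbf w^\star)$, this operator-norm bound translates directly into the two-sided Loewner bound $(1-\varepsilon)\mathbf H_T^\star\preceq\widehat{\mathbf H}_T\preceq(1+\varepsilon)\mathbf H_T^\star$. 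Take $t=\log(2/\delta_1)$ to get probability at least $1-\delta_1$.

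\emph{Main obstacle: the linear term $(d+t)/n$.} This term must be absorbed into the square-root rate. In the regime $n\gtrsim d+\log(1/\delta_1)$, which is the regime of Theorem~\ref{main-theorem-restate}, we have $(d+t)/n\le\sqrt{(d+t)/n}$. Thus both terms combine into $\varepsilon=c_1\sqrt{(d+\log(1/\delta_1))/n}$, and $\log(2/\delta_1)$ is absorbed into $c_1$. Outside that regime the statement as written would only hold with $\varepsilon$ replaced by $\max\{\varepsilon,\varepsilon^2\}$. In that case I would instead note that the lemma is only used for $n\gtrsim d+\log(1/\delta_1)$, where $\varepsilon<1$.
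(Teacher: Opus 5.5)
Your proof is correct, and it takes a somewhat different route from the paper. The paper splits $\widehat{\mathbf H}_T-\mathbf H_T^\star=\mathbf A+\mathbf B$, with $\mathbf A=\frac1n\sum_i a_i(\boldsymbol\xi_i\boldsymbol\xi_i^\top-\mathbf I)$ and $\mathbf B=\bigl(\frac1n\sum_i(a_i-\kappa)\bigr)\mathbf I$. It bounds the scalar weight fluctuation $\mathbf B$ by Hoeffding, bounds $\mathbf A$ conditionally on $\{a_i\}$ by a sub-Gaussian sample-covariance inequality, and union-bounds to get an absolute operator-norm bound $c_1\sqrt{(d+\log(1/\delta))/n}$.

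You instead absorb the weight into the vector $\sqrt{a_i}\,\mathbf P_T\mathbf x_i$, note that these vectors are i.i.d.\ with covariance exactly $\kappa\mathbf P_T$ and $\psi_2$-norm controlled by that covariance, and apply a single covariance-estimation theorem. This buys you three things:
\begin{itemize}
\item You avoid the conditioning step. Given $\{a_i\}$, the paper's summands are independent but not identically distributed, so the i.i.d.\ bound it cites needs a small extension.
\item You need no union bound.
\item The error comes out directly relative to $\|\mathbf H_T^\star\|=\kappa$.
\end{itemize}
You also make explicit two points the paper leaves implicit. First, $y_i$ drops out of $a_i$ because $\sigma'$ is even. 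Second, the operator-norm bound on $T$ becomes the two-sided Loewner bound because $\mathbf H_T^\star=\kappa\mathbf P_T$ and both matrices vanish on $\mathrm{span}(\mathbf w^\star)$. The paper's split has the modest advantage of isolating the one-dimensional weight fluctuation.

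Your caveat about the linear term is apt and is not a defect of your argument. The paper absorbs $(d+\log(2/\delta))/n$ into the square root in the same way, citing $n\gtrsim d$. So the lemma genuinely needs $n\gtrsim d+\log(1/\delta_1)$: for $d\gg n$ the top eigenvalue of $\widehat{\mathbf H}_T$ is of order $d/n$, which no $\varepsilon$ of order $\sqrt{d/n}$ can cover. That is also the regime in which the lemma is applied.
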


\begin{proof}
Write $\mathbf P_T\mathbf x_i=\boldsymbol\xi_i$, where
\[
\boldsymbol\xi_i\sim\mathcal N(\mathbf 0,\mathbf I_{d-1}),
\qquad
g_i:=\langle\mathbf x_i,\mathbf w^\star\rangle\sim\mathcal N(0,1),
\]
and note that $a_i=\sigma(g_i)(1-\sigma(g_i))$ depends only on $g_i$.
Thus
\[
\mathbf P_T(a_i\mathbf x_i\mathbf x_i^\top)\mathbf P_T
\;\stackrel d=\;
a_i\,\boldsymbol\xi_i\boldsymbol\xi_i^\top,
\qquad
a_i\;\perp\;\boldsymbol\xi_i,
\qquad
\mathbb E[a_i]=\kappa.
\]

Decomposing the Hessian difference into the orthogonal subspaces, we get 
\[
\widehat{\mathbf H}_T-\mathbf H_T^\star
=
\underbrace{\frac1n\sum_{i=1}^n a_i(\boldsymbol\xi_i\boldsymbol\xi_i^\top-\mathbf I)}_{=: \mathbf A}
\;+\;
\underbrace{\Bigl(\frac1n\sum_{i=1}^n(a_i-\kappa)\Bigr)\mathbf I}_{=: \mathbf B}.
\]

\emph{Control of $\mathbf B$.}: 
Since $a_i\in[0,1/4]$ are i.i.d., Hoeffding’s inequality implies that
with probability at least $1-\delta/2$,
\[
\|\mathbf B\|
=
\Bigl|\frac1n\sum_{i=1}^n(a_i-\kappa)\Bigr|
\le
\frac14\sqrt{\frac{2\log(2/\delta)}{n}}.
\]

\emph{Control of $\mathbf A$.}
Conditionally on $\{a_i\}$, the vectors $\sqrt{a_i}\boldsymbol\xi_i$
are independent, mean-zero, subgaussian vectors in $\mathbb R^{d-1}$
with covariance $a_i\mathbf I$.
By the matrix Bernstein inequality for sample covariance matrices
of subgaussian vectors \cite{vershynin2018high},
there exists an absolute constant $C>0$ such that with probability
at least $1-\delta/2$,
\[
\|\mathbf A\|
\le
C\left(
\sqrt{\frac{d+\log(2/\delta)}{n}}
+
\frac{d+\log(2/\delta)}{n}
\right)\max_i a_i
\le
C\sqrt{\frac{d+\log(2/\delta)}{n}},
\]
where we used $\max_i a_i\le 1/4$ and $n\gtrsim d$.

Combining the bounds for $\mathbf A$ and $\mathbf B$ and applying a union
bound yields, with probability at least $1-\delta$,
\[
\bigl\|\widehat{\mathbf H}_T-\mathbf H_T^\star\bigr\|
\le
c_{1}\sqrt{\frac{d+\log(1/\delta)}{n}}.
\]
\end{proof}

\begin{lemma}
\label{lem:posterior_tangent}
The posterior induced on the tangent coordinates satisfies
\[
\Pi(d\mathbf z \mid D)
\;\propto\;
\exp\!\big(-n\,\widehat{\mathcal L}_n(\mathbf{w}(\mathbf z))\big)\,
(1+\|\mathbf z\|_2^2)^{-d/2}\,d\mathbf z,
\qquad
\mathbf z\in T.
\]
\end{lemma}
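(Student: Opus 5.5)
The plan is a change of variables from the uniform surface measure $d\mu$ on $\mathbb S^{d-1}$ to Lebesgue measure on the $(d-1)$-dimensional tangent space $T$. The chart $\mathbf z\mapsto \mathbf w(\mathbf z)=(\mathbf w^\star+\mathbf z)/\|\mathbf w^\star+\mathbf z\|_2$ is the gnomonic (central) projection. It is a smooth bijection from $T$ onto the open hemisphere $\{\mathbf w\in\mathbb S^{d-1}:\langle \mathbf w,\mathbf w^\star\rangle>0\}$, with inverse $\mathbf w\mapsto \mathbf w/\langle \mathbf w,\mathbf w^\star\rangle-\mathbf w^\star$. The equator has $\mu$-measure zero. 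The opposite hemisphere is covered by the analogous chart at $-\mathbf w^\star$. So the lemma is to be read as the posterior restricted to (equivalently, conditioned on) the hemisphere around $\mathbf w^\star$, which is the only region used later. I would state this convention explicitly first.

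Since the Gibbs factor $\exp(-n\widehat{\mathcal L}_n(\mathbf w))$ simply composes with the chart, the proof reduces to showing
\[
d\mu(\mathbf w(\mathbf z))=\sqrt{\det\bigl(D\mathbf w(\mathbf z)^\top D\mathbf w(\mathbf z)\bigr)}\,d\mathbf z,
\]
with the Gram determinant taken on $T$, and to evaluating this determinant.

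Set $\rho:=\sqrt{1+\|\mathbf z\|_2^2}$, which equals $\|\mathbf w^\star+\mathbf z\|_2$ because $\mathbf z\perp\mathbf w^\star$ (as in Lemma~\ref{lem:chart_geometry}). Differentiating gives, for $\mathbf h\in T$,
\[
D\mathbf w(\mathbf z)\mathbf h=\frac{\mathbf h}{\rho}-\frac{(\mathbf w^\star+\mathbf z)\,\langle \mathbf z,\mathbf h\rangle}{\rho^3}.
\]
Expanding the squared norm, and using $\langle \mathbf h,\mathbf w^\star+\mathbf z\rangle=\langle\mathbf z,\mathbf h\rangle$ and $\|\mathbf w^\star+\mathbf z\|_2^2=\rho^2$, yields
\[
\|D\mathbf w\,\mathbf h\|_2^2=\frac{\|\mathbf h\|_2^2}{\rho^2}-\frac{\langle\mathbf z,\mathbf h\rangle^2}{\rho^4}.
\]
Hence the Gram matrix on $T$ is
\[
\mathbf G=\rho^{-2}\bigl(\mathbf P_T-\mathbf z\mathbf z^\top/\rho^2\bigr).
\]
Its eigenvalues are $\rho^{-2}$ with multiplicity $d-2$ (on $T\cap\mathbf z^\perp$) and $\rho^{-2}(1-\|\mathbf z\|_2^2/\rho^2)=\rho^{-4}$ along $\mathbf z$. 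Therefore
\[
\det\mathbf G=\rho^{-2(d-2)}\rho^{-4}=\rho^{-2d},
\]
and the area element is $\rho^{-d}\,d\mathbf z=(1+\|\mathbf z\|_2^2)^{-d/2}\,d\mathbf z$.

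Combining these, for any measurable $A\subseteq T$,
\[
\Pi(\mathbf w(A)\mid D)\propto\int_A \exp\bigl(-n\widehat{\mathcal L}_n(\mathbf w(\mathbf z))\bigr)(1+\|\mathbf z\|_2^2)^{-d/2}\,d\mathbf z,
\]
which is the claimed density. The normalizing constant is absorbed into $\propto$. It is finite because $\widehat{\mathcal L}_n\ge 0$ and $(1+\|\mathbf z\|_2^2)^{-d/2}$ is integrable on $\mathbb R^{d-1}$.

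The only delicate points are these. First, one must be careful that the Jacobian is computed intrinsically on $T$, which has dimension $d-1$ (hence the exponent $d$ rather than $d-1$). Second, the hemisphere caveat must be handled. The determinant computation itself is the main step. I would double-check it against the known $d=2$ case: there $\mathbf w(z)=(1,z)/\sqrt{1+z^2}$ and the arclength element is $dz/(1+z^2)$, which matches $(1+z^2)^{-2/2}$.
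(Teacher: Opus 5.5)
Your proposal is correct and follows essentially the same route as the paper. Both arguments change variables through the Gram determinant of the gnomonic chart and arrive at the same Gram matrix $\rho^{-2}\bigl(\mathbf P_T-\mathbf z\mathbf z^\top/\rho^2\bigr)$ with determinant $\rho^{-2d}$; the only difference is that the paper rotates $\mathbf w^\star$ to $\mathbf e_d$ and uses $\det(\mathbf I-\mathbf u\mathbf u^\top)=1-\|\mathbf u\|_2^2$, while you read off the eigenvalues coordinate-free.

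Keep your explicit caveat that the chart covers only the open hemisphere $\{\langle \mathbf w,\mathbf w^\star\rangle>0\}$, so the displayed density is that of the posterior restricted to this hemisphere. The paper's proof passes over this point silently.
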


\begin{proof}
On the unit sphere $\mathbb S^{d-1}$, the (unnormalized) posterior is
\[
\Pi(d\mathbf w\mid D)
\;\propto\;
\exp\!\big(-n\,\widehat{\mathcal L}_n(\mathbf w)\big)\,d\mu(\mathbf w),
\]
where $d\mu$ denotes surface  measure on $\mathbb S^{d-1}$.
Fix $\mathbf w^\star\in\mathbb S^{d-1}$ and parametrize a neighborhood of
$\mathbf w^\star$ by the normalization chart
\[
\mathbf{w}(\mathbf{z}):T\to\mathbb S^{d-1},
\qquad
\mathbf{w}(\mathbf{z})=\frac{\mathbf w^\star+\mathbf z}{\sqrt{1+\|\mathbf z\|_2^2}},
\]
where
$T=\{\mathbf z\in\mathbb R^d:\langle \mathbf z,\mathbf w^\star\rangle=0\}$.

By the change-of-variables formula on manifolds,
\[
d\mu(\mathbf w)
=
\sqrt{\det\!\big(D\psi(\mathbf z)^\top D\psi(\mathbf z)\big)}\,d\mathbf z
=: J(\mathbf z)\,d\mathbf z,
\]
where $d\mathbf z$ is Lebesgue measure on $T$.
By Lemma~\ref{lem:jacobian_sphere}, the Jacobian satisfies
\[
J(\mathbf z)=\|\mathbf w^\star+\mathbf z\|_2^{-d}
=(1+\|\mathbf z\|_2^2)^{-d/2}.
\]
Substituting into the posterior we get the required expression.
\end{proof}

\begin{lemma}
\label{lem:local_slc_tangent}
Let the induced posterior on $T$ defined as 
\[
\Pi(d\mathbf z \mid D)\;\propto\;\exp\!\big(-n\,\widehat{\mathcal L}_n(\mathbf{w}(\mathbf{z}))\big)\,(1+\|\mathbf z\|_2^2)^{-d/2}\,d\mathbf z,
\qquad \mathbf z\in T.
\]
and the negative log-density of the posterior $\Pi(d\mathbf z \mid D)$  defined as
\[
V(\mathbf z):=n\,\widehat{\mathcal L}_n(\mathbf{w}(\mathbf{z}))+\frac{d}{2}\log(1+\|\mathbf z\|_2^2).
\],
then under the event $\mathcal{E}_{C}=\mathcal{E}_{A} \cap \mathcal{E}_{A}$ for all radii $0<r\le r_{\mathrm{slc}}
:=\min\Big\{\frac12,\ \frac{\log 3}{R_{e}}\Big\}$
we have
\[
\nabla^2 V(\mathbf z)\succeq
\Big(\frac{1}{3}n(1-\varepsilon)\kappa+\frac{12}{25}d\Big)\,\mathbf I_T,
\qquad
\forall\,\mathbf z\in T:\ \|\mathbf z\|_2\le r,
\]
where $\mathbf I_T$ is the identity on $T\simeq\mathbb R^{d-1}$ and $R_{e}$ defined in Theorem~\ref{lem:gaussnorm}. 
\end{lemma}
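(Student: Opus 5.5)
The plan is to lower-bound $\nabla^2 V$ by splitting it into the Hessian of the prior/Jacobian term, which has an explicit form, and the Hessian of the composed likelihood term $\mathbf z\mapsto n\widehat{\mathcal L}_n(\mathbf w(\mathbf z))$, which we compare to the tangent empirical Hessian $\widehat{\mathbf H}_T$ controlled on $\mathcal E_{\mathrm B}$ (Lemma~\ref{lem:tangent_hessian_concentration}). Throughout we work on $\mathcal E_{\mathrm A}\cap\mathcal E_{\mathrm B}$ (Lemmas~\ref{lem:gaussnorm} and \ref{lem:tangent_hessian_concentration}), restrict to $\mathbf z\in T$ with $\|\mathbf z\|_2\le r\le r_{\mathrm{slc}}$, and identify $T\simeq\mathbb R^{d-1}$.

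\emph{Prior term.} For $\phi(\mathbf z)=\frac d2\log(1+\|\mathbf z\|_2^2)$ a direct computation gives
\[
\nabla^2\phi(\mathbf z)=d\,\frac{(1+\|\mathbf z\|_2^2)\mathbf I_T-2\mathbf z\mathbf z^\top}{(1+\|\mathbf z\|_2^2)^2}\succeq d\,\frac{1-\|\mathbf z\|_2^2}{(1+\|\mathbf z\|_2^2)^2}\,\mathbf I_T .
\]
The scalar $(1-u)/(1+u)^2$ is decreasing in $u=\|\mathbf z\|_2^2$, so for $\|\mathbf z\|_2\le \tfrac12$ it is at least $\frac{3/4}{(5/4)^2}=\frac{12}{25}$. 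This yields the $\frac{12}{25}d$ contribution.

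\emph{Likelihood term, curvature part.} By the chain rule,
\[
\nabla^2_{\mathbf z}\big[\widehat{\mathcal L}_n(\mathbf w(\mathbf z))\big]=D\mathbf w^\top\,\nabla^2\widehat{\mathcal L}_n(\mathbf w)\,D\mathbf w+\sum_k\partial_k\widehat{\mathcal L}_n(\mathbf w)\,\nabla^2 w_k(\mathbf z),
\]
with $\nabla^2\widehat{\mathcal L}_n(\mathbf w)=\frac1n\sum_i\sigma'(\mathbf x_i^\top\mathbf w)\mathbf x_i\mathbf x_i^\top$. We handle the first summand in three steps.
\begin{enumerate}
\item On $\mathcal E_{\mathrm A}$, Lemma~\ref{lem:chart_geometry} gives $|\mathbf x_i^\top(\mathbf w(\mathbf z)-\mathbf w^\star)|\le R_e\|\mathbf z\|_2\le\log 3$.
\item Since $\sigma'(t+s)\ge e^{-|s|}\sigma'(t)$, this implies $\sigma'(\mathbf x_i^\top\mathbf w)\ge\frac13\sigma'(\mathbf x_i^\top\mathbf w^\star)=\frac13 a_i$, using that $\sigma'$ is even so $a_i$ does not depend on $y_i$.
\item Hence $\nabla^2\widehat{\mathcal L}_n(\mathbf w)\succeq\frac13\cdot\frac1n\sum_i a_i\mathbf x_i\mathbf x_i^\top$. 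Using $D\mathbf w(\mathbf z)=(1+\|\mathbf z\|_2^2)^{-1/2}\big(\mathbf I-\mathbf w\mathbf w^\top\big)\mathbf P_T$, which is close to $\mathbf P_T$ for small $\|\mathbf z\|_2$, the sandwiched quantity is comparable to $\widehat{\mathbf H}_T\succeq(1-\varepsilon)\kappa\,\mathbf I_T$ on $\mathcal E_{\mathrm B}$.
\end{enumerate}
The $O(\|\mathbf z\|_2)$ distortion from $D\mathbf w$ is absorbed either by shrinking $r$ or into the constant.

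\emph{Likelihood term, gradient part (main obstacle).} The term $n\sum_k\partial_k\widehat{\mathcal L}_n\nabla^2 w_k$ has no sign. Expanding the chart, $\mathbf w(\mathbf z)=\mathbf w^\star+\mathbf z-\frac12\|\mathbf z\|_2^2\mathbf w^\star+O(\|\mathbf z\|_2^3)$, so this term is essentially $-n\langle\nabla\widehat{\mathcal L}_n(\mathbf w),\mathbf w^\star\rangle\,\mathbf I_T$ plus tangential corrections of order $n\|\nabla\widehat{\mathcal L}_n\|\,\|\mathbf z\|_2$. The first thing to try is to bound it via
\[
\nabla\widehat{\mathcal L}_n(\mathbf w)=\nabla\widehat{\mathcal L}_n(\mathbf w^\star)+O(\|\mathbf z\|_2),
\]
where:
\begin{itemize}
\item $\langle\nabla\widehat{\mathcal L}_n(\mathbf w^\star),\mathbf w^\star\rangle$ is a centered average of bounded-times-Gaussian terms, hence $O(\sqrt{\log(1/\delta)/n})$ with high probability (this is an extra event that needs to be added to $\mathcal E_C$);
\item the tangential part of $\nabla\widehat{\mathcal L}_n(\mathbf w^\star)$ is $O(\sqrt{d/n})$.
\end{itemize}
Multiplied by $n$, these are of order $\sqrt n+\sqrt{nd}\,r$, which is $o(n\kappa)$ once $n\gtrsim d$ and $r$ is small. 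They can therefore be charged against part of the $n\kappa$ term. The stated constant $\frac13$, rather than the $\frac13\cdot\frac45$ that the $D\mathbf w$ distortion alone would give, suggests the paper absorbs these perturbations into the slack of $\varepsilon$.

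Summing the prior bound and the likelihood bound gives $\nabla^2V\succeq\big(\frac13 n(1-\varepsilon)\kappa+\frac{12}{25}d\big)\mathbf I_T$ on the ball.
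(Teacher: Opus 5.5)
Your prior-term computation and your curvature comparison are the same as the paper's. The per-sample bound $\sigma'(t+s)\ge e^{-|s|}\sigma'(t)$ on $\mathcal E_{\mathrm A}$ is exactly the estimate the paper extracts from Bach's self-concordance (factor $e^{-R_e\|\mathbf z\|_2}\ge\frac13$), and both then use $\mathcal E_{\mathrm B}$. The paper, however, next simply writes $\nabla^2 V_{\mathrm{loss}}(\mathbf z)=n\,\mathbf P_T\nabla^2\widehat{\mathcal L}_n(\mathbf w(\mathbf z))\mathbf P_T$. This discards both $D\mathbf w\neq\mathbf P_T$ and the term $n\sum_k\partial_k\widehat{\mathcal L}_n\,\nabla^2 w_k$. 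That step, not any slack in $\varepsilon$, is where the clean $\frac13$ comes from.

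You are right that these terms are present, and they are not harmless. $\mathcal E_{\mathrm A}$ and $\mathcal E_{\mathrm B}$ do not involve the labels, so take a typical $\mathbf X$ with $y_i=-\mathrm{sign}(g_i)$, where $g_i:=\mathbf x_i^\top\mathbf w^\star$. At $\mathbf z=\mathbf 0$ the extra term is $-n\langle\nabla\widehat{\mathcal L}_n(\mathbf w^\star),\mathbf w^\star\rangle\,\mathbf I_T=-\big(\sum_i|g_i|\sigma(|g_i|)\big)\mathbf I_T$. This is roughly $-n\,\mathbb E[|G|\sigma(|G|)]\,\mathbf I_T$ with $\mathbb E[|G|\sigma(|G|)]\ge\frac12\mathbb E|G|\approx 0.4$. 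It more than cancels $n\widehat{\mathbf H}_T\preceq n(1+\varepsilon)\kappa\,\mathbf I_T$, since $\kappa<\frac14$. So the stated inequality fails on $\mathcal E_{\mathrm C}$ once $n\gtrsim d$, and the extra label-dependent event you ask for is genuinely necessary.

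Your own control of that gradient term, however, does not close. With $\rho=\sqrt{1+\|\mathbf z\|_2^2}$ one has $D^2\mathbf w[\mathbf u,\mathbf u]=-2\rho^{-3}\langle\mathbf z,\mathbf u\rangle\mathbf u-\rho^{-2}\|\mathbf u\|_2^2\,\mathbf w+3\rho^{-4}\langle\mathbf z,\mathbf u\rangle^2\,\mathbf w$. So the isotropic part of the gradient term is $-n\rho^{-2}\langle\nabla\widehat{\mathcal L}_n(\mathbf w),\mathbf w\rangle\,\mathbf I_T$, not $-n\langle\nabla\widehat{\mathcal L}_n(\mathbf w),\mathbf w^\star\rangle\,\mathbf I_T$. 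Stein's lemma (Theorem~\ref{pop-grad}) gives $\nabla\mathcal L(\mathbf w)=\kappa(\mathbf w-\mathbf w^\star)$ for $\|\mathbf w\|_2=1$. Hence at population level $\langle\nabla\mathcal L(\mathbf w),\mathbf w\rangle=\kappa(1-\rho^{-1})=-\langle\nabla\mathcal L(\mathbf w),\mathbf w^\star\rangle$: the leading term you isolate has the opposite sign from the true one. The difference sits inside your ``tangential corrections'', which are therefore of order $n\kappa\|\mathbf z\|_2^2$.

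More generally, your budget ``$\sqrt n+\sqrt{nd}\,r$'' covers only $\nabla\widehat{\mathcal L}_n(\mathbf w^\star)$. It drops $n$ times the $O(\|\mathbf z\|_2)$ remainder $\nabla\widehat{\mathcal L}_n(\mathbf w)-\nabla\widehat{\mathcal L}_n(\mathbf w^\star)$, whose leading part is deterministic. On the sphere, $\mathcal L(\mathbf w(\mathbf z))-\mathcal L(\mathbf w^\star)=\kappa\big(1-(1+\|\mathbf z\|_2^2)^{-1/2}\big)$, whose Hessian in $\mathbf z$ has smallest eigenvalue $\kappa(1-2\|\mathbf z\|_2^2)(1+\|\mathbf z\|_2^2)^{-5/2}$. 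That is a loss of relative size $\asymp\|\mathbf z\|_2^2$, not a sampling fluctuation, so it cannot be hidden in $\varepsilon\asymp\sqrt{(d+\log(1/\delta))/n}$. Your factor $e^{-R_e r}$ is already exactly $\frac13$ at $r=\log 3/R_e$, so there is no slack left for this loss. Nor is there slack for the $D\mathbf w$ distortion, whose $\mathbf w^\star$-component also brings in blocks of $\nabla^2\widehat{\mathcal L}_n(\mathbf w^\star)$ outside $T$ that $\mathcal E_{\mathrm B}$ does not control.

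The route can be repaired with three ingredients: the exact chart derivatives, the population identity above, and a uniform deviation bound for $\nabla\widehat{\mathcal L}_n-\nabla\mathcal L$ on the cap. It then gives $\nabla^2V\succeq\big(c\,n\kappa-C\sqrt{n(d+\log(1/\delta))}+\frac{12}{25}d\big)\mathbf I_T$, up to logarithmic factors in the middle term, for some absolute $c\in(0,\frac13)$. This holds possibly on a ball of smaller absolute-constant radius, and on an enlarged event. That is enough for the later uses, since Lemma~\ref{lem:mass_inside_r} and Theorem~\ref{final-worked-out-thm} only need $m\gtrsim n\kappa+d$. But it is a corrected, weaker statement, not the lemma as written.
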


\begin{proof}
Writing $V(\mathbf z)=V_{\mathrm{loss}}(\mathbf z)+V_{\mathrm{jac}}(\mathbf z)$ with
\[
V_{\mathrm{loss}}(\mathbf z):=n\,\widehat{\mathcal L}_n(\mathbf{w}(\mathbf{z})),
\qquad
V_{\mathrm{jac}}(\mathbf z):=\frac{d}{2}\log(1+\|\mathbf z\|_2^2).
\]
We separately accumulate the curvature term from the Jacobian and the loss as follows:

\textit{Curvature term from the Jacobian:}

\[
\nabla^2 V_{\mathrm{jac}}(\mathbf z)
=
d\Big(\frac{1}{1+\|\mathbf z\|_2^2}\mathbf I_T
-\frac{2}{(1+\|\mathbf z\|_2^2)^2}\mathbf z\mathbf z^\top\Big).
\]
Hence
\[
\lambda_{\min}\big(\nabla^2 V_{\mathrm{jac}}(\mathbf z)\big)
=
d\cdot\frac{1-\|\mathbf z\|_2^2}{(1+\|\mathbf z\|_2^2)^2}.
\]
In particular, for $\|\mathbf z\|_2\le 1/2$,
\[
\nabla^2 V_{\mathrm{jac}}(\mathbf z)\succeq \frac{12}{25}\,d\,\mathbf I_T.
\]

\textit{Curvature term from the empirical loss:}

Defining the segment direction $\mathbf{v}(\mathbf{z}):=\mathbf{w}(\mathbf{z})-\mathbf{w}^{\star}$ and the segment
\[
\gamma_{\mathbf{z}}(t):=\mathbf{w}^{\star}+t\,\mathbf{v}(\mathbf{z}),\qquad t\in[0,1],
\]
we consider the directional curvature
\[
h_{\mathbf z,\mathbf u}(t)
:=
\mathbf{u}^{\top}\nabla^{2}\widehat{\mathcal L}_{n}\!\big(\gamma_{\mathbf z}(t)\big)\mathbf{u},
\qquad t\in[0,1].
\]
For logistic loss and on the event $\mathcal{E}_{A}$, the empirical modified self-concordance bound (Proposition-1 in \cite{bach2010self}) gives
\[
\big|h'_{\mathbf z,\mathbf u}(t)\big|
\le
R_{e}\,\|\mathbf v(\mathbf z)\|_{2}\,h_{\mathbf z,\mathbf u}(t),
\qquad \forall\, t\in[0,1].
\]
Equivalently, since $h_{\mathbf z,\mathbf u}(t)>0$,
\[
-R_{e}\|\mathbf v(\mathbf z)\|_{2}
\;\le\;
\frac{d}{dt}\log h_{\mathbf z,\mathbf u}(t)
\;\le\;
R_{e}\|\mathbf v(\mathbf z)\|_{2},
\qquad \forall\, t\in[0,1].
\]
Integrating from $0$ to $1$ yields
\[
\log h_{\mathbf z,\mathbf u}(1)-\log h_{\mathbf z,\mathbf u}(0)
\ge
- R_{e}\|\mathbf v(\mathbf z)\|_{2},
\]
and hence
\[
h_{\mathbf z,\mathbf u}(1)
\ge
\exp\!\big(-R_{e}\|\mathbf v(\mathbf z)\|_{2}\big)\,h_{\mathbf z,\mathbf u}(0).
\]
Recalling $\gamma_{\mathbf z}(1)=\mathbf w(\mathbf z)$ and $\gamma_{\mathbf z}(0)=\mathbf w^{\star}$, this is
\[
\mathbf u^{\top}\nabla^{2}\widehat{\mathcal L}_{n}\!\big(\mathbf w(\mathbf z)\big)\mathbf u
\ge
\exp\!\big(-R_{e}\|\mathbf w(\mathbf z)-\mathbf w^{\star}\|_{2}\big)\,
\mathbf u^{\top}\nabla^{2}\widehat{\mathcal L}_{n}\!\big(\mathbf w^{\star}\big)\mathbf u.
\]
If additionally $\|\mathbf z\|_{2}\le \tfrac12$, then $\|\mathbf w(\mathbf z)-\mathbf w^{\star}\|_{2} =\mathbf{v}(\mathbf{z})\le \|\mathbf z\|_{2}$, so
\begin{align}
    \mathbf u^{\top}\nabla^{2}\widehat{\mathcal L}_{n}\!\big(\mathbf w(\mathbf z)\big)\mathbf u
\ge
\exp\!\big(-R_{e}\|\mathbf z\|_{2}\big)\, 
\mathbf u^{\top}\nabla^{2}\widehat{\mathcal L}_{n}\!\big(\mathbf w^{\star}\big)\mathbf u.
\end{align}

Now on event $\mathcal{E}_{B}$, we have
\begin{align}
\widehat{\mathbf H}_T(\mathbf w^\star)
:=
\mathbf P_T \nabla^2 \widehat{\mathcal L}_n(\mathbf w^\star)\mathbf P_T
~\succeq~
(1-\varepsilon)\,\mathbf H_T^\star
=
(1-\varepsilon)\kappa\,\mathbf P_T,
\qquad
\varepsilon = c\sqrt{\frac{d+\log(1/\delta)}{n}}.
\end{align}

Equivalently, in operator form on $T$ on combination of events $\mathcal{E}_{C} = \mathcal{E}_{A}\cup\mathcal{E}_{B}$,
\begin{align}
\mathbf P_T \nabla^{2}\widehat{\mathcal L}_{n}\!\big(\mathbf w(\mathbf z)\big)\mathbf P_T
~\succeq~
\exp\!\big(-R_{e}\|\mathbf z\|_{2}\big)\,(1-\varepsilon)\kappa\,\mathbf P_T,
\qquad \forall~\|\mathbf z\|_2\le\tfrac12.
\end{align}

Consequently, for any radius $0<r\le \tfrac12$ and all $\|\mathbf z\|_2\le r$,
\begin{align}
\mathbf P_T \nabla^{2}\widehat{\mathcal L}_{n}\!\big(\mathbf w(\mathbf z)\big)\mathbf P_T
~\succeq~
\exp(-R_e r)\,(1-\varepsilon)\kappa\,\mathbf P_T.
\label{eq:tangent-hess-uniform-ball}
\end{align}
Choosing $r\le c_0/R_e$ with $c_0\le \log 3$ (so that $\exp(-R_e r)\ge 1/3$) yields the uniform lower bound
\begin{align}
\mathbf P_T \nabla^{2}\widehat{\mathcal L}_{n}\!\big(\mathbf w(\mathbf z)\big)\mathbf P_T
~\succeq~
\frac{1}{3}(1-\varepsilon)\kappa\,\mathbf P_T,
\qquad \forall~\|\mathbf z\|_2\le r,
\qquad
r \le \min\Big\{\frac{1}{2},\frac{\log 3}{R_e}\Big\}.
\label{eq:tangent-hess-final}
\end{align}

Thus, for every $\|\mathbf z\|_2\le r$, the restriction of
$\mathbf P_T \nabla^{2}\widehat{\mathcal L}_{n}(\mathbf w(\mathbf z))\mathbf P_T$
to the tangent space $T$ has
\[
\lambda_{\min}\!\Big(\mathbf P_T \nabla^{2}\widehat{\mathcal L}_{n}(\mathbf w(\mathbf z))\mathbf P_T\big|_{T}\Big)
\;\ge\; \frac13(1-\varepsilon)\kappa,
\]

% --- Step 3 (Combine): strong convexity of the negative log-density ---
% Recall V(z)=V_{\mathrm{loss}}(z)+V_{\mathrm{jac}}(z) with
% V_{\mathrm{loss}}(\mathbf z)=n\,\widehat{\mathcal L}_n(\mathbf w(\mathbf z)),
% V_{\mathrm{jac}}(\mathbf z)=\frac{d}{2}\log\!\big(1+\|\mathbf z\|_2^2\big).

For all $\mathbf z\in T$ with $\|\mathbf z\|_2\le r$, we have
\begin{align}
\nabla^2 V(\mathbf z)
&=
\nabla^2 V_{\mathrm{loss}}(\mathbf z)+\nabla^2 V_{\mathrm{jac}}(\mathbf z) \notag\\
&=
n\,\mathbf P_T \nabla^2 \widehat{\mathcal L}_n\!\big(\mathbf w(\mathbf z)\big)\mathbf P_T
\;+\;
\nabla^2 V_{\mathrm{jac}}(\mathbf z).
\label{eq:V-hess-sum}
\end{align}
On the one hand, by \eqref{eq:tangent-hess-final},
\begin{align}
n\,\mathbf P_T \nabla^2 \widehat{\mathcal L}_n\!\big(\mathbf w(\mathbf z)\big)\mathbf P_T
~\succeq~
\frac{1}{3}\,n(1-\varepsilon)\kappa\,\mathbf P_T.
\label{eq:Vloss-lb}
\end{align}
On the other hand, for $\|\mathbf z\|_2\le \tfrac12$ the Jacobian curvature satisfies
\begin{align}
\nabla^2 V_{\mathrm{jac}}(\mathbf z)
~\succeq~
\frac{12}{25}\,d\,\mathbf I_T,
\label{eq:Vjac-lb}
\end{align}
where $\mathbf I_T$ denotes the identity on $T\simeq\mathbb R^{d-1}$ (and $\mathbf P_T$ acts as $\mathbf I_T$ on $T$).
Combining \eqref{eq:Vloss-lb} and \eqref{eq:Vjac-lb} in \eqref{eq:V-hess-sum} yields
\begin{align}
\nabla^2 V(\mathbf z)
~\succeq~
\Big(\frac{1}{3}\,n(1-\varepsilon)\kappa+\frac{12}{25}\,d\Big)\mathbf I_T,
\qquad \forall~\mathbf z\in T:\ \|\mathbf z\|_2\le r,
\label{eq:V-final-lb}
\end{align}
for any radius
\begin{align}
0<r\le r_{\mathrm{slc}}
:=\min\Big\{\tfrac12,\frac{c_0}{R_e}\Big\}.
\label{eq:rslc-def}
\end{align}
In particular, the restriction of $\Pi(\cdot\mid D)$ to $\{\|\mathbf z\|_2\le r\}$ is
$m$-strongly log-concave with probability $1-\delta-\delta_{1}$ which is the probabiltiy of event $\mathcal{E}_{A} \cap \mathcal{E}_{B}$ with $m$ being:
\begin{align}
m:=\frac{1}{3}\,n(1-\varepsilon)\kappa+\frac{12}{25}\,d.
\end{align}

\end{proof}
\begin{lemma}
\label{lem:global_slc_tangent}
The induced posterior on the tangent space $T\simeq\mathbb R^{d-1}$ being
\[
\Pi(d\mathbf z \mid D)
\;\propto\;
\exp\!\big(-n\,\widehat{\mathcal L}_n(\mathbf w(\mathbf z))\big)\,
(1+\|\mathbf z\|_2^2)^{-d/2}\,d\mathbf z .
\]
And defining the negative log-density
\[
V(\mathbf z)
:= n\,\widehat{\mathcal L}_n(\mathbf w(\mathbf z))
+ \frac{d}{2}\log(1+\|\mathbf z\|_2^2).
\]
Then the event $\mathcal E_{\mathrm D}$
\[
\mathcal E_{\mathrm D}
\;:=\;
\left\{
\forall\,\mathbf z\in T:\;
\nabla^2 V(\mathbf z)
\;\preceq\;
\Bigg(
\frac{n}{4}
\Bigl(1+\sqrt{\frac{d}{n}}+\sqrt{\frac{t}{n}}\Bigr)^2
\;+\;
d
\Bigg)\mathbf I_{ T}
\right\}.
\]

occurs with probability $1-2e^{-t}$. 
In particular, the posterior $\Pi(\cdot\mid D)$ is globally smooth on $T$.
\end{lemma}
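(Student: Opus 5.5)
We will bound $\nabla^2 V$ by splitting $V=V_{\mathrm{loss}}+V_{\mathrm{jac}}$ as in Lemma~\ref{lem:local_slc_tangent} and bounding each piece from above uniformly in $\mathbf z\in T$.

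\emph{Jacobian term.} From the explicit Hessian computed in Lemma~\ref{lem:local_slc_tangent},
\[
\nabla^2 V_{\mathrm{jac}}(\mathbf z)=d\Big(\tfrac{1}{1+\|\mathbf z\|_2^2}\mathbf I_T-\tfrac{2}{(1+\|\mathbf z\|_2^2)^2}\mathbf z\mathbf z^\top\Big)\preceq \tfrac{d}{1+\|\mathbf z\|_2^2}\,\mathbf I_T\preceq d\,\mathbf I_T .
\]
The subtracted rank-one term is positive semidefinite, so this bound holds for every $\mathbf z$. This step is deterministic.

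\emph{Loss term.} Write $\mathbf w(\mathbf z)=\phi(\mathbf z)$ with $\phi(\mathbf z)=(\mathbf w^\star+\mathbf z)/\sqrt{1+\|\mathbf z\|_2^2}$. The chain rule gives
\[
\nabla^2 V_{\mathrm{loss}}=n\,D\phi^\top\nabla^2\widehat{\mathcal L}_n(\phi)\,D\phi+n\sum_k\partial_k\widehat{\mathcal L}_n(\phi)\,\nabla^2\phi_k .
\]
For the first piece:
\begin{itemize}
\item $\nabla^2\widehat{\mathcal L}_n(\mathbf w)=\frac1n\sum_i\sigma'(\cdot)\mathbf x_i\mathbf x_i^\top\preceq\frac{1}{4n}\mathbf X^\top\mathbf X$.
\item $\|D\phi\|_{\mathrm{op}}\le (1+\|\mathbf z\|_2^2)^{-1/2}\le1$, since $D\phi$ is a scaled projection.
\end{itemize}
Hence this piece is at most $\frac14\lambda_{\max}(\mathbf X^\top\mathbf X)$. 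By the Gaussian singular value bound (Vershynin, Thm.~4.6.1), with probability at least $1-2e^{-t}$,
\[
\sigma_{\max}(\mathbf X)\le\sqrt n+\sqrt d+\sqrt t ,
\]
which gives the stated $\frac n4(1+\sqrt{d/n}+\sqrt{t/n})^2$.

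\emph{Main obstacle: the curvature term.} The second piece, $n\sum_k\partial_k\widehat{\mathcal L}_n\nabla^2\phi_k$, comes from the curvature of the chart and need not be small. We first try to absorb it as follows.
\begin{itemize}
\item Bound $\|\nabla\widehat{\mathcal L}_n\|_2\le\frac1n\|\mathbf X^\top\|_{\mathrm{op}}\sqrt n$, using that the residuals are bounded by $1$.
\item Bound $\|\nabla^2\phi_k\|$ by $O\big((1+\|\mathbf z\|_2^2)^{-1}\big)$ uniformly.
\end{itemize}
This yields a correction of order $\sqrt n\,\sigma_{\max}(\mathbf X)$, which is again $\lesssim n(1+\sqrt{d/n}+\sqrt{t/n})^2$ up to constants, at the price of adjusting constants.

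If exact constants are required, there is an alternative. Note that $V_{\mathrm{loss}}(\mathbf z)=n\widehat{\mathcal L}_n(\phi(\mathbf z))$ and that each summand is a one-dimensional function $\ell(y_i r_i)$ of $r_i=\langle\mathbf x_i,\mathbf w^\star+\mathbf z\rangle/\sqrt{1+\|\mathbf z\|_2^2}$. We then compute the Hessian of $r_i$ explicitly: it has magnitude $\lesssim|\langle\mathbf x_i,\cdot\rangle|$ times $(1+\|\mathbf z\|_2^2)^{-1}$. Combining this with $|\ell'|\le1$ bounds everything by $\frac14\mathbf X^\top\mathbf X$-type terms plus $\lesssim\sum_i|r_i|$-type terms controlled on the same singular value event.

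Summing the Jacobian and loss bounds gives $\mathcal E_{\mathrm D}$ with probability at least $1-2e^{-t}$, which yields global smoothness.
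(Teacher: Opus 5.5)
Your split $V=V_{\mathrm{loss}}+V_{\mathrm{jac}}$ is fine, and so are its two main bounds: $\nabla^2V_{\mathrm{jac}}\preceq d\,\mathbf I_T$, and $\tfrac14\lambda_{\max}(\mathbf X^\top\mathbf X)$ for $n\,D\phi^\top\nabla^2\widehat{\mathcal L}_n D\phi$ (via $\|D\phi\|_{\mathrm{op}}\le(1+\|\mathbf z\|_2^2)^{-1/2}$). You are also right that the chart-curvature term $n\sum_k\partial_k\widehat{\mathcal L}_n\,\nabla^2\phi_k$ is the crux. But your argument actually delivers
$\nabla^2V(\mathbf z)\preceq\big(\tfrac n4(1+\sqrt{d/n}+\sqrt{t/n})^2+4n(1+\sqrt{d/n}+\sqrt{t/n})+d\big)\mathbf I_T$.
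That is the lemma with $\tfrac14$ replaced by an absolute constant such as $\tfrac{17}{4}$, not the stated inequality, so your closing sentence does not follow.

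To get even this, bound the \emph{vector-valued} second derivative, $\|\nabla^2\phi(\mathbf z)[\mathbf u,\mathbf u]\|_2\le 4\|\mathbf u\|_2^2/(1+\|\mathbf z\|_2^2)$, and pair it with $\|\nabla(n\widehat{\mathcal L}_n)\|_2\le\sqrt n\,\sigma_{\max}(\mathbf X)$. Bounding each $\|\nabla^2\phi_k\|$ separately and summing against $|\partial_k\widehat{\mathcal L}_n|$ loses a factor $\sqrt d$.

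Your ``exact constants'' alternative cannot close the gap. It rewrites the same curvature term sample by sample and still leaves a positive remainder. In fact, no argument that works deterministically on the singular-value event can give the stated constant. At $\mathbf z=\mathbf 0$ one has $\nabla^2\phi(\mathbf 0)[\mathbf u,\mathbf u]=-\|\mathbf u\|_2^2\mathbf w^\star$. So the curvature contribution is $\|\mathbf u\|_2^2\sum_i\sigma(-s_i)s_i$ with $s_i=y_i\langle\mathbf x_i,\mathbf w^\star\rangle$, which is positive whenever correctly classified points dominate. For example, take $d=2$, all $s_i=1.28$, and $\mathbf X^\top\mathbf X\approx1.64\,n\,\mathbf I$. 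Then $\nabla^2V(\mathbf 0)\approx0.56\,n+2$, while $\tfrac14\lambda_{\max}(\mathbf X^\top\mathbf X)+d\approx0.41\,n+2$. The exact form could at best be recovered probabilistically, using the slack $\kappa<\tfrac14$ and uniform-over-the-sphere concentration, with $n\gtrsim d$ and a different probability bound.

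The paper's proof has the same hole in a worse form. It writes the chain rule including $\sum_k(\nabla n\widehat{\mathcal L}_n)_k\nabla^2 w_k(\mathbf z)$, then bounds only $\mathbf J(\mathbf z)^\top\nabla^2(n\widehat{\mathcal L}_n)\mathbf J(\mathbf z)$ and silently drops the curvature term. So you have located the real weak point.

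The right repair is to restate the lemma with an absolute constant, $\nabla^2V\preceq\big(C\,n(1+\sqrt{d/n}+\sqrt{t/n})^2+d\big)\mathbf I_T$. Your first argument proves this version. It is also all that Lemma~\ref{lem:truncated_cov_and_moment} uses, since that lemma immediately absorbs the bound into $\tfrac1c(n+c_1d)$.

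One smaller slip, shared with the paper: Vershynin's Theorem~4.6.1 carries unspecified constants $CK^2$. The sharp Gaussian bound is $\sigma_{\max}(\mathbf X)\le\sqrt n+\sqrt d+u$ with probability at least $1-e^{-u^2/2}$. So to get probability $1-2e^{-t}$ you need $\sqrt{2t}$ in place of $\sqrt t$.
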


\begin{proof}
Recall that
\[
V(\mathbf z)
= n\,\widehat{\mathcal L}_n(\mathbf w(\mathbf z))
+ \frac{d}{2}\log(1+\|\mathbf z\|_2^2),
\qquad \mathbf z\in T\simeq \mathbb R^{d-1}.
\]
Hence
\[
\nabla^2 V(\mathbf z)
=
\nabla^2\!\Big(n\,\widehat{\mathcal L}_n(\mathbf w(\mathbf z))\Big)
+
\nabla^2\!\Big(\frac{d}{2}\log(1+\|\mathbf z\|_2^2)\Big).
\]
We upper bound the two Hessians separately (empirical risk and Jacobian) and then add the bounds.

First recall, we had from Lemma, the Hessian of the Jacobian term as:
\[
\nabla^2 J(\mathbf z)
=
d\left(
\frac{1}{1+\|\mathbf z\|_2^2}\mathbf I_T
-
\frac{2}{(1+\|\mathbf z\|_2^2)^2}\mathbf z\mathbf z^\top
\right).
\]
Since $\mathbf z\mathbf z^\top\succeq \mathbf 0$, the second term is negative semidefinite, hence
\[
\nabla^2 J(\mathbf z)
\preceq
d\cdot \frac{1}{1+\|\mathbf z\|_2^2}\mathbf I_T
\preceq
d\,\mathbf I_T,
\qquad \forall\,\mathbf z\in T.
\]

Letting
\[
\widehat{\mathcal L}_n(\mathbf w)
=
\frac{1}{n}\sum_{i=1}^n \log\!\Big(1+\exp(-y_i\,\mathbf x_i^\top \mathbf w)\Big),
\qquad y_i\in\{\pm 1\}.
\]
Let $\sigma(u)=\frac{1}{1+e^{-u}}$ denote the logistic sigmoid.
For each $i$, define $\ell_i(\mathbf w)=\log(1+\exp(-y_i\mathbf x_i^\top \mathbf w))$.
A direct differentiation gives
\[
\nabla^2 \ell_i(\mathbf w)
=
\sigma(y_i\mathbf x_i^\top \mathbf w)\Big(1-\sigma(y_i\mathbf x_i^\top \mathbf w)\Big)\,\mathbf x_i\mathbf x_i^\top.
\]
Therefore,
\[
\nabla^2 \widehat{\mathcal L}_n(\mathbf w)
=
\frac{1}{n}\sum_{i=1}^n
\sigma(y_i\mathbf x_i^\top \mathbf w)\Big(1-\sigma(y_i\mathbf x_i^\top \mathbf w)\Big)\,\mathbf x_i\mathbf x_i^\top.
\]
Using the sigmoid bound
\[
0\le \sigma(u)\big(1-\sigma(u)\big)\le \frac14
\qquad \forall\,u\in\mathbb R,
\]
we obtain the positive semidefinite upper bound
\[
\nabla^2 \widehat{\mathcal L}_n(\mathbf w)
\preceq
\frac{1}{4n}\sum_{i=1}^n \mathbf x_i\mathbf x_i^\top
=
\frac14\,\widehat{\Sigma},
\qquad
\widehat{\Sigma}:=\frac1n\sum_{i=1}^n \mathbf x_i\mathbf x_i^\top.
\]
Multiplying by $n$ yields
\[
\nabla^2\!\Big(n\,\widehat{\mathcal L}_n(\mathbf w)\Big)
=
n\,\nabla^2 \widehat{\mathcal L}_n(\mathbf w)
\preceq
\frac{n}{4}\,\widehat{\Sigma}.
\]

Since $\mathbf x_i\sim \mathcal N(\mathbf 0,\mathbf I_d)$ i.i.d,
then for any $t\ge 0$, with probability at least $1-2e^{-t}$,
\[
\lambda_{\max}(\widehat{\mathbf{\Sigma}})
\le
\left(1+\sqrt{\frac{d}{n}}+\sqrt{\frac{t}{n}}\right)^2.
\]
Equivalently, on the same event,
\[
\widehat{\mathbf{\Sigma}}
\preceq
\left(1+\sqrt{\frac{d}{n}}+\sqrt{\frac{t}{n}}\right)^2 \mathbf I_d.
\]
Combining with the previous step gives, on this event,
\[
\nabla^2\!\Big(n\,\widehat{\mathcal L}_n(\mathbf w)\Big)
\preceq
\frac{n}{4}
\left(1+\sqrt{\frac{d}{n}}+\sqrt{\frac{t}{n}}\right)^2 \mathbf I_d.
\]

Now consider the composition $\mathbf z\mapsto \mathbf w(\mathbf z)\mapsto n\widehat{\mathcal L}_n(\mathbf w(\mathbf z))$.
For any $\mathbf z$, define the Jacobian of the chart
\[
\mathbf J(\mathbf z):=\nabla_{\mathbf z}\mathbf w(\mathbf z)\in\mathbb R^{d\times(d-1)}.
\]
By the multivariate chain rule, the Hessian of the composition satisfies
\begin{align*}
\nabla^2_{\mathbf z}\!\Big(n\,\widehat{\mathcal L}_n(\mathbf w(\mathbf z))\Big)
&=
\mathbf J(\mathbf z)^\top
\left(\nabla^2_{\mathbf w}\!\Big(n\,\widehat{\mathcal L}_n(\mathbf w)\Big)\Big|_{\mathbf w=\mathbf w(\mathbf z)}\right)
\mathbf J(\mathbf z)
+
\sum_{k=1}^d
\left(\nabla_{\mathbf w}\!\Big(n\,\widehat{\mathcal L}_n(\mathbf w)\Big)\Big|_{\mathbf w=\mathbf w(\mathbf z)}\right)_k
\nabla^2_{\mathbf z} w_k(\mathbf z).
\end{align*}
The second term is a symmetric matrix that depends on the second derivatives of the chart.
For an upper bound of the form stated in the lemma, we use the fact that the first term is positive semidefinite and upper bounded by pushing forward the operator norm bound:
for any vector $\mathbf u\in\mathbb R^{d-1}$,
\begin{align*}
\mathbf u^\top
\mathbf J(\mathbf z)^\top
\left(\nabla^2_{\mathbf w}\!\Big(n\,\widehat{\mathcal L}_n(\mathbf w)\Big)\Big|_{\mathbf w=\mathbf w(\mathbf z)}\right)
\mathbf J(\mathbf z)\,
\mathbf u
&=
(\mathbf J(\mathbf z)\mathbf u)^\top
\left(\nabla^2_{\mathbf w}\!\Big(n\,\widehat{\mathcal L}_n(\mathbf w)\Big)\Big|_{\mathbf w=\mathbf w(\mathbf z)}\right)
(\mathbf J(\mathbf z)\mathbf u) \\
&\le
\left\|\nabla^2_{\mathbf w}\!\Big(n\,\widehat{\mathcal L}_n(\mathbf w)\Big)\Big|_{\mathbf w=\mathbf w(\mathbf z)}\right\|_{\mathrm{op}}
\cdot \|\mathbf J(\mathbf z)\mathbf u\|_2^2 .
\end{align*}
For the stereographic tangent chart used in the paper, $\mathbf J(\mathbf z)$ is uniformly bounded on $T$ in operator norm by $1$ (equivalently, $\|\mathbf J(\mathbf z)\mathbf u\|_2\le \|\mathbf u\|_2$ for all $\mathbf u$), hence
\[
\mathbf J(\mathbf z)^\top
\left(\nabla^2_{\mathbf w}\!\Big(n\,\widehat{\mathcal L}_n(\mathbf w)\Big)\Big|_{\mathbf w=\mathbf w(\mathbf z)}\right)
\mathbf J(\mathbf z)
\preceq
\left\|\nabla^2_{\mathbf w}\!\Big(n\,\widehat{\mathcal L}_n(\mathbf w)\Big)\Big|_{\mathbf w=\mathbf w(\mathbf z)}\right\|_{\mathrm{op}}\,\mathbf I_T.
\]
Therefore, we get,
\[
\nabla^2_{\mathbf z}\!\Big(n\,\widehat{\mathcal L}_n(\mathbf w(\mathbf z))\Big)
\preceq
\frac{n}{4}
\left(1+\sqrt{\frac{d}{n}}+\sqrt{\frac{t}{n}}\right)^2 \mathbf I_T.
\]

Using the decomposition $\nabla^2 V = \nabla^2(n\widehat{\mathcal L}_n(\mathbf w(\mathbf z)))+\nabla^2 J(\mathbf z)$ and the bounds from Step 1 and Step 4, on the same event of probability at least $1-2e^{-t}$ we have for all $\mathbf z\in T$,
\[
\nabla^2 V(\mathbf z)
\preceq
\left(
\frac{n}{4}
\left(1+\sqrt{\frac{d}{n}}+\sqrt{\frac{t}{n}}\right)^2
+
d
\right)\mathbf I_T.
\]
\end{proof}

\begin{lemma}
\label{lem:truncated_cov_and_moment}
On the truncated Gibbs posterior $\Pi_r(\boldsymbol z)
\propto
\exp\!\big(-V(\boldsymbol z)\big)\,
\mathbf 1_{\{\|\boldsymbol z\|_2\le r\}}$ with radius $r := \min\Big\{\tfrac12,\ \tfrac{1}{R}\Big\}$ and event $\mathcal{E}_{D}$, there exist absolute constants $c>0$ and $c_1>0$ such that
\[
\mathbb E_{\Pi_r}\!\big[\boldsymbol z^\top \mathbf H_T^\star \boldsymbol z\big]
\;\ge\;
\frac{c}{n+c_1 d}\,\text{Tr}(\mathbf H_T^\star).
\]
\end{lemma}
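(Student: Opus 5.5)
The plan is to lower bound the truncated second moment by the trace of the covariance of $\Pi_r$, and to lower bound that covariance through a Cram\'er--Rao (information) inequality that uses only the global smoothness from Lemma~\ref{lem:global_slc_tangent}. By Lemma~\ref{lem:isotropic-hessian-tangent}, $\mathbf H_T^\star=\kappa\mathbf P_T$, so
\[
\mathbb E_{\Pi_r}[\boldsymbol z^\top\mathbf H_T^\star\boldsymbol z]=\kappa\,\mathbb E_{\Pi_r}\|\boldsymbol z\|_2^2\ \ge\ \kappa\,\mathrm{Tr}\,\mathrm{Cov}_{\Pi_r}(\boldsymbol z).
\]
Since $\mathrm{Tr}(\mathbf H_T^\star)=\kappa(d-1)$, it suffices to show
\[
\mathrm{Cov}_{\Pi_r}(\boldsymbol z)\succeq \frac{c}{L}\mathbf I_T,\qquad L:=\frac n4\Bigl(1+\sqrt{\tfrac dn}+\sqrt{\tfrac tn}\Bigr)^2+d .
\]
On $\mathcal E_{\mathrm D}$, with $n\gtrsim d+t$, we have $L\le n+c_1 d$ after adjusting constants.

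For the covariance bound I will use the location-family form of Cram\'er--Rao. Take the estimator $T(\boldsymbol z)=\boldsymbol z$ and the translated family $\Pi_r(\cdot-\boldsymbol\theta)$. For a density $p=e^{-V}$ this gives
\[
\mathrm{Cov}_p(\boldsymbol z)\succeq \mathcal I(p)^{-1},\qquad \mathcal I(p)=\mathbb E_p[\nabla V\nabla V^\top].
\]
For a density on all of $T$, integration by parts gives $\mathcal I(p)=\mathbb E_p[\nabla^2V]\preceq L\,\mathbf I_T$, which is exactly the needed bound. The complication is the hard truncation at $\|\boldsymbol z\|_2=r$: it makes the Fisher information of $\Pi_r$ infinite in the translation sense, or equivalently produces a boundary term when integrating by parts.

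To avoid this I will replace the indicator by a smooth cutoff. Set $p_\phi\propto e^{-V}\phi$, where $\phi$ is a smooth radial bump equal to $1$ on $\|\boldsymbol z\|\le r/2$, supported in $\|\boldsymbol z\|\le r$, with $\|\nabla\log\phi\|$ controlled on its support. (A cutoff of the form $\phi=\psi^2$ keeps $\mathbb E_{p_\phi}\|\nabla\log\phi\|^2$ finite.) Integrating by parts with no boundary term gives
\[
\mathcal I(p_\phi)\preceq 2\,\mathbb E_{p_\phi}[\nabla^2V]+2\,\mathbb E_{p_\phi}[\nabla\log\phi\,\nabla\log\phi^\top]\preceq 2\bigl(L+O(r^{-2})\bigr)\mathbf I_T .
\]
The cutoff term is harmless: $r=\min\{1/2,1/R\}$ is a constant (using the population-type constant $R$ of Lemma~\ref{lem:chart_quadratic_lower}), so $O(r^{-2})$ is absorbed into $c_1 d$.

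It remains to transfer the bound from $p_\phi$ back to $\Pi_r$. Since $\phi\le\mathbf 1_{\{\|\boldsymbol z\|\le r\}}$, the densities satisfy $p_\phi\le (Z_r/Z_\phi)\,\Pi_r$. I will then argue that $Z_\phi/Z_r$ is bounded below by a constant, using Lemma~\ref{lem:local_slc_tangent}. On the ball, $V$ is $m$-strongly convex with $m\asymp n\kappa+d$, so its mass concentrates at scale $m^{-1/2}\ll r$ around its minimizer. I expect this step to be the main obstacle. The difficulty is that the minimizer of $V$ need not be at $0$: the empirical MLE direction is offset from $\mathbf w^\star$ by order $\sqrt{d/n}$. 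I would control $\|\nabla V(0)\|\lesssim\sqrt{n(d+\log(1/\delta))}$ by gradient concentration; strong convexity then places the minimizer within $O(\sqrt{d/n})\le r/4$ once $n\gtrsim d$. Sub-Gaussian concentration for strongly log-concave measures on the ball then yields $\Pi_r(\|\boldsymbol z\|\le r/2)\ge 1/2$.

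Finally, I will convert the variance bound from $p_\phi$ to $\Pi_r$ via second moments rather than covariances. For each unit $\mathbf u\in T$,
\[
\mathbb E_{\Pi_r}\bigl[\langle\mathbf u,\boldsymbol z-\mathbb E_{\Pi_r}\boldsymbol z\rangle^2\bigr]\ \ge\ \frac{Z_\phi}{Z_r}\,\mathbb E_{p_\phi}\bigl[\langle\mathbf u,\boldsymbol z-\mathbb E_{\Pi_r}\boldsymbol z\rangle^2\bigr]\ \ge\ \frac{Z_\phi}{Z_r}\,\mathrm{Var}_{p_\phi}(\langle\mathbf u,\boldsymbol z\rangle)\ \ge\ \frac{c}{L+O(1)} .
\]
Summing over an orthonormal basis of $T$, multiplying by $\kappa$, and taking the intersection of $\mathcal E_{\mathrm D}$ with the concentration events gives $\mathbb E_{\Pi_r}[\boldsymbol z^\top\mathbf H_T^\star\boldsymbol z]\ge \frac{c}{n+c_1d}\mathrm{Tr}(\mathbf H_T^\star)$.
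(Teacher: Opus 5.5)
\textbf{Verdict: there is a genuine gap.} Your scheme is sound in structure, but it treats $r$ as an absolute constant, and it is not.

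Your worry about the hard truncation is justified. The paper applies $\mathrm{Cov}_{\Pi_r}(\boldsymbol z)\succeq(\mathbb E_{\Pi_r}[\nabla^2V])^{-1}$ directly to $\Pi_r$, which drops exactly the boundary term you identify. That inequality is in fact false for truncated laws: take $V=\frac{\lambda}{2}\|\boldsymbol z\|_2^2$ with $\lambda<r^{-2}$, and note that every law on the ball has $\mathrm{Cov}\preceq r^2\mathbf I_T$. Your smooth cutoff is a reasonable repair.

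The step that fails is the transfer back to $\Pi_r$. With $R=\mathbb E\|\mathbf x\|_2^3/(\kappa\,\mathbb E\|\mathbf x\|_2^2)$ and $\mathbf x\sim\mathcal N(\mathbf 0,\mathbf I_d)$, Lyapunov's inequality gives $\mathbb E\|\mathbf x\|_2^3\ge d^{3/2}$. Hence $R\ge\sqrt d/\kappa$ and $r\le\kappa/\sqrt d$.

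Your bound $Z_\phi/Z_r\ge\Pi_r(\|\boldsymbol z\|_2\le r/2)\ge\frac12$ needs two quantities to be small multiples of $r\asymp\kappa/\sqrt d$:
\begin{itemize}
\item the distance of the minimizer of $V$ from $\boldsymbol 0$, which you estimate as $O(\sqrt{d/n})$;
\item the radius on which an $m$-strongly log-concave law on $T\simeq\mathbb R^{d-1}$ concentrates, which is $\sqrt{(d-1)/m}\asymp\sqrt{d/(n\kappa)}$, not $m^{-1/2}$.
\end{itemize}
Together these force $n\gtrsim d^2$ (with $\kappa$-dependent constants), not $n\gtrsim d$. Conversely, if $r$ really were a constant, you could not invoke $m$-strong convexity on the whole ball. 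Lemma~\ref{lem:local_slc_tangent} only provides it for $\|\boldsymbol z\|_2\le\min\{\frac12,\log 3/R_e\}\asymp 1/\sqrt d$. The cutoff cost $O(r^{-2})\asymp d/\kappa^2$ is still absorbable into $c_1d$, so that part of your argument survives.

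This cannot be fixed by better constants, because the inequality itself fails for $n\asymp d$. Since $\Pi_r$ is supported in $\{\|\boldsymbol z\|_2\le r\}$,
\[
\mathbb E_{\Pi_r}\big[\boldsymbol z^\top\mathbf H_T^\star\boldsymbol z\big]=\kappa\,\mathbb E_{\Pi_r}\|\boldsymbol z\|_2^2\le\kappa r^2\le\frac{\kappa^3}{d},
\]
while $\frac{c}{n+c_1d}\mathrm{Tr}(\mathbf H_T^\star)=\frac{c\,\kappa(d-1)}{n+c_1d}$ is of constant order when $n\asymp d$. With absolute $c,c_1$ the bound can only hold once $n\gtrsim d^2/\kappa^2$.

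In the regime $n\gtrsim d^2$ (up to $\kappa$-dependent and logarithmic factors) your scheme does work. The minimizer $\boldsymbol z^\ast$ of $V$ satisfies $\|\boldsymbol z^\ast\|_2\le r/4$. Integrating $\langle\nabla V,\boldsymbol z-\boldsymbol z^\ast\rangle$ by parts over the convex ball gives a boundary term of favorable sign, so $\mathbb E_{\Pi_r}\|\boldsymbol z-\boldsymbol z^\ast\|_2^2\le(d-1)/m$, which yields the needed mass bound. So what you actually prove is the lemma under an extra sample-size condition of order $d^2$. That is not the $n\gtrsim d$ regime in which the paper uses it.

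A smaller slip: integrating by parts against $e^{-V}\phi$ gives $\mathbb E_{p_\phi}[\nabla V\nabla V^\top]=\mathbb E_{p_\phi}[\nabla^2V]$ plus a cross term involving $\nabla\log\phi$. It is cleaner to compute $\mathcal I(p_\phi)=\mathbb E_{p_\phi}[\nabla^2(V-\log\phi)]$ directly, which is at most $L+O(r^{-2})\,Z_r/Z_\phi$.
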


\begin{proof}
Let $\boldsymbol s(\boldsymbol z):=\nabla \log \Pi_r(\boldsymbol z)$ denote the score.
On the truncation region,
\[
\boldsymbol s(\boldsymbol z)=-\nabla V(\boldsymbol z),
\qquad
\nabla \boldsymbol s(\boldsymbol z)=-\nabla^2 V(\boldsymbol z).
\]
By the Cramér--Rao matrix inequality for Covariance upper bound, we have an upper bound on the covariance of $\mathbf{z}$ as,
\[
\mathrm{Cov}_{\Pi_r}(\boldsymbol z)
\;\succeq\;
\Big(\mathbb E_{\Pi_r}\big[\nabla^2 V(\boldsymbol z)\big]\Big)^{-1}.
\]
Recall on the event $\mathcal{E}_{D}$, we had 
\[
\mathbb E_{\Pi_r}\big[\nabla^2 V(\boldsymbol z)\big]
\;\preceq\;
\Bigg(
\frac{n}{4}
\Big(1+\sqrt{\tfrac{d}{n}}+\sqrt{\tfrac{t}{n}}\Big)^2
+ d
\Bigg)\mathbf I_T .
\]
Absorbing the factors $\frac14$, the $(1+\sqrt{d/n}+\sqrt{t/n})^2$ term, and the additive $+d$
into absolute constants yields
\[
\Bigg(
\frac{n}{4}
\Big(1+\sqrt{\tfrac{d}{n}}+\sqrt{\tfrac{t}{n}}\Big)^2
+ d
\Bigg)\mathbf I_T
\;\preceq\;
\frac{1}{c}\,(n+c_1 d)\,\mathbf I_T,
\]
and hence by Cram\'er-Rao lower bound, we have 
\[
\mathrm{Cov}_{\Pi_r}(\boldsymbol z)
\;\succeq\;
\frac{c}{n+c_1 d}\,\mathbf I_T.
\]
Finally,
\[
\mathbb E_{\Pi_r}[\boldsymbol z\boldsymbol z^\top]
=
\mathrm{Cov}_{\Pi_r}(\boldsymbol z)+\boldsymbol\mu\boldsymbol\mu^\top
\succeq
\mathrm{Cov}_{\Pi_r}(\boldsymbol z),
\qquad
\boldsymbol\mu:=\mathbb E_{\Pi_r}[\boldsymbol z],
\]
so for any $\mathbf H_T^\star\succeq\mathbf 0$,
\[
\mathbb E_{\Pi_r}\!\big[\boldsymbol z^\top \mathbf H_T^\star \boldsymbol z\big]
=
\text{Tr}\!\big(\mathbf H_T^\star\,\mathbb E_{\Pi_r}[\boldsymbol z\boldsymbol z^\top]\big)
\;\ge\;
\text{Tr}\!\big(\mathbf H_T^\star\,\mathrm{Cov}_{\Pi_r}(\boldsymbol z)\big)
\;\ge\;
\frac{c}{n+c_1 d}\,\text{Tr}(\mathbf H_T^\star).
\qedhere
\]
\end{proof}

\begin{lemma}
\label{lem:mass_inside_r}
Fix radius $r := \min\Big\{\tfrac12,\ \tfrac{1}{R},\ \tfrac{\log 3}{R_e}\Big\}$, on the ball $\{\|\boldsymbol z\|_2\le r\}$ on the event $\mathcal{E}_{C}$,  we have 
\[
\Pi\big(\|\boldsymbol z\|_2\le r\mid D\big)
\;\ge\;
1-
\exp\!\Bigg(
-c_1\Big(\tfrac{1}{3}n(1-\varepsilon)\kappa+\tfrac{12}{25}d\Big)
\min\Big\{\tfrac14,\tfrac{1}{R^2},\tfrac{c_0^2}{R_e^2}\Big\}
\Bigg).
\]
\end{lemma}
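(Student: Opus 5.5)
The plan is to reduce everything to a comparison of the Gibbs weight $e^{-V}$ inside and outside the ball $B_r:=\{\mathbf z\in T:\|\mathbf z\|_2\le r\}$. We work on the event $\mathcal E_C$, so that Lemma~\ref{lem:local_slc_tangent} applies. Since $r\le r_{\mathrm{slc}}$, it gives $\nabla^2 V\succeq m\,\mathbf I_T$ on $B_r$, with $m=\tfrac13 n(1-\varepsilon)\kappa+\tfrac{12}{25}d$. We write
\[
\Pi(\|\mathbf z\|_2> r\mid D)\le \frac{\int_{B_r^c}e^{-V}}{\int_{B_r}e^{-V}}
\]
and aim to show that this ratio is at most $\exp(-c_1 m r^2)$. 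The stated form then follows by substituting $r^2=\min\{\tfrac14,\tfrac1{R^2},\tfrac{c_0^2}{R_e^2}\}$.

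\textbf{Step 1 (gradient at the origin).} By the chain rule, $\nabla V(\mathbf 0)=n\,\mathbf P_T\nabla\widehat{\mathcal L}_n(\mathbf w^\star)$. Here $\nabla\widehat{\mathcal L}_n(\mathbf w^\star)$ is an average of i.i.d.\ centered vectors, centered because $\nabla\mathcal L(\mathbf w^\star)=\mathbf 0$ by Theorem~\ref{pop-grad-restate}. A sub-Gaussian vector concentration bound, on an additional event of probability $1-\delta$, gives $\|\nabla V(\mathbf 0)\|_2\lesssim\sqrt{n(d+\log(1/\delta))}$. Strong convexity on $B_r$ then gives, for $\|\mathbf z\|_2\le r$,
\[
V(\mathbf z)\ge V(\mathbf 0)-\|\nabla V(\mathbf 0)\|_2\|\mathbf z\|_2+\tfrac m2\|\mathbf z\|_2^2 .
\]
The minimizer $\mathbf z_0$ of $V$ over $B_r$ therefore satisfies $\|\mathbf z_0\|_2\lesssim\sqrt{(d+\log(1/\delta))/n}$. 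Under $n\gtrsim d+\log(1/\delta)$ with a large enough constant, this is at most $r/4$. We recenter at $\mathbf z_0$, so that $V(\mathbf z)-V(\mathbf z_0)\ge \tfrac m2\|\mathbf z-\mathbf z_0\|_2^2$ on $B_r$. Consequently $V\ge V(\mathbf z_0)+\tfrac{9m}{32}r^2$ on the boundary sphere $\partial B_r$.

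\textbf{Step 2 (outside the ball).} We extend the boundary gap to all of $B_r^c$ along rays $\rho\mapsto\rho\mathbf u$, $\rho\ge r$. The Jacobian part $\tfrac d2\log(1+\rho^2)$ is increasing in $\rho$. For the loss part, $\rho\mapsto \mathbf w(\rho\mathbf u)$ traces the open half great circle from $\mathbf w^\star$ toward $\mathbf u$ monotonically. We will argue that $\rho\mapsto n\widehat{\mathcal L}_n(\mathbf w(\rho\mathbf u))$ is increasing for $\rho\ge r$ once it is increasing at $\rho=r$, which Step 1 provides. We first try to use convexity of $\widehat{\mathcal L}_n$ restricted to the chord through $\mathbf w^\star$, combined with the self-concordance control from Lemma~\ref{lem:local_slc_tangent}. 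This would yield $V\ge V(\mathbf z_0)+c\,m r^2$ on $B_r^c$.

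\textbf{Step 3 (volume accounting).} It remains to compare volumes. Inside, strong convexity gives a Gaussian-type lower bound on $\int_{B_r}e^{-V}$, of order $e^{-V(\mathbf z_0)}(2\pi/L)^{(d-1)/2}$ using the smoothness bound $L$ from Lemma~\ref{lem:global_slc_tangent}. Outside, we do not integrate over all of $T$. We pull back to the sphere, where total surface measure is finite. There we bound the outside mass by $e^{-V(\mathbf z_0)-cmr^2}$ times the sphere's area, and handle the $(d-1)$-dimensional volume ratio $(L/m)^{(d-1)/2}\cdot\mathrm{poly}$ by absorbing it into the exponent. Since $m r^2\gtrsim n r^2\gg d\log(L/m)$ in the regime $n\gtrsim d$, with $L/m$ bounded by an absolute constant on $\mathcal E_C\cap\mathcal E_D$, this costs only a change of $c_1$.

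I expect Step 2, the radial monotonicity of the loss beyond $r$, to be the main obstacle, because $V$ is not globally convex on $T$. If the chord-convexity argument fails, the fallback is cruder. We would lower bound $\widehat{\mathcal L}_n(\mathbf w)-\widehat{\mathcal L}_n(\mathbf w^\star)$ uniformly on $\{\mathbf w\in\mathbb S^{d-1}:\|\mathbf w-\mathbf w^\star\|_2\ge r/2\}$. This would use a uniform deviation bound between $\widehat{\mathcal L}_n$ and $\mathcal L$ over the sphere together with the population quadratic growth of Lemma~\ref{lem:chart_quadratic_lower}, giving a gap of order $n\kappa r^2$ directly.
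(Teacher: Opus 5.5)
\paragraph{Where the argument breaks.} Step~1 fails, and Step~3 fails for the same reason: you treat $r$ as if it were of order one, and it is not. Since $R=\mathbb E\|\mathbf x\|_2^3/(\kappa\,\mathbb E\|\mathbf x\|_2^2)\asymp\sqrt d/\kappa$ and $R_e\ge c\sqrt d$, the radius is $r\asymp 1/\sqrt d$, so $mr^2\asymp n/d$.

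\begin{itemize}
\item Your requirement $\|\mathbf z_0\|_2\lesssim\sqrt{(d+\log(1/\delta))/n}\le r/4$ therefore needs $n\gtrsim d^2$, not $n\gtrsim d$.
\item Likewise, ``$mr^2\gg d\log(L/m)$'' becomes $n/d\gg d$. The correct volume factor (total sphere area against $(2\pi/L)^{(d-1)/2}$) is about $(n/d)^{d/2}$, so you would actually need $n\gtrsim d^2\log(n/d)$.
\end{itemize}

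\paragraph{The inequality itself fails for $d\lesssim n\ll d^2$.} This is not a bookkeeping slip that a sharper argument could repair.
\begin{itemize}
\item $\nabla V(\mathbf 0)=n\mathbf P_T\nabla\widehat{\mathcal L}_n(\mathbf w^\star)$ is a sum of $n$ i.i.d.\ centered vectors with covariance $\kappa\mathbf P_T$. Hence $\|\nabla V(\mathbf 0)\|_2\asymp\sqrt{n\kappa d}$ with high probability.
\item Near the origin, $\|\nabla^2V\|_{\mathrm{op}}\lesssim n$.
\item Translate $B_r$ by $2r$ in the direction $-\nabla V(\mathbf 0)$. This lowers $V$ at every point by at least $2r\|\nabla V(\mathbf 0)\|_2-O(nr^2)=\Omega(\sqrt n)-O(n/d)$.
\item Hence $\Pi(B_r\mid D)\le e^{-c\sqrt n}$ once $n\le\alpha_0 d^2$, whereas the claimed bound $1-\exp(-c_1mr^2)$ stays bounded away from zero.
\end{itemize}
Put differently, the posterior mode sits at distance $\asymp\sqrt{d/n}$ from $\mathbf 0$, which exceeds $r$ unless $n\gtrsim d^2$.

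\paragraph{The paper's proof has the same problem.} It asserts that strong log-concavity of $\Pi$ on $B_r$ gives Gaussian concentration $\Pi(\|\mathbf z\|_2\ge t\mid D)\le Ce^{-cmt^2}$, then sets $t=r$. That inference skips exactly what your Steps~1 and~2 try to handle:
\begin{itemize}
\item local log-concavity says nothing about the mass outside $B_r$;
\item Gaussian concentration is around the mean, not around $\mathbf 0$.
\end{itemize}
So it conceals the scaling problem rather than resolving it, and it offers no missing idea you could borrow.

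\paragraph{What a correct version needs.} You need an extra hypothesis such as $n\gtrsim d^2\log d$, with constants depending on $\kappa$. Under it, your Steps~1 and~3 go through.

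For Step~2, your fallback is the right route: a uniform deviation bound for $n(\widehat{\mathcal L}_n-\mathcal L)$ over the sphere plus population quadratic growth. State the gap for $n\widehat{\mathcal L}_n$ rather than for $V$, so that the pull-back to the sphere is legitimate. The chord-convexity idea is unlikely to work: the arc $\rho\mapsto\mathbf w(\rho\mathbf u)$ is a rescaled chord, and $\widehat{\mathcal L}_n$ need not be monotone along it.
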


\begin{proof}
Recall on the event $\mathcal{E}_{C}$, the posterior is log-concave. 

\[
\nabla^2 V(\mathbf z)\succeq
\Big(\frac{1}{3}n(1-\varepsilon)\kappa+\frac{12}{25}d\Big)\,\mathbf I_T,
\qquad
\forall\,\mathbf z\in T:\ \|\mathbf z\|_2\le r,
\]
Log-concavity on $\{\|\boldsymbol z\|_2\le r\}$ implies Gaussian concentration on this region:
there exist absolute constants $c,C>0$ such that for all $t\ge 0$,
\[
\Pi\big(f(\boldsymbol z)\ge t\mid D\big)\;\le\; C\exp(-c\,m t^2),
\qquad
m:=\tfrac{1}{3}n(1-\varepsilon)\kappa+\tfrac{12}{25}d.
\]
Setting $t=r$ yields the stated tail bound, and the lower bound on
$\Pi(\|\boldsymbol z\|_2\le r\mid D)$ follows by absorbing $C$ into the exponent.
\end{proof}

\begin{lemma}
\label{lem:trunc_reduction}
Assume that for some $r>0$ and all $\mathbf z\in T$ with $\|\mathbf z\|_2\le r$,
$$
\Delta(\mathbf z):=L(\mathbf w(\mathbf z))-L(\mathbf w^\star)
\;\ge\;
C\,\mathbf z^\top \mathbf H\,\mathbf z,
$$
for an absolute constant $C>0$ and some fixed PSD matrix $\mathbf H\succeq \mathbf 0$. Then
$$
\mathbb E_{\Pi(\mathbf z)}[\Delta(\mathbf z)]
\;\gtrsim\;
\;
\Pi(\|\mathbf z\|_2\le r\mid D)\;
\mathbb E_{\Pi(\cdot\,\mid D,\ \|\mathbf z\|_2\le r)}\!\big[\mathbf z^\top \mathbf H\,\mathbf z\big].
$$
\end{lemma}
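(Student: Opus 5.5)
The argument only needs two facts: the excess risk $\Delta$ is nonnegative everywhere on the sphere, and it satisfies the assumed quadratic lower bound on the ball. First I would note that $\Delta(\mathbf z)=\mathcal L(\mathbf w(\mathbf z))-\mathcal L(\mathbf w^\star)\ge 0$ for every $\mathbf z\in T$. This holds because $\mathbf w^\star$ is the global minimizer of the population risk under the well-specified model, a fact recorded in the preliminaries. It is the only structural property of $\Delta$ needed outside the ball $B_r:=\{\mathbf z\in T:\|\mathbf z\|_2\le r\}$.

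Next I would split the posterior expectation over $B_r$ and its complement:
\[
\mathbb E_{\Pi(\cdot\mid D)}[\Delta(\mathbf z)]
=\mathbb E_{\Pi(\cdot\mid D)}\big[\Delta(\mathbf z)\mathbf 1_{B_r}(\mathbf z)\big]
+\mathbb E_{\Pi(\cdot\mid D)}\big[\Delta(\mathbf z)\mathbf 1_{B_r^c}(\mathbf z)\big].
\]
By nonnegativity of $\Delta$, I would drop the second term. On $B_r$, I would apply the hypothesis $\Delta(\mathbf z)\ge C\,\mathbf z^\top\mathbf H\mathbf z$ pointwise, which gives $\mathbb E_{\Pi}[\Delta]\ge C\,\mathbb E_{\Pi}[\mathbf z^\top\mathbf H\mathbf z\,\mathbf 1_{B_r}]$.

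Finally, I would use the definition of conditional expectation, assuming $\Pi(B_r\mid D)>0$:
\[
\mathbb E_{\Pi(\cdot\mid D)}\big[\mathbf z^\top\mathbf H\mathbf z\,\mathbf 1_{B_r}\big]=\Pi(B_r\mid D)\;\mathbb E_{\Pi(\cdot\mid D,\,\|\mathbf z\|_2\le r)}\big[\mathbf z^\top\mathbf H\mathbf z\big].
\]
This yields the claim with implied constant $C$. If $\Pi(B_r\mid D)=0$, the right-hand side is interpreted as zero and the inequality is trivial.

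There is no real obstacle here. The only points needing care are the global nonnegativity of $\Delta$, which uses that $\mathbf w^\star$ is a global (not merely local) population minimizer and that $\mathbf w(\mathbf z)$ is well defined for all $\mathbf z\in T$. One also has to make sure the truncated measure $\Pi(\cdot\mid D,\|\mathbf z\|_2\le r)$ is exactly the conditional law used later in Lemma~\ref{lem:truncated_cov_and_moment}, so that the two factors can be combined with Lemma~\ref{lem:mass_inside_r}.
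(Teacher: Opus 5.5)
Your proposal is correct and follows the paper's proof essentially step for step. You drop the contribution from outside the ball using $\Delta\ge 0$, apply $\Delta(\mathbf z)\ge C\,\mathbf z^\top\mathbf H\mathbf z$ pointwise on the ball, and rewrite the truncated expectation as $\Pi(\|\mathbf z\|_2\le r\mid D)$ times the conditional expectation. Your explicit justification of $\Delta\ge 0$ via global minimality of $\mathbf w^\star$, and your handling of the case $\Pi(\|\mathbf z\|_2\le r\mid D)=0$, are small details the paper leaves implicit.
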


\begin{proof}
Since $\Delta(\mathbf z)\ge 0$, we have
$$
\mathbb E_{\Pi(\mathbf{z})}[\Delta(\mathbf z)]
\;\ge\;
\mathbb E_{\Pi(\mathbf{z})}\!\big[\Delta(\mathbf z)\,\mathbf 1_{\{\|\mathbf z\|_2\le r\}}\big].
$$
which means considering the excess risk on the ball $\|\mathbf z\|_2\le r$ can only decrease it. 
On the event $\{\|\mathbf z\|_2\le r\}$, the quadratic lower bound yields
$$
\mathbb E_{\Pi(\mathbf{z})}\!\big[\Delta(\mathbf z)\,\mathbf 1_{\{\|\mathbf z\|_2\le r\}}\big]
\;\gtrsim\;
\,
\mathbb E_{\Pi(\mathbf{z})}\!\big[\mathbf z^\top \mathbf H\,\mathbf z\;\mathbf 1_{\{\|\mathbf z\|_2\le r\}}\big].
$$
Finally, conditioning on $\{\|\mathbf z\|_2\le r\}$ gives
$$
\mathbb E_{\Pi(\mathbf{z})}\!\big[\mathbf z^\top \mathbf H\,\mathbf z\;\mathbf 1_{\{\|\mathbf z\|_2\le r\}}\big]
=
\Pi(\|\mathbf z\|_2\le r\mid D)\;
\mathbb E_{\Pi(\cdot\,\mid D,\ \|\mathbf z\|_2\le r)}\!\big[\mathbf z^\top \mathbf H\,\mathbf z\big],
$$
which completes the proof.
\end{proof}

\begin{theorem}
\label{final-worked-out-thm}
   Consider the joint event $\mathcal{E}:=\mathcal{E}_{A} \cap \mathcal{E}_{B} \cap \mathcal{E}_{D}$ which occurs with probability $1-\delta$
    \begin{equation}
\mathcal E_{\mathrm A}
\;:=\;
\left\{
\max_{1\le i\le n}\|\mathbf x_i\|_2
\;\le\;
c\Bigl(\sqrt d+\sqrt{\log(3n/\delta)}\Bigr)
\right\}.
\end{equation}
\begin{equation}
\mathcal E_{\mathrm B}
\;:=\;
\left\{
(1-\varepsilon)\,\mathbf H_T^\star
\;\preceq\;
\widehat{\mathbf H}_T
\;\preceq\;
(1+\varepsilon)\,\mathbf H_T^\star
\right\},
\qquad  \text{with} \quad 
\varepsilon
:=
c_{1}\sqrt{\frac{d+\log(3/\delta)}{n}}
\end{equation}

\[
\mathcal E_{\mathrm D}
\;:=\;
\left\{
\forall\,\mathbf z\in T:\;
\nabla^2 V(\mathbf z)
\;\preceq\;
\Bigg(
\frac{n}{4}
\Bigl(1+\sqrt{\frac{d}{n}}+\sqrt{\frac{1}{n} \log \left(\frac{6}{\delta}\right)}\Bigr)^2
\;+\;
d
\Bigg)\mathbf I_{ T}
\right\}.
\]
, we have  for absolute constant $c_{10} = c_{4}\min\left\{\frac{4}{15}, \frac{1}{3} \frac{1}{(1+\frac{1}{R})^2}\right\} $, $R=\frac{\mathbb E[\|\mathbf x\|_2^3]}
{\kappa\,\mathbb E[\|\mathbf x\|_2^2]}$, $\kappa
\;:=\;
\mathbb E_{a \sim \mathcal N(0,1)}
\!\big[\,\sigma(a)\big(1-\sigma(a)\big)\,\big].
$ and $n\gtrsim d + \log(\frac{n}{\delta})$:
\begin{align}
\mathbb E_{\Pi(\mathbf{z}\cdot\mid D)}\!\Big[\Delta(\mathbf{z})\Big]
&\ge
\frac{c_{10}(d-1)}{n}.
\label{eq:final_bound_general}
\end{align}

\end{theorem}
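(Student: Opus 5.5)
The plan is to assemble the bound from the lemmas already established, working on the joint event $\mathcal E=\mathcal E_{\mathrm A}\cap\mathcal E_{\mathrm B}\cap\mathcal E_{\mathrm D}$. The first step is to allocate failure probability $\delta/3$ to each event: $\log(3n/\delta)$ in Lemma~\ref{lem:gaussnorm}, $\delta_1=\delta/3$ in Lemma~\ref{lem:tangent_hessian_concentration}, and $t=\log(6/\delta)$ in Lemma~\ref{lem:global_slc_tangent}, so that $2e^{-t}=\delta/3$. A union bound then gives $\mathbb P(\mathcal E)\ge 1-\delta$.

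Throughout I work with the common radius $r:=\min\{\tfrac12,\tfrac1R,\tfrac{\log 3}{R_e}\}$. This choice makes three lemmas applicable at once: the chart quadratic bound of Lemma~\ref{lem:chart_quadratic_lower}, the local strong log-concavity of Lemma~\ref{lem:local_slc_tangent}, and the truncated moment bound of Lemma~\ref{lem:truncated_cov_and_moment}.

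Since $\Delta\ge0$, Lemma~\ref{lem:trunc_reduction} with $\mathbf H=\mathbf H_T^\star$ and $C=\min\{\tfrac{4}{15},\tfrac13(1+R^{-2})^{-1}\}$ gives
\[
\mathbb E_{\Pi}[\Delta(\mathbf z)]\ \ge\ C\,\Pi(\|\mathbf z\|_2\le r\mid D)\,\mathbb E_{\Pi_r}\big[\mathbf z^\top\mathbf H_T^\star\mathbf z\big].
\]

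\emph{Factor 2.} On $\mathcal E_{\mathrm D}$, Lemma~\ref{lem:truncated_cov_and_moment} (Cram\'er--Rao with the global Hessian upper bound) gives $\mathbb E_{\Pi_r}[\mathbf z^\top\mathbf H_T^\star\mathbf z]\ge \frac{c}{n+c_1d}\,\mathrm{Tr}(\mathbf H_T^\star)$. By Lemma~\ref{lem:isotropic-hessian-tangent}, $\mathbf H_T^\star=\kappa\mathbf P_T$, so $\mathrm{Tr}(\mathbf H_T^\star)=\kappa(d-1)$. With $n\gtrsim d$, $n+c_1d\le (1+c_1)n$, and Factor 2 is at least $c'\kappa(d-1)/n$. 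Here $\kappa$ is an absolute constant, being a fixed Gaussian expectation. The term $\sqrt{\log(6/\delta)/n}$ in $\mathcal E_{\mathrm D}$ is $O(1)$ under $n\gtrsim\log(1/\delta)$, so it only affects constants.

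\emph{Factor 1.} On $\mathcal E_{\mathrm A}\cap\mathcal E_{\mathrm B}$, Lemma~\ref{lem:mass_inside_r} gives $\Pi(\|\mathbf z\|_2\le r\mid D)\ge 1-\exp(-c_1 m r^2)$ with $m=\tfrac13 n(1-\varepsilon)\kappa+\tfrac{12}{25}d$. Since $n\gtrsim d+\log(1/\delta)$, we have $\varepsilon\le\tfrac12$, so $m\gtrsim n$.

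This is where I expect the main obstacle: $r$ depends on $R_e\asymp\sqrt d+\sqrt{\log(n/\delta)}$, so $mr^2\gtrsim n/(d+\log(n/\delta))$. That quantity is bounded below by a positive constant only if $n\ge C(d+\log(n/\delta))$ with $C$ large enough. I will therefore take the hidden constant in $n\gtrsim d+\log(n/\delta)$ large enough that $c_1 m r^2\ge\log 2$, which forces Factor 1 to be at least $\tfrac12$. The quantity $R$ is a ratio of Gaussian moments, $\mathbb E\|\mathbf x\|^3/(\kappa\,\mathbb E\|\mathbf x\|^2)\asymp\sqrt d$, so the $1/R$ term in $r$ scales like $R_e^{-1}$ and is absorbed in the same way. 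The same scaling also enters $C$ through $(1+R^{-2})^{-1}\ge\tfrac45$, which is harmless. The step to verify carefully is that no hidden $d$-dependence survives in $C$.

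Multiplying the three pieces gives $\mathbb E_\Pi[\Delta]\ge C\cdot\tfrac12\cdot c'\kappa(d-1)/n=:c_{10}(d-1)/n$ on $\mathcal E$, which is the claimed bound.
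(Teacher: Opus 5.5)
Your assembly is the paper's own proof step for step: the same $\delta/3$ split, the same radius $r=\min\{\tfrac12,\tfrac1R,\tfrac{\log 3}{R_e}\}$, the same reduction via Lemma~\ref{lem:trunc_reduction}, and the same appeals to Lemmas~\ref{lem:mass_inside_r} and~\ref{lem:truncated_cov_and_moment}. You also correct the paper's slip by writing $\mathrm{Tr}(\mathbf H_T^\star)=\kappa(d-1)$. However, the scalings you record yourself show that this chain cannot deliver the bound under $n\gtrsim d+\log(n/\delta)$, so your argument and the paper's share a genuine gap.

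Since $R_e\ge c\sqrt d$, and also $R\ge\sqrt d/\kappa$ (because $\mathbb E\|\mathbf x\|_2^3\ge d^{3/2}$ by Jensen), you have $r\lesssim 1/\sqrt d$. A law supported in $\{\|\mathbf z\|_2\le r\}$ then trivially satisfies
\[
\mathbb E_{\Pi_r}\big[\mathbf z^\top\mathbf H_T^\star\mathbf z\big]=\kappa\,\mathbb E_{\Pi_r}\|\mathbf z\|_2^2\le\kappa r^2\lesssim\frac{\kappa}{d}.
\]
So the whole right-hand side $C\,\Pi(\|\mathbf z\|_2\le r\mid D)\,\mathbb E_{\Pi_r}[\mathbf z^\top\mathbf H_T^\star\mathbf z]$ is $O(1/d)$. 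This falls below $c_{10}(d-1)/n$ once $n\ll d^2$. In particular, your Factor~2 claim $c'\kappa(d-1)/n$ is false for $d\lesssim n\ll d^2$, whatever the absolute constants.

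The faulty input is Lemma~\ref{lem:truncated_cov_and_moment}. The Cram\'er--Rao bound $\mathrm{Cov}\succeq(\mathbb E[\nabla^2V])^{-1}$ rests on an integration by parts that picks up boundary terms under hard truncation. For example, the uniform law on a ball has $\nabla^2V=0$, yet its covariance is only $\tfrac{r^2}{d+1}\mathbf I_T$.

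The step you singled out as the main obstacle fails for a related reason. On $T\simeq\mathbb R^{d-1}$, a posterior whose log-density has curvature of order $m\asymp n$ puts $\|\mathbf z\|_2$ at scale $\sqrt{(d-1)/m}$, not near $0$. Moreover, its center, near the normalized MLE, is itself at tangent distance of order $\sqrt{d/n}$ from $\mathbf w^\star$. Gaussian concentration controls $\|\mathbf z\|_2-\mathbb E\|\mathbf z\|_2$, and the dimension-free tail $\exp(-c\,mr^2)$ in Lemma~\ref{lem:mass_inside_r} drops this offset. The right criterion for $\Pi(\|\mathbf z\|_2\le r\mid D)\ge\tfrac12$ is $mr^2\gtrsim d$, i.e.\ $n\gtrsim d^2$ when $r\asymp 1/\sqrt d$, not $c_1mr^2\ge\log 2$.

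As structured, the argument can therefore at best work for $n\gtrsim d^2$. To reach $n\gtrsim d$ you need a radius that does not shrink with $d$. That requires a quadratic lower bound on $\Delta$ and a lower bound on the empirical tangent Hessian, both holding uniformly over a cap of constant radius. For Gaussian design the Hessian bound can come from uniform concentration of $\frac1n\sum_i\sigma'(\mathbf x_i^\top\mathbf w)\mathbf x_i\mathbf x_i^\top$, rather than from the $\max_i\|\mathbf x_i\|_2$-based constant $R_e$. You would also need a posterior second-moment lower bound that does not invoke Cram\'er--Rao for a truncated density.
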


\begin{proof}
    We accumulate all the joint conditions mentioned in the three events  $\mathcal{E}:=\mathcal{E}_{A} \cap \mathcal{E}_{B} \cap \mathcal{E}_{D}$ each occuring with probability $\delta_{1}= \delta_{2}=\delta_{3}=\frac{\delta}{3}$. 
    Recall from Lemma~\ref{lem:trunc_reduction}, we have for $r =\min\Big\{\tfrac12,\ \tfrac{1}{R},\ \tfrac{c_0}{R_e}\Big\}:$
\begin{align}
\label{lower-bound-excess}
\mathbb E_{\Pi(\mathbf{z})}[\Delta(\mathbf z)]
\;\ge\;
\mathbb E_{\Pi(\mathbf{z})}\!\big[\Delta(\mathbf z)\,\mathbf 1_{\{\|\mathbf z\|_2\le r\}}\big] 
& \geq 
\min\left\{\frac{4}{15}, \frac{1}{3} \frac{1}{(1+\frac{1}{R})^2}\right\}
\Pi(\|\mathbf z\|_2\le r\mid D)\;
\mathbb E_{\Pi(\cdot\,\mid D,\ \|\mathbf z\|_2\le r)}\!\big[\mathbf z^\top \mathbf H^{*}_{T}\,\mathbf z\big]    
\end{align}

where the last inequality followed from Lemma~\ref{lem:chart_quadratic_lower}

From Lemma~\ref{lem:mass_inside_r}, we proved that since $\Pi(\|\boldsymbol z\|_2\le r\mid D)$ is log-concave restricted in $\|\mathbf{z} \|_{2} \leq r$, we have on event $\mathcal{E}_{A} \cap \mathcal{E}_{B}$:
\begin{align}
    \Pi\big(\|\mathbf z\|_2\le r\mid D\big)
\;\ge\;
1-
\exp\!\Bigg(
-c_1\Big(\tfrac{1}{3}n(1-\varepsilon)\kappa+\tfrac{12}{25}d\Big)
\min\Big\{\tfrac14,\tfrac{1}{R^2},\tfrac{c_0^2}{R_e^2}\Big\}
\Bigg).
\label{conc_rad_ball}
\end{align}

From Lemma~\ref{lem:truncated_cov_and_moment}, the truncated second-moment lower bound lemma, there exist absolute constants
$c_{2}>0$ and $c_{3}>0$ such that on event $\mathcal{E}_{B} \cap \mathcal{E}_{D}$:
\begin{align}
\mathbb E_{\Pi(\cdot\mid D,\ \|\mathbf z\|_2\le r)}\!\big[\mathbf z^\top \mathbf H_T^\star \mathbf z\big]
\;\ge\;
\frac{c_{2}}{n+c_{3}d}\;\mathrm{tr}(\mathbf H_T^\star).
\label{eq:cond_quad_lower}
\end{align}

Plugging \eqref{eq:cond_quad_lower} and \eqref{conc_rad_ball} into \eqref{lower-bound-excess}, we have on the joint event $\mathcal{E}_{A} \cap \mathcal{E}_{B} \cap \mathcal{E}_{D}:$

\begin{align}
 &    \mathbb E_{\Pi(\mathbf{z})}[\Delta(\mathbf z)] \geq \min\left\{\frac{4}{15}, \frac{1}{3} \frac{1}{(1+\frac{1}{R})^2}\right\} \left\{
1-
\exp\!\Bigg(
-c_1\Big(\tfrac{1}{3}n(1-\varepsilon)\kappa+\tfrac{12}{25}d\Big)
\min\Big\{\tfrac14,\tfrac{1}{R^2},\tfrac{\log 3}{R_e^2}\Big\}
\Bigg)
\right\}
\frac{c_{2}}{n+c_{3}d}\;\mathrm{tr}(\mathbf H_T^\star) \\
&  \geq \min\left\{\frac{4}{15}, \frac{1}{3} \frac{1}{(1+\frac{1}{R})^2}\right\}  \left\{
1-
\exp\!\Bigg(
-c_1\Big(\tfrac{1}{3}n(1-\varepsilon)\kappa\Big)
\min\Big\{\tfrac14,\tfrac{1}{R^2},\tfrac{\log 3}{R_e^2}\Big\}
\Bigg)
\right\} \frac{c_{4}(d-1)}{n} 
\end{align}
Here we used $\mathrm{tr}(\mathbf H_T^\star) = d-1$ and $\tfrac{1}{3}n(1-\varepsilon)\kappa+\tfrac{12}{25}d > \tfrac{1}{3}n(1-\varepsilon)\kappa$ to simplify the lower bound.  

Further having $n \geq 4c^2(d+\log(\frac{1}{\delta_{1}}))$, we get $(1-\epsilon)\geq \frac{1}{2}$. 

From Lemma~\ref{lem:gaussnorm}, we had $R_{e}=c(\sqrt{d} + \sqrt{\log(\frac{n}{\delta_{1}})})$, we get $\frac{\log 3}{R^2_{e}} \asymp \frac{1}{d+\log(\frac{n}{\delta_{1}})}$, so there exists a constant $c_{7}$, such that $\min\Big\{\tfrac14,\tfrac{1}{R^2},\tfrac{\log 3}{R_e^2}\Big\} \geq \frac{c_{7}}{d+\log(\frac{n}{\delta_{1}})}$. So incorporating these, we have:

\begin{align}
     \mathbb E_{\Pi(\mathbf{z})}[\Delta(\mathbf z)] \geq \min\left\{\frac{4}{15}, \frac{1}{3} \frac{1}{(1+\frac{1}{R})^2}\right\} \left\{
1-
\exp\!\Bigg(
-c_1\Big(\tfrac{1}{6}n\kappa\Big)
\frac{c_{7}}{d+\log(\frac{n}{\delta_{1}})}
\Bigg) 
\right\} \frac{c_{4}(d-1)}{n} 
\end{align}

Furthermore, choosing $n \geq c_{8} (d+\log(\frac{n}{\delta_{1}}))$, it is simplified to

\begin{align}
 &    \mathbb E_{\Pi(\mathbf{z})}[\Delta(\mathbf z)] \geq  \min\left\{\frac{4}{15}, \frac{1}{3} \frac{1}{(1+\frac{1}{R})^2}\right\}\left\{
1-
\exp\!\Bigg(
-
\frac{c_1c_{7} c_{8}}{6}
\Bigg) 
\right\} \frac{c_{4}(d-1)}{n}  \\
& \geq  \frac{c_{10}(d-1)}{n}
\end{align}

for $n \gtrsim d + \log(\frac{n}{\delta})$ and $c_{10} =c_{4}\min\left\{\frac{4}{15}, \frac{1}{3} \frac{1}{(1+\frac{1}{R})^2}\right\}\left\{
1-
\exp\!\Bigg(
-
\frac{c_1c_{7} c_{8}}{6}
\Bigg) 
\right\} $. 

\end{proof}

\subsection{Jacobian expression on tangent}
\begin{lemma}[Jacobian of the normalization chart on the sphere]
\label{lem:jacobian_sphere}
Let $\mathbf w^\star\in\mathbb S^{d-1}$ and let
\[
T:=\{\mathbf z\in\mathbb R^d:\langle \mathbf z,\mathbf w^\star\rangle=0\}
\]
be the tangent space at $\mathbf w^\star$.
Define the chart $\mathbf w(\cdot):T\supset U\to \mathbb S^{d-1}$ by
\[
\mathbf w(\mathbf z):=\frac{\mathbf w^\star+\mathbf z}{\|\mathbf w^\star+\mathbf z\|_2},
\]
and let $d\mathbf z$ denote Lebesgue measure on $T$ induced by an orthonormal basis.
Then the surface measure $d\sigma$ on $\mathbb S^{d-1}$ transforms as
\[
d\sigma(\mathbf w(\mathbf z)) = J(\mathbf z)\,d\mathbf z,
\qquad
J(\mathbf z)=\|\mathbf w^\star+\mathbf z\|_2^{-d}.
\]
Equivalently, for any integrable $f$ supported in the chart,
\[
\int_{\mathbb S^{d-1}} f(\mathbf w)\,d\sigma(\mathbf w)
=
\int_T f(\mathbf w(\mathbf z))\,\|\mathbf w^\star+\mathbf z\|_2^{-d}\,d\mathbf z.
\]
\end{lemma}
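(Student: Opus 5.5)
The plan is to compute the Jacobian of the chart directly by differentiating $\mathbf w(\mathbf z)$ on $T$. I will write $\rho:=\|\mathbf w^\star+\mathbf z\|_2=\sqrt{1+\|\mathbf z\|_2^2}$, using $\mathbf z\perp\mathbf w^\star$ and $\|\mathbf w^\star\|_2=1$. For a tangent direction $\mathbf h\in T$, the differential of $\mathbf w(\mathbf z)=(\mathbf w^\star+\mathbf z)/\rho$ is
\[
D\mathbf w(\mathbf z)[\mathbf h]=\frac{1}{\rho}\Big(\mathbf h-\frac{\langle \mathbf z,\mathbf h\rangle}{\rho^2}(\mathbf w^\star+\mathbf z)\Big)=\frac{1}{\rho}\big(\mathbf I-\mathbf w(\mathbf z)\mathbf w(\mathbf z)^\top\big)\mathbf h .
\]
So the differential is $\rho^{-1}$ times the orthogonal projection onto the tangent space of the sphere at $\mathbf w(\mathbf z)$, restricted to $T$. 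The surface-measure density is then $\sqrt{\det(D\mathbf w^\top D\mathbf w)}$, computed in an orthonormal basis of $T$.

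The Gram matrix comes from the quadratic form $\|D\mathbf w[\mathbf h]\|_2^2=\rho^{-2}\big(\|\mathbf h\|_2^2-\langle \mathbf w(\mathbf z),\mathbf h\rangle^2\big)$. Since $\mathbf h\perp\mathbf w^\star$, we have $\langle\mathbf w(\mathbf z),\mathbf h\rangle=\langle\mathbf z,\mathbf h\rangle/\rho$. The Gram operator on $T$ is therefore $\rho^{-2}\big(\mathbf I_T-\rho^{-2}\mathbf z\mathbf z^\top\big)$, a scaled identity minus a rank-one term.

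Its eigenvalues are $\rho^{-2}$ with multiplicity $d-2$ (directions orthogonal to $\mathbf z$ within $T$) and $\rho^{-2}\big(1-\|\mathbf z\|_2^2/\rho^2\big)=\rho^{-4}$ along $\mathbf z$. The determinant is $\rho^{-2(d-2)}\rho^{-4}=\rho^{-2d}$, and taking the square root gives $J(\mathbf z)=\rho^{-d}=\|\mathbf w^\star+\mathbf z\|_2^{-d}$. The case $\mathbf z=\mathbf 0$ is consistent, since both factors equal $1$ there.

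For the integral identity, the map $\mathbf z\mapsto\mathbf w(\mathbf z)$ is a smooth bijection from $T$ onto the open hemisphere $\{\langle\mathbf w,\mathbf w^\star\rangle>0\}$. Its inverse is $\mathbf w\mapsto \mathbf w/\langle \mathbf w,\mathbf w^\star\rangle-\mathbf w^\star$, which is the gnomonic projection. The area formula for embedded parametrized submanifolds then yields the change of variables for any integrable $f$ supported in the chart.

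The only delicate step is the determinant computation: one must use $\mathbf z\perp\mathbf w^\star$ so that the rank-one correction involves only $\mathbf z\mathbf z^\top$ within $T$. A sanity check is to integrate $\rho^{-d}$ over $T\cong\mathbb R^{d-1}$. Using the Cauchy-type integral $\int_{\mathbb R^{d-1}}(1+\|\mathbf z\|^2)^{-d/2}d\mathbf z=\pi^{d/2}/\Gamma(d/2)$, this gives half the area of $\mathbb S^{d-1}$, as it should for the hemisphere.
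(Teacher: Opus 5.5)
Your proposal is correct and takes essentially the same route as the paper: both compute the Gram matrix $\rho^{-2}\big(\mathbf I_T-\rho^{-2}\mathbf z\mathbf z^\top\big)$ of the chart differential and get determinant $\rho^{-2d}$, hence $J(\mathbf z)=\rho^{-d}$. The paper works in coordinates with $\mathbf w^\star=\mathbf e_d$ and uses $\det(\mathbf I-\mathbf u\mathbf u^\top)=1-\|\mathbf u\|_2^2$, whereas you work coordinate-free via eigenvalues, and your justification of the integral identity (gnomonic bijection onto the open hemisphere plus the area formula) and the hemisphere-area check supply details the paper leaves implicit.
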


\begin{proof}
By rotational invariance of surface measure, we may assume
\[
\mathbf w^\star=\mathbf e_d=(0,\dots,0,1)\in\mathbb R^d.
\]
Then the tangent space is
\[
T=\{(z_1,\dots,z_{d-1},0)\in\mathbb R^d\}.
\]
For $\mathbf z=(z_1,\dots,z_{d-1},0)\in T$, the normalization chart becomes
\[
\mathbf w(\mathbf z)
=
\frac{(z_1,\dots,z_{d-1},1)}{\rho},
\qquad
\rho:=\sqrt{1+\sum_{i=1}^{d-1}z_i^2}.
\]

Let $\mathbf J_{\mathbf z}$ be the $d\times(d-1)$ Jacobian matrix whose $i$th column
is $\partial \mathbf w(\mathbf z)/\partial z_i$.
A direct differentiation gives, for $1\le i\le d-1$,
\[
\frac{\partial w_j}{\partial z_i}
=
\frac{1}{\rho}\delta_{ij}-\frac{z_j z_i}{\rho^3},
\quad 1\le j\le d-1,
\qquad
\frac{\partial w_d}{\partial z_i}
=
-\frac{z_i}{\rho^3}.
\]

The induced surface element is
$\sqrt{\det(\mathbf J_{\mathbf z}^\top \mathbf J_{\mathbf z})}\,d\mathbf z$.
Compute the Gram matrix:
\[
\mathbf J_{\mathbf z}^\top \mathbf J_{\mathbf z}
=
\frac{1}{\rho^2}\mathbf I_{d-1}
-
\frac{1}{\rho^4}\mathbf z_{1:d-1}\mathbf z_{1:d-1}^\top
=
\frac{1}{\rho^2}
\Bigl(
\mathbf I_{d-1}
-
\frac{1}{\rho^2}\mathbf z_{1:d-1}\mathbf z_{1:d-1}^\top
\Bigr),
\]
where $\mathbf z_{1:d-1}=(z_1,\dots,z_{d-1})$.

Using $\det(\mathbf I-\mathbf u\mathbf u^\top)=1-\|\mathbf u\|_2^2$
for rank-one updates,
\[
\det(\mathbf J_{\mathbf z}^\top \mathbf J_{\mathbf z})
=
\frac{1}{\rho^{2(d-1)}}
\Bigl(1-\frac{\|\mathbf z_{1:d-1}\|_2^2}{\rho^2}\Bigr)
=
\frac{1}{\rho^{2d}}.
\]
Therefore
\[
\sqrt{\det(\mathbf J_{\mathbf z}^\top \mathbf J_{\mathbf z})}
=
\rho^{-d}
=
\|\mathbf w^\star+\mathbf z\|_2^{-d},
\]
which proves the claim.
\end{proof}

%%%%%%%%%%%%%%%%%%%%%%%%%%%%%%%%%%%%%%%%%%%%%%%%%%%%%%%%%%%%%%%%%%%%%%%%%%%%%%%
%%%%%%%%%%%%%%%%%%%%%%%%%%%%%%%%%%%%%%%%%%%%%%%%%%%%%%%%%%%%%%%%%%%%%%%%%%%%%%%

\end{document}